\documentclass{article}
\usepackage{selectp}
\usepackage[accepted]{aistats2020}

\usepackage{natbib}
\usepackage[utf8]{inputenc} 
\usepackage[T1]{fontenc}    
\usepackage{url}            
\usepackage{booktabs}       
\usepackage{amsfonts}       
\usepackage{nicefrac}     
\usepackage{breqn}
\usepackage{microtype}      
\usepackage{dsfont}
\usepackage{graphicx}

\usepackage{mathenv}
\usepackage{algorithmicx}
\usepackage{algorithm,algpseudocode}

\usepackage{amsmath}
\usepackage{amssymb}
\usepackage{amsthm}
\usepackage{amsfonts} 
\usepackage{bm}
\usepackage{setspace}
\usepackage{xspace}
\usepackage{xcolor} 
\usepackage{wrapfig}
\usepackage{array} 
\usepackage{multirow}
\usepackage{enumitem}
\setitemize{noitemsep,topsep=0pt,parsep=0pt,partopsep=0pt}
\usepackage{mdframed}

\usepackage{thm-restate}
\usepackage{caption}
\usepackage{subcaption}

\usepackage{selectp}

\usepackage[colorinlistoftodos, textwidth=20mm, disable]{todonotes}

\definecolor{blued}{RGB}{70,197,221}
\definecolor{pearOne}{HTML}{2C3E50}
\definecolor{pearTwo}{HTML}{A9CF54}
\definecolor{pearTwoT}{HTML}{C2895B}
\definecolor{pearThree}{HTML}{E74C3C}
\colorlet{titleTh}{pearOne}
\colorlet{bull}{pearTwo}
\definecolor{pearcomp}{HTML}{B97E29}
\definecolor{pearFour}{HTML}{588F27}
\definecolor{pearFith}{HTML}{ECF0F1}
\definecolor{pearDark}{HTML}{2980B9}
\definecolor{pearDarker}{HTML}{1D2DEC}
\usepackage{hyperref}       
\hypersetup{
	colorlinks,
	citecolor=pearDark,
	linkcolor=pearThree,
	breaklinks=true,
	urlcolor=pearDarker}

\definecolor{citrine}{rgb}{0.89, 0.82, 0.04}

\DeclareMathOperator*{\argmax}{arg\,max}
\DeclareMathOperator*{\argmin}{arg\,min}

\newcommand{\ceil}[1]{\left\lceil#1\right\rceil}
\newcommand{\floor}[1]{\left\lfloor#1\right\rfloor}

\newcommand{\R}{\mathbb{R}}

\newcommand{\NN}{{\mathbb N}}

\newcommand{\E}{\mathbb{E}}
\newcommand{\EE}[1]{\mathbb{E}\left[#1\right]}

\newcommand{\PP}[1]{\mathbb{P}\left[#1\right]}

\newcommand{\EEempty}{\mathbb{E}}
\newcommand{\PPempty}{\mathbb{P}}
\newcommand{\pa}[1]{\left(#1\right)}

\newcommand{\cO}{\mathcal{O}}
\newcommand{\tcO}{\widetilde{\cO}}

\renewcommand{\epsilon}{\varepsilon}
\renewcommand{\hat}{\widehat}

\renewcommand{\bar}{\overline}

\newcommand{\nothere}[1]{}

\usepackage{xspace}

\newcommand{\UCB}{\texttt{UCB}\xspace}

\newcommand{\klucb}{\texttt{KL-UCB}\xspace}

\newcommand{\EXP}{\texttt{Exp3}\xspace}

\newcommand{\hmu}{\hat{\mu}}
\newcommand{\bmu}{\bar{\mu}}

\newcommand{\hepsilon}{\hat{\epsilon}}

\newcommand{\rounds}{{\textcolor[rgb]{0.3,0.0,0.8}{T}}}

\newcommand{\regret}{R_\rounds}

\newcommand{\noise}{\epsilon}

\newcommand{\CommaBin}{\mathbin{\raisebox{0.5ex}{,}}}

\newcommand{\currentTime}{t}

\newcommand{\subgaussian}{\sigma}
\newcommand{\arm}{i}

\newcommand{\policy}{\pi}
\newcommand{\reward}{\mu}

\newcommand{\possibleArms}{\mathcal{K}}
\newcommand{\arms}{\mathcal{K}}
\newcommand{\window}{h}

\newcommand{\historyt}{\mathcal{H}_\currentTime}
\newcommand{\history}{\mathbf{H}}

\newcommand{\underpullSet}{\textsc{up}}
\newcommand{\overpullSet}{\textsc{op}}
\newcommand{\HPevent}{\xi^\alpha_t}
\newcommand{\HPt}[1]{\xi^\alpha_{{#1}}}
\newcommand{\HPeff}{\xi^\alpha_{t,\, m}}
\newcommand{\HPtwo}{\xi^\alpha_{t,\, 2}}

\newcommand{\myAlgorithm}{\normalfont \texttt{FEWA}\xspace}

\newcommand{\FEWA}{\normalfont \texttt{FEWA}\xspace}

\newcommand{\piF}{\policy_{\rm F}}

\newcommand{\SWA}{\normalfont \texttt{SWA}\xspace}

\newcommand{\DUCB}{\normalfont \texttt{D-UCB}\xspace}
\newcommand{\SWUCB}{\normalfont \texttt{SW-UCB}\xspace}
\newcommand{\UCBone}{\normalfont \texttt{UCB1}\xspace}
\newcommand{\EFF}{\normalfont {\texttt{EFF\_UPDATE}}\xspace}

\newcommand{\FILTER}{\normalfont \texttt{FILTER}\xspace}
\newcommand{\UPDATE}{\normalfont \texttt{UPDATE}\xspace}
\newcommand{\EFFFEWA}{\normalfont \texttt{EFF-FEWA}\xspace}
\newcommand{\RAWUCB}{\normalfont \texttt{RAW-UCB}\xspace}
\newcommand{\RUCB}{\normalfont \texttt{RAW-UCB}\xspace}

\newcommand{\piR}{\policy_{\rm R}}

\newcommand{\EFFRAW}{\normalfont \texttt{EFF-RAW-UCB}\xspace}

\newcommand{\piER}{\normalfont \policy_{\rm ER}\xspace}
\newcommand{\piEF}{\normalfont \policy_{\rm EF}\xspace}

\newcommand{\EXPS}{\normalfont \texttt{Exp3.S}\xspace}

\newcommand{\ADSWITCH}{\normalfont \texttt{AdSwitch}\xspace}
\newcommand{\GLRKLUCB}{\normalfont \texttt{GLR-klUCB}\xspace}
\newcommand{\GLRUCB}{\normalfont \texttt{GLR-UCB}\xspace}
\newcommand{\MUCB}{\normalfont \texttt{M-UCB}\xspace}
\newcommand{\CUSUMUCB}{\normalfont \texttt{CUSUM-UCB}\xspace}
\newcommand{\ADAILTCBplus}{\normalfont \texttt{ADA-ILTCB+}\xspace}

\newcommand{\Nit}{N_{i,\,t}}
\newcommand{\Nitmone}{N_{i,\,t-1}}
\newcommand{\Nisttmone}{N_{\ist,\,t-1}}
\newcommand{\NiT}{N_{i,\,T}}

\newcommand{\hiT}{h_{i,\,T}}

\newcommand{\PPv}{\mathbb{P}}
\newcommand{\ie}{\emph{i.e.}\xspace} 

\newcommand{\Him}{H_{i,\,m}}
\newcommand{\Hitwo}{H_{i,\,2}}

\newcommand{\hmueff}{\hmu_{i,\texttt{\footnotesize{eff}}}^{h_j}}
\newcommand{\hmuiteff}{\hmu_{i_t,\texttt{\footnotesize{eff}}}^{h_j}}
\newcommand{\bmueff}{\bmu_{i,\texttt{\footnotesize{eff}}}^{h_j}}
\newcommand{\bmuiteff}{\bmu_{i_t,\texttt{\footnotesize{eff}}}^{h_j}}
\newcommand{\neff}{n_i^{h_j}}
\newcommand{\peff}{p_i^{h_j}}
\newcommand{\ist}{i^\star_t}
\newcommand{\tteff}{\scriptsize{\normalfont{\texttt{eff}}}}

\newtheorem{corollary}{Corollary}
\newtheorem{lemma}{Lemma}
\newtheorem{proposition}{Proposition}

\newtheorem{assumption}{Assumption}

\newtheorem{remark}{Remark}
\usepackage{authblk}

\AfterEndEnvironment{assumption}{\noindent\ignorespaces}
\AfterEndEnvironment{proposition}{\noindent\ignorespaces}
\AfterEndEnvironment{theorem}{\noindent\ignorespaces}
\AfterEndEnvironment{restatable}{\noindent\ignorespaces}
\AfterEndEnvironment{lemma}{\noindent\ignorespaces}
\AfterEndEnvironment{remark}{\noindent\ignorespaces}

\begin{document}
\twocolumn[
\aistatstitle{A single algorithm for both restless and rested rotting bandits}

\aistatsauthor{ Julien Seznec \And Pierre Menard \And  Alessandro Lazaric \And Michal Valko }
\aistatsaddress{ Lelivrescolaire.fr\\ SCOOL, Inria
Lille\And  SCOOL, Inria Lille \And FAIR Paris \And DeepMind Paris} 
]

\begin{abstract}
    In many application domains (e.g., recommender systems, intelligent tutoring systems), the rewards associated to the actions tend to decrease over time. This decay is either caused by the actions executed in the past (e.g., a user may get bored when songs of the same genre are recommended over and over) or by an external factor (e.g., content becomes outdated). These two situations can be modeled as specific instances of the rested and restless bandit settings, where arms are \textit{rotting} (i.e., their value decrease over time). These problems were thought to be significantly different, since \citet{levine2017rotting} showed that state-of-the-art algorithms for restless bandit perform poorly in the rested rotting setting. In this paper, we introduce a novel algorithm, Rotting Adaptive Window UCB (\RAWUCB), that achieves near-optimal regret in both rotting rested and restless bandit, without any prior knowledge of the setting (rested or restless) and the type of non-stationarity (e.g., piece-wise constant, bounded variation). This is in striking contrast with previous negative results showing that no algorithm can achieve similar results as soon as rewards are allowed to increase. We confirm our theoretical findings on a number of synthetic and dataset-based experiments.

\end{abstract}

 \section{Introduction}
When we design sequential learner,
we would like them to be as adaptive to 
environment as possible. 
This becomes a challenge
when the environment only provides limited feedback, 
as in the \emph{bandit} setting~ \citep{lai1985asymptotically,lattimore2020banditbook}, where the learner receives only the feedback associated to the action it executed.
Since the early stages of the research in bandits
\citep{thompson1933likelihood,whittle1980multi},
one of the most desirable properties for a learners would be to adapt to actions whose \textit{value changes over time}
\citep{whittle1988restless}, as it happens in non-stationary environments. 
In fact, from applications 
in medical trials (where the patient can become more
resistant to antibiotics) to a 
 modern applications
in recommender systems \citep{chapelle2011empirical,traca2015regulating},
assuming that the environment is \textit{stationary is very limiting}. 

However, modeling and managing non-stationary environments 
is obviously way more difficult \citep{lattimore2020banditbook}.
That is why \citet{auer2002nonstochastic}
went as far as to consider the worst-case scenario, referred to as the \textit{adversarial bandit} setting,
where the learner should try to shield from the worst possible variation 
in rewards. Nonetheless, real-world environments are
rarely adversarial and algorithms
for adversarial bandits turn out to be too conservative
for practical use. On the one hand, in order to manage such general family of environments, the performance of a learner is compared to the best \textit{fixed} action in \textit{hindsight}. This is arguably a weaker objective w.r.t.\ competing against the optimal strategy, as it is the case in stationary bandits. On the other hand, state-of-the-art adversarial algorithms \citep{audibert2009minimax}, which are proved to recover near-optimal regret rates on stationary problems, still under-perform in practice against optimal stationary algorithm~\citep{zimmert2019optimal}. In order to address these issues, prior work identified specific types of non-stationary environments, for which specifically designed algorithms can be used. 


There are two main classes of non-stationary environments, depending on whether the change of rewards is triggered by the actions of the learner, the \textit{rested bandits}, or it happens over time independently from the learner, the \textit{restless bandits}. 
In this paper, we consider the specific case where the changes in the rewards are arbitrary \textit{non-increasing} functions of time and/or number of pulls (in contrast with typical restless bandit models, where the evolution of rewards was regulated by Markov chain processes).
For instance, \citet{warlop2018fighting} model boredom effects in recommender systems as a rested bandit problem, but need to resort to a more general reinforcement learning framework to address the fact that rewards are  decreasing while an action is repeatedly selected but may increase back if \textit{enough time} has passed since the last time is chosen. \citet{Immorlica2018} and \citet{pikeburke2019recovering} have recently modeled these recharging effects as a bandits problem. In the restless setting, \citet{louedec2016algorithme} models obsolescence of appearing arms (e.g. piece of news) with a known exponential rate. \cite{komiyama2014time-decaying} study a parametric decay in restless bandits where rewards are linear combination of known decaying function. 
In the following, we briefly review the most relevant results available for restless bandit (where no rotting assumption has been studied before) and the rested rotting bandit settings.


\paragraph{Restless stochastic bandits}
\cite{garivier2011upper-confidence-bound} study the restless bandits case, where rewards are piece-wise stationary. If the number of stationary pieces $\Upsilon_T$ at the horizon $T$ is known, the optimal strategy is included in a set of $T^{\Upsilon_T}$ switching experts. Hence one can use \EXPS, an adversarial algorithm designed for this specific set of experts \citep{auer2002nonstochastic}. Moreover, \cite{garivier2011upper-confidence-bound} show that two upper-confidence bound index algorithms with passive forgetting parameters, \SWUCB and \DUCB, are also able to reach nearly-minimax performance when they know in advance $\Upsilon_T$ and $T$.  Recent research \citep{cao2019nearly, liu2018change-detection, besson2019generalized} has focused on integrating change-detection algorithms with standard bandit learners (\emph{e.g.} \UCB) to actively forget past rewards whenever a significant variation in the reward distribution is detected. Among them, we mention \GLRKLUCB \citep{besson2019generalized} which uses a parameter-free change-point detector.   These algorithms actively explore sub-optimal actions to track potential increase in their value. Yet, their analysis assume that change-points are always big enough to be detectable with high-probability. 
\cite{auer2019adaptively} introduce \ADSWITCH, a filtering algorithm with a planned active exploration scheme for sub-optimal actions. \ADSWITCH achieves the minimax rate while being agnostic to $\Upsilon_T$ without any extra assumption. 

\cite{besbes2014stochastic} introduced a restless bandits framework where the environment has a variation budget of $V_T$ to change the rewards' values. In this setup, the  best arm can change at each round and thus the optimal strategy is not necessary included in a "small" set of switching experts. Yet, they show that the best strategy with $\cO\pa{T^{1/3}}$ switches suffers low regret compared to the optimal strategy. Hence, \EXPS matches the minimax rate $\cO\pa{T^{2/3}}$ with the knowledge of $V_T$.  \citet{cheung2019new} and \citet{russac2019weighted} extended \SWUCB and \DUCB to show that they also match the minimax rate of the variation budget setting even in the more general linear bandits framework. \citet{chen2019new} show that \ADSWITCH also matches the minimax rate without the knowledge of $V_T$. They also analyse \ADAILTCBplus , an algorithm which achieves similar guarantee in the more general linear setting. \citet{wei2016tracking} extended these results to a non-stationary environment where both the means and the variances of the rewards may change.

\paragraph{Rested rotting bandits}
Finally, \citet{heidari2016tight,levine2017rotting} and \citet{seznec2019rotting}
studied \textit{rested rotting bandits}, when the reward of an action decreases every time it is pulled.
\citet{seznec2019rotting} recently proposed a nearly-optimal algorithm for this setting. Interestingly, the algorithm does not execute an \textit{index policy} (defined later) which is a prevalent choice in bandit. Actually, a previous attempt of using an index policy by~
\citet{levine2017rotting} resulted in a sub-optimal performance. 

Our contribution is threefold:
\begin{itemize}
    \item We show that no learning strategy can achieve $o(T)$ worst case rate when we allow for both rested \underline{and} restless decay (Section~\ref{sec:general_decreasing_MAB_framework}).
    \item We introduce a novel index policy \RAWUCB (Section \ref{sec:algo}) and prove that it achieves minimax rate regret for either restless (Section~\ref{sec:restless}) \underline{or} rested (Section~\ref{sec:rested}) settings without any prior knowledge of the type of decay, the amount of change, or the horizon.
    \item \RAWUCB also recovers problem-dependent $\cO\pa{\log{T}}$ bounds in both setups. In the restless case\footnote{In the rested case, \citet{heidari2016tight} shows that increasing reward is a much harder problem, even in the absence of noise.}, such bounds cannot be achieved when the reward can increase. Hence, it shows that the decreasing assumption do help the learner compared to the well-studied general case. 
\end{itemize}

Also, we provide a rested simulated (Appendix~\ref{app:rested-sim}) and restless real-world (Section~\ref{sec:yahoo}) benchmarks on which  \RAWUCB gives the most consistent results in both setups.

\section{Decreasing multi-armed bandits}
\label{sec:general_decreasing_MAB_framework}
 At each round $t$, an agent chooses an arm $i_t \in \possibleArms \triangleq \left\{ 1, ... , K\right\} $ and receives a noisy reward $o_{t}$. The sample associated to each arm $i$ is a $\sigma^2$-sub-Gaussian r.v.\,with expected value of $\mu_{i}(t,n)$ which depends on the number of times $n$ it was pulled before and on the time $t$. 
 
 Let $\history_t \triangleq \left\{ \left\{ i(s), o_{s} \right\}, \forall s \leq t \right\}$  be the sequence of arms pulled and rewards observed until round $t$, then 
 \vspace{-4pt}
\begin{equation*}
     o_{t} \triangleq \mu_{i_t}(t,N_{i_t,t-1}) + \noise_t\,,\\
\end{equation*}
with  $\E \left[  \noise_t | \history_{t-1} \right] = 0 \; \text{and} \; \forall \lambda \in \R, \; \E\left[ e^{\lambda\noise_t}\right] \leq e^{\frac{\sigma\lambda^2}{2}}$,
where $N_{i,t} \triangleq \sum_{s=1}^t \mathds{1} \pa{i_s = i} $ is the number of pulls of arm $i$ at time $t$. We call $\mu \triangleq \left\{\mu_i\right\}_{i \in \possibleArms}$ the set of reward functions. 
\paragraph{Decreasing rewards} Throughout all the paper, we consider the following assumption.
\begin{assumption}\label{assum:general}
For each arm $i$, any number of pulls $n$,  and time $t$, the functions $\mu_i(t,\cdot)$ and $\mu_i(\cdot,n)$ are non-increasing.
\end{assumption}
We will use interchangeably the terms \textit{decreasing}, \textit{decaying} and \textit{rotting} to refer to this Assumption. If $\mu_i(t, N_{i,t}) = \mu_i(N_{i,t})$, then $i$ is called a rested arm. If $\mu_i(t, N_{i,t}) = \mu_i(t)$, then $i$ is called a restless arm. 
\paragraph{Learning problem} A (deterministic) learning policy~$\pi$ is a function that maps history of observations to arms, i.e., $\pi(\mathcal{H}_t) \in \mathcal{K}$. In the following, we often use $\pi(t) \triangleq \pi(\mathcal{H}_{t-1})$ to denote the arm pulled at time $t$. The performance of a policy $\pi$ is measured by the (expected) rewards accumulated over time, \vspace{-4pt}
\begin{equation*}
J_T(\pi, \mu) \triangleq \sum_{t=1}^T  \mu_{\pi(t)}\pa{t,N_{\pi(t),t-1}}\,.
\end{equation*}
A (deterministic) oracle policy is a function which maps the set of reward functions and a round to an arm, i.e., $\pi(t, \mu) \in \mathcal{K}$. Thus, these oracles have access to the true (without noise) value of the rewards, including future value. Notice that at the horizon $T$, there are $K^T$ distinct deterministic policies. Therefore, we call an optimal (oracle) policy, one which, at a given horizon $T$, maximizes the reward 
\vspace{-4pt}
\[
 \pi^*_T(t, \mu) \in \argmax_{\pi \in \possibleArms^T} J_T(\pi, \mu)\,.
\]
We define the regret as
\vspace{-4pt}
\begin{equation*}
R_T(\pi, \mu) \triangleq J_T(\pi^\star_T, \mu) - J_T(\pi, \mu).
\end{equation*}
Notice that this definition is more challenging than the regret w.r.t.\ the best fixed-arm policy commonly used as comparator in adversarial bandits.
In the following, we often use shorter notation $\pi^*_T(t)$, $J_T(\pi)$, $R_T(\pi)$ where the considered problem $\mu$ is implicit.

\paragraph{Greedy oracle policy} It is still unclear if 1) we can compute $\pi_T^\star$ in a tractable way; 2) if a learning policy can suffer low regret compared to this policy. We call $\pi_{O}$ the oracle policy  which selects greedily at each round $t$ the largest available reward $i_t \in \argmax_{i \in \possibleArms} \mu_{i}(t,N_{i,t-1})$.\footnote{We break the ties arbitrarily, for instance by selecting the smallest index in $\argmax_{i \in \possibleArms} \mu_{i}\pa{t,\historyt}$} We notice that this policy is optimal at any time in any restless non-stationary bandit problem $\mu(t)$. \citet{heidari2016tight} show that it is also optimal in the rested rotting bandits problem. Thus, $\pi_O$ answers positively to the first question for either rested or restless decay. In the next proposition, we show that the greedy oracle suffers linear worst-case regret when we allow for both restless and rested decay at the same time. Worse, we show that no learning policy can approach the performance of the optimal oracle at a $o{\pa{T}}$ rate

\begin{restatable}{proposition}{restaunlearnableprop}
\label{prop:unlearnable}
In the no noise setting ($\sigma = 0$), there exists a rotting 2-arms bandits problem (satisfying Assumption~\ref{assum:general}) with reward value in $\left[0,1\right]$, with one rested arm and one restless arm, and with at most one change-point before $T$ each, such that the greedy oracle strategy $\pi_O$ suffers a regret 
\vspace{-4pt}
\[R_T\pa{\pi_O} \geq \floor{\frac{T}{4}}.\]
Moreover, for any learning strategy  $\pi_S$, there exists a rotting 2-arms bandits problem (satisfying Assumption~\ref{assum:general}) with reward value in $\left[0,1\right]$, with one rested arm and one restless arm, and with at most one change-point before $T$ each, such that 
\vspace{-4pt}
\[R_T\pa{\pi_S} \geq \floor{\frac{T}{8}}.\]
\end{restatable}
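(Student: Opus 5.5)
Both parts rest on explicit two-arm constructions: arm $1$ is rested with $\mu_1(t,n)=\mu_1(n)$, and arm $2$ is restless with $\mu_2(t,n)=\mu_2(t)$. Each arm is piecewise constant with at most one drop, so Assumption~\ref{assum:general} holds trivially. The tension we exploit is the following. A rested arm loses nothing while it is not pulled. A restless arm's value disappears whether or not we use it. The optimal policy should therefore spend the restless arm first and keep the rested arm in reserve. The greedy oracle, comparing only current values, may instead burn the rested arm first.

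\textbf{Greedy oracle.} Fix $m=\lfloor T/4\rfloor$ (say $T\ge 4$). Let the rested arm give $\mu_1(n)=1$ for $n<2m$ and $\mu_1(n)=0$ afterwards. Let the restless arm give $\mu_2(t)=1$ for $t\le 2m$ and $\mu_2(t)=0$ for $t>2m$. With ties broken toward the smallest index, $\pi_O$ pulls arm $1$ for the first $2m$ rounds and collects $2m$. At $t>2m$ both arms are worth $0$, so $J_T(\pi_O)=2m$. The policy that pulls arm $2$ during $[1,2m]$ and then arm $1$ during $[2m+1,4m]$ collects $4m$. Hence $R_T(\pi_O)\ge 2m\ge\lfloor T/4\rfloor$. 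If one dislikes relying on the tie-breaking rule, set $\mu_2=1-\eta$ with $\eta\to0$; this still gives $\lfloor T/4\rfloor$ after a small slack adjustment, or one can simply use the stated tie-breaking rule.

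\textbf{Any learning strategy.} Here we need two problems that are indistinguishable on the observations $\pi_S$ actually receives until it is too late, and whose optimal behaviours conflict. Keep the rested arm $\mu_1\equiv 1$ for $n<2m$ and $0$ afterwards. Use two restless variants, both equal to $1$ up to time $2m$. In variant A, $\mu_2$ drops to $0$ at $t=2m+1$. In variant B, $\mu_2$ stays at $1$ until $T$.

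These variants do not conflict enough, because pulling arm $2$ first is good in both. So I would instead use variants of the rested arm, with the restless arm fixed at $\mu_2(t)=1$ for $t\le 2m$ and $0$ afterwards. Alternatively, the paper's phrasing suggests varying the time of the restless drop. The cleanest version is:
\begin{itemize}
\item \emph{Problem P:} $\mu_1=1/2$ forever, and $\mu_2(t)=1$ for $t\le 2m$, then $0$.
\item \emph{Problem Q:} $\mu_1=1/2$ forever, and $\mu_2=1$ forever.
\end{itemize}
This pair is too easy, so I would combine it with the greedy trap instead. Let $\mu_1(n)=1$ for $n<2m$ and $0$ afterwards. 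In one problem the restless arm equals $1$ until $2m$ and then drops. In the other the restless arm equals $1$ forever, while the rested arm's value after $2m$ pulls still matters.

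The argument then runs as follows. Since $\sigma=0$ and the first $2m$ rounds yield identical observations (all rewards equal $1$, before any change-point), $\pi_S$ behaves identically in both problems up to time $2m$. Let $k$ be the number of rested pulls it makes there. In one problem, every rested pull made before $2m$ wastes a future unit; in the other, every restless pull wastes one. The design must make the regret at least $k$ in one problem and at least $2m-k$ in the other. Taking the max gives $\ge m\ge\lfloor T/8\rfloor$, and the factor $8$ leaves slack for floors.

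\textbf{Main obstacle.} The hard step is choosing the second problem so that pulling the restless arm early is genuinely costly. A natural choice is to make the rested arm's value decay only under pulls but be needed later, while in the second problem the restless arm is worth $1$ up to $2m$ and worth $0$ only from $4m$ on. I would try to fix this choice first, by tuning the change-point locations, and then check that each arm has at most one change-point.
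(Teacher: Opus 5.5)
Your first part is fine. With ties broken toward arm~1 your instance gives $R_T(\pi_O)\ge 2m\ge\lfloor T/4\rfloor$. The tie-free variant $\mu_2=1-\eta$ gives $2m(1-\eta)\ge m$ for any $\eta\le 1/2$, and $\eta=1/2$ is essentially the paper's instance.

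The second part has a genuine gap: you never exhibit a pair of instances with regret at least $k$ in one and at least $2m-k$ in the other.

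\begin{itemize}
\item The pair you finally settle on fixes the rested arm at $1$ for $2m$ pulls and then $0$, and lets the restless arm either drop after $2m$ or stay at $1$. This is exactly the pair you rejected a few lines earlier, for the reason you gave. The policy ``pull arm~2 while it pays $1$, then switch to arm~1'' loses at most one round in both instances.
\item More fundamentally, whenever the restless arm is worth as much as the rested arm (value $1$), pulling it is weakly dominant in every instance. No tuning of change-point locations can then make early restless pulls costly. Your ``main obstacle'' paragraph is precisely this missing step, and the variants you sketch there (restless arm equal to $1$ up to $4m$, or variants of the rested arm with the restless arm still at $1$) do not fix it.
\end{itemize}

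The missing idea is to keep the restless arm common to both instances, at a value strictly below the rested arm's, and to vary the rested arm instead.

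\begin{itemize}
\item Take $m=\lfloor T/2\rfloor$ and $\mu_2(t)=1/2$ for $t\le m$, $0$ afterwards, in both instances.
\item In $\mu^1$ the rested arm is constantly $1$. In $\mu^0$ it equals $1$ for its first $m$ pulls and $0$ afterwards.
\item The drop in $\mu^0$ can only be seen at the $(m+1)$-th pull of arm~1, hence after round $m$. So $\pi_S$ acts identically in both instances during rounds $1,\dots,m$; let $n_2$ be its number of pulls of arm~2 there.
\item In $\mu^1$ the optimum is $T$ and each such pull forgoes $1/2$, so the regret is at least $n_2/2$.
\item In $\mu^0$ the optimum is $3m/2$ (arm~2 for $m$ rounds, then arm~1 for $T-m\ge m$ rounds). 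Meanwhile $\pi_S$ collects at most $m+n_2/2$, so the regret is at least $(m-n_2)/2$.
\item Hence the worse of the two regrets is at least $m/4\ge\lfloor T/8\rfloor$.
\end{itemize}

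Up to indexing, this is the paper's proof. The instance $\mu^0$ alone also yields the greedy-oracle bound.
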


Notice that the two reward functions of the constructed difficult problems are simple: either rested or restless, bounded and with at most one break-point. If we consider a 2-arm setup with one rested arm and one restless arm, a good strategy may be to select the restless arm even when its current value is the worst. Indeed, this value is only available now, while the good value of the rested arm will still be available in the future. Whether the restless rewards are interesting to the learner depends on the future behavior of the (currently best) rested arm. On the first hand, if it decays below the current value of the restless arm before the horizon $T$, then the learner should profit from the restless reward available right now. On the other hand, if the rested arm stays optimal until the end of the game then the learner should ignore the restless arm and follows the greedy oracle strategy. However, the learner does not know in advance if (and how much) an arm will decay and any anticipation she makes will turn to be bad in the worst case. We formalize these ideas in the proof in Appendix~\ref{app:unlearnable} and show that any strategy suffers linear regret in the worst case. 

While learning with rested and restless rotting reward is impossible, we show in the next sections that a single policy reaches near-optimal guarantee in both separated setups.

\section{The {\RAWUCB} algorithm}
\label{sec:algo}
\paragraph{Notation}  For policy $\pi$, we define 
the average of the last $h$ observations of arm $i$ at time $t$ as
\vspace{-4pt}
\begin{equation}
\label{eq:hmu}
    \widehat{\mu}_i^h(t,\pi) \triangleq \frac{1}{h}\sum_{s=1}^{t-1} \mathds{1} \!\pa{\pi\!\pa{\!s\!}\! =\! i \land N_{i,s}\!>\! N_{i,t\!-\!1}\! -\! h } o_{s}
\end{equation}
and the average of the associated means as
\vspace{-4pt}
\[\bar{\mu}_i^h(t,\!\pi) \!\triangleq\! \frac{1}{h}\!\sum_{s=1}^{t-1} \mathds{1}\!\!\pa{\!\pi\!\pa{\!s\!}\! =\! i \!\land\! N_{i,s}\!>\! N_{i,t\!-\!1}\! -\! h \!} \mu_i\!\pa{s,\! N_{i,s\!-\!1}}.\] 

\paragraph{A favorable event} 
We use a similar high probability analysis than \UCBone. We design a favorable event and we show in Prop.~\ref{prop:prb_favorable_event} that it holds with high probability.
\vspace{-4pt}
\begin{restatable}{proposition}{restachernoff}
\label{prop:prb_favorable_event}
For any round $t$ and confidence $\delta_{t} \triangleq 2t^{-\alpha}$, let 
\vspace{-4pt}
\begin{equation}
\begin{split}
      \HPevent \triangleq \Big\{ & \forall i\in\!\arms,\ \forall n \!\leq\! t\!-\!1 ,\ \forall h \!\leq\! n, \nonumber \\ 
    & \big| \hmu^h_i(t, \pi) - \bmu^h_i(t,\pi) \big| \leq c(h, \delta_{t}) \Big\}  
\end{split}
\label{eq:def_favorable_event}
\end{equation}
 be the event under which the estimates at round $t$  are all accurate up to $c(h,\delta_{t}) \triangleq \sqrt{2 \subgaussian^2\log(2/\delta_t)/h}$. Then, for a policy $\pi$ which pulls each arms once at the beginning, and for all $t>K$,
 \vspace{-4pt}
\[
\PPempty\Big[\bar{\HPevent}\Big] \leq \frac{Kt^2\delta_{t}}{2}=Kt^{2-\alpha}\,\cdot
\]
\end{restatable}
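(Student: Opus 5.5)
The plan is a union bound over arms, pull counts and window sizes, combined with a concentration inequality for a sub-Gaussian martingale difference sequence, applied per arm along its own sequence of pulls. The adaptivity of $\pi$ is handled by re-indexing time by pull count.

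\textbf{Step 1: fix an arm and re-index.} Fix an arm $i$. Let $\tau_{i,k}$ be the round of its $k$-th pull. Write $X_{i,k} \triangleq \noise_{\tau_{i,k}}$ for the noise at that pull. Each $\tau_{i,k}$ is a stopping time with respect to $(\history_s)_s$. Since $\E[\noise_s \mid \history_{s-1}]=0$ and the sub-Gaussian moment bound holds conditionally, the sequence $(X_{i,k})_k$ is a martingale difference sequence with conditionally $\sigma^2$-sub-Gaussian increments with respect to the filtration $\mathcal{G}_k \triangleq \history_{\tau_{i,k}}$. This follows by optional stopping, decomposing on the event $\{\tau_{i,k}=s\}$, which lies in $\history_{s-1}$ because $\pi$ is deterministic given the history.

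\textbf{Step 2: express the event in the new indexing.} On the event $\{N_{i,t-1}=n\}$, the quantity $\hmu^h_i(t,\pi)-\bmu^h_i(t,\pi)$ equals $\frac1h\sum_{k=n-h+1}^{n} X_{i,k}$, the average of the last $h$ noises of arm $i$. This sum does not depend on $t$ once $n$ is fixed. So the bad event at round $t$ is contained in
\[
\bigcup_{i\in\arms}\ \bigcup_{n\le t-1}\ \bigcup_{h\le n}\Big\{\Big|\textstyle\sum_{k=n-h+1}^{n} X_{i,k}\Big| > h\,c(h,\delta_t)\Big\}.
\]
Here the indices $X_{i,k}$ for $k\le n$ are defined only if arm $i$ is pulled at least $n$ times. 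To avoid conditioning on the random $N_{i,t-1}$, one can extend the sequence with independent fresh $\sigma^2$-sub-Gaussian noise beyond the last pull, or simply restrict to the event that they exist. Since $t$ and the indices $(n,h)$ are deterministic, this union is over a fixed family.

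\textbf{Step 3: bound each term and count.} For fixed $(i,n,h)$, the sum of $h$ consecutive conditionally sub-Gaussian martingale differences is $h\sigma^2$-sub-Gaussian. To see this, iterate the tower property on $\E[e^{\lambda\sum X}]$, peeling the last increment each time. The Chernoff bound then gives
\[
\PPempty\Big[\Big|\textstyle\sum X\Big|> \sqrt{2\sigma^2 h\log(2/\delta_t)}\Big]\le 2\exp(-\log(2/\delta_t))=\delta_t.
\]
Counting terms, there are $K$ arms and at most $\sum_{n\le t-1} n \le t^2/2$ pairs $(n,h)$. Hence $\PPempty[\bar{\HPevent}]\le K t^2\delta_t/2 = Kt^{2-\alpha}$. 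The assumption that each arm is pulled once initially and that $t>K$ only ensures $N_{i,t-1}\ge1$, so every estimate is defined.

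\textbf{Main obstacle.} The delicate point is Step 1: justifying that the noises collected at the random, policy-dependent pull times of arm $i$ form a conditionally sub-Gaussian sequence, so that adaptive sampling does not break the concentration. I would handle this via the stopping-time and optional-sampling argument above; the alternative is a "reward table" construction in which the $k$-th noise of arm $i$ is pre-drawn.
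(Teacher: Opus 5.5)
Your proof is correct and follows the same structure as the paper's. Both reduce each arm to the sequence of noises at its own pulls, apply a Chernoff bound for each window, and take a union bound over the $t(t-1)/2$ pairs $(n,h)$ and the $K$ arms. The only difference is how adaptivity is handled. The paper invokes Doob's optional skipping theorem to replace the skipped noise subsequence by an independent $\sigma^2$-sub-Gaussian sequence. Your stopping-time/martingale argument does the same job directly, and it needs only the sub-Gaussian bound to hold conditionally on the past rather than independence.
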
 

\paragraph{Rotting Adaptive Window Upper Confidence Bound ({\RAWUCB} or $\piR$).}
At each round, \RAWUCB selects the arm with the largest following index,
\vspace{-4pt}
\begin{align}
\label{eq:xindex}
\operatorname{ind}(i,t, \delta_{t}) \triangleq \min_{h\leq N_{i,t\!-\!1}} {\widehat{\mu}_i^h(t,\piR) + c(h,\delta_{t})},
\end{align}
with $\delta_{t} \triangleq \frac{2}{t^\alpha}$. There is  a bias-variance trade-off for the window choice: more variance for smaller size of the window $h$ and more bias for larger $h$. The goal of \RAWUCB is to adaptively select the right window to compute the tightest UCB. \RAWUCB uses the indexes of \UCBone computed on all the slices of each arm's history which include the last pull. When the rewards are rotting, all these indexes are upper confidence bounds on the \textit{next value}.  Thus, \RAWUCB simply selects the tightest (minimum) one as index of the arm: it is a pure UCB-index algorithm. By contrast, when reward can increase, the learner can only derive upper-confidence bound on past values which are loosely related to the next value. Hence, all the UCB-index algorithms in the restless non-stationary literature need to add change-detection sub-routine, active random exploration or passive forgetting mechanism. In Lemma~\ref{lem:core-RAWUCB}, we show a guarantee of \RAWUCB on the favorable event. 

\begin{algorithm}[t]
\caption{\RUCB}
\begin{algorithmic}[1]
\label{alg:RAWUCB}
\Require $\arms$,  $\subgaussian$, $\alpha$
\For{$t \gets 1, 2, \dots, K \do $}{\footnotesize \Comment{\emph{Pull each arm once}}}
	\State \textsc{Pull}  $i_t \gets t$; \textsc{Receive} $o_{t}$ ; $N_{i_t} \gets 1$
	\State $\left\{\hmu_{i_t}^h\right\}_h \gets \UPDATE(\left\{\hmu_{i_t}^h\right\}_h, o_t)$ \Comment{\emph{cf.\,\eqref{eq:hmu}}} \label{algline:raw-update1}
\EndFor
\For{$t \gets K+1, K+2, \dots \do $}
	\State \textsc{Pull}  $ i_t \!\in\! \argmax_i \min_{h \leq N_{i}}\!\hmu_i^h \!+\! c(h,\! \delta_t) $ {\footnotesize \Comment{\emph{cf.\,\eqref{eq:xindex}}}}
	\State \textsc{Receive} $o_{t}$ \label{algline:raw-pull};  $N_{i_t} \gets N_{i_t} +1$
	\State $\left\{\hmu_{i_t}^h\right\}_h \gets \UPDATE(\left\{\hmu_{i_t}^h\right\}_h, o_t)$\Comment{\emph{cf.\,\eqref{eq:hmu}}}\label{algline:raw-update2}
\EndFor
\end{algorithmic}
\end{algorithm}

\begin{restatable}{lemma}{restafundamentallemma}
\label{lem:core-RAWUCB}
At round $t$ on favorable event $\HPevent$, if arm~$i_{t}$ is selected, for any $h \leq N_{i,t-1}$,  the average of its $\window$ last pulls cannot deviate significantly from the best available arm at that round, i.e.,
\vspace{-4pt}
\begin{equation*}
\bar{\mu}^{h}_{i_{t}}(t,\pi) \geq \max_{i \in \possibleArms}\mu_i(t,N_{i,t-1}) - 2 c(h, \delta_{t}).
\end{equation*}
\end{restatable}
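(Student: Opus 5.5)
The argument is a standard UCB sandwich: on the favorable event, the index of every arm is an upper confidence bound on that arm's current value, and the index of the selected arm is at most $\bar{\mu}^h_{i_t}(t,\pi)+2c(h,\delta_t)$ for every window $h$. Let $i^\star_t \in \argmax_{i} \mu_i(t,N_{i,t-1})$ and write $\pi=\piR$ throughout.

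\textbf{Step 1 (upper bound on the selected arm's index).} Fix any $h \le N_{i_t,t-1}$. The index in \eqref{eq:xindex} is a minimum over windows, so
\[
\operatorname{ind}(i_t,t,\delta_t) \le \hmu^h_{i_t}(t,\pi) + c(h,\delta_t).
\]
On $\HPevent$ we have $\hmu^h_{i_t}(t,\pi) \le \bmu^h_{i_t}(t,\pi) + c(h,\delta_t)$. The event applies here because $h \le N_{i_t,t-1} \le t-1$. Combining the two gives $\operatorname{ind}(i_t,t,\delta_t) \le \bmu^h_{i_t}(t,\pi) + 2c(h,\delta_t)$.

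\textbf{Step 2 (every index upper bounds the current value).} Fix an arm $i$ and any window $h' \le N_{i,t-1}$. The window average $\bmu^{h'}_i(t,\pi)$ is an average of values $\mu_i(s,N_{i,s-1})$ over rounds $s \le t-1$ at which $i$ was pulled. At each such round, $s < t$ and $N_{i,s-1} \le N_{i,t-1}$. Assumption~\ref{assum:general} says $\mu_i$ is non-increasing in each argument separately, so by chaining the two monotonicities,
\[
\mu_i(s,N_{i,s-1}) \ge \mu_i(t,N_{i,s-1}) \ge \mu_i(t,N_{i,t-1}).
\]
Hence $\bmu^{h'}_i(t,\pi) \ge \mu_i(t,N_{i,t-1})$. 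On $\HPevent$ we also have $\hmu^{h'}_i(t,\pi) + c(h',\delta_t) \ge \bmu^{h'}_i(t,\pi) \ge \mu_i(t,N_{i,t-1})$. Taking the minimum over $h'$ gives $\operatorname{ind}(i,t,\delta_t) \ge \mu_i(t,N_{i,t-1})$.

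\textbf{Step 3 (conclusion).} Arm $i_t$ maximizes the index, so $\operatorname{ind}(i_t,t,\delta_t) \ge \operatorname{ind}(i^\star_t,t,\delta_t) \ge \max_i \mu_i(t,N_{i,t-1})$, where the last inequality is Step 2 applied to $i^\star_t$. Combining this with Step 1 and rearranging gives the claim. For the first $K$ rounds the index is not used; the statement is meant for $t>K$, where every arm has $N_{i,t-1}\ge 1$, so all indices are well defined.

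\textbf{Main obstacle.} The only delicate step is the monotonicity chain in Step 2. It must hold for rested, restless, and mixed arms alike, and it does, because the assumption gives monotonicity in each argument separately. A secondary check is that the quantifier ranges in $\HPevent$ cover all windows used in Steps 1 and 2. They do, since for each arm $i$ the windows satisfy $h \le N_{i,t-1} \le t-1$, and $\HPevent$ covers every arm, every $n \le t-1$, and every $h \le n$.
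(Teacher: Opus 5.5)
Your proof is correct and follows the same argument as the paper. The index of $i^\star_t$ upper-bounds its current value $\mu_{i^\star_t}(t,N_{i^\star_t,t-1})$ by monotonicity plus concentration on $\HPevent$. The selection rule makes the index of $i_t$ at least as large. The minimum over windows then gives $\bar{\mu}^h_{i_t}(t,\pi)+2c(h,\delta_t)$ as an upper bound on the index of $i_t$ for any $h$.

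The only cosmetic difference is in the monotonicity step. You bound each term of every window average by the current value and then take the minimum over windows. The paper instead chains $\mu_{i^\star_t}(t,N_{i^\star_t,t-1})\le\bar{\mu}^1_{i^\star_t}(t,\pi)\le\cdots\le\bar{\mu}^{h^{\min}}_{i^\star_t}(t,\pi)$ directly at the minimizing window.
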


\citet{seznec2019rotting} show a slightly worse guarantee about the algorithm {\FEWA ($\piF$)} for the rested rotting bandits. In Appendix~\ref{app:proof_fundamental_lemma} (see Lemma~\ref{lem:core-FEWA}), we restate their result using only Assumption~\ref{assum:general}.  \FEWA uses the same statistics than \RAWUCB but in a rather complex expanding filtering mechanism which leads to a guarantee of only 4 confidence bounds. Lemma~\ref{lem:core-RAWUCB} is the only characterization we need for our analysis. Therefore, all our upper bounds will hold for both \FEWA and \RAWUCB with their associated constant, 
\vspace{-4pt}
\begin{equation}
    C_\piR \triangleq 2 \sqrt{2\alpha} \quad   
    C_\piF \triangleq 4 \sqrt{2\alpha}.
\end{equation}

\paragraph{Algorithmic complexity} \FEWA and \RAWUCB have $\cO(Kt)$ per round time and space complexity. In Appendix~\ref{app:efficient_alg}, we describe \EFFRAW($\piER$) and \EFFFEWA($\piEF$), two algorithms which reduces the complexities to $\cO\pa{K\log_m(t)}$. It is a refinement of the trick of \citet{seznec2019rotting} where we add a parameter $m > 1$ to trade-off between complexity and efficiency\footnote{When $m < 1+ \frac{1}{T}$, \EFFRAW behave as \RAWUCB.}. For $m=2$, we prove Lemma~\ref{lem:core-eff} and Prop.~\ref{prop:prb_favorable_event_eff}, which are comparable with Lemma~\ref{lem:core-RAWUCB} and Prop.~\ref{prop:prb_favorable_event}. Therefore, our analysis also holds for these algorithms with, 
\vspace{-4pt}
\begin{equation}
    C_\piER \triangleq \frac{4 \sqrt{\alpha}}{\sqrt{2}-1} \quad   
    C_\piEF \triangleq \frac{8 \sqrt{\alpha}}{\sqrt{2}-1}\cdot
\end{equation}
The efficient algorithms use less statistics than the original ones. Thus, the probability of the unfavorable event is bounded by $\cO\pa{t^{1-\alpha}}$ (see Prop.~\ref{prop:prb_favorable_event_eff}) which is smaller than $\cO\pa{t^{2-\alpha}}$ in Prop.~\ref{prop:prb_favorable_event}. Hence, our theory holds for a wider range of $\alpha$ for the efficient algorithms. 
\section{Restless rotting bandits}
\label{sec:restless}
In this section, the reward decreases independently of the user actions. Hence, we have that $\mu_i\pa{t,n} = \mu_i\pa{t}$.

\subsection*{Variation budget bandits}
\paragraph{Setup.} \cite{besbes2014stochastic} introduce the limited variation budget bandits, a restless setting where at each round Nature can modify the reward value of any arm but with a limited total variation budget $V_T$ at round T. We combine this assumption with Assumption~\ref{assum:general},

\begin{assumption}
\label{assum:variation}
$\mu_i : \NN^\star \rightarrow [- V_T, 0]$ are decreasing functions of the time $t$ with $V_T$ a positive constant. Moreover, we have that, 
\vspace{-4pt}
\begin{equation}
\label{eq:defbudget}
    \sum_{t=1}^{T-1} \sup_{i \in \possibleArms} \pa{\mu_i(t) - \mu_i(t+1) } \leq V_T\,.
\end{equation}
\end{assumption}
\begin{remark}
\label{rem:budget}
In the rotting scenario, the budget assumption is very similar to the bounded assumption. Indeed, any set of decreasing functions $\mu_i : \NN^\star \rightarrow [- V, 0]$ satisfies Equation~\ref{eq:defbudget} with $V_T = KV$. Reciprocally, any set of functions satisfying Equation~\ref{eq:defbudget} with $\mu_i(1) \in [- V_T, 0]$ are bounded in $[- 2V_T, 0]$. 
\end{remark}
\paragraph{Lower bound.} We show that our additional decreasing assumption does not change the minimax rate for budget bandits. This is an adaptation of the proof of \citet{besbes2014stochastic} where we only use rotting function.

\begin{restatable}{proposition}{restavariationlb}
\label{prop:variation_lb}
For any strategy $\pi$, there exists a \underline{rotting} variation budget bandit scenario with means $\{\mu_i(t)\}_{i,t}$ \underline{satisfying Assumption~\ref{assum:variation}} with a budget $V_T \geq \sigma \sqrt{\frac{K}{8T}}$ such that,
\vspace{-4pt}
\[
    \mathbb{E}\left[R_T(\pi)\right] \geq \frac{1}{16\sqrt{2}} \pa{\sigma^2 V_T KT^2}^{\nicefrac{1}{3}}.
\]
\end{restatable}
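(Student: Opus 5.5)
We adapt the batch construction of \citet{besbes2014stochastic}, changing the reward profiles so that every arm is non-increasing in $t$. Split the horizon into $m = \ceil{T/\Delta}$ batches of length $\Delta$, with $\Delta$ tuned at the end. The means are piecewise constant on batches. Independently in each batch $j$, Nature draws a uniformly random arm $i_j$ and sets, on that batch,
\[
\mu_{i_j}(t) = -jc\,\epsilon + \epsilon, \qquad \mu_i(t) = -jc\,\epsilon \quad \text{for } i\neq i_j ,
\]
with Gaussian noise of variance $\sigma^2$. The common level $-jc\,\epsilon$ drops by $c\,\epsilon$ at each batch boundary, and we take $c\geq 2$ (say $c=2$). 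The constant $c$ is there to restore monotonicity: an arm that was the good one in batch $j$ (level $-jc\epsilon+\epsilon$) and is a bad one in batch $j+1$ (level $-(j+1)c\epsilon$) must still decrease, and $c\ge 2$ guarantees this.

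Next we check Assumption~\ref{assum:variation}. The only changes happen at the $m-1$ boundaries, and each arm changes by at most $(c+1)\epsilon$ there. So the summed variation is at most $(c+1)\epsilon\, m \approx 3\epsilon T/\Delta$. The values themselves lie in $[-mc\epsilon, 0]$, which needs $mc\epsilon \lesssim V_T$. The key point is that a common downward shift $-jc\epsilon$ is invisible in regret, since all arms move together. The oracle and the learner therefore face, batch by batch, the standard $\epsilon$-gap identification problem. We then set $\epsilon$ so that $3\epsilon T/\Delta \le V_T$, i.e.\ $\epsilon \asymp V_T\Delta/T$, and note that the range condition is the same inequality up to constants.

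Within a batch, the argument is the classical one. The learner cannot use information from other batches, because the good arm is drawn independently. It therefore faces $K$ Gaussian arms over $\Delta$ rounds with one arm shifted by $\epsilon$. By the standard KL / Pinsker argument \citep[as in][]{auer2002nonstochastic, besbes2014stochastic}, the expected number of pulls of the good arm is at most $\Delta/K + \Delta\,\epsilon\sqrt{\Delta/K}/(\sqrt{2}\sigma)$ (up to constants). The per-batch regret is then at least $\epsilon\Delta\bigl(1 - 1/K - \epsilon\sqrt{\Delta/K}/\sigma\bigr)$. Choosing $\epsilon \approx \tfrac{\sigma}{2}\sqrt{K/\Delta}$ makes this order $\epsilon\Delta$, so the total regret is of order $\epsilon T \asymp \sigma\sqrt{K/\Delta}\,T$.

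Balancing this choice of $\epsilon$ with the budget constraint $\epsilon \asymp V_T\Delta/T$ gives $\Delta \asymp (\sigma^2 K T^2/V_T^2)^{1/3}$. The resulting regret is $\asymp (\sigma^2 V_T K T^2)^{1/3}$. The condition $V_T\ge\sigma\sqrt{K/(8T)}$ is exactly what ensures $\Delta\le T$, so that there is at least one batch. The main obstacle is the bookkeeping of constants: the factor $(c+1)$ in the budget, rounding of $m$ and $\Delta$, and the $1-1/K$ term (use $K\ge2$) all need to be tracked to land exactly on $\frac{1}{16\sqrt2}$. Conceptually, the one new ingredient beyond \citet{besbes2014stochastic} is the staircase offset, which enforces rotting at only a constant-factor cost in budget.
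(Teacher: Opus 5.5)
This is the paper's construction (Lemma~\ref{lem:lb}): near-equal batches, an independently drawn good arm per batch lifted by the gap above a common level that drops at each boundary, a per-batch KL/Pinsker comparison with the all-equal reference instance, and a number of batches chosen to saturate the budget.

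\textbf{Monotonicity.} Your justification for $c\ge 2$ is wrong. The good-to-bad transition, from $-jc\epsilon+\epsilon$ to $-(j+1)c\epsilon$, is a decrease for every $c\ge 0$. The constraining one is bad-to-good, from $-jc\epsilon$ to $-(j+1)c\epsilon+\epsilon$, and it needs only $c\ge 1$. The paper takes exactly $c=1$: the new good arm keeps the previous common level. A boundary then costs $2\Delta$ of budget and the range is $\Upsilon\Delta$. Your $c=2$ costs $3\epsilon$ per boundary and $2m\epsilon$ of range. It is still valid, only more wasteful.

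\textbf{Constants.} What is missing is the content of the statement: the constant $\frac{1}{16\sqrt2}$ and the threshold $V_T\ge\sigma\sqrt{K/(8T)}$. Your sketch establishes neither, and two steps fail as written.
\begin{itemize}
\item With $\epsilon=\frac{\sigma}{2}\sqrt{K/\Delta}$, your displayed per-batch bracket equals $\frac12-\frac1K$, which is $0$ for $K=2$. You need the sharper Pinsker term $\frac{\epsilon\tau^{3/2}}{2\sigma\sqrt K}$, which gives $1-\frac1K-\frac14\ge\frac14$.
\item Near the threshold the range binds, not the budget. With one batch your instance needs $2\epsilon\le V_T$, which is incompatible with the balanced $\epsilon=\frac{\sigma}{2}\sqrt{K/T}$ whenever $V_T<\sigma\sqrt{K/T}$. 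So the range condition is not ``the same inequality up to constants'' there. Also, in your parametrization the threshold is not what makes $\Delta\le T$.
\end{itemize}

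Both points can be repaired, even with $c=2$. Let $x\triangleq(V_T^2T/(K\sigma^2))^{1/3}$; the hypothesis gives $x\ge\frac12$.
\begin{itemize}
\item When $x\le 8$, use a single batch with $\epsilon=\min\{\frac{\sigma}{2}\sqrt{K/T},\frac{V_T}{2}\}$. The regret is at least $\min\{\frac{\sigma\sqrt{KT}}{8},\frac{V_TT}{8}\}$, which dominates the target on this range.
\item Otherwise, use $m=\lfloor 1.5x\rfloor$ batches with the paper's gap $\epsilon=\frac14\sqrt{\sigma^2Km/(2T)}$. Then $3m\epsilon\le V_T$, which covers both budget and range. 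Since $m\ge x$, the regret is at least $\frac{\sigma\sqrt{KTm}}{16\sqrt2}\ge\frac{1}{16\sqrt2}(\sigma^2V_TKT^2)^{1/3}$. This needs $m\le T$, a regime restriction the paper's proof shares.
\end{itemize}
The paper does the same computation with $c=1$ and $\Upsilon=\lfloor 2x\rfloor$.
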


\paragraph{Upper bound.}\RAWUCB matches this lower bound up to poly-logarithmic factors without any knowledge of the horizon $T$ nor the budget $V_T$.

\begin{restatable}{theorem}{restabudgettheorem}
\label{th:variation-minimax}
Let $\pi \in \left\{ \piF, \piR\right\}$ tuned with $\alpha \geq 4$ or $\pi \in \left\{ \piEF, \piER\right\}$ tuned with $\alpha \geq 3$ and $m=2$. For any variation budget bandit scenario with means $\{\mu_i(t)\}_{i,t}$ satisfying Assumptions~\ref{assum:variation} with variation budget $V_T$, $\pi$ suffers an expected regret,
\vspace{-4pt}
\[
\mathbb{E}\left[R_T(\pi)\right] \! \leq 4\!\pa{C_\pi^2 \sigma^2 V_T\! K T^2\log{\!T}}^{\!\nicefrac{1}{3}\!} \!+ \tcO\!\pa{\!\pa{ \sigma V_T^2\!  K^2  T}^{\!\nicefrac{1}{3}}\!}\!.
\]
\end{restatable}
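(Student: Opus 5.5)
We will prove the bound by decomposing the regret over the favorable events and then turning the guarantee of Lemma~\ref{lem:core-RAWUCB} (or its analogues for \FEWA and the efficient variants, with the constant $C_\pi$) into a per-arm bound. In the restless setting the greedy oracle $\pi_O$ is optimal, so $R_T(\pi)=\sum_{t}\bigl(\mu^\star(t)-\mu_{\pi(t)}(t)\bigr)$ with $\mu^\star(t)\triangleq\max_i\mu_i(t)$. On the union of unfavorable events we use $|\mu|\le 2V_T$ (Remark~\ref{rem:budget}) together with Prop.~\ref{prop:prb_favorable_event}: the expected contribution is at most $2V_T\sum_t Kt^{2-\alpha}=\cO(KV_T)$ for $\alpha\ge4$. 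For the efficient versions we use the $t^{1-\alpha}$ rate with $\alpha\ge 3$. This term is absorbed in the $\tcO$ part. The remaining work is to bound the regret on the favorable events, where we take $\delta_t\ge 2T^{-\alpha}$ so that $c(h,\delta_t)\le \sigma\sqrt{2\alpha\log T/h}$.

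\textbf{Per-arm analysis.} Fix an arm $i$ and look at its last pull before $T$, at time $t_i$, with $N_i\triangleq N_{i,T}$ pulls in total. We apply Lemma~\ref{lem:core-RAWUCB} at $t_i$ with a window $h\le N_i-1$ to be chosen. It gives that the average of the means of the last $h$ pulls of $i$ (before $t_i$) is at least $\mu^\star(t_i)-C_\pi\sigma\sqrt{\log T/h}$. Since $\mu^\star$ is non-increasing and these pulls happen at times $s\le t_i$, we have $\mu^\star(s)\ge\mu^\star(t_i)$, so we also need an upper control on $\mu^\star(s)-\mu^\star(t_i)$. Here the variation budget enters: $\mu^\star(s)-\mu^\star(t_i)\le \sum_{u=s}^{t_i-1}\sup_j(\mu_j(u)-\mu_j(u+1))$. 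Summing over the last $h$ pulls therefore costs at most $h\,V_{[s_h,t_i]}$, where $V_{[a,b]}$ is the variation spent on the interval. To make this summable we split time into blocks. We run the argument in phases (a doubling or fixed-length partition of $[1,T]$ into $\lceil T/\Delta\rceil$ intervals of length $\Delta$), and in each block we apply the lemma to the last pull of each arm inside the block, with $h$ equal to the number of pulls of that arm in the block, minus one. Within a block with variation $V_b$, arm $i$ with $n_{i,b}$ pulls incurs regret at most $C_\pi\sigma\sqrt{n_{i,b}\log T}+n_{i,b}V_b+\cO(V_b)$. Summing over arms with Cauchy--Schwarz gives $C_\pi\sigma\sqrt{K\Delta\log T}+\Delta V_b$ per block. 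The first pull of each arm in each block is handled by the extra $V_b$ term. Summing over blocks yields
\[
\frac{T}{\Delta}C_\pi\sigma\sqrt{K\Delta\log T}+\Delta V_T+\tcO(K V_T),
\]
where $\Delta$ is only an analysis parameter, so the algorithm needs no knowledge of $T$ or $V_T$. Optimizing $\Delta\asymp (C_\pi^2\sigma^2KT^2\log T/V_T^2)^{1/3}$ gives the leading $(C_\pi^2\sigma^2V_TKT^2\log T)^{1/3}$ term with a small constant. Rounding $\Delta$ to an integer and the per-block boundary effects produce the lower-order $\tcO((\sigma V_T^2K^2T)^{1/3})$ term.

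\textbf{Expected obstacle.} The delicate step is the per-block accounting. The regret at each individual pull $t$ of arm $i$ is $\mu^\star(t)-\mu_i(t)$, not the block average, so we must relate the sum of per-pull gaps to the lemma's guarantee on windowed averages. We will write $\sum_{s}(\mu^\star(s)-\mu_i(s))\le h(\mu^\star(s_{\rm first})-\mu^\star(t_i))+h(\mu^\star(t_i)-\bar\mu^h_i)$ and use monotonicity of $\mu^\star$ within the block. The windows must also stay inside the block, which is why we discard one pull per arm per block. Choosing the constant 4 will require tracking these terms carefully, and that is where I expect most of the effort to go.
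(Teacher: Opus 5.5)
Your skeleton is the paper's: an analysis-only partition into $\Upsilon=T/\Delta$ equal batches, bad events paid as $\cO(KV_T)$, Lemma~\ref{lem:core-full} for the concentration part, the budget giving the $TV_T/\Upsilon$ bias term, and optimization over $\Upsilon$. Your per-block accounting differs from the paper's.

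\begin{itemize}
\item \emph{Your route.} You invoke the lemma once per arm and block, at the last pull, with the whole in-block window, and charge the bias to the variation of $\mu^\star$.
\item \emph{The paper's route} (Lemma~\ref{lem:OP}). It invokes the lemma at every in-block pull $h\ge 2$ with window $h-1$. It charges the bias to arm $i$'s own decay $\Delta_i^k$, paying a factor $2$ from $\sum_h (h-1)^{-1/2}\le 2\sqrt{h_i^k}$.
\end{itemize}

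Your one-shot version is the trick the paper uses in the piecewise-stationary and rested cases. It is valid and slightly improves the leading constant, provided you apply it at the last pull of the block at which $\xi_t$ holds, as with $h^k_{i,\xi}$ in Lemma~\ref{lem:OP-piecewise}. This matters because the favorable event is per round and may fail at the block's last pull. You cannot intersect all $\xi_t$ either: $\sum_t Kt^{2-\alpha}$ is only $\cO(1)$ for $\alpha=4$, not small.

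The genuine gap is the one pull per arm and block that no in-block window covers. You claim it costs $\cO(V_b)$. That holds when the previous pull of arm $i$ lies in the same block. It is false when the previous pull lies in an earlier block, e.g.\ when $i$ is pulled only once in block $b$.

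\begin{itemize}
\item For such a pull at time $t$, the only control is Lemma~\ref{lem:core-full} with $h=1$: $\mu^\star(t)\le \mu_i(t')+C_\pi\sigma\sqrt{\log T}$, where $t'$ is the previous pull of $i$.
\item The remaining term $\mu_i(t')-\mu_i(t)$ is variation spent \emph{outside} block $b$, possibly most of the arm's range.
\item Bounding it crudely by $V_T$ per arm and block gives $K\Upsilon V_T$. For the optimized $\Upsilon$ this scales like $V_T^{5/3}$ and dominates the leading term for large $V_T$; the paper points this out explicitly.
\end{itemize}

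The missing idea is the telescoping argument of Lemma~\ref{lem:FP}. The intervals $[t',t]$ between consecutive pulls of the same arm are disjoint, so these decays sum to at most $V$ per arm, i.e.\ $KV$ in total. The concentration parts of these pulls then contribute $C_\pi\sigma\Upsilon K\sqrt{\log T}$.

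This last term, not integer rounding or boundary effects, is what produces the second-order $\tcO((\sigma V_T^2K^2T)^{1/3})$ term. It must therefore appear explicitly in your per-block bound. It cannot be absorbed into $C_\pi\sigma\sqrt{n_{i,b}\log T}$, since $\sqrt{n-1}+1\not\le\sqrt{n}$.

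Finally, the regimes where the chosen $\Upsilon$ falls outside $[1,T]$ (very small or very large $V_T$) need the separate trivial arguments given in the paper.
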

The remaining terms are of second order when $KV_T \leq \cO{\pa{T}}$, which is a necessary condition for the problem to be learnable (see Proposition~\ref{prop:variation_lb}).

\subsection*{Piece-wise stationary bandits.} 
\paragraph{Setup.} In this section, we also consider bounded functions. Hence, they also satisfy Assumption~\ref{assum:variation} (see Remark~\ref{rem:budget}). However, we further restrained them to be piece-wise stationary,
 \begin{assumption}\label{assum:piecewise}
Let $V$ be a positive constant and $\Upsilon_T$ a positive integer.  $\mu_i : \NN^\star \rightarrow [- V, 0]$ are piece-wise stationary non-increasing functions of the time $t$ with at most $\Upsilon_T-1$ breakpoints.
\end{assumption}
Formally, $\sum_{t=1}^{T-1} \mathds{1}\left(\exists i\!\in\! \possibleArms, \mu_i(t) \!\neq\! \mu_i(t\!+\!1)\right)\leq \Upsilon_T\!-\!1$. We call $\left\{t_k\right\}_{k \leq \Upsilon-1}$ the set of breakpoints with $t_0 = 0$, $\mu_i^k$ the value of $\mu_i(t)$ for $t \in \left\{t_k+1 , \dots, t_{k+1}\right\}$. We call $i^\star_k \in \argmax_{i \in \possibleArms}{\mu_i^k}$ (one of) the best arm in batch $k$ and $\Delta_{i,k} \triangleq \mu_{i^\star_k}^k - \mu_i^k$ the gap to the best arm for arm $i$ during batch $k$. Note that we relax all the assumptions related to the distance between consecutive breakpoints (e.g. \citet{besson2019generalized} and their Assumption~4 and~7; \citet{liu2018change-detection} and their Assumption~1 and~2; \citet{cao2019nearly} and their Assumption~1). 
 

\paragraph{Lower bound.} We show that our additional Assumption~\ref{assum:general} does not decrease the minimax rate of \citet{garivier2011upper-confidence-bound}. 
\begin{restatable}{proposition}{restapiecewiselb}
\label{prop:piecewise_lb}
For any strategy $\pi$, there exists a \underline{rotting} piece-wise stationary bandit scenario with means $\{\mu_i(t)\}_{i,t}$ \underline{satisfying Assumption~\ref{assum:piecewise}} with $\Upsilon_T \leq \pa{\frac{32V^2T }{K\sigma^2}}^{\!\nicefrac{1}{3}\!}\!$ such that,
\vspace{-4pt}
\[
    \mathbb{E}\left[R_T(\pi)\right] \geq \frac{\sigma}{32}\sqrt{ \Upsilon_T KT}\,.
\]
\end{restatable}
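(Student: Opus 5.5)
The plan is to adapt the classical construction of \citet{garivier2011upper-confidence-bound} (itself a refinement of the minimax lower bound of \citet{auer2002nonstochastic}) so that every reward function is non-increasing. I would split the horizon into $\Upsilon_T$ consecutive batches of length $L \triangleq \floor{T/\Upsilon_T}$. In each batch I embed an independent copy of the standard $K$-armed Gaussian hard instance. The construction is nested and decreasing: in batch $k$, all arms have a common baseline $b_k$ except one uniformly random arm $j_k$, which gets $b_k+\epsilon$ with $\epsilon \triangleq \frac{\sigma}{4}\sqrt{K/L}$ (up to the constant). The baselines then drop from one batch to the next, e.g.\ $b_{k+1} = b_k - \epsilon$, starting from $b_1 = -\epsilon$.

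With this choice each arm's mean is non-increasing in $t$: the special arm drops from $b_k+\epsilon$ to $b_{k+1}+\epsilon' \le b_k$, and the other arms drop from $b_k$ to $b_{k+1}\le b_k$. There is at most one breakpoint between batches, so there are at most $\Upsilon_T-1$ breakpoints. All means lie in $[-(\Upsilon_T+1)\epsilon, 0]$. The condition on $\Upsilon_T$ is exactly what makes this range fit in $[-V,0]$:
\[
\Upsilon_T\,\epsilon \approx \frac{\sigma}{4}\Upsilon_T\sqrt{K\Upsilon_T/T} \le V \iff \Upsilon_T^{3/2} \lesssim \frac{4V\sqrt{T}}{\sigma\sqrt K},
\]
that is, $\Upsilon_T \lesssim \pa{16 V^2 T/(K\sigma^2)}^{1/3}$. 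The constant $32$ should come out after bookkeeping (e.g.\ $b_1=-\epsilon$ and the $\floor{\cdot}$ slack).

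In this restless setting the greedy oracle is optimal, so $J_T(\pi^\star_T)$ is the sum of the best means. The regret therefore decomposes as a sum over batches of $\epsilon\,(L - \E[N_{j_k}^{(k)}])$, where $N_{j_k}^{(k)}$ is the number of pulls of $j_k$ inside batch $k$. Within a batch, the common baseline shift is known and does not affect information. Conditioning on the history before batch $k$, which is independent of $j_k$, I apply the standard argument: compare with the instance where all arms in batch $k$ have mean $b_k$, and bound $\E_j[N_j]-\E_0[N_j]$ by Pinsker and the Gaussian KL divergence $\E_0[N_j]\epsilon^2/(2\sigma^2)$. Averaging over $j$ and using Cauchy--Schwarz gives
\[
\E[\text{regret in batch } k] \ge \epsilon L\pa{1-\tfrac1K-\tfrac{\epsilon}{\sigma}\sqrt{L/K}},
\]
which is $\gtrsim \sigma\sqrt{KL}$ for the chosen $\epsilon$ (assuming $K\ge2$). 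Summing over $\Upsilon_T$ batches yields $\gtrsim \sigma \Upsilon_T\sqrt{KT/\Upsilon_T} = \sigma\sqrt{\Upsilon_T K T}$. Finally, choosing a deterministic realization of $(j_1,\dots,j_{\Upsilon_T})$ that is at least as bad as the average gives an explicit scenario.

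The main obstacle is reconciling monotonicity with the per-batch indistinguishability: the decreasing staircase forces the baselines down by about $\epsilon$ per batch, which consumes the $[-V,0]$ range and produces the restriction on $\Upsilon_T$. A second point needs care. Information from earlier batches must not help, which holds because the $j_k$ are drawn independently. The learner may also infer a batch change from the drop in rewards, but it knows the schedule anyway, so the per-batch KL argument conditioned on the past remains valid. Tracking the constants to reach exactly $\sigma/32$ and the $32$ in the range condition is routine.
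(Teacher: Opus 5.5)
Your proposal is correct and is essentially the paper's proof. The paper uses the same staircase instance (in batch $k$ the best arm is at $-k\Delta$ and all others at $-(k+1)\Delta$), the same per-batch regret decomposition, the same change of measure against the batch where all arms are equal, and the same averaging over $\mathcal{K}^{\Upsilon_T}$; it applies Pinsker to the arm-averaged pull count via contraction of entropy rather than arm by arm plus Cauchy--Schwarz, which is equivalent.

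The constants need a small correction.
\begin{itemize}
\item To get exactly the range condition $\Upsilon_T \le (32V^2T/(K\sigma^2))^{1/3}$, take $\epsilon = \frac14\sqrt{\sigma^2 K\Upsilon_T/(2T)}$. This is a factor $\sqrt2$ smaller than yours and is defined through $T/\Upsilon_T$ rather than $\lfloor T/\Upsilon_T\rfloor$; the floor makes $\epsilon$ larger, so it pushes the constant the wrong way.
\item Use batch lengths $\lceil T/\Upsilon_T\rceil$ or $\lfloor T/\Upsilon_T\rfloor$ that sum to $T$, so no rounds are left over.
\item The lowest mean is $-\Upsilon_T\epsilon$, not $-(\Upsilon_T+1)\epsilon$.
\end{itemize}
With these choices your per-batch bound still gives a total regret of at least $\epsilon T/4 = \frac{\sigma}{16\sqrt2}\sqrt{\Upsilon_T K T} \ge \frac{\sigma}{32}\sqrt{\Upsilon_T K T}$.
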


The condition on $\Upsilon_T$ in Proposition~\ref{prop:piecewise_lb} follows from Remark~\ref{rem:budget}: if $V$ is too small compared to $\Upsilon_T$, then we have a budget constraint (with associated lower bound in Proposition~\ref{prop:variation_lb}) rather than a break-point constraint.

\paragraph{Upper bound.} \RAWUCB matches the lower bound from Proposition~\ref{prop:piecewise_lb} up to poly-logarithmic factors without any knowledge of the horizon $T$ nor the number of breakpoints $\Upsilon_T -1$.

\begin{restatable}{theorem}{restapiecewisetheorem}
\label{th:piecewise-minimax}
Let $\pi \in \left\{ \piF, \piR\right\}$ tuned with $\alpha \geq 4$ or $\pi \in \left\{ \piEF, \piER\right\}$ tuned with $\alpha \geq 3$ and $m=2$. For any piece-wise stationary bandit scenario with means $\{\mu_i(t)\}_{i,t}$ satisfying Assumption~\ref{assum:piecewise}  with $\Upsilon_T-1$ change-points, $\pi$  suffers an expected regret,
\[
\EE{R_T(\pi)} \leq C_\pi \sigma \sqrt{\log{T}} \pa{ \sqrt{\Upsilon_T KT} + \Upsilon_T K} + 6KV.
\]
\end{restatable}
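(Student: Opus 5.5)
We split the expected regret into the contribution of the unfavorable events and the regret accumulated on the favorable events $\HPevent$. Since the rewards lie in $[-V,0]$, the instantaneous regret is at most $V$. By Proposition~\ref{prop:prb_favorable_event}, the contribution of the unfavorable events is at most $V\sum_{t>K} K t^{2-\alpha}$. For $\alpha\ge 4$ this is at most $KV\pi^2/6$. For the efficient variants we use Proposition~\ref{prop:prb_favorable_event_eff}, which gives $\cO(t^{1-\alpha})$ and allows $\alpha\ge3$. The initialization rounds add at most $KV$. Together these produce the $6KV$ term. It remains to bound $\sum_t \pa{\mu_{\ist}(t)-\mu_{i_t}(t)}\mathds{1}(\HPevent)$. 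In the restless setting the greedy oracle is optimal, so this sum is exactly the regret on the good events.

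We decompose the regret arm by arm. For each arm $i$, we split its pulls into the stationary pieces $k=0,\dots,\Upsilon_T-1$. Let $n_{i,k}$ be the number of pulls of arm $i$ during batch $k$. Consider the $j$-th pull of $i$ inside batch $k$, at a time $t$ in that batch with $j\ge2$. We apply Lemma~\ref{lem:core-RAWUCB} (or its \FEWA analogue with constant $C_\piF$) with window $h=j-1$. All of the last $j-1$ samples of arm $i$ were collected in the current batch, so their mean is exactly $\mu_i^k=\mu_i(t)$. Hence $\bar\mu^{j-1}_i(t)=\mu_i(t)$, and the lemma gives
\[
\mu_{\ist}(t)-\mu_i(t)\le 2c(j-1,\delta_t)\le C_\pi\sigma\sqrt{\log T}/\sqrt{j-1}.
\]
The first pull in each batch is bounded by $2c(1,\delta_t)$ using $h=1$ with the previous sample. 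This requires the rotting assumption: older samples have mean at least the current value, so $\bar\mu^1_i\ge\mu_i(t)$. In fact the inequality $\bar\mu^h_i(t)\ge\mu_i(t)$ holds for every window by Assumption~\ref{assum:general}. The same inequality is what lets us use windows straddling breakpoints, although we do not need it here. A simple cap at $V$ would lose a term in $V$, which is why we use the confidence bound instead.

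Summing over $j$ gives, for each arm and batch, at most $C_\pi\sigma\sqrt{\log T}\,\pa{1+2\sqrt{n_{i,k}}}$, up to constants that need care. Summing over $i$ and $k$ and applying Cauchy--Schwarz with $\sum_{i,k}n_{i,k}\le T$ yields
\[
C_\pi\sigma\sqrt{\log T}\pa{\Upsilon_T K+\sqrt{\Upsilon_T K T}}.
\]
Combined with the unfavorable-event term, this is the claim.

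The main obstacle is tracking constants so that they match exactly. The sum $\sum_{j\le n}1/\sqrt{j-1}$ gives $2\sqrt{n}$, and combined with the factor $2c$ we must check that we land on $C_\pi\sqrt{\Upsilon_T KT}$ rather than twice that. If the count comes out too large, we will use $\sqrt{2\log(2/\delta_t)}$ more carefully, writing $\log(t^\alpha)=\alpha\log t$. Failing that, we will accept the leading term $C_\pi\sigma\sqrt{\log T}\,\Upsilon_TK$ as absorbing the slack from the first pulls.
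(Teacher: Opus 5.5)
\textbf{There is a genuine gap in how you bound the first pull of an arm in a new batch.} Lemma~\ref{lem:core-RAWUCB} with $h=1$ only gives $\mu_{\ist}(t)-\bmu^1_i(t,\pi)\le 2c(1,\delta_t)$. The regret, however, is $\mu_{\ist}(t)-\mu_i(t)=\pa{\mu_{\ist}(t)-\bmu^1_i(t,\pi)}+\pa{\bmu^1_i(t,\pi)-\mu_i(t)}$. The rotting assumption says the second bracket is \emph{nonnegative}, so $\bmu^1_i\ge\mu_i(t)$ points the wrong way for an upper bound.

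This is not a technicality. Suppose arm $i$ was optimal in batch $k-1$ and drops by nearly $V$ at the breakpoint. Its index is built only from stale samples, so it can be pulled again, and that pull costs about $V$, far more than $2c(1,\delta_t)$. A $V$-dependent cost is unavoidable; the task is to avoid paying it $\Upsilon_T$ times.

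The paper does this by telescoping the bias terms arm by arm (Lemma~\ref{lem:FP}):
\begin{itemize}
\item At the first pull of $i$ in batch $k$, the previous sample has mean $\mu_i^{k'}$, where $k'$ is the last earlier batch in which $i$ was pulled.
\item So $\sum_k\pa{\bmu^1_i(t_i^k(1),\pi)-\mu_i^k}$ telescopes to $\mu_i^{k_{\rm first}}-\mu_i^{k_{\rm last}}$.
\item Adding the very first pull of $i$, whose cost is at most $0-\mu_i^{k_{\rm first}}$, gives at most $V$ per arm, i.e.\ $KV$ in total.
\end{itemize}
Without this step your accounting is either false (as written) or carries an extra $\Upsilon_T KV$ term (if you cap first pulls at $V$). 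Your $6KV$ currently covers only the bad events and initialization.

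\textbf{Your leading constant is also off by a factor $2$, as you suspected.} Neither proposed remedy works. $C_\pi$ already encodes $\log(2/\delta_t)=\alpha\log t$ exactly. An excess of order $C_\pi\sigma\sqrt{\Upsilon_T KT\log T}$ cannot be absorbed by the $\Upsilon_T K$ term, which is smaller whenever $T\ge\Upsilon_T K$.

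The fix is to use stationarity more fully:
\begin{itemize}
\item Inside batch $k$, every pull of $i$ after the first has the \emph{same} regret $\Delta_{i,k}$, so there is no need to sum $1/\sqrt{j-1}$.
\item Apply the lemma once, at the last pull $h_{i,\xi}^k$ of $i$ in batch $k$ made on the favorable event, with window $h_{i,\xi}^k-1$. All those samples lie in the batch, so their mean is $\mu_i^k$.
\item This gives $\Delta_{i,k}\le C_\pi\sigma\sqrt{\log T/(h_{i,\xi}^k-1)}$. Dropping the event indicators on the earlier pulls (since $\Delta_{i,k}\ge 0$), the batch cost is $(h_{i,\xi}^k-1)\Delta_{i,k}\le C_\pi\sigma\sqrt{(h_{i,\xi}^k-1)\log T}$ (Lemma~\ref{lem:OP-piecewise}).
\item Cauchy--Schwarz then yields exactly $C_\pi\sigma\sqrt{\Upsilon_T KT\log T}$.
\end{itemize}

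The rest of your outline matches the paper: the bad-event bound via Propositions~\ref{prop:prb_favorable_event} and~\ref{prop:prb_favorable_event_eff}, and the use of windows lying entirely in the current batch so that the window mean equals $\mu_i^k$.
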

\paragraph{Are rotting restless bandits easier?} Learning at the minimax rate without knowing $\Upsilon_T$ or $V_T$ was achieved in the non-rotting setup by significantly more complex algorithms. For instance, \cite{auer2019adaptively} use a combination of filtering on the set of potentially good arms, forced exploration planning on identified bad arms, and full restart of the algorithm when a change is detected. This algorithmic complexity has a performance cost, as \ADSWITCH is guaranteed to achieve 56 times the leading term in Theorem~\ref{th:piecewise-minimax}. Moreover, these algorithms rely on doubling trick when the horizon is unknown, which also has a regret cost compared to intrinsically anytime algorithms \citep{besson2018doubling}.

Yet, Proposition~\ref{prop:variation_lb} and ~\ref{prop:piecewise_lb} show that the rotting assumption do not improve the minimax rate for the two considered setups. Interestingly both these lower bounds are matched by (tuned) \EXPS \citep{auer2002nonstochastic}, an algorithm originally designed for switching best arm in adversarial sequence of rewards. This is comparable to the fixed best arm world:  adversarial and stochastic bandits share the same minimax rate which is matched in both setups by \EXP. The main interest of the stochastic assumption is to allow for \textit{problem dependent analysis}. For the stochastic stationary bandits, it leads to a stronger $\cO{\pa{\log\pa{T}}}$ bounds. In the piece-wise stationary setting, \citet{garivier2011upper-confidence-bound} show that such bounds cannot be achieved without sacrificing the minimax optimality. 

\begin{proposition}[Theorem~31.2, \citet{lattimore2020banditbook}]
\label{prop:lattimore}
If a policy $\pi$ performs a regret $R_T(\pi, \mu)$ on a 2-arm stationary instance $\mu$, one can find a piece-wise stationary instance $\mu'$ with only two breakpoints such that, for a sufficiently long horizon $T$, the regret is lower bounded by 
\vspace{-4pt}
\[\EE{R_T(\pi, \mu')} \geq \frac{T}{22R_T(\pi, \mu)}\cdot\]  
\end{proposition}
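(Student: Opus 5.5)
The plan is a two-point change-of-measure argument between a stationary 2-arm instance $\mu$ and a piece-wise stationary perturbation $\mu'$ that agrees with $\mu$ except on a window where the suboptimal arm is secretly good. Throughout, $\mu$ has means $\mu_1 > \mu_2$ with gap $\Delta = \mu_1-\mu_2$ and Gaussian noise of variance $\sigma^2$.

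\emph{Step 1 (budget of exploration).} Write $N_2(T)$ for the number of pulls of arm~2. Since $\EE{R_T(\pi,\mu)} = \Delta\,\EE{N_2(T)}$, we have $\EE{N_2(T)} = R_T(\pi,\mu)/\Delta$. Split $\{1,\dots,T\}$ into $L$ consecutive blocks of length $T/L$. By pigeonhole, some block $B=[a,a+T/L)$ satisfies
\[\EE{N_2(B)} \le \frac{R_T(\pi,\mu)}{\Delta L}.\]

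\emph{Step 2 (construction of $\mu'$).} Let $\mu'$ equal $\mu$ outside $B$. Inside $B$, raise arm~2 to $\mu_2+2\Delta$, or lower arm~1 by $2\Delta$ if we want to stay compatible with the rotting structure. This creates exactly two breakpoints, at $a$ and $a+T/L$. On $B$, arm~2 is better than arm~1 by $\Delta$ under $\mu'$, so
\[\EE{R_T(\pi,\mu')} \ge \Delta\bigl(T/L - \mathbb{E}'[N_2(B)]\bigr).\]

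\emph{Step 3 (change of measure).} The observations under $\mu$ and $\mu'$ only differ on pulls of arm~2 in $B$. The divergence decomposition therefore gives
\[\mathrm{KL}(\mathbb{P}_\mu,\mathbb{P}_{\mu'}) = \frac{(2\Delta)^2}{2\sigma^2}\,\EE{N_2(B)} \le \frac{2\Delta R_T(\pi,\mu)}{\sigma^2 L}.\]
Apply Pinsker, or Bretagnolle--Huber, to the bounded variable $N_2(B)/(T/L)\in[0,1]$. This yields
\[\mathbb{E}'[N_2(B)] \le \EE{N_2(B)} + \frac{T}{L}\sqrt{\mathrm{KL}/2}.\]
Choosing $L$ of order $R_T(\pi,\mu)/\Delta$ (with $\sigma$, $\Delta$ constants) makes the KL at most a small constant, say $1/8$. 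Then $\mathbb{E}'[N_2(B)] \le T/(2L)$ once $L$ is large, which holds for $T$ large since $\EE{N_2(B)}\le R_T/(\Delta L)$ is $O(1)$. Hence
\[\EE{R_T(\pi,\mu')} \ge \frac{\Delta T}{2L} \asymp \frac{T}{R_T(\pi,\mu)}.\]

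\emph{Step 4 (constants).} The main obstacle is bookkeeping the constants to reach $1/22$. The choices to tune are the size of the perturbation, the choice $L=\lceil c\,R_T/\Delta^2\rceil$, and the use of Bretagnolle--Huber rather than Pinsker. One also needs $T/L\ge1$, which is the meaning of ``sufficiently long horizon'', and care with integrality of the block lengths. Since the statement is quoted from Theorem~31.2 of \citet{lattimore2020banditbook}, I would follow their normalization: unit variance and $\Delta$ chosen to balance the terms. I would then simply verify that the resulting numerical constant is at least $1/22$.
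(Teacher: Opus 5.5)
Your main line is sound and is the construction the paper sketches after the statement. The paper does not reprove the result; it quotes Theorem~31.2 of \citet{lattimore2020banditbook} and describes the idea: $\mu'$ coincides with $\mu$ except on a window whose length is inversely proportional to the pulling rate of the bad arm, and on that window the worse arm is \emph{raised} until it becomes optimal. The divergence decomposition then only charges the few pulls of arm~2 in $B$.

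\textbf{The rotting aside is wrong.} In Step~2 you suggest ``or lower arm~1 by $2\Delta$ if we want to stay compatible with the rotting structure''; this must be dropped. If $B$ is interior, arm~1 has to jump back up at $a+T/L$, so the instance is not rotting. If you instead keep arm~1 lowered up to $T$, the instance is rotting but Step~3 collapses. The laws under $\mu$ and $\mu'$ now differ on every pull of arm~1 in $B$, and there are about $T/L$ of them under $\mu$. Hence $\mathrm{KL}(\mathbb{P}_\mu,\mathbb{P}_{\mu'})\approx 2\Delta^2T/(\sigma^2L)\to\infty$, and the change is detected at once. This cannot be repaired. With a perturbation of fixed size, Theorem~\ref{th:piecewise_pd} gives \RAWUCB regret $\cO(\log T)$ both on $\mu$ and on any such rotting $\mu'$, uniformly in the breakpoint locations. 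The statement would instead demand regret of order at least $T/\log T$ on $\mu'$. That the decreasing assumption excludes this kind of $\mu'$ is precisely why the paper quotes the proposition.

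\textbf{The constants need correcting.} Your own bound $\mathrm{KL}\le 2\Delta R_T/(\sigma^2L)$ dictates $L=\lceil 16\Delta R_T/\sigma^2\rceil$. This is neither the order $R_T/\Delta$ of Step~3 nor the order $R_T/\Delta^2$ of Step~4. Only with this choice does $\Delta$ cancel, which it must, since the constant $1/22$ does not depend on the gap. Pinsker then gives $\mathbb{E}'[N_2(B)]\le\bigl(1/4+R_T/(\Delta T)\bigr)T/L$, hence
\[
\EE{R_T(\pi,\mu')}\ \ge\ \Delta\,\frac{T}{L}\Bigl(\frac{3}{4}-\frac{R_T(\pi,\mu)}{\Delta T}\Bigr)\ \approx\ \frac{3\sigma^2T}{64\,R_T(\pi,\mu)}.
\]
With unit variance this beats the target, since $3/64>1/22$. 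Rounding $3/4$ down to $1/2$, as you did, only yields $T/(32R_T)$, which falls short. The margin between $3/64$ and $1/22$ is thin. You therefore need the rounding in $L$ and the term $R_T/(\Delta T)$ to be negligible, that is, $\sigma^2/\Delta\ll R_T(\pi,\mu)\ll\Delta T$. This regime is what ``sufficiently long horizon'' has to encode for your argument.
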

\vspace{-10pt}
\begin{corollary}
Let $\pi$ a minimax policy on the (non-rotting) piece-wise stationary setups. Then, for a sufficiently large horizon $T$, there exists a universal constant $C$ such that for all the 2-arm stationary problems $\mu$, 
\vspace{-4pt}
\[
\EE{R_T(\pi,\mu)} \geq C\sqrt{T}.
\]
\end{corollary}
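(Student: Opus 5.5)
The plan is to combine Proposition~\ref{prop:lattimore} with the minimax guarantee assumed for $\pi$. Since $\pi$ is minimax on (non-rotting) piece-wise stationary problems, the lower bound of Proposition~\ref{prop:piecewise_lb} and the classical rate of \citet{garivier2011upper-confidence-bound} give a constant $C'$ and a polynomial factor such that, for every piece-wise stationary instance $\mu'$ with a fixed number of breakpoints (here two, so $\Upsilon_T = 3$) and rewards in a bounded range, $\EE{R_T(\pi,\mu')} \leq C'\sqrt{\Upsilon_T T} = C'\sqrt{3T}$. Here I take ``minimax'' to mean matching the rate $\sqrt{\Upsilon_T K T}$ with $K=2$ up to constants. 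If only poly-logarithmic optimality were assumed, the same argument would give $\sqrt{T}$ divided by logs, so I would state the definition explicitly.

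The first step is to fix an arbitrary 2-arm stationary instance $\mu$. Proposition~\ref{prop:lattimore} then yields a piece-wise stationary instance $\mu'$ with two breakpoints such that, for $T$ large enough, $\EE{R_T(\pi,\mu')} \geq T/(22 R_T(\pi,\mu))$. Here $R_T(\pi,\mu)$ should be read as the expected regret, as in the cited theorem.

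The second step chains the two inequalities:
\[
\frac{T}{22\,\EE{R_T(\pi,\mu)}} \leq \EE{R_T(\pi,\mu')} \leq C'\sqrt{3T}.
\]
Rearranging gives $\EE{R_T(\pi,\mu)} \geq \sqrt{T}/(22\sqrt{3}\,C')$. So we can take $C = 1/(22\sqrt{3}C')$, which is universal since it does not depend on $\mu$.

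The main obstacle is uniformity. The ``sufficiently long horizon'' in Proposition~\ref{prop:lattimore} might depend on $\mu$, and $\mu'$ must lie in the class on which $\pi$ is minimax, in particular with bounded means. I would look into the construction of Theorem~31.2 in \citet{lattimore2020banditbook}: the perturbed instance keeps the means within a bounded range and uses gaps of order $1/R_T(\pi,\mu)$. The threshold on $T$ comes only from requiring this gap to be small and from the breakpoints lying inside $[1,T]$. Accordingly, the corollary's ``sufficiently large $T$'' should be understood as allowed to depend on $\mu$, while the constant $C$ does not. I would also handle the degenerate case where $\EE{R_T(\pi,\mu)}=0$ or is tiny. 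In that case the bound from Proposition~\ref{prop:lattimore} already forces a contradiction with minimaxity, so the inequality holds trivially in the rearranged form.
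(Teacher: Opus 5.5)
Your proposal is correct and matches the argument the paper intends; the corollary has no separate proof there. You chain the $T/(22\,\EE{R_T(\pi,\mu)})$ lower bound of Proposition~\ref{prop:lattimore} on the two-breakpoint instance $\mu'$ with the $\cO(\sqrt{T})$ minimax guarantee for $K=2$, $\Upsilon_T=3$, and rearrange. Your reading that the threshold on $T$ depends on $\mu$ is in fact forced, since $\EE{R_T(\pi,\mu)}\le \Delta T$ for gap $\Delta$; likewise equal-mean instances (zero regret for every policy) are not handled ``trivially'' by contradiction but simply lie outside the scope of Proposition~\ref{prop:lattimore} and must be excluded from the statement.
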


The proof of Proposition~\ref{prop:lattimore} is instructive. It builds a problem $\mu'$ on which the reward function equals the reward of the stationary problem $\mu$ except on a time span $\tau$. During  this time span, the best arm of $\mu$ keeps its value while the worst arm \textit{increases} to become optimal. The size of $\tau$ is chosen inversely proportional to the average pulling rate of the bad arm in $\mu$. Indeed, the lower the pulling rate of the bad arm, the longer the adversary can increase its value in $\mu'$ without being noticeable by the learner. Since the pulling rate of the bad arm in $\mu$ is proportional to $R_T(\mu)$, we get a lower bound proportional to $\tau \sim \frac{T}{R_T(\mu)}$.

The decreasing Assumption~\ref{assum:general} excludes this $\mu'$ from the set of possible problems. Theorem~\ref{th:piecewise_pd} shows that not only \RAWUCB is able to recover the $\cO\pa{\log\pa{T}}$ on stationary problems but also recovers the same rate on each batch of a rotting piece-wise stationary problem. 
\begin{restatable}{theorem}{restapiecewisetheorempd}
\label{th:piecewise_pd}
Let $\pi \in \left\{ \piF, \piR\right\}$ tuned with $\alpha \geq 4$ or $\pi \in \left\{ \piEF, \piER\right\}$ tuned with $\alpha \geq 3$ and $m=2$. For any piece-wise stationary bandit scenario with means $\{\mu_i(t)\}_{i,t}$ satisfying Assumption~\ref{assum:piecewise} with $\Upsilon_T-1$ change-points, $\pi$ suffers an expected regret\,
\[
    \mathbb{E}\left[R_T(\pi)\right] \leq\! \sum_{k=0}^{\Upsilon_T-1} \! \sum_{i\in\arms} \frac{C_\pi^2 \sigma^2\log{T}}{\Delta_{i,k}} +  \cO\pa{ \sigma \Upsilon_T K \sqrt{ \log{T}}}\!.
\]
\end{restatable}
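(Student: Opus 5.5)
We split the expected regret into the contribution of the unfavorable events and the contribution of the favorable events. By Proposition~\ref{prop:prb_favorable_event} (or Prop.~\ref{prop:prb_favorable_event_eff} for the efficient versions), $\sum_{t>K}\PPempty[\bar{\HPevent}] \le \sum_t K t^{2-\alpha}$. For $\alpha\ge4$ (resp.\ $\alpha\ge 3$ with the $t^{1-\alpha}$ bound), this sum is $\cO(K)$. Since rewards lie in $[-V,0]$, each bad round costs at most $V$. The initial $K$ pulls also cost at most $KV$. Together these give an $\cO(KV)$ term, matching the $6KV$ of Theorem~\ref{th:piecewise-minimax}. In the restless case the greedy oracle is optimal, so on the favorable event we bound the per-round regret $r_t \triangleq \max_i\mu_i(t)-\mu_{i_t}(t)$ and sum these.

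Fix a batch $k$ and an arm $i$ with $\Delta_{i,k}>0$. Let $N$ denote the number of pulls of $i$ during batch $k$ on favorable rounds, and let $t$ be the last such pull. Apply Lemma~\ref{lem:core-RAWUCB} at round $t$ with window $h$ equal to the number of pulls of $i$ made inside batch $k$ before $t$, i.e.\ $h=N-1$. These $h$ samples all have mean $\mu_i^k$, so $\bmu^h_{i}(t)=\mu_i^k$. Meanwhile $\max_j\mu_j(t)=\mu^k_{i^\star_k}$. The lemma then gives $\Delta_{i,k}\le 2c(N-1,\delta_t)\le C_\pi\sigma\sqrt{\log T/(N-1)}$, using $\log(2/\delta_t)=\alpha\log t$ and $C_{\piR}=2\sqrt{2\alpha}$; for the other algorithms the constant is absorbed into $C_\pi$. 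Hence $N-1\le C_\pi^2\sigma^2\log T/\Delta_{i,k}^2$. The regret of arm $i$ in batch $k$ is $N\Delta_{i,k}\le C_\pi^2\sigma^2\log T/\Delta_{i,k}+\Delta_{i,k}$.

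The leftover $\Delta_{i,k}$ per pair $(i,k)$ sums to at most $\Upsilon_T K V$. This is not of the stated $\cO(\sigma\Upsilon_TK\sqrt{\log T})$ form, so it needs a better treatment, and this is the main obstacle. A fix is to bound the cost of the first pull of $i$ in each batch differently. At that pull we use the window consisting of the last sample, which may come from an earlier, higher-valued batch. By Assumption~\ref{assum:general}, older means only exceed the current one, so these samples bias the estimate upward, which is the wrong direction for this argument.

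Instead, use the minimal window $h=1$ at any round. Lemma~\ref{lem:core-RAWUCB} gives $\max_j\mu_j(t)-\bmu^1_{i_t}(t)\le 2c(1,\delta_t)$, where $\bmu^1_{i_t}$ is the mean at the previous pull of $i_t$. This only bounds the regret relative to that older value. When that previous pull lies in the same batch, the current mean coincides with it, so the per-round regret at the first in-batch re-pull is $\cO(\sigma\sqrt{\alpha\log T})$.

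Concretely, I would redo the counting with $h=N$ at a time $t$ just after the $N$-th in-batch pull, and isolate the unique first pull of $i$ in batch $k$. Only that single round per $(i,k)$ lacks an in-batch sample. I would bound its regret using the previous-batch sample together with monotonicity across the breakpoint, expecting a bound of $2c(1,\delta_t)$ plus a drop term. If that fails, I would keep the crude $\min(V,\cdot)$ bound and fold it into the $\cO(\cdot)$ term. The $\cO(\sigma\Upsilon_TK\sqrt{\log T})$ remainder comes from these $\Upsilon_T K$ boundary rounds, each costing $\cO(\sigma\sqrt{\log T})$.
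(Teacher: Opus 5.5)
\textbf{There is a genuine gap: the first pull of each arm in each batch is never controlled.}

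Your handling of the bad events and of the non-first in-batch pulls matches the paper (Lemma~\ref{lem:OP-piecewise}). Applying Lemma~\ref{lem:core-RAWUCB} at the last favorable in-batch pull, with the in-batch window, gives $N-1\le C_\pi^2\sigma^2\log T/\Delta_{i,k}^2$. One small correction: the number of in-batch pulls before $t$ can exceed $N-1$ because of pulls made on unfavorable rounds. This is harmless since $c(\cdot,\delta_t)$ is decreasing, and it is why the paper indexes by the last favorable in-batch pull $h^k_{i,\xi}$.

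The gap is the one you flag yourself, and the proposal does not close it. You correctly reach
\[
\mu_\star(t)-\mu_i(t)=\big(\mu_\star(t)-\bar{\mu}^1_i(t,\pi)\big)+\big(\bar{\mu}^1_i(t,\pi)-\mu_i(t)\big),
\]
where the first term is at most $C_\pi\sigma\sqrt{\log T}$ by the lemma with $h=1$. However, the drop term is left uncontrolled, and both of your fallbacks fail:
\begin{itemize}
\item A single drop term is not $\cO(\sigma\sqrt{\log T})$; it can be of order $V$. For example, an arm that was optimal and falls by nearly $V$ at a breakpoint still has a high index built from its old samples, so it is pulled once more at cost about $V$.
\item The crude $\min(V,\cdot)$ bound gives $\Upsilon_T K V$. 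This cannot be absorbed into $\cO(\sigma\Upsilon_T K\sqrt{\log T})$, because $V$ is a free problem parameter unrelated to $\sigma\sqrt{\log T}$.
\end{itemize}
As written, your argument only yields the theorem with an extra $\Upsilon_T K V$ term.

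\textbf{Missing step: the drop terms telescope per arm.} Let $k_1<k_2<\dots<k_m$ be the batches in which arm $i$ is pulled.
\begin{itemize}
\item At the first pull in $k_j$ ($j\ge 2$), $\bar{\mu}^1_i$ is the mean of the last pull of $i$ in $k_{j-1}$.
\item By monotonicity, that mean is at most $\mu_i(t_i^{k_{j-1}}(1))$.
\item Hence $\sum_{j=2}^{m}\big(\bar{\mu}^1_i(t_i^{k_j}(1),\pi)-\mu_i(t_i^{k_j}(1))\big)\le \mu_i(t_i^{k_1}(1))-\mu_i(t_i^{k_m}(1))$.
\item Adding the cost $0-\mu_i(t_i^{k_1}(1))$ of the very first pull of $i$ leaves at most $-\mu_i(t_i^{k_m}(1))\le V$.
\end{itemize}
Equivalently, the drops are decrements of the non-increasing $\mu_i$ over disjoint time intervals, so they sum to at most its total range.

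Summing over arms, all first pulls cost at most $KV+C_\pi\sigma\Upsilon_T K\sqrt{\log T}$. The $KV$ term is not multiplied by $\Upsilon_T$. Together with the bad-event term, this gives the paper's remainder $C_\pi\sigma\Upsilon_T K\sqrt{\log T}+6KV$ (Lemma~\ref{lem:FP}).
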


Like in \UCBone’s analysis, Proposition~\ref{prop:prb_favorable_event} uses a union-bound with Hoeffding inequality. This technique leads to conservative theoretical tuning of confidence levels and to a suboptimal constant factor $C_\pi^2/2$. One can get the asymptotic optimal tuning for \UCB on stationary gaussian bandits with a refined analysis which uses a specific concentration result on the deviation of the index (e.g. Lemma 8.2, \cite{lattimore2020banditbook}). Yet, extending this result to our more complex meta-index and to our several setups is not straightforward and we leave it as future work. Interestingly, the experimental tuning $\alpha = 1.4$ is very close to the asymptotic tuning of \UCB (see Section~\ref{sec:yahoo}). It suggests that, besides our union bound considers more events than \UCB in the theory, we do not have to be significantly more conservative on the confidence levels in practice.

Notice that \citet{mukherjee2019distribution} use a different assumption to get a similar problem-dependent bound. Indeed, they assume that all the arms change at the same time which also excludes $\mu'$ from the set of possible problems.

\subsection*{Proofs sketch (full proofs in Appendix~\ref{app:restless})}

\paragraph{Lower bounds.} Our proof technique make a strong connection between Proposition~\ref{prop:variation_lb} and ~\ref{prop:piecewise_lb}. Yet, we adapt existing proofs to the decreasing case \citep{garivier2011upper-confidence-bound, besbes2014stochastic}. Hence, we defer the full proof and its sketch to Appendix~\ref{app:restless}. 

\paragraph{Upper bounds.} We start by separating the regret on the bad events $\bar{\HPevent}$ from the good events $\HPevent$. According to Proposition~\ref{prop:prb_favorable_event}, the bad events $\bar{\HPevent}$ have low probability for appropriate $\alpha$. For $\alpha = 4$, they weigh at most $\cO{\pa{KV}}$ in the expected regret.  On the good events, we write:
\vspace{-4pt}
\begin{equation}
\label{eq:restless-regret-decompo}
R_T(\pi)= \sum_{t=1}^T \mu_{i_t^\star}(t) - \bar{\mu}_{i_t}^{h_t}(t, \pi) + \bar{\mu}_{i_t}^{h_t}(t, \pi) - \mu_{i_t}(t)\,.   
\end{equation}

Notice that Lemma~\ref{lem:core-RAWUCB} can bound the first difference for any $h_t$. When the reward is piece-wise stationary, we can select $h_t$ such that we include all the pulls of arm $i_t$ from the current stationary batch. If there is none, then it is the first pull of arm $i_t$ in this batch. We handle these $\cO{\pa{K\Upsilon_T}}$ rounds separately (see Lemma~\ref{lem:FP} in Appendix~\ref{app:restless}). In the other cases, we note that the second difference is null because $\bar{\mu}_{i_t}^{h_t}(t, \pi) = \mu_{i_t}(t) = \mu_i^k$ by the piece-wise stationary assumption. The remaining of the proofs of Theorem~\ref{th:piecewise-minimax} and~\ref{th:piecewise_pd} are then very similar to the analysis of \cite{auer2002finite} on each stationary batch. Indeed, the two confidence bounds guarantee of Lemma~\ref{lem:core-RAWUCB} is similar to \UCBone's guarantee.

In the variation budget setting, there is no stationary batches. Hence, we cannot choose an $h_t$ which cancels the second difference in Equation~\ref{eq:restless-regret-decompo}. Yet, we still decompose the rounds in $\Upsilon$ batches of equal length for the analysis. We choose $h_t$ such that we include all the pulls of arm $i_t$ from the current batch. For the sum of the first differences in Equation~\ref{eq:restless-regret-decompo}, there is no difference with the piece-wise stationary case and we can bound
\vspace{-4pt}
\begin{equation}
\label{eq:variance_bound}
    \sum_{t=1}^T \mu_{i_t^\star}(t) - \bar{\mu}_{i_t}^{h_t}(t, \pi)\leq \tcO{\pa{\sqrt{K\Upsilon T}}}\,.
\end{equation}
We call $\Delta_i^k \triangleq \mu_i(t_k) - \mu_i(t_{k+1})$, the total variation of arm $i$ in batch $k$. The sum of second differences in Equation~\ref{eq:restless-regret-decompo} can be bounded as follows: on each batch of $T\Upsilon^{-1}$ rounds, each second difference is bounded by $\max_{i\in \possibleArms} \Delta_i^k$. When we sum over the batches, we get
\vspace{-4pt}
\begin{equation}
\label{eq:bias_bound}
  \sum_{t=1}^T  \bar{\mu}_{i_t}^{h_t}(t, \pi) - \mu_{i_t}(t)\leq \frac{T}{\Upsilon}\sum_{k=0}^{\Upsilon-1}\max_{i \in \possibleArms}\Delta_i^k  \leq \frac{TV_T}{\Upsilon}\, .  
\end{equation}
Indeed, in the middle term, we have a maximum on the summed variation of arm $i$ in batch $k$. On the right-hand side, we have $V_T$ which bounds the sum over the rounds of maximal variation of the arms (see Equation~\ref{eq:defbudget}). Thus, the right-hand side is larger because the maximum of sums is smaller than the sum of maximums. We can then choose $\Upsilon = \tcO{\pa{T^{1/3}V_T^{2/3}K^{-1/3}}}$ to minimise the sum of Equation~\ref{eq:variance_bound} and ~\ref{eq:bias_bound}. It leads to the leading term of our Theorem~\ref{th:variation-minimax}. Notice that we still have to handle the first pull of each arm in each batch. If we bound roughly each first pull by $V_T$, we would get $K\Upsilon V_T \sim \tcO{\pa{V_T^{5/3}}}$ which would be the leading term for large $V_T$. Our Lemma~\ref{lem:FP} is more careful such that it leads to a second order term when $KV_T \leq o\pa{T}$.

 \section{Rested rotting bandits}
 \label{sec:rested}
 \paragraph{Setup} We use the rotting setup of \cite{seznec2019rotting}, which extends the one of \cite{levine2017rotting}. This setup is \emph{rested} non-stationary bandits: the change in arm's reward is triggered by the pulls. Hence, we have $\mu_{i}(t,n) = \mu_i(n)$. Thus, we note that $\bar{\mu}_i^h(t, \pi) = \bar{\mu}_i^h(\Nitmone) = \frac{1}{h} \sum_{s=0}^{h-1} \mu_i(\Nitmone-s)$. With a slight abuse of notations, we will also use $\hat{\mu}_i^h(\Nitmone) \triangleq \hat{\mu}_i^h(t, \pi)$\footnote{The average of the observations depends on the realization of the noise $\epsilon_t$ at time $t$. Yet, these $h$ samples of noise are i.i.d.\,and thus do not perturb the analysis (see Prop.~\ref{prop:prb_favorable_event}).}. Let 
 \vspace{-4pt}
 \begin{multline}
      L \triangleq \max_{i \in \possibleArms} \max_{n \in \left\{0,\dots, T-1\right\}} \mu_i(n) - \mu_i(n-1) ,\label{eq:LT}\\
      \text{ with } \mu_i(-1) \triangleq \max_{j \in \possibleArms} \mu_j(0).\nonumber 
 \end{multline}
Hence, $L$ bounds both the variation of $\mu_i$s between two consecutive pulls and the gaps between arms at the first pulls. This is an important quantity for the rested rotting analysis because the minimax rate for the noise-free case is $\cO\!\pa{KL}$ \citep{heidari2016tight}.
 

\paragraph{Theoretical guarantees}
The analysis of \RAWUCB is straightforward from the analysis of \FEWA due to their similarity. Thus, we recover the problem independent and dependent bounds (see \citet{seznec2019rotting} for a sketch of the proof, and App.~\ref{app:rested-theory} for a detailed analysis).
 
\begin{restatable}[gap-free bound]{proposition}{restaalgoindepub}
\label{prop:rested-PI}
Let $\pi \in \left\{ \piF, \piR\right\}$ tuned with $\alpha \geq 5$ or $\pi \in \left\{ \piEF, \piER\right\}$ tuned with $\alpha \geq 4$ and $m=2$. For any rotting bandit scenario with means $\{\mu_i\}_{i}$ satisfying Assumption~\ref{assum:general} with bounded decay $L$ and any time horizon $T$, $\pi$ suffers an expected regret,
\vspace{-4pt}
\begin{equation*}
\EE{\regret(\pi)} \leq C_\pi \sigma \sqrt{\log\pa{T}}\pa{\sqrt{KT} +K} + 6KL.
\end{equation*}
\end{restatable}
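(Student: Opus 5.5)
We follow the rested rotting analysis of \citet{seznec2019rotting}, using only Lemma~\ref{lem:core-RAWUCB} (or its \FEWA/efficient analogues, which differ only in the constant $C_\pi$) and Proposition~\ref{prop:prb_favorable_event}.

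\textbf{Step 1: isolate the bad events.} We split the expected regret on $\HPevent$ and $\bar{\HPevent}$ for each round $t>K$. On a bad round the instantaneous loss is not obviously bounded by $L$. We control it by the largest possible drop, which is at most $TL$ since rewards decay by at most $L$ per pull from a common start. With $\PPempty[\bar{\HPevent}]\le Kt^{2-\alpha}$ and $\alpha\ge 5$ (resp.\ $t^{1-\alpha}$ and $\alpha\ge4$ for the efficient versions), we get $\sum_t Kt^{2-\alpha}\cdot TL=\cO(KL)$. The first $K$ rounds also cost at most $KL$ by definition of $L$ through $\mu_i(-1)$. Together these give the $6KL$ term after careful constants.

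\textbf{Step 2: rewrite the oracle reward.} On good events we use the rested structure. By \citet{heidari2016tight}, $\pi_O$ is optimal, so $J_T(\pi^\star_T)=\sum_i\sum_{s=0}^{N^\star_{i}-1}\mu_i(s)$, where $N^\star_i$ are the oracle pull counts and $\sum_i N^\star_i=T$. Define $\underpullSet$ as the arms with $N_{i,T}<N^\star_i$ and $\overpullSet$ as those with $N_{i,T}>N^\star_i$. The regret becomes
\[
\sum_{i\in\underpullSet}\sum_{s=N_{i,T}}^{N^\star_i-1}\mu_i(s)-\sum_{i\in\overpullSet}\sum_{s=N^\star_i}^{N_{i,T}-1}\mu_i(s),
\]
and the two index sets contain the same number of terms. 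Let $\mu^+_T\triangleq\max_{i\in\underpullSet}\mu_i(N_{i,T})$. Monotonicity bounds every underpulled term by $\mu^+_T$. For an underpulled arm $j$ at any round $t$, $\mu_j(N_{j,t-1})\ge\mu_j(N_{j,T})$, so $\max_i\mu_i(N_{i,t-1})\ge\mu^+_T$ at every round. Hence the regret is at most $\sum_{i\in\overpullSet}\sum_{s=N^\star_i}^{N_{i,T}-1}(\mu^+_T-\mu_i(s))$.

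\textbf{Step 3: bound the overpulled arms.} Take $i\in\overpullSet$ with $h_i\triangleq N_{i,T}-N^\star_i$, and let $t_i$ be the last round it is pulled. Apply Lemma~\ref{lem:core-RAWUCB} at $t_i$ with window $h_i-1$, which contains pulls $N^\star_i,\dots,N_{i,T}-2$; this requires $h_i\ge2$. It gives $\sum_{s}(\mu^+_T-\mu_i(s))\le 2(h_i-1)c(h_i-1,\delta_{t_i})\le C_\pi\sigma\sqrt{\log T}\sqrt{h_i}$. The single last sample $\mu_i(N_{i,T}-1)$ lies outside the window, and its gap to $\mu^+_T$ costs an extra additive term. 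Bounding that gap requires care: we compare it to the previous pull via $L$, or run the lemma with window $1$, giving $2c(1,\delta)\lesssim\sigma\sqrt{\log T}$. This yields the $C_\pi\sigma\sqrt{\log T}\,K$ term, with the $h_i=1$ case also absorbed there. Summing and applying Cauchy--Schwarz with $\sum_i h_i\le T$ gives $C_\pi\sigma\sqrt{\log T}(\sqrt{KT}+K)$.

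\textbf{Main obstacle.} The hardest part is Step~3 and the bad-event bookkeeping. The window in Lemma~\ref{lem:core-RAWUCB} must cover exactly the overpulled samples, and the lemma only compares to the best available arm at time $t_i$, not to $\mu^+_T$. The monotonicity argument of Step~2 bridges this gap, and only the boundary sample needs the extra $L$ or $\sigma\sqrt{\log T}$ slack.
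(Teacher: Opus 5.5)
Your architecture matches the paper's proof: the same underpull/overpull rewriting against a single comparator $\mu^+_T$, Lemma~\ref{lem:core-RAWUCB} with window $h-1$ at the last overpull, a boundary-sample correction, and Jensen/Cauchy--Schwarz with $\sum_i h_i\le T$. The bad-event bookkeeping in Step~1, however, fails as written.

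\textbf{The bad-event bound in Step~1.} If you bound the loss of a bad round by $TL$, the sum $\sum_t Kt^{2-\alpha}\cdot TL$ contains the term $t=K+1$. That term alone equals $K(K+1)^{2-\alpha}TL$, which is linear in $T$ whatever $\alpha$. The reason is that $\PPempty[\bar{\HPevent}]$ is a constant for small $t$, not $\cO(1/T)$. So this route gives no $\cO(KL)$ term.

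The missing idea is to make the crude bound depend on the round. Do the Step~2 rewriting first: in the rested setting the regret is not a sum of per-round losses against a per-round comparator. Then charge each overpull term $\mu^+_T-\mu_i(s)$ to the round $t$ at which it was made. Since $s\le t-1$ and $\mu^+_T\le\max_j\mu_j(0)$, the definition of $L$ (through $\mu_i(-1)$) gives $\mu^+_T-\mu_i(s)\le (s+1)L\le tL$. There is also at most one overpull per round. Hence $\EE{B}\le\sum_t Kt^{2-\alpha}\cdot Lt=KL\sum_t t^{3-\alpha}\le KL\zeta(2)$ for $\alpha\ge5$. For the efficient versions with $\alpha\ge4$ you get $6KL\sum_t t^{2-\alpha}<\infty$. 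This extra factor $t$ is exactly why the rested statement needs $\alpha\ge5$, whereas the restless theorems only need $\alpha\ge4$.

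\textbf{Applying the lemma in Step~3.} You apply Lemma~\ref{lem:core-RAWUCB} at $t_i$, the last round arm $i$ is pulled, but nothing guarantees that $\HPt{t_i}$ holds there. The paper instead uses $h^\xi_{i,T}$, the index of the last overpull of $i$ made on the favorable event. It applies the lemma at that round with window $h^\xi_{i,T}-1$. All later overpulls were made on bad events and are charged to $B$. The earlier bad-event overpulls are included in the window at no harm, since every overpull term is nonnegative.

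\textbf{The boundary sample.} You need both corrections, not one or the other. The window-$1$ lemma at that round controls $\mu^+_T$ minus the mean of the previous pull, not the last sample itself. The extra $L$ then bridges to the last sample; this per-arm $L$ is part of what feeds the $6KL$ term.

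With these repairs your argument coincides with the paper's.
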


\begin{restatable}[gap-dependent bound]{proposition}{restaalgoub}\label{prop:rested-PD}
$\pi \in \left\{ \piF, \piR\right\}$ tuned with $\alpha \geq 5$ (or $\pi \in \left\{ \piEF, \piER\right\}$ tuned with $\alpha \geq 4$ and $m=2$) suffers an expected regret,
\vspace{-4pt}
\begin{align*}
\EE{\regret(\pi)} \leq \sum_{i\in \arms} \pa{\!\frac{C_\pi^2\sigma^2\log\pa{T}}{\Delta_{i,\hiT^+-1}} + C_\pi \sigma \sqrt{ \log\pa{T}} +6L\! }
\end{align*}
\text{with $\!h_{i,T}^+\! \triangleq\! \max \left\{\! \window \!\leq\! 
1 \!+\! \frac{C_\pi^2 \!\subgaussian^2 \!\log T}{\Delta_{i,h-1}^2} \!\right\}\!\CommaBin$} and the pseudo-gap
\[\Delta_{i,h} \triangleq \min_{j \in \possibleArms} \reward_j\pa{N_{j,T}^\star -1} - \bar{\reward}_i^h\left( N_{i,T}^\star+h \right).\]
\end{restatable}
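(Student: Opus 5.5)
We follow the rested rotting analysis of \citet{seznec2019rotting}, using only Lemma~\ref{lem:core-RAWUCB} and Proposition~\ref{prop:prb_favorable_event}. The first step splits the expected regret on the favorable events $\HPevent$ and their complements. Every instantaneous regret is at most of order $TL$, because the rested rewards move by at most $L$ per pull and the initial gaps are at most $L$. With $\alpha\geq 5$, Proposition~\ref{prop:prb_favorable_event} gives $\sum_t \PPempty[\bar\HPevent]\, T L \leq \sum_t K t^{2-\alpha} T L = \cO(KL)$. These rounds are absorbed into the $6L$ per-arm term. For the efficient variants, Proposition~\ref{prop:prb_favorable_event_eff} gives $t^{1-\alpha}$, which explains why $\alpha\geq 4$ suffices there.

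On the good event we use the rested regret decomposition of \citet{heidari2016tight}. Since $\pi_O$ is optimal, it pulls each arm $N^\star_{i,T}$ times. Rewards are non-increasing in $n$, so
\[
R_T(\pi)=\sum_{i\in\underpullSet}\sum_{n=N_{i,T}}^{N^\star_{i,T}-1}\mu_i(n)-\sum_{i\in\overpullSet}\sum_{n=N^\star_{i,T}}^{N_{i,T}-1}\mu_i(n)\leq \sum_{i\in\overpullSet}\sum_{n=N^\star_{i,T}}^{N_{i,T}-1}\big(\mu^+_T-\mu_i(n)\big),
\]
where $\mu^+_T\triangleq\max_j \mu_j(N_{j,T}^\star-1)$. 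Here $\mu^+_T$ would be replaced by a min if needed, but the matching count of under-pulls and over-pulls lets us bound the under-pulled rewards by $\min_j\mu_j(N^\star_{j,T}-1)$, which matches the pseudo-gap. For each over-pulled arm $i$, consider the last round $t_i$ at which it was pulled, with $h_i\triangleq N_{i,T}-N^\star_{i,T}$ (the count taken before that final pull). At $t_i$ some arm $j$ is under-pulled, so $\max_k\mu_k(t_i,N_{k,t_i-1})\geq \mu_j(N_{j,t_i-1})\geq\min_j \mu_j(N^\star_{j,T}-1)$. Lemma~\ref{lem:core-RAWUCB} with window $h_i-1$ then gives $\bar\mu_i^{h_i-1}\geq \min_j\mu_j(N^\star_{j,T}-1)-2c(h_i-1,\delta_{t_i})$. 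The last over-pull is handled separately and costs at most $L$ plus a confidence term. The sum of over-pulled rewards is $(h_i-1)\bar\mu_i^{h_i-1}+\mu_i(N_{i,T}-1)$, so arm $i$ contributes at most $(h_i-1)\,2c(h_i-1,\delta_T)+\cO(L)=C_\pi\sigma\sqrt{(h_i-1)\log T}+\cO(L)$.

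The gap-free bound then follows from $\sum_i h_i\leq T$ and Cauchy--Schwarz, giving $C_\pi\sigma\sqrt{\log T}(\sqrt{KT}+K)$. The $+K$ accounts for the lone last pull, and the $L$ terms together with the bad-event terms make $6KL$.

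For the gap-dependent bound we additionally use the definition of the pseudo-gap. Writing $\bar\mu_i^{h}(N_{i,T}^\star+h)$ for the average of the over-pulled values, the lemma gives $\Delta_{i,h_i-1}\leq 2c(h_i-1,\delta_T)$, which rearranges to $h_i-1\leq C_\pi^2\sigma^2\log T/\Delta_{i,h_i-1}^2$. Hence $h_i\leq h^+_{i,T}$. Since the pseudo-gap's monotonicity in $h$ is unclear, we avoid it and argue directly from this inequality, then plug into $h_i\Delta_{i,h_i-1}$-type expressions. The regret of arm $i$ is $(h_i-1)\Delta_{i,h_i-1}$ plus the boundary terms, which is at most $\min\big((h_i-1)\Delta,\ C_\pi\sigma\sqrt{(h_i-1)\log T}\big)\leq C_\pi^2\sigma^2\log T/\Delta_{i,h^+_{i,T}-1}$. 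The step we expect to be the main obstacle is this last comparison: passing from $\Delta_{i,h_i-1}$ at the realized $h_i$ to $\Delta_{i,h^+_{i,T}-1}$ at the deterministic $h^+_{i,T}$. We would try to exploit that $h\mapsto h\,\Delta_{i,h-1}$ is controlled through the maximality in the definition of $h^+_{i,T}$, following the corresponding step in \citet{seznec2019rotting}.
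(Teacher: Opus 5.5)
\textbf{Your decomposition uses a reference level that does not upper-bound the regret, and this is a genuine gap.} The level $\max_j\mu_j(N^\star_{j,T}-1)$ does not dominate the under-pulled rewards. For $i\in\underpullSet$ these are $\mu_i(n)$ with $n\ge N_{i,T}$, and they can be as large as $\mu_i(0)$.

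A small example shows the failure. Take two arms with means $(10,9,0,\dots)$ and $(5,1,1,\dots)$, $T=3$, and a pull sequence that takes arm $2$ three times. The regret is $17$, while your right-hand side is $16$.

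Your fallback goes the wrong way. Every under-pulled reward satisfies $\mu_i(n)\ge\mu_i(N^\star_{i,T}-1)\ge\mu^-_T\triangleq\min_j\mu_j(N^\star_{j,T}-1)$. So $\sum_{\overpullSet}(\mu^-_T-\mu_i(n))$ is a \emph{lower} bound on the regret; it equals $8$ in the example. Applying Lemma~\ref{lem:core-RAWUCB} with $\max_k\mu_k(N_{k,t_i-1})\ge\mu^-_T$ therefore controls the wrong quantity. Likewise, your claim that arm $i$ costs $(h_i-1)\Delta_{i,h_i-1}$ plus boundary terms is not an upper bound.

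The paper's fix is to use $\mu^+_T(\pi)\triangleq\max_j\mu_j(N_{j,T})$, the best value still available to $\pi$ after round $T$. It dominates every under-pulled reward. At every round $t\le T$, monotonicity gives $\max_k\mu_k(N_{k,t-1})\ge\mu^+_T(\pi)$, so the lemma directly yields $\bar\mu_i^{h-1}\ge\mu^+_T(\pi)-2c(h-1,\delta_t)$.

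The pseudo-gap enters only when bounding the number of over-pulls. If $\mu^-_T>\mu^+_T(\pi)$, then $\pi$ has collected the $T$ largest values and has zero regret. Otherwise $\mu^+_T(\pi)-\bar\mu_i^{h-1}\ge\Delta_{i,h-1}\ge 0$, which gives your self-bounding inequality and $h\le h^+_{i,T}$.

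\textbf{The final step you flagged as the obstacle is immediate.} Drop the $(h_i-1)\Delta$ branch and keep only the square-root bound, which is increasing in $h$. You have $h^\xi_{i,T}\le h^+_{i,T}$, and $h^+_{i,T}$ itself belongs to the defining set, so $h^+_{i,T}-1\le C_\pi^2\sigma^2\log T/\Delta^2_{i,h^+_{i,T}-1}$. Hence $C_\pi\sigma\sqrt{(h^\xi_{i,T}-1)\log T}\le C_\pi\sigma\sqrt{(h^+_{i,T}-1)\log T}\le C_\pi^2\sigma^2\log T/\Delta_{i,h^+_{i,T}-1}$. No monotonicity of $\Delta_{i,h}$ or of $h\,\Delta_{i,h-1}$ is needed.

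\textbf{Two further points need fixing.}
\begin{itemize}
\item Your bad-event bound is off by a factor $T$: $\sum_t Kt^{2-\alpha}\,TL$ is of order $KLT$, not $\cO(KL)$. You need the round-dependent bound $Lt$ for an over-pull made at round $t$. This holds because its pull index is below $t$ and $\mu^+_T(\pi)\le\max_j\mu_j(0)$. It gives $KL\sum_t t^{3-\alpha}\le KL\zeta(2)$ for $\alpha\ge5$. The same argument with the $t^{1-\alpha}$ bound of Proposition~\ref{prop:prb_favorable_event_eff} explains the threshold $\alpha\ge4$ for the efficient variants.
\item The last pull of arm $i$ may occur on $\bar{\xi}^\alpha_{t_i}$, where Lemma~\ref{lem:core-RAWUCB} is unavailable. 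The paper instead applies the lemma at the last over-pull made under the favorable event, indexed by $h^\xi_{i,T}$, and charges the later over-pulls to the bad-event term.
\end{itemize}
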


\RAWUCB matches the minimax rate (Prop.~\ref{prop:rested-PI}) up to poly-logarithmic factors.  \RAWUCB improves over \FEWA's problem-dependent guarantee by a factor $4$ (Prop.~\ref{prop:rested-PD}). Following Remark~1 of \cite{seznec2019rotting}, one can identify $\Delta_{i,h} = \Delta_i$ in the stationary setting. It gives almost the same guarantee than in Theorem~\ref{th:piecewise_pd} when $\Upsilon_T = 1$ (stationary case). The difference comes from the increased $\alpha$ for the rested case. Indeed, in the rested case, the regret at each round $t$ can be as bad as $Lt$. Hence, we  reduce the probability of the bad event $\bar{\HPevent}$ (see Prop.~\ref{prop:prb_favorable_event}). When the reward means are bounded (e.g. for Bernoullis), we can decrease the lower bound on $\alpha$ by one in Propositions~\ref{prop:rested-PI} and~\ref{prop:rested-PD}.

\section{Real-word data experiment on Yahoo! Front Page}
\label{sec:yahoo}
\paragraph{R6A - Yahoo! Front page today module user click log dataset} 
This dataset was used for
the Exploration and Exploitation Challenge\footnote{\url{http://explochallenge.inria.fr/}}
at ICML 2012 and inspired new algorithms. Among them 
we mention the work of \cite{traca2015regulating}
who noticed the non-stationary trend and took advantage of it. Since then the dataset continues to be a benchmark\footnote{As 
it allows for offline evaluations as the actions
were samples uniformly.}
for non-stationary bandits \citep{liu2018change-detection,cao2019nearly}. 
It contains the history of clicks on news articles of 45 millions users in the first ten days of May 2009.
 We use three features in this dataset: \textit{timestamp} (rounded every 5 minutes), \textit{article$\!\_\!$id}, and \textit{click}. 
 
\paragraph{A real decaying scenario} Every day, between 6pm and 6am EST (12 hours), we notice a decreasing trend in click probability. It suggests that people in the US read less and less news during the evening and night. For every day, we keep all the articles which have been recommended at every timestamp during the 12 hours. For these articles, we use a rolling average window of 30000 in order to estimate the probability of click for each article at each timestamp \footnote{For each timestamp, we average the values given by rolling average. These values are close to each other because the number of click opportunity per article in the same timestamp is small compared to 30000.}. We use the \underline{real} total traffic for each timestamp. We highlight that \textit{we do not enforce any of our assumptions} to create reward functions to be aligned with our setup. In particular, we do not enforce them to be piecewise constant nor to be decreasing. At each round, the learner receives 10 reward samples in order to reduce the cost of computation.

\paragraph{Algorithms and Parameters.} We compare 
\RAWUCB, \FEWA, \EXPS and \GLRUCB. We refer to Appendix~\ref{app:experiments} for a discussion about missing algorithms and tuning. Note that our goal is to compare algorithms with the same tuning in the rested and restless benchmark. 

\paragraph{Results} We display the results for two different days. On day~2, there are several switches of optimal arms with many near-optimal ones: tracking the best arm is an "hard" problem. On day~7, one arm consistently dominates the others by far. Hence, it is an "easy" case where good algorithms should have a logarithmic regret rate. We show the six other days and running time in App.~\ref{app:experiments-appendix}.

\begin{figure}[!ht]
   \centering
{\includegraphics[width=.22\textwidth]{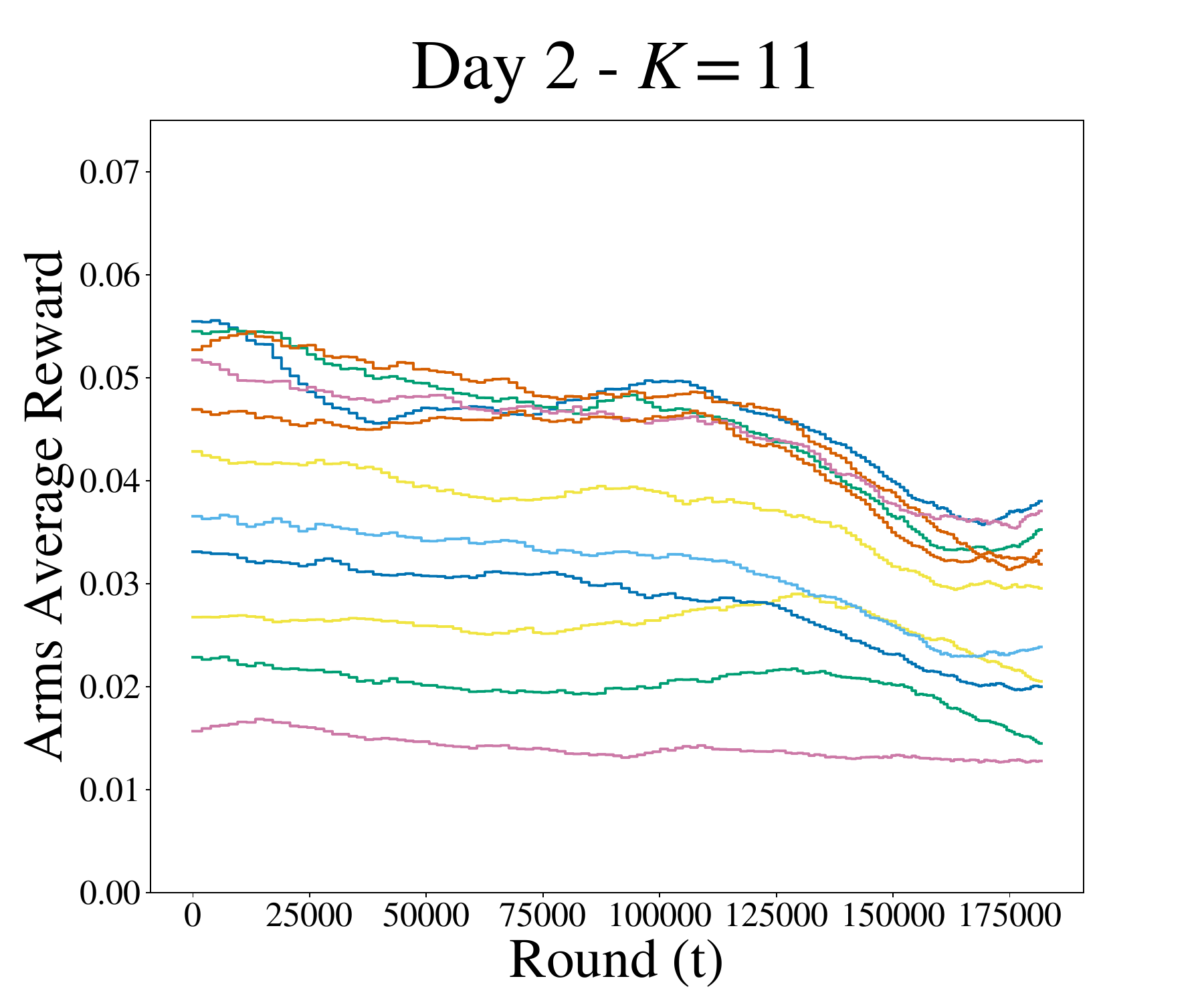}}\quad
{\includegraphics[width=.22\textwidth]{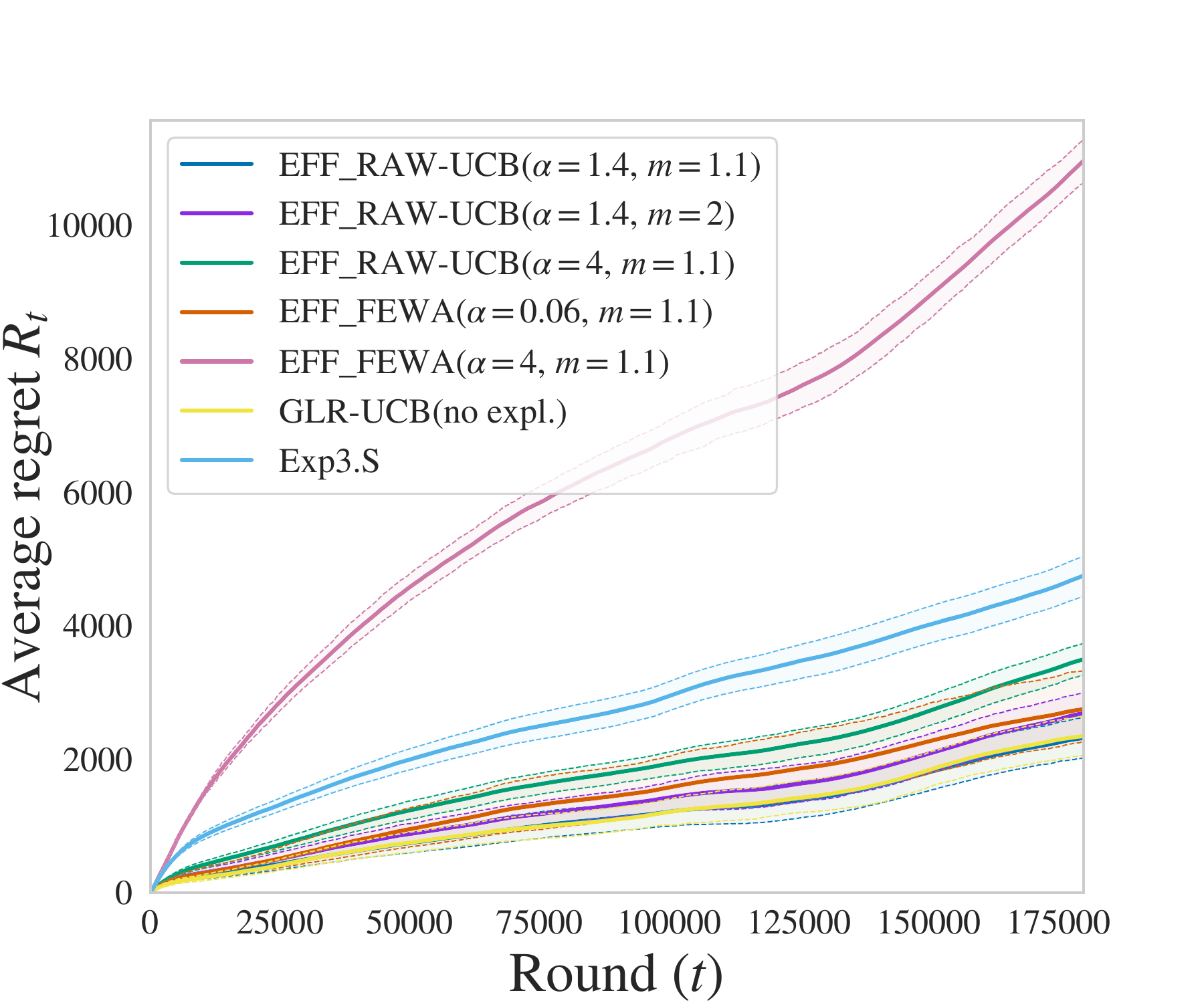}}\\
{\includegraphics[width=.22\textwidth]{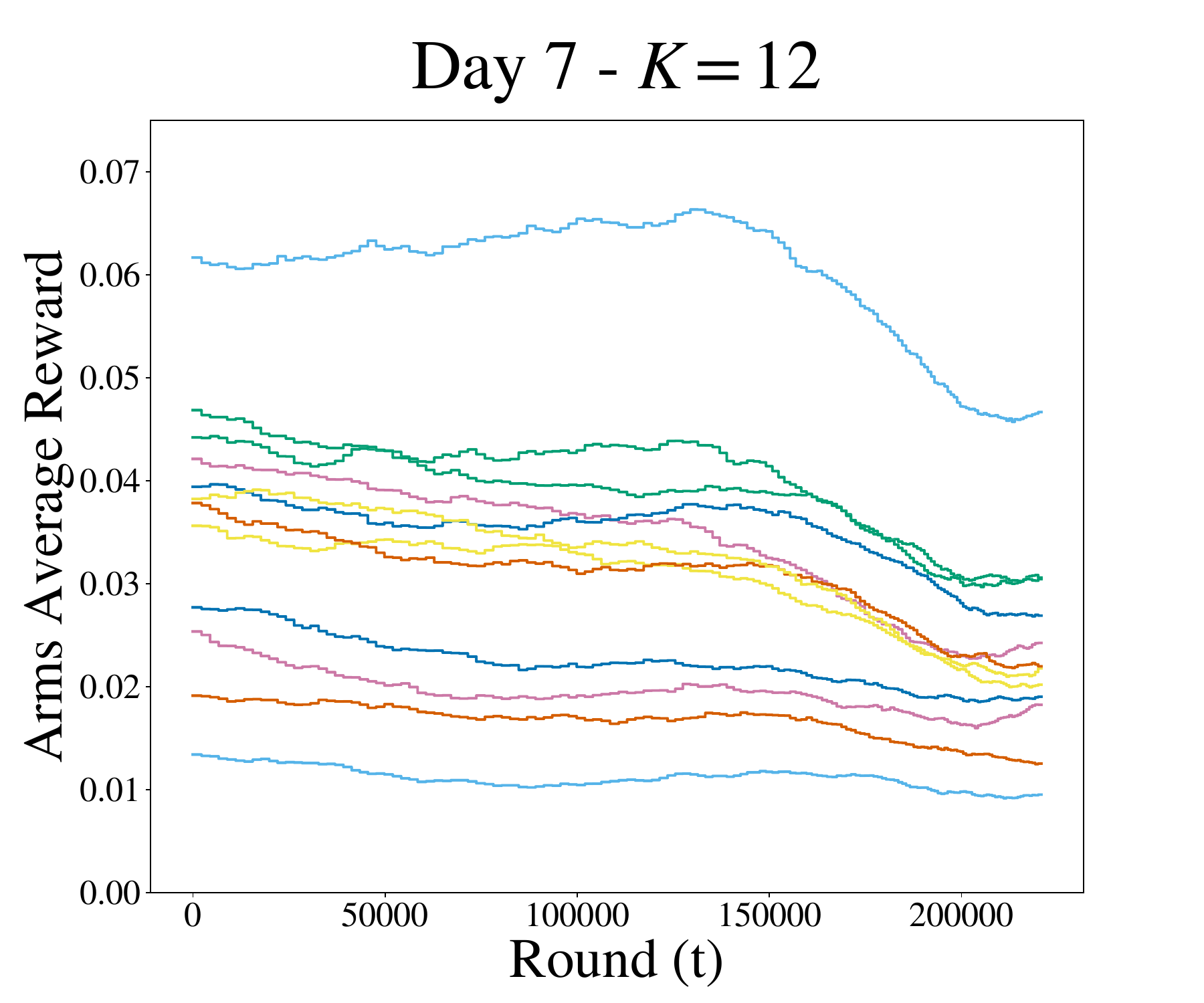}}\quad
{\includegraphics[width=.22\textwidth]{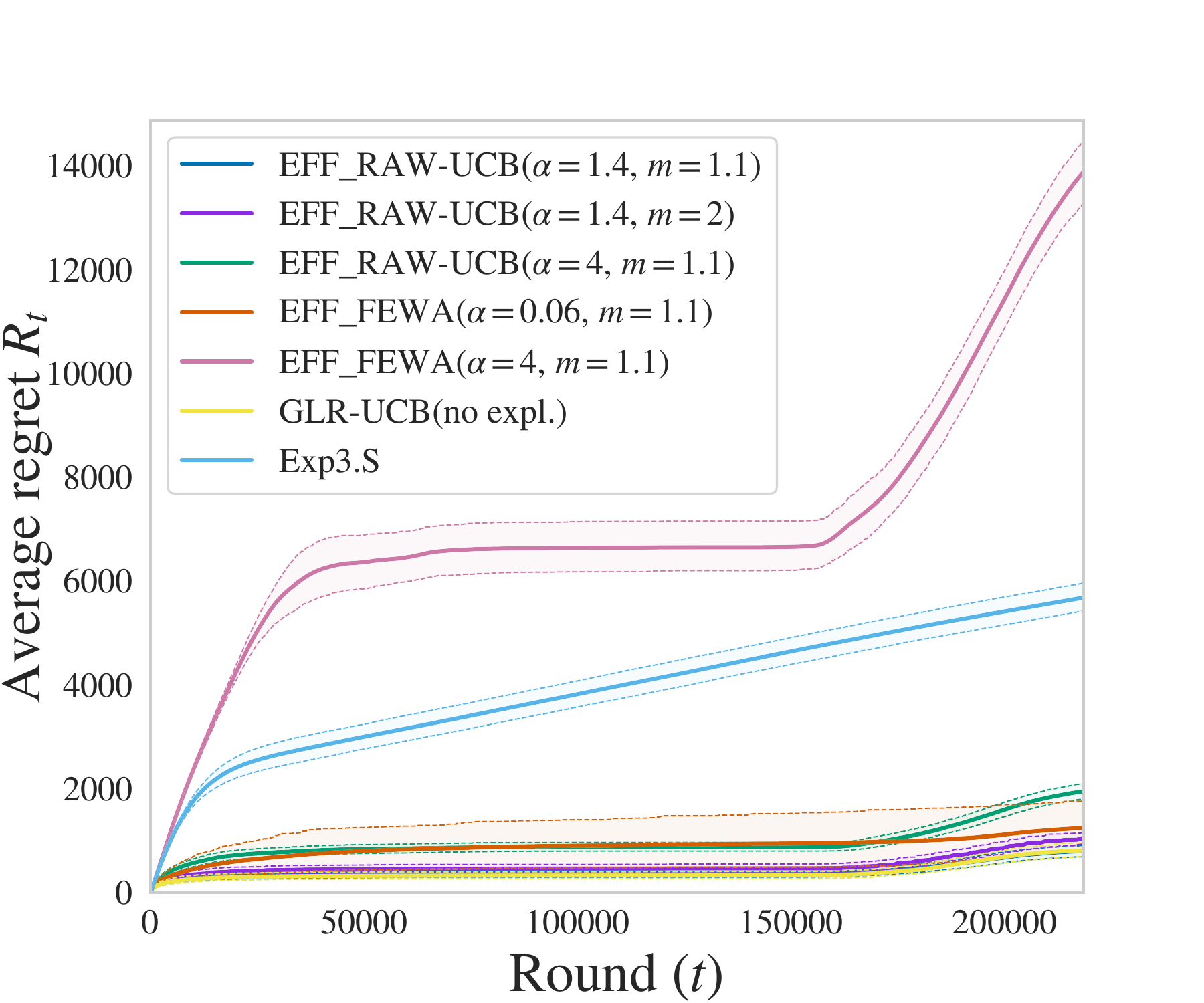}}

\caption{\textit{Left:} rewards from the Yahoo! dataset for two days. \textit{Right:} average regret over 500 runs.}
\label{Fig-regret-yahoo}
\end{figure}
\paragraph{{\RAWUCB} vs {\FEWA}.} The two algorithms compute the same statistics and share most of their analysis. Yet, {\RAWUCB} consistently outperforms {\FEWA} on the full (rested and restless) benchmark. The difference between the two is even more significant in the restless case. Moreover, {\RAWUCB} is also simpler to implement and faster to run. Its theoretical tuning $\alpha = 4$ gets reasonable result, while theoretical {\FEWA} is impractical. Finally, its empirical tuning $\alpha_{\mathrm{R}} =1.4$ is similar to the asymptotic optimal tuning of {\UCB} and shows good performance on both rested and restless problems. By contrast, {\FEWA} with $\alpha_{\mathrm{F}} = 0.06$ shows worse performance with larger deviation on the restless benchmark. 
\paragraph{{\RAWUCB} vs {\EXPS}.} In Appendix~\ref{app:rested-sim}, we show that random exploration of {\EXPS} leads to high regret rate in rested rotting bandits. Unsurprisingly, {\EXPS} recover more reasonable performance on the restless benchmark, on which it has theoretical guarantees. Yet, it is consistently outperformed by {\RAWUCB} when we tune the confidence bounds. It is particularly true on easy instance, e.g. on day 7. Indeed, on these cases, we expect logarithmic regret rate for {\RAWUCB}.
\paragraph{{\RAWUCB} vs {\GLRUCB} (no active exploration).} {\GLRUCB} shows good results on the rested benchmark though it is less consistent than {\RAWUCB}. On the restless benchmark, {\GLRUCB} shows similar result than {\RAWUCB}. Yet, we highlight that 1) {\GLRUCB} needs the knowledge of the horizon to tune its change-detector; 2) we use an efficient version of {\RAWUCB} which runs $\sim 10$ times faster than {\GLRUCB}. In fact, the two algorithms are similar: they are UCB index policies, they recover logarithmic rate on easy restless rotting bandits problems and hence they would both suffer near-linear worst case regret rate in the general restless setting (when active exploration is turned off for {\GLRUCB}). The main difference is that {\RAWUCB} scans its history to select its rotting UCB's window, while {\GLRUCB} scans its history to detect significant changes and restart. 

\paragraph{Acknowledgements} 
We thank Lilian Besson for hosting and reviewing our numerical experiments code on his bandits package in python \citep{SMPyBandits}. The computational experiments were conducted using the Grid'5000 experimental testbed \citep{grid5000}.

\bibliography{PhD_Thesis_Julien_Seznec}

\begin{thebibliography}{}

\bibitem[Audibert and Bubeck, 2009]{audibert2009minimax}
Audibert, J.-Y. and Bubeck, S. (2009).
\newblock {Minimax policies for adversarial and stochastic bandits}.
\newblock In {\em Proceedings of the Conference on Learning Theory (COLT),
  2009}, pages 217--226.

\bibitem[Auer et~al., 2002]{auer2002finite}
Auer, P., Cesa-Bianchi, N., and Fischer, P. (2002).
\newblock {Finite-time analysis of the multiarmed bandit problem}.
\newblock {\em Machine Learning}, 47(2-3):235--256.

\bibitem[Auer et~al., 2003]{auer2002nonstochastic}
Auer, P., Cesa-Bianchi, N., Freund, Y., and Schapire, R.~E. (2003).
\newblock {The nonstochastic multiarmed bandit problem}.
\newblock {\em SIAM Journal on Computing}, 32(1):48--77.

\bibitem[Auer et~al., 2019]{auer2019adaptively}
Auer, P., Gajane, P., and Ortner, R. (2019).
\newblock {Adaptively Tracking the Best Bandit Arm with an Unknown Number of
  Distribution Changes}.
\newblock In {\em Proceedings of the Conference on Learning Theory (COLT)},
  pages 138--158.

\bibitem[Balouek et~al., 2013]{grid5000}
Balouek, D., Amarie, A.~C., Charrier, G., Desprez, F., Jeannot, E., Jeanvoine,
  E., L{\`{e}}bre, A., Margery, D., Niclausse, N., Nussbaum, L., Richard, O.,
  Perez, C., Quesnel, F., Rohr, C., and Sarzyniec, L. (2013).
\newblock {Adding Virtualization Capabilities to the Grid'5000 Testbed}.
\newblock In {\em Communications in Computer and Information Science}, volume
  367 CCIS, pages 3--20. Springer Verlag.

\bibitem[Besbes et~al., 2014]{besbes2014stochastic}
Besbes, O., Gur, Y., and Zeevi, A. (2014).
\newblock {Stochastic Multi-Armed-Bandit Problem with Non-stationary Rewards}.
\newblock In {\em Advances in Neural Information Processing Systems (NIPS)},
  pages 199--207.

\bibitem[Besson, 2018]{SMPyBandits}
Besson, L. (2018).
\newblock {SMPyBandits: an Open-Source Research Framework for Single and
  Multi-Players Multi-Arms Bandits (MAB) Algorithms in Python}.
\newblock Online at: \url{https://GitHub.com/SMPyBandits/SMPyBandits}.

\bibitem[Besson and Kaufmann, 2018]{besson2018doubling}
Besson, L. and Kaufmann, E. (2018).
\newblock {What Doubling Tricks Can and Can't Do for Multi-Armed Bandits}.
\newblock arXiv preprint arXiv:1803.06971.

\bibitem[Besson and Kaufmann, 2019]{besson2019generalized}
Besson, L. and Kaufmann, E. (2019).
\newblock {The Generalized Likelihood Ratio Test meets klUCB: an Improved
  Algorithm for Piece-Wise Non-Stationary Bandits}.
\newblock arXiv preprint arXiv:1902.01575.

\bibitem[Bifet and Gavald{\`{a}}, 2007]{bifet2007learning}
Bifet, A. and Gavald{\`{a}}, R. (2007).
\newblock {Learning from time-changing data with adaptive windowing}.
\newblock In {\em Proceedings of the 7th SIAM International Conference on Data
  Mining}, pages 443--448.

\bibitem[Cao et~al., 2019]{cao2019nearly}
Cao, Y., Wen, Z., Kveton, B., and Xie, Y. (2019).
\newblock {Nearly Optimal Adaptive Procedure with Change Detection for
  Piecewise-Stationary Bandit}.
\newblock In {\em Proceedings of the International Conference on Artificial
  Intelligence and Statistics (AISTATS)}, pages 418--427.

\bibitem[Chapelle and Li, 2011]{chapelle2011empirical}
Chapelle, O. and Li, L. (2011).
\newblock {An Empirical Evaluation of Thompson Sampling}.
\newblock In {\em Advances in Neural Information Processing Systems (NIPS)},
  pages 2249--2257.

\bibitem[Chen et~al., 2019]{chen2019new}
Chen, Y., Lee, C.-W., Luo, H., and Wei, C.-Y. (2019).
\newblock {A New Algorithm for Non-stationary Contextual Bandits: Efficient,
  Optimal and Parameter-free}.
\newblock In {\em Proceedings of the Conference on Learning Theory (COLT)},
  pages 696--726.

\bibitem[Cheung et~al., 2019]{cheung2019new}
Cheung, W.~C., Simchi-Levi, D., and Zhu, R. (2019).
\newblock {Learning to Optimize under Non-Stationarity}.
\newblock In {\em Proceedings of the International Conference on Artificial
  Intelligence and Statistics (AISTATS)}, pages 1079--1087.

\bibitem[Chow and Teicher, 1997]{chow1997probability}
Chow, Y.~S. and Teicher, H. (1997).
\newblock {\em {Probability theory : independence, interchangeability,
  martingales}}.
\newblock Springer.

\bibitem[Garivier et~al., 2019]{garivier2018explore}
Garivier, A., M{\'{e}}nard, P., and Stoltz, G. (2019).
\newblock {Explore first, exploit next: The true shape of regret in bandit
  problems}.
\newblock {\em Mathematics of Operations Research}, 44(2):377--399.

\bibitem[Garivier and Moulines, 2011]{garivier2011upper-confidence-bound}
Garivier, A. and Moulines, E. (2011).
\newblock {On upper-confidence bound policies for switching bandit problems}.
\newblock In {\em Proceedings of the 22nd International Conference on
  Algorithmic Learning Theory (ALT), 2011, Espoo, Finland.}, volume 6925 LNAI,
  pages 174--188. Springer, Berlin, Heidelberg.

\bibitem[Heidari et~al., 2016]{heidari2016tight}
Heidari, H., Kearns, M., and Roth, A. (2016).
\newblock {Tight Policy Regret Bounds for Improving and Decaying Bandits}.
\newblock {\em Proceedings of the International Joint Conference on Artificial
  Intelligence (IJCAI)}, pages 1562--1570.

\bibitem[Immorlica and Kleinberg, 2018]{Immorlica2018}
Immorlica, N. and Kleinberg, R. (2018).
\newblock {Recharging bandits}.
\newblock In {\em Proceedings - Annual IEEE Symposium on Foundations of
  Computer Science, FOCS}, volume 2018-Octob, pages 309--319. IEEE Computer
  Society.

\bibitem[Komiyama and Qin, 2014]{komiyama2014time-decaying}
Komiyama, J. and Qin, T. (2014).
\newblock {Time-Decaying Bandits for Non-stationary Systems}.
\newblock In Liu, T.-Y., Qi, Q., and Ye, Y., editors, {\em Web and Internet
  Economics (WINE)}, pages 460--466, Cham. Springer International Publishing.

\bibitem[Lai and Robbins, 1985]{lai1985asymptotically}
Lai, T.~L. and Robbins, H. (1985).
\newblock {Asymptotically efficient adaptive allocation rules}.
\newblock {\em Advances in Applied Mathematics}, 6(1):4--22.

\bibitem[Lattimore and Szepesv{\'{a}}ri, 2020]{lattimore2020banditbook}
Lattimore, T. and Szepesv{\'{a}}ri, C. (2020).
\newblock {\em {Bandit Algorithms}}.
\newblock Cambridge University Press UK.

\bibitem[Levine et~al., 2017]{levine2017rotting}
Levine, N., Crammer, K., and Mannor, S. (2017).
\newblock {Rotting Bandits}.
\newblock In {\em Advances in Neural Information Processing Systems (NIPS)},
  pages 3074--3083.

\bibitem[Liu et~al., 2018]{liu2018change-detection}
Liu, F., Lee, J., and Shroff, N.~B. (2018).
\newblock A change-detection based framework for piecewise-stationary
  multi-armed bandit problem.
\newblock In {\em Proceedings of the AAAI Conference on Artificial Intelligence
  (AAAI)}, pages 3651--3658.

\bibitem[Lou{\"{e}}dec et~al., 2016]{louedec2016algorithme}
Lou{\"{e}}dec, J., Rossi, L., Chevalier, M., Garivier, A., and Mothe, J.
  (2016).
\newblock {Algorithme de bandit et obsolescence : un mod{\`{e}}le pour la
  recommandation (regular paper)}.
\newblock In {\em Conf{\'{e}}rence francophone sur l'Apprentissage Automatique,
  Marseille, 05/07/2016-07/07/2016}, page (en ligne),
  http://www.lif.univ-mrs.fr. Laboratoire d'Informatique Fondamentale de
  Marseille.

\bibitem[Mukherjee and Maillard, 2019]{mukherjee2019distribution}
Mukherjee, S. and Maillard, O.-A. (2019).
\newblock {Distribution-dependent and Time-uniform Bounds for Piecewise i.i.d
  Bandits}.
\newblock arXiv preprint arXiv:1905.13159.

\bibitem[Pike-Burke and Grunewalder, 2019]{pikeburke2019recovering}
Pike-Burke, C. and Grunewalder, S. (2019).
\newblock {Recovering Bandits}.
\newblock In {\em Advances in Neural Information Processing Systems (NeurIPS)},
  pages 14122--14131.

\bibitem[Russac et~al., 2019]{russac2019weighted}
Russac, Y., Vernade, C., and Capp{\'{e}}, O. (2019).
\newblock {Weighted Linear Bandits for Non-Stationary Environments}.
\newblock In {\em Advances in Neural Information Processing Systems (NeurIPS)},
  pages 12040--12049.

\bibitem[Seznec et~al., 2019]{seznec2019rotting}
Seznec, J., Locatelli, A., Carpentier, A., Lazaric, A., and Valko, M. (2019).
\newblock {Rotting bandits are no harder than stochastic ones}.
\newblock In {\em Proceedings of the International Conference on Artificial
  Intelligence and Statistics (AISTATS)}, pages 2564--2572.

\bibitem[Thompson, 1933]{thompson1933likelihood}
Thompson, W.~R. (1933).
\newblock {On the Likelihood that One Unknown Probability Exceeds Another in
  View of the Evidence of Two Samples}.
\newblock {\em Biometrika}, 25(3/4):285--294.

\bibitem[Trac{\`{a}} and Rudin, 2015]{traca2015regulating}
Trac{\`{a}}, S. and Rudin, C. (2015).
\newblock {Regulating Greed Over Time}.
\newblock arXiv preprint arXiv:1505.05629.

\bibitem[Warlop et~al., 2018]{warlop2018fighting}
Warlop, R., Lazaric, A., and Mary, J. (2018).
\newblock {Fighting Boredom in Recommender Systems with Linear Reinforcement
  Learning}.
\newblock In {\em Advances in Neural Information Processing Systems (NeurIPS)},
  pages 1757--1768.

\bibitem[Wei et~al., 2016]{wei2016tracking}
Wei, C.-Y., Hong, Y.-T., and Lu, C.-J. (2016).
\newblock {Tracking the Best Expert in Non-stationary Stochastic Environments}.
\newblock In {\em Advances in Neural Information Processing Systems (NIPS)},
  pages 3972--3980.

\bibitem[Whittle, 1980]{whittle1980multi}
Whittle, P. (1980).
\newblock {Multi-Armed Bandits and the Gittins Index}.
\newblock {\em Journal of the Royal Statistical Society. Series B
  (Methodological)}, 42:143--149.

\bibitem[Whittle, 1988]{whittle1988restless}
Whittle, P. (1988).
\newblock {Restless bandits: activity allocation in a changing world}.
\newblock {\em Journal of Applied Probability}, 25(A):287--298.

\bibitem[Zimmert and Seldin, 2019]{zimmert2019optimal}
Zimmert, J. and Seldin, Y. (2019).
\newblock {An Optimal Algorithm for Stochastic and Adversarial Bandits}.
\newblock In {\em Proceedings of the International Conference on Artificial
  Intelligence and Statistics (AISTATS)}, pages 467--475.

\end{thebibliography}
\bibliographystyle{apalike}

\newpage
\appendix
\onecolumn
\section{Outline}\label{app:outline}

The appendix of this paper is organized as follow:
\begin{itemize}[label=$\square$]
    \item Appendix~\ref{app:unlearnable} is dedicated to the unlearnability of the general decreasing setup.
    \item Appendix~\ref{app:proof_fundamental_lemma} is dedicated to Proposition~\ref{prop:prb_favorable_event} and Lemma~\ref{lem:core-RAWUCB}. For completion, we also restate a similar Lemma about algorithm \FEWA \citep{seznec2019rotting} in the rested and restless rotting framework.
    \item Appendix~\ref{app:efficient_alg} is dedicated to an efficient version of \RAWUCB.
    \item Appendix~\ref{app:restless} provides the analysis of \RAWUCB for restless rotting bandits.
    \item Appendix~\ref{app:rested-theory} provides the analysis of \RAWUCB for rested rotting bandits.
    \item Appendix~\ref{app:experiments} provides all the experiments.
\end{itemize}
\section{The general decreasing setup is unlearnable}
\label{app:unlearnable}
\restaunlearnableprop*
\begin{proof}
Let $\mu^{0}$ and $\mu^{1}$, two decreasing 2-arms bandits problems such that:
\begin{align*}
 &\mu^{0}_1(t,n) = \mu_1(n) = 1 \text{ if } n<\frac{T}{2} \text{ else } 0\,,\\
 &\mu^{1}_1(t,n) = 1\,, \\
 &\mu^{0}_2(t,n) = \mu^{1}_2(t,n) = \mu_2(t) = 1/2 \text{ if } t<\frac{T}{2} \text{ else } 0.
\end{align*}
Problem $\mu^{1}$ only evolves according to time. Hence, the oracle greedy policy $\pi_O$ is optimal for this problem and collects
\begin{equation}
\label{eq:regret1-piO}
J_T\pa{\pi_O, \mu^1} = T.
\end{equation}
On $\mu^{0}$, $\pi_O$ selects arm $1$ during $\floor{\frac{T}{2}}$ rounds and then both arms yield $0$ reward. Thus, $\pi_O$ collects 
\[J_T\pa{\pi_O, \mu^0} = \floor{\frac{T}{2}}.\]
However, let $\pi_0$ the policy which selects arm 2 for $\floor{\frac{T}{2}}$ rounds and arm 1 afterwards. Thus, $\pi_0$ collects
\begin{equation}
\label{eq:regret0-pi0}
  J_T\pa{\pi_0, \mu^0} = \pa{3/2} \floor{\frac{T}{2}}.  
\end{equation}
Hence, we conclude the first part of our proposition, 
\[R_T\pa{\pi_O, \mu^0} = J_T\pa{\pi^\star_T, \mu^0} - J_T\pa{\pi_O, \mu^0} \geq J_T\pa{\pi_0, \mu^0} - J_T\pa{\pi_O, \mu^0} \geq  \floor{\frac{T}{4}}.\]
Now, we consider any learning policy $\pi_S$ and we call $\EEempty_j\big[N_{i,t}(\pi_S)\big]$ the (expected, if the policy is random) number of pulls of arm $i$ at round $t$ by $\pi_S$ on problem $j$. Note that the leaner will receive the same rewards for both problems until at least $\floor{\frac{T}{2}}$. Therefore, we have that 

\[ \forall t \leq \floor{\frac{T}{2}}, \pi\big(\mathcal{H}_t\pa{\mu^0}\big) = \pi\big(\mathcal{H}_t\pa{\mu^1}\big) \implies \EEempty_0\Big[N_{2,\floor{\frac{T}{2}}}(\pi_S)\Big] = \EEempty_1\Big[N_{2,\floor{\frac{T}{2}}}(\pi_S)\Big] \triangleq n_2.\]

On problem $\mu^{1}$, $\pi_S$ collects a reward of at most,
\begin{equation}
\label{eq:regret1-piS}
    J_T\pa{\pi_S, \mu^1} = \EEempty_1[N_{1,T}(\pi_S)] + \frac{n_2}{2} = T - \EEempty_1[N_{2,T}(\pi_S)] + \frac{n_2}{2} \leq T - \frac{n_2}{2}\CommaBin
\end{equation}
because $n_2 = \EEempty_1\Big[N_{2,\floor{\frac{T}{2}}}(\pi_S)\Big] \leq \EEempty_1[N_{2,T}(\pi_S)]$. Using Equations~\ref{eq:regret1-piO} and~\ref{eq:regret1-piS}, we can lower bound the regret of $\pi_S$, 
\[ R_T\pa{\pi_S, \mu^1} = J_T\pa{\pi_O, \mu^1} - J_T\pa{\pi_S, \mu^1} \geq  \frac{n_2}{2}\cdot \]

On problem $\mu^{0}$, $\pi_S$ collects a reward of at most,
\begin{equation}
\label{eq:regret0-piS}
    J_T\pa{\pi_S, \mu^0} = \min\pa{\EEempty_1[N_{1,T}(\pi_S)],\floor{\frac{T}{2}}} + \frac{n_2}{2} \leq \floor{\frac{T}{2}} + \frac{n_2}{2}\cdot
\end{equation}
Using Equations~\ref{eq:regret0-pi0} and~\ref{eq:regret0-piS}, we can lower bound the regret of $\pi_S$, 
\[ R_T\pa{\pi_S, \mu^0} = J_T\pa{\pi_O, \mu^0} - J_T\pa{\pi_S, \mu^0} \geq  \frac{\floor{T/2} - n_2}{2}\cdot \]

Hence, the worst case regret on the two setups is bounded by 
\[R_T(\pi_S) \geq \max\pa{\frac{n_2}{2}, \frac{\floor{\frac{T}{2}} - n_2}{2}} \geq \floor{\frac{T}{8}}\!\cdot \]

\end{proof}

\section{Statistical guarantees: Proposition~\ref{prop:prb_favorable_event} and Lemma~\ref{lem:core-RAWUCB}}
\label{app:proof_fundamental_lemma}
\restachernoff*
\begin{proof}
We want to upper bound the probability
\[
\PP{\bar{\HPevent}} = \PP{\exists i \in K,\,\exists n \leq t-1, \exists h \leq\, n, \big|\hmu^h_i(t, \pi)-\bmu^h_i(t, \pi)\big|>c(h,\delta_t) }.
\]

By Doob's optional skipping (e.g. see \citet{chow1997probability}, Section 5.3) there exists a sequence of random independent variables $(\epsilon'_l)_{l\in\NN}$, $\sigma^2$ sub-Gaussian such that 
\begin{align*}
    \PPv\Big[\exists n \leq t-1, \exists h &\leq\, n, \big|\hmu^h_i(t, \pi)-\bmu^h_i(t, \pi)\big|>c(h,\delta_t) \Big]\\
    &= \PP{\,\exists n \leq t-1, \exists h \leq n,\, |\hepsilon^h_n|>c(h,\delta_t) }\\
    &\leq \sum_{n=1}^{t-1}\sum_{h=1}^n \PP{|\hepsilon^h_n|>c(h,\delta_t)} \\
    &\leq  \frac{t(t-1)}{2}\cdot \delta_t \,,
\end{align*}
where we used the Chernoff inequality in the last line. Thus, a union bound  over the arms allows us to conclude that
\[
\PPempty \Big[\bar{\HPevent}\Big]\leq \frac{K \delta_t t^2}{2}\cdot
\]
\end{proof}
\restafundamentallemma*
\begin{proof}
We denote by $
i^\star_t \in \argmax_{i\in \possibleArms}{\mu_i(t,N_{i,t-1})}
$, a best available arm at time $t$ and 
\[
h_{i,t}^{\min} \in \argmin_{h\leq N_{i,t-1}}{\hat{\mu}_i^h(t, \pi) + c(h,\delta_t)},
\]
a window which minimizes \RAWUCB index at time $t$ for arm $i$. Hence, because the reward functions are non-increasing, we know that 
\begin{equation*}
 \mu_{i^\star_t}(t, N_{i^\star_t,t-1}) \leq   \bar{\mu}_{i^\star_t}^1(t, \pi) \leq \cdots \leq  \bar{\mu}_{i^\star_t}^{h_{i^\star_t,t}^{\min}}(t, \pi)\cdot
\end{equation*}
On the high-probability event $\xi_t$, we know that the true average of the means cannot deviate significantly from the average of the observed quantity,
\begin{equation*}
\bar{\mu}_{i^\star_t}^{h_{i^\star_t,t}^{\min}}(t, \pi) \leq \hat{\mu}_{i^\star_t}^{h_{i^\star_t,t}^{\min}}(t, \pi) + c(h_{i^\star_t,t}^{\min},\delta_t).
\end{equation*}
We know that the selected arm $i_t$ at time $t$ has the largest index, hence, 
\[
\hat{\mu}_{i^\star_t}^{h^{\min}_{i^\star_t,t}}(t, \pi) + c(h^{\min}_{i^\star_t,t},\delta_t) \leq \hat{\mu}_{i_t}^{h^{\min}_{i_t,t}}(t, \pi) + c(h^{\min}_{i_t,t},\delta_t).
\]
From $h_{i,t}^{\min}$ definition, we know that this quantity is below any upper-confidence bound for any other window $h$
\[
\hat{\mu}_{i_t}^{h^{\min}_{i_t,t}}(t, \pi) + c(h^{\min}_{i_t,t},\delta_t) \leq \hat{\mu}_{i_t}^{h}(t, \pi) + c(h,\delta_t).
\]
Finally, using again the concentration of the average on the $\HPevent$, 
\[
\hat{\mu}_{i_t}^{h}(t, \pi) + c(h,\delta_t) \leq \bar{\mu}_{i_t}^{h}(t, \pi) + 2c(h,\delta_t).
\]
Hence, putting all the equations together, we can write
\begin{equation*}
\bar{\mu}_{i_t}^{h}(t, \pi) \geq \max_{i \in \possibleArms} \mu_{i}(t,N_{i,t-1}) - 2 c(\window, \delta_t).
\end{equation*}
\end{proof}

For completion, we also restate a similar Lemma about algorithm \FEWA \citep{seznec2019rotting} in the rested and restless rotting framework.
\begin{restatable}{lemma}{restalemmafewa}
\label{lem:core-FEWA}
For {\FEWA} tuned with $\alpha$, on the favorable event $\HPevent$, if an arm~$i$ passes through a filter of window $h$ at round $t$, i.e., $i\in\ \arms_h$, then the average of its $h$ last pulls satisfies
\begin{equation}\label{eq:fundamental-eq-FEWA}
\bmu^{h}_i(t, \piF) \geq  \max_{i \in \arms} \mu_i(t, \Nitmone) - 4 c(h, \delta_t).
\end{equation}
Therefore, at round $t$ on favorable event $\HPevent$, if arm~$i_{t}$ is selected by {\FEWA($\alpha$)}, for any $h \leq \Nitmone$,  the average of its $h$ last pulls cannot deviate significantly from the best available arm at that round, i.e.,
\begin{equation*}
\bmu^{h}_i(t, \piF) \geq \max_{i \in \arms} \mu_i(t, \Nitmone) - 4 c(h, \delta_{t}).
\end{equation*}
\end{restatable}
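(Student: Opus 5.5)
We need to prove Lemma~\ref{lem:core-FEWA}. This requires recalling how \FEWA works: at round $t$ it starts from the full set $\arms_1=\arms$ and, for increasing windows $h=1,2,\dots$, keeps only the arms whose estimate is close to the best one,
\[
\arms_{h+1}\triangleq\Big\{i\in\arms_h:\ \hmu^h_i(t,\piF)\ge \max_{j\in\arms_h}\hmu^h_j(t,\piF)-2c(h,\delta_t)\Big\}.
\]
It stops and pulls an arm of $\arms_h$ as soon as some arm $i\in\arms_h$ satisfies $N_{i,t-1}=h$. The selected arm $i_t$ therefore belongs to every filter $\arms_h$ for $h\le N_{i_t,t-1}$, so the second statement follows immediately from the first. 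The real work is the first inequality, and the plan is to prove it by induction on $h$, tracking a best available arm $\ist\in\argmax_i\mu_i(t,N_{i,t-1})$.

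\textbf{Step 1: the best arm's averages dominate.} By Assumption~\ref{assum:general}, in both the time and the pull arguments, the means of $\ist$ at past pulls are at least its current available value. Hence for every $h\le N_{\ist,t-1}$,
\[
\bmu^h_{\ist}(t,\piF)\ge \mu_{\ist}(t,N_{\ist,t-1}).
\]
This is the same monotone chain used in the proof of Lemma~\ref{lem:core-RAWUCB}.

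\textbf{Step 2: a two-case bound for the filtered arms.} Fix $h$ and $i\in\arms_{h+1}$; we want $\bmu^h_i\ge\mu^\star-4c(h,\delta_t)$, where $\mu^\star\triangleq\max_j\mu_j(t,N_{j,t-1})$. There are two cases, depending on whether $\ist$ is still present in the filter.
\begin{itemize}
\item \emph{Case $\ist\in\arms_h$.} On $\HPevent$, and using the filter condition,
\[
\bmu^h_i\ge\hmu^h_i-c\ge\hmu^h_{\ist}-3c\ge\bmu^h_{\ist}-4c\ge\mu^\star-4c.
\]
Here $\hmu^h_{\ist}$ is well defined because $\ist\in\arms_h$ implies that \FEWA did not stop earlier, so $N_{\ist,t-1}\ge h$.
\item \emph{Case $\ist\notin\arms_h$.} Then $\ist$ was eliminated at some earlier window $h'<h$ by an arm $j\in\arms_{h'}$ with $\hmu^{h'}_j$ large. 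Chasing the chain of eliminators only yields a bound with window $h'$, not $h$.
\end{itemize}

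\textbf{Step 3: closing the second case.} This is the main obstacle, and the fix is to control $\hmu$ for fixed $\ist$ differently. We first show, by induction over $h$, that on $\HPevent$ the arm $\ist$ is never eliminated. Indeed, for any $j\in\arms_h$,
\[
\hmu^h_j\le\bmu^h_j+c.
\]
Since the concatenated recent averages of every arm are all at least that arm's current value, we would like $\bmu^h_j\le\bmu^h_{\ist}$, but this comparison is not true in general.

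So instead we use the standard \FEWA argument, elimination-free: if $\ist$ were removed at window $h$, there is $j$ with $\hmu^h_j>\hmu^h_{\ist}+2c$. That only gives $\bmu^h_j>\bmu^h_{\ist}$, which is no contradiction. The correct route is therefore to bound $i$ directly against the maximizer in the filter.

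For $i\in\arms_{h+1}$, let $j_h\in\argmax_{j\in\arms_h}\hmu^h_j$. Then
\[
\bmu^h_i\ge\hmu^h_{j_h}-3c\ge\hmu^h_{\ist_h}-3c,
\]
where $\ist_h$ is the best arm among those still in $\arms_h$. It then suffices to show that $\max_{j\in\arms_h}\bmu^h_j\ge\mu^\star-c$. We prove this by induction on $h$. The base case $h=1$ holds since $\ist\in\arms_1$. For the inductive step, the arm that attains the maximum at window $h-1$ passes the filter, and monotonicity transfers the bound from window $h-1$ to window $h$ for that arm. I expect this last transfer to be the delicate step; if it fails, the fallback is to note that a surviving maximizer of $\hmu^{h}$ stays within $2c$ of every eliminated arm's estimate, and to absorb the losses into the constant $4$.
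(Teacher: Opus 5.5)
Your last paragraph arrives at the paper's argument. For an arm surviving the window-$h$ filter, $\bmu^h_i\ge\max_{j\in\arms_h}\bmu^h_j-4c$, where $c=c(h,\delta_t)$. The quantity $\max_{j\in\arms_h}\bmu^h_j$ is non-decreasing in $h$. Finally, $\max_{j\in\arms_1}\bmu^1_j\ge\bmu^1_{\ist}\ge\mu^\star$. With this route, the case split of Step~2 is unnecessary, and so is the attempt to show that $\ist$ is never eliminated (which cannot be established, and is false in general).

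As written, however, the proposal does not yet prove the lemma. First, your reduction is off by one confidence width. From $\bmu^h_i\ge\hmu^h_{j_h}-3c$ and $\hmu^h_{j_h}=\max_{j\in\arms_h}\hmu^h_j\ge\max_{j\in\arms_h}\bmu^h_j-c$ you get $\bmu^h_i\ge\max_{j\in\arms_h}\bmu^h_j-4c$. So you need $\max_{j\in\arms_h}\bmu^h_j\ge\mu^\star$ with no slack; the target $\mu^\star-c$ you state would only give $5c$.

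Second, the inductive step you flag as delicate is the heart of the proof and must be justified. Your fallback of absorbing losses into the constant cannot work, since any per-window loss accumulates over $h$. In fact the step is lossless. Let $i^h_{\max}\in\argmax_{j\in\arms_h}\bmu^h_j$. On $\HPevent$,
\[
\hmu^h_{i^h_{\max}}\ge\bmu^h_{i^h_{\max}}-c\ge\bmu^h_{j_h}-c\ge\hmu^h_{j_h}-2c=\max_{j\in\arms_h}\hmu^h_j-2c,
\]
so $i^h_{\max}$ passes the filter. Moreover $\bmu^h_{i^h_{\max}}\le\bmu^{h+1}_{i^h_{\max}}$, because the extra $(h+1)$-th last sample was collected at an earlier round and after fewer pulls, hence has a larger mean by Assumption~\ref{assum:general}. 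Thus $\max_{j\in\arms_h}\bmu^h_j\le\max_{j\in\arms_{h+1}}\bmu^{h+1}_j$, which is exactly the paper's third step.

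Finally, for the second claim at $h=N_{i_t,t-1}$, you need $i_t$ to have passed the window-$N_{i_t,t-1}$ filter itself, as the paper uses. Under your stopping rule, $i_t$ is only known to lie in $\arms_{N_{i_t,t-1}}$, i.e.\ to have passed filters up to window $N_{i_t,t-1}-1$. Monotonicity then gives only $4c(N_{i_t,t-1}-1,\delta_t)$.
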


\begin{proof}
Let $i \in \arms_h$ be an arm that passed a filter of window $h$ at round $t$.
First, we use the confidence bound for the estimates and we pay the cost of keeping all the arms up to a distance $2c(h,  \delta_t)$ of $\hmu^{h}_{\max,\, t} \triangleq \max_{j \in \arms_h} \hmu_i^h(t,\piF)$,
\begin{equation}
\label{1}
\bmu_i^h(t,\piF)\geq \hmu_i^h(t,\piF)- c(h,  \delta_t) \geq \hmu^h_{\max,t} - 3c(h, \delta_t)
\geq \max_{j \in \arms_h}\bmu_j^h(t,\piF)  - 4 c(h, \delta_t),
\end{equation}
where in the last inequality, we used that for all $j \in \arms_h,$ \[\hmu^{h}_{\max,t} \geq  \hmu_j^h(t,\piF)  \geq \bmu_j^h(t,\piF)  - c(h, \delta_t).\]
Second, we call $t_{i,t} < t$ the last round at which arm $i$ was selected. Since the means of arms are decaying, we know that 
\begin{align}
\label{2}
 \mu^+_t(\piF) &\triangleq \max_{i \in \arms} \mu_{i}(t, \Nitmone) \nonumber\\
 &\leq  \mu_{i^\star_t, \, t_{i,t}} =  \bmu_i^1(t,\piF)  \nonumber\\
 &\leq \max_{j \in \arms}\bmu_j^1(t,\piF)  = \max_{j \in \arms_1} \bmu_j^1(t,\piF).
\end{align}
Third, we show that the largest average of the last $h'$ means of arms in $\arms_{h'}$ is increasing with~$h'$,
\begin{equation*}
\forall  h' \leq h ,  \max_{j \in \arms_{h'+1}}\bmu_j^{h'+1}(t,\piF)   \geq \max_{j \in \arms_{h'}}\bmu_j^{h'}(t,\piF). 
\end{equation*}
To show the above property, we remark that thanks to our selection rule, the arm that has the largest average of means, always passes the filter. Formally, we show that $\argmax_{j \in \arms_{h'}}\bmu_j^{h'}(t,\piF) \subseteq \arms_{h'+1}.$ 
Let $i^{h'}_{\max} \in \argmax_{j \in \arms_{h'}}\bmu_j^{h'}(t,\piF)$. Then, for such $i^{h'}_{\max}$, we have
\begin{equation*}
\hmu_{i^{h'}_{\max}}^{h'}(t,\piF) \geq \bmu_{i^{h'}_{\max}}^{h'}(t,\piF) - c(h', \delta_t) 
\geq \bmu^{h'}_{\max,t} - c(h', \delta_t) \geq \hmu^{h'}_{\max,t}- 2c(h', \delta_t),
\end{equation*}
where the first and the third inequality are due to concentration of the estimates on $\HPevent$, while the second one is due to the definition of $i^{h'}_{\max}$. 

Since the arms are decaying, the average of the last $h' +1$ mean values for a given arm is always greater than the average of the last $h'$ mean values
and therefore, 
\begin{equation}
\label{3}
 \max_{j \in \arms_{h'}}\bmu_j^{h'}(t,\piF) =   \bmu_{i^{h'}_{\max}}^{h'}(t,\piF) \leq \bmu_{i^{h'}_{\max}}^{h'+1}(t,\piF) \leq \max_{j \in \arms_{h'+1}}\bmu^{h' +1 }_{j}(t,\piF), 
\end{equation}
because $i^{h'}_{\rm max} \in \arms_{h'+1}$. Gathering Equations~\ref{1}, \ref{2}, and~\ref{3} leads to the first claim of the lemma,
\begin{align*}
\bmu^{h}_i(t,\piF)
&\stackrel{\eqref{1}}{\geq} \max_{j \in \arms_h}\bmu^{h}_{j}(t,\piF) - 4c(h, \delta_t)\\
&\stackrel{\eqref{3}}{\geq} \max_{j \in \arms_1}\bmu^{1}_{j}(t,\piF)- 4c(h,  \delta_t)\\
&\stackrel{\eqref{2}}{\geq}  \mu^+_t(\piF) - 4c(h, \delta_t).
\end{align*}
To conclude, we remark that if $i$ is pulled at round $t$, it means that $i$ passes through all the filters from $h=1$ up to $\Nitmone$. Therefore, for all $h\leq \Nitmone$,
\begin{equation}
\bmu^{h}_i(t,\piF) \geq  \mu^+_t(\piF) - 4 c(h, \delta_t).
\end{equation}
\end{proof}
\section{Efficient algorithms}
\label{app:efficient_alg}
\subsection{The numerical cost of adaptive windows}

\cite{seznec2019rotting} highlight that \FEWA was significantly improving over state-of-the-art algorithms on the rested rotting bandit problem but these improvements are computationally expensive. Indeed, at each round $t$, we store, update and compare $\cO\pa{t}$ statistics. \RAWUCB uses the same statistics than \FEWA (see Prop~\ref{prop:prb_favorable_event}), and thus has the same complexity.

Indeed, the full update of the statistics can be done at a worst case cost of $\cO\pa{t}$. Indeed, each statistics $\hmu_i^h$ can be refreshed with a $\cO\pa{1}$ operation : 
\[\hmu_i^{h+1}(n+1) = \frac{h}{h+1}\hmu_i^{h}(n) + \frac{1}{h+1}o_t \,. \]

The comparison part in both \FEWA and \RUCB is also a $\cO\pa{t}$ operations. In \FEWA , we do a scan based on $\hmu_i^{h}$ for all $i \in \arms_h$ with increasing $h$. Hence, the total number of unitary operation is in $\cO\pa{t}$ in the worst case, as it scales with the number of statistics. \RUCB computes one UCB for each of the $\cO\pa{t}$ statistics. For each arm, it selects the minimum UCB as index, which can be done with complexity $\cO\pa{t}$. Finally, finding the largest index is an $\cO\pa{K}$ operations. Therefore, we can conclude,

\begin{proposition}
\FEWA and \RUCB have a $\cO\pa{t}$ worst-case complexity per round $t$ in time and memory.
\end{proposition}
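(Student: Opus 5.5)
The plan is to count, per round, the work spent on updating the statistics, on comparing them, and on storing them, and to show each is $\cO\pa{t}$. The key observation is that at round $t$ the total number of statistics is $\sum_{i\in\arms} N_{i,t-1} = t-1$: for each arm $i$ there is one statistic $\hmu_i^h$ per window $h\leq N_{i,t-1}$, and the pull counts sum to the number of elapsed rounds. Memory is then immediate: each $\hmu_i^h$ is one number, and each window size $h$ has one associated confidence width $c(h,\delta_t)$, which depends only on $h$ and $t$. So the storage is $\cO\pa{t}$.

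\textbf{Update cost.} Only the pulled arm $i_t$ receives a new observation $o_t$. Every window of that arm shifts, and I will use the recursion $\hmu_i^{h+1}(n+1) = \frac{h}{h+1}\hmu_i^{h}(n) + \frac{1}{h+1}o_t$ stated above, together with $\hmu_i^1(n+1)=o_t$. This refreshes each of the $N_{i_t,t}\leq t$ statistics of $i_t$ in $\cO\pa{1}$ operations each. I will process the windows from largest to smallest so that an in-place array update is valid. This gives $\cO\pa{t}$ time, and the other arms are untouched.

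\textbf{Selection cost.} For \RUCB, computing $\hmu_i^h + c(h,\delta_t)$ for every pair $(i,h)$ costs $\cO\pa{1}$ each, so $\cO\pa{t}$ in total. Taking the minimum over $h$ for each arm is a linear scan, again $\cO\pa{t}$ in total. Taking the maximum of the $K$ indices costs $\cO\pa{K}$, and $K\leq t$ for $t>K$. For \FEWA, the filter at window $h$ compares each surviving arm's $\hmu_i^h$ with $\max_{j\in\arms_h}\hmu_j^h$, which costs $\cO\pa{|\arms_h|}$. An arm can belong to $\arms_h$ only if $h\leq N_{i,t-1}$, so $\sum_h|\arms_h|\leq \sum_i N_{i,t-1}=t-1$, and the whole filtering scan is $\cO\pa{t}$.

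The only mildly delicate step is the \FEWA filter, because the loop over $h$ does not obviously have cost bounded by $t$. I handle it by charging each comparison to a distinct pair $(i,h)$ with $h\leq N_{i,t-1}$, which gives the $\cO\pa{t}$ bound. Summing the update, selection, and memory costs proves the proposition.
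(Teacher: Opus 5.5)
Your proposal is correct and is essentially the paper's argument. There are $t-1$ statistics in total, and the pulled arm's statistics are refreshed in $\cO\pa{1}$ each through the recursion $\hmu_i^{h+1}(n+1)=\frac{h}{h+1}\hmu_i^h(n)+\frac{1}{h+1}o_t$. Both the \RUCB index computation and the \FEWA filtering scan cost a constant per statistic, plus $\cO\pa{K}$ for the final maximum. Your charging of each \FEWA comparison to a pair $(i,h)$ with $h\le N_{i,t-1}$, the largest-to-smallest in-place update order, and the remark $K<t$ spell out details the paper leaves implicit.
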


Hence, handling a large number of windows, which is the main strength of these algorithms to achieve a lower regret, is a significant drawback when it comes to design fast algorithms. In the following, we detail and refine the efficient trick of \citet{seznec2019rotting} by adding a parameter $m\leq 2$ which trades-off between regret and computational performance.  

\subsection{The efficient update trick}
We detail \EFF, an update scheme to handle efficiently statistics of different windows. A similar yet different approach has appeared independently in the context of streaming mining~\citep{bifet2007learning}. \EFF is built around two main ideas.

First, at any time $t$ we can avoid using $\left\{\hmu_i^h\right\}_{h}$ for all possible windows $h$ starting from 1 with an increment of 1. In fact, both statistics $\hmu_i^h$ and constructed confidence levels $c(h, \delta_t)$  have very close value for successive $h$ as $h$ becomes large : 
\begin{align*}
& \hmu_i^{h+1}(t , \pi) = \hmu_i^{h}(t, \pi) + \cO\pa{\frac{\sigma + L}{h}}\,,\\
& c(h+1, \delta_t) = c(h, \delta_t) + \cO\pa{\frac{\sigma }{h^{3/2}}} \,.
\end{align*}
Hence, in both \FEWA and \RUCB, we compute a lot of very similar quantities. Instead, we could use fewer statistics which are significantly different : $\left\{\hmu_i^h(\Nitmone)\right\}_{h\in \Him}$, where the window $h$ is dispatched on a geometric grid, 
 \[\Him\pa{\Nitmone} \triangleq \left\{ h_j \in  \left\{1, \dots , \Nitmone \right\} \;|\; h_{j+1} = \ceil{m \cdot h_j} \text{ and } h_1 = 1\right\}\quad \text{with } m > 1.\]

When there is no confusion, we drop the dependency in $\Nitmone$ and use $\Him$.  This modification alone is not enough to reduce both the time and space complexity. Indeed, updating $\hmu^h_{i}$ requires to replace the $h$-th last sample by the new one $o_t$. Hence, we need to store all the collected statistics to be able to update all the $\hmu^h_{i}$ for all $h$ with $\cO\pa{1}$ complexity. Therefore, in \EFF, we will use $\cO\pa{K\log\pa{t}}$ \emph{delayed} statistics that we can update with $\cO\pa{K\log\pa{t}}$ space and time complexity.

\EFF (Alg.~\ref{alg:effupdate}) takes as input the new observation $o_t$ that the learner gets at the $N_i$-th pull of arm $i$; the geometric window grid $\Him$ tuned with an hyperparameter $m>1$, and for each window $h_j$ in this grid, three different numbers $\hmueff,\; \peff, \; \neff$. $\left\{\hmueff\right\}_{i,h_j}$ represents the set of \emph{current} statistics of window size $h_j$ that will be used instead of $\left\{\hmu_i^h\right\}_{i,h}$ in our efficient algorithms. We also store a pending statistic $\peff$ and a count $\neff$  which are used in the sparse update procedure of $\hmueff$. \EFF outputs an updated set of statistics.  

\begin{algorithm}
\caption{{\small\sc Eff\_Update}}
\label{alg:effupdate}
\begin{algorithmic}[1]
\Require $o_t$, \small $\Him \gets \left\{h_j \! <\! \ceil{m \cdot N_i} \; | \;  h_{j+1} \!=\! \ceil{m \cdot h_j}  \text{with } h_0 \!=\! 1\right\} $\normalsize, $\left\{ \{ \hmueff,\, \peff, \, \neff \}\right\}_{h_j \in \Him}$
\If{$N_i = \max\pa{\Him}$}\label{algline:effu-new-condition} \Comment{Create a new triplet with window $h_j = \ceil{m \cdot N_i}$}  
\State $\Him \gets \Him \cup \left\{ \ceil{m \cdot N_i} \right\}$\label{algline:effu-new-h}
\State $p_i^{\ceil{m \cdot N_i} } = p_i^{N_i} $\label{algline:effu-new-p}
\State $n_i^{\ceil{m \cdot N_i} } \gets n_i^{N_i} $\label{algline:effu-new-n}
\State $\hmu_{i, \, \tteff}^{\ceil{m \cdot N_i} }\leftarrow \texttt{None}$\label{algline:effu-new-mu}
\EndIf\label{algline:effu-new-end} 
\State $p_i^{1} \gets o_t$ \label{algline:effu-update-first-p} \Comment{Update the first triplet with $o_t$}
\State $n_i^{1} \gets 1$\label{algline:effu-update-first-n}
\State $\hmu_{i, \, \tteff}^{1}\leftarrow o_t$ \label{algline:effu-update-first-hmu}
\For{$h_j \in  \Him \smallsetminus \left\{ 1\right\} $}\label{algline:effu-update-start} \Comment{Update the other pending statistics $\peff$ and $\neff$}
\State $p_i^{h_j} \gets p_i^{h_j}  +o_t$\label{algline:effu-update-p}
\State $n_i^{h_j} \gets n_i^{h_j} + 1$\label{algline:effu-update-n}
\EndFor\label{algline:effu-update-end} 
\For{$h_j \in  $ \textsc{Sort\_Desc}$\pa{\Him \smallsetminus \left\{ 1\right\} }$}\label{algline:effu-refresh-start}
\If{$n_i^{h_j} = h_j$} \label{algline:effu-refresh-condition}
\State $\hmueff \leftarrow p_i^{h_j}/h_j$ \Comment{Replace the current statistic $\hmueff$}\label{algline:effu-refresh-hmu}
\State{$p_i^{h_{j}} = p_i^{h_{j-1}} $} \label{algline:effu-refresh-p}\Comment{Refresh the pending statistics}
\State $n_i^{h_{j}} \gets n_i^{h_{j-1}} $\label{algline:effu-refresh-n}
\EndIf
\EndFor \label{algline:effu-refresh-end}
\Ensure $\left\{\left\{  \hmueff,\; p_i^{h_j}, \; n_i^{h_j} \right\}\right\}_{h_j \in \Him}$
\end{algorithmic}
\end{algorithm}

The core of \EFF is divided in four parts: \begin{enumerate}
    \item From Lines~\ref{algline:effu-new-condition} to~\ref{algline:effu-new-end}, we create new statistics at a logarithmic rate with respect to the growth of $N_i$;
    \item From Lines~\ref{algline:effu-update-first-p} to~\ref{algline:effu-update-first-hmu}, we update the statistics of window $h_1=1$;
    \item From Lines~\ref{algline:effu-update-start} to~\ref{algline:effu-update-end}, we update the other pending statistics and count;
    \item From Lines~\ref{algline:effu-refresh-start} to~\ref{algline:effu-refresh-end}, we eventually update $\hmueff$ and refresh the correspounding pending statistic and count.
\end{enumerate}
The remaining details are quite technical. Thus, we first give the high-level properties that are ensured by the recursive usage of \EFF. Then, we prove them by going through the algorithm line by line.

\begin{proposition}
\label{prop:effu}
 $\left\{\left\{  \hmueff,\; p_i^{h_j}, \; n_i^{h_j} \right\}\right\}_{h_j \in \Him}$, constructed recursively with {\EFF} with initial value $\left\{\left\{  \hmu_{i,\,\tteff}^1 : \texttt{None},\; p_i^{1} :0 , \; n_i^{1}:0 \right\}\right\}$ have the following properties :
 \begin{enumerate}[topsep=0pt]
  \item $\hmueff$ is the average of exactly $h_j$ consecutive samples among the $2h_j -1$ last ones. \label{list:effu-hmu}
  \item The delay between two updates of $\hmueff$ is in $\left\{\ceil{\frac{m-1}{m} h_j}, \dots, h_j -1\right\}$.\label{list:effu-delay}
  \item When $m = 2$, $h_j = 2^{j}$. Moreover, for $j\geq1$,  the $k$-th update $\hmueff$ happens at pull $ \pa{k+1} \cdot 2^{j-1}$, \ie every $2^{j-1}$ pulls (and at every rounds for $j=0$).\label{list:effu-m2}
  \item $\peff$ is the sum of the $\neff$ last samples. \label{list:effu-p}
  \item $\neff < h_j$ for $j\geq 1$. Also, $n_i^1 \leq 1$.\label{list:effu-n1}
  \item $\left\{ \neff \right\}_{h_j}$ is an non-decreasing sequence with respect to $h_j$ (or $j$).\label{list:effu-n2}
 \end{enumerate}
\end{proposition}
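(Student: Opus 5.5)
We prove the six properties jointly by induction on the number of pulls $N_i$, taking as hypothesis the conjunction of Properties~\ref{list:effu-p}, \ref{list:effu-n1} and~\ref{list:effu-n2} before each call of \EFF. The first three properties then follow from the structure of the refresh step. We treat each grid level $h_j$ separately and view its triplet as a counter that accumulates samples in $\peff$ and $\neff$. Whenever the counter reaches $h_j$, the pending sum becomes the new $\hmueff$ and the counter is reset to the value of the level below. The whole proof amounts to tracking these counters line by line.

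\textbf{Invariants on $(\peff,\neff)$.} At initialization all counts are $0$, so the invariants hold. Consider now one call of \EFF.
\begin{enumerate}
\item Lines~\ref{algline:effu-new-condition}--\ref{algline:effu-new-end} create the new level $\ceil{m N_i}$ as a copy of the current top level $N_i$. This preserves Property~\ref{list:effu-p} and the monotonicity of Property~\ref{list:effu-n2}.
\item Lines~\ref{algline:effu-update-first-p}--\ref{algline:effu-update-end} add $o_t$ to every pending sum and increase every count by one, with level $1$ reset to exactly $(o_t,1)$. Property~\ref{list:effu-p} is preserved, since appending the newest sample to a sum of the last $n$ samples gives the sum of the last $n+1$ samples.
\item The refresh loop runs in decreasing order of $h_j$. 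When $n_i^{h_j}=h_j$, level $j$ copies the pair of level $j-1$. Because the loop is descending, level $j-1$ still holds its post-increment value at that moment. By Property~\ref{list:effu-p} at level $j-1$, the copied pair is again a sum of the last $n_i^{h_{j-1}}$ samples.
\end{enumerate}
\textbf{Strict bound $\neff<h_j$.} Before the increment we have $\neff\le h_j-1$, so after it $\neff\le h_j$, with equality exactly when a refresh fires. After a refresh the new count is $n_i^{h_{j-1}}$, which is at most $h_{j-1}<h_j$. This uses the induction hypothesis at level $j-1$ together with $h_{j-1}<h_j$ on the grid (true for $m>1$ thanks to the ceiling). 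It gives Property~\ref{list:effu-n1}.

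\textbf{Monotonicity of the counts.} Monotonicity is preserved by the uniform increment. After a refresh, level $j$ equals level $j-1$, which is still at most level $j+1$ because level $j+1$ was processed first and either kept a count at least as large or copied level $j$'s pre-refresh count $h_j$. This gives Property~\ref{list:effu-n2}.

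\textbf{Properties~\ref{list:effu-hmu}--\ref{list:effu-m2}.}
\begin{enumerate}
\item At a refresh of level $j$, the statistic $\hmueff=\peff/h_j$ is the average of the $h_j$ most recent samples. Until the next refresh, at most $h_j-1$ further samples arrive, by the delay bound below. So $\hmueff$ is the average of $h_j$ consecutive samples lying among the last $2h_j-1$. This is Property~\ref{list:effu-hmu}.
\item After a refresh the count restarts at $n_i^{h_{j-1}}\in[1,h_{j-1}]$. The next refresh therefore occurs after $h_j-n_i^{h_{j-1}}$ pulls, a number between $h_j-h_{j-1}$ and $h_j-1$. Since $h_j=\ceil{m h_{j-1}}$ gives $h_{j-1}\le h_j/m$, the lower end satisfies $h_j-h_{j-1}\ge\ceil{\frac{m-1}{m}h_j}$. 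This is Property~\ref{list:effu-delay}.
\item For $m=2$ the grid is $h_j=2^j$. Here I would show directly, by induction on $j$, that level $j$ refreshes exactly at pulls that are multiples of $2^{j-1}$ (from $2^j$ on), and that the count inherited from level $j-1$ at those times is exactly $2^{j-1}$. The second fact holds because level $j-1$ refreshes at multiples of $2^{j-2}$, which makes the delay exactly $2^{j-1}$. This is Property~\ref{list:effu-m2}.
\end{enumerate}

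\textbf{Main obstacle.} The delicate point is the descending loop order combined with the creation step in Lines~\ref{algline:effu-new-condition}--\ref{algline:effu-new-end}: a freshly created level must not fire spuriously, and it must inherit a count consistent with Property~\ref{list:effu-p}. I would handle this with a small separate check of the creation step. The new level copies level $N_i$, whose count is strictly below $N_i<\ceil{mN_i}$, so its first statistic becomes available only once it has accumulated a full window. The initial \texttt{None} value of that statistic is never used before this point.
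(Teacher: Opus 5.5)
Your proposal is correct and follows essentially the same route as the paper: an induction establishing Properties 4--6 through the creation, increment and descending refresh steps, then reading off Properties 1 and 2 from the refresh rule (a delay of $h_j - n_i^{h_{j-1}}$ pulls with $1 \le n_i^{h_{j-1}} \le h_{j-1} \le h_j/m$), and finally an induction on $j$ for $m=2$, where levels $j-1$ and $j$ refresh together so the inherited count is exactly $2^{j-1}$. You use the descending loop order explicitly in the invariant and monotonicity steps and check the creation step separately, which is slightly more careful than the paper's write-up but does not change the argument.
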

\begin{proof}

The three last properties are trivially true at the initialization. Thus, we show by induction that they remain true after updates.
\paragraph{Proof of \ref{list:effu-p}. } At Lines~\ref{algline:effu-new-p} and~\ref{algline:effu-new-n}, we create a new pending statistics and count by initializing them with other statistics and counts. Hence, because of the recursion hypothesis, all the pending statistics $\peff$ (including the created one) contains the sum of the $\neff$ \emph{before last} pulls. At Lines~\ref{algline:effu-update-first-p} and~\ref{algline:effu-update-first-n}, we update $p_i^1$ with the last sample and set $n_i^1$ to $1$. At Lines~\ref{algline:effu-update-p} and~\ref{algline:effu-update-n}, we add the last sample to $\peff$ (which was containing the before last samples) and increase the count by $1$. Hence, at the end of Line~\ref{algline:effu-update-n}, all the $\peff$ contain the sum of the last $\neff$ samples. Thus, refreshing $\peff$ and $\neff$ with $ p_i^{h_{j-1}}$ and $n_i^{h_{j-1}}$ keeps this property true (Lines~\ref{algline:effu-refresh-p} and~\ref{algline:effu-refresh-n}). 

\paragraph{Proof of \ref{list:effu-n1}.}
For $j=0$, $n_i^1$, which is equal to $0$ at the initialization, is set at $1$ at every update (Line~\ref{algline:effu-update-first-n}). Hence, we have $n_i^{h_0} \leq h_0=1$.
For $j\geq 1$, $n_i^{\ceil{m \cdot N_i}}$ is initialized at Line~\ref{algline:effu-new-n} with the value $n_i^{N_i} < N_i < \ceil{m \cdot N_i}$ by the induction hypothesis and because $m>1$.  Then, $\neff < h_j$ ($j\geq1$) is increased by one at each update at Line~\ref{algline:effu-update-n}. Hence, we now have $\neff \leq  h_j$ for all $j\in\Him$. However, for $j\geq 1$, if $\neff = h_j$ (Line~\ref{algline:effu-refresh-condition}), it is replaced by the precedent count $n_i^{h_{j-1}}\leq h_{j-1} < h_j$ (Line~\ref{algline:effu-update-n}). Thus, at the end of the update, we do have $\neff < h_j$ for $j\geq1$.

\paragraph{Proof of \ref{list:effu-n2}.}
At Line~\ref{algline:effu-new-n}, we create a new pending count corresponding to the largest $h_j$ and we initialize it with the precedent largest count. At Lines~\ref{algline:effu-update-first-n} and~\ref{algline:effu-update-n}, we set $n_i^1 =1$ and increase all the other $\neff$ by one. This operation preserves the non-decreasing property of the ordered set. Last, at Line~\ref{algline:effu-refresh-n}, we set few counts $\neff$ to the precedent value $n_i^{h_{j-1}}$- which also preserves the non-decreasing property of the ordered set. 

\paragraph{Proof of \ref{list:effu-hmu} and \ref{list:effu-delay}.}
Thanks to Property~\ref{list:effu-p}, we know that $\peff$ is the sum of the $\neff$ last sample. It is still true at the end of Line~\ref{algline:effu-update-n} (see the proof). Then, at Line~\ref{algline:effu-refresh-hmu}, and given the condition in Line~\ref{algline:effu-refresh-condition}, we set $\hmueff$ with the average of the last $h_j$ sample. Then, $\hmueff$ is not updated untill the condition at Line~\ref{algline:effu-refresh-condition} is fulfilled again. 

$\neff$ is refreshed with a quantity larger or equal to $1$ and smaller or equal to $h_{j-1}$ at Line~\ref{algline:effu-refresh-n}. Then, it is increased by one at each update. we know that $\hmueff$ will be updated at least every $h_j-1$, and at most every $h_j -h_{j-1}$ round. Hence, considering the worst possible delay we can conclude : $\hmueff$ is the average of exactly $h_j$ consecutive samples among the $2h_j -1$ last ones. Last, considering that $h_{j-1}\leq h_j /m$, we conclude that the minimal delay is larger or equal to $\frac{m-1}{m}h_j$.

\paragraph{Proof of \ref{list:effu-m2}.}
When $m=2$, it is easy to find by induction that,
 \[
 h_{j+1} = \ceil{m\cdot h_j} = 2h_j = 2^{j+1}.
 \]
For $j=0$, $\hmu_{i,\,\tteff}^1$ is updated at every update at Line~\ref{algline:effu-update-first-hmu}.
By induction on $j\geq 1$, $\hmueff$ is initialized (Line~\ref{algline:effu-refresh-hmu}) for the first time after $h_j= 2^{j} = 4 \cdot 2^{j-2}$ pulls. Therefore, it is also an updating pull for $\hmu_{i,\,\tteff}^{h_{j-1}}$ (by the induction hypothesis) and $n_j$ is set with $n_{j-1} = 2^{j-1}$ at Line~\ref{algline:effu-refresh-n}. Notice that we sort $\Him$ in the decreasing order at Line~\ref{algline:effu-refresh-start}, hence $n_j$ is updated with $n_{j-1}$ before it is itself updated with $n_{j-2}$.  Hence, $\hmueff$ is updated again in $h_j - 2^{j-1} = 2^{j-1}$ pulls, \ie after $6 \cdot 2^{j-2}$ pulls of arm $i$. Again, $n_j$ is set with $n_{j-1} = 2^{j-1}$ (because it is an updating pull for $\hmu_{i,\,\tteff}^{h_{j-1}}$). By induction, we see that the $k$-th update happens at pull $ \pa{k+1} \cdot 2^{j-1}$, \ie every $2^{j-1}$ pulls.

\end{proof}

\begin{remark}
At Line~\ref{algline:effu-refresh-n}, we refresh $\neff$ with $n_i^{h_{j-1}}$ which is often larger than $1$. Indeed, we could refresh $\peff$ and $\neff$ at $0$. Yet, in order to reduce the delay in the update, we use the variable available in the memory which contains the sums of $h$ last sample, with the largest $h< h_j$. According to Properties~\ref{list:effu-p},~\ref{list:effu-n1} and~\ref{list:effu-n2}, this quantity is $p_i^{h_{j-1}}$. 

Notice that we also sort $\Him$ in the decreasing order at Line~\ref{algline:effu-refresh-start} to minimize the delay: if there is two consecutive updates of $\hmueff$ and $\hmu_{i,\, \tteff}^{h_{j+1}}$ at the same run of {\EFF}, doing a backward loop guarantees to refresh $n_i^{h_{j+1}}$ with a larger value than with a forward loop. 
\end{remark}

\subsection{{\EFFFEWA} and {\EFFRAW}}
{\EFFFEWA} ($\piEF$) and {\EFFRAW} ($\piER$) are the two efficient versions of our initial algorithms. With an hyper-parameter $m>1$, they use \EFF instead of \UPDATE (Lines~\ref{algline:raw-update1} and~\ref{algline:raw-update2} in \RUCB). Therefore, they use $\left\{\hmueff\right\}_{i,h_j\in \Him}$ instead of $\left\{\hmu_i^h\right\}_{i,h \leq \Nitmone}$. More precisely, in \RUCB, we only change the $h\leq N_i$ by $h_j \in \Him$ and $\hmu_i^h$ by $\hmueff$ in the index computation at Line~\ref{algline:raw-pull}. We can perform similar changes to adapt \FEWA in \EFFFEWA.

\begin{proposition}
\EFFFEWA and \EFFRAW tuned with hyper-parameter $m$ have a $\cO\pa{K\log_m\pa{t}}$ worst-case time and space complexity at round $t$.
\end{proposition}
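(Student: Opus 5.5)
The plan is to count the operations and memory cells used by one call of \EFF and by the index computation, arm by arm, and then sum over the $K$ arms.

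\emph{Size of the grid.} Since $h_1 = 1$ and $h_{j+1} = \ceil{m \cdot h_j} \geq m h_j$, induction gives $h_j \geq m^{j-1}$. The grid $\Him$ only contains windows $h_j \leq \ceil{m\cdot N_{i}}$, and the creation step (Lines~\ref{algline:effu-new-condition}--\ref{algline:effu-new-end}) adds at most one new element per call. Hence $|\Him| \leq 1 + \log_m(m N_{i,t-1}) + 1 = \cO\pa{\log_m(t)}$, because $N_{i,t-1}\leq t$. For $m$ close to $1$, $\ceil{m h_j}\geq h_j+1$ ensures the grid never exceeds $t$ elements, and $\log_m t$ is still the right order of the grid size.

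\emph{Space.} For each arm and each $h_j\in\Him$, the algorithm stores only the triplet $\{\hmueff,\peff,\neff\}$, so it never keeps raw samples. The total memory is therefore $\cO\pa{\sum_i |\Him|}=\cO\pa{K\log_m t}$. The key point is Property~\ref{list:effu-p} of Proposition~\ref{prop:effu}: each pending sum is maintained incrementally, so no sample history is needed.

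\emph{Time.} One call to \EFF does $\cO(1)$ work in the creation block and in the first-triplet update. It then runs two loops over $\Him\smallsetminus\{1\}$, each with $\cO(1)$ work per iteration. The ``\textsc{Sort\_Desc}'' step is free, since the grid is constructed in increasing order and can be traversed backwards. A call therefore costs $\cO(\log_m t)$, and only the pulled arm is updated at each round. Selecting the arm requires, for each arm, computing $\min_{h_j\in\Him}\hmueff + c(h_j,\delta_t)$, which is $\cO(|\Him|)$ per arm, and then a max over $K$ arms. The per-round cost is thus $\cO(K\log_m t)$. For \EFFFEWA, the filtering scans windows in increasing order over the surviving arms, and each pair (arm, window) is examined at most once, so the cost is again $\cO(\sum_i|\Him|)$.

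The only step needing care is the bound on the grid size, in particular checking that the creation rule cannot push the grid past $\cO(\log_m N_{i})$ elements. This follows because a new element is created only when $N_i$ reaches $\max \Him$, which itself grows geometrically.
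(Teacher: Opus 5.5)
Your proposal is correct and follows essentially the same argument as the paper. You bound the number of windows per arm by $\cO\pa{\log_m t}$ via $t \geq N_{i,t-1} \geq h_j \geq m^{j-1}$, charge three stored numbers per window for space, and charge $\cO\pa{|\Him|}$ per arm for the \EFF update and the index minimization (or the \FILTER passes of \EFFFEWA), plus $\cO\pa{K}$ for the final maximization. Your extra observations are harmless refinements of the same count: traversing the increasingly-built grid backwards so that \textsc{Sort\_Desc} costs nothing, and counting \EFFFEWA's work as $\cO\pa{\sum_i |\Him|}$ rather than the number of filters times $K$.
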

\begin{proof}
The total number of statistics for each arm $i$ at round $t$ is bounded by $\cO\pa{\log_m\pa{t}}$. Indeed, 
\[t \geq \Nitmone \geq h_j \geq m^{j-1} \implies j \leq 1 + \log_m\pa{t}.\]
Moreover, in \EFF we use 3 numbers for each $\left\{\hmueff\right\}_j$. Hence, the space complexity scales with \[ \sum_{i\in \arms} | \Him| = \sum_{i\in \arms} \cO\pa{\log_m\pa{t}} = \cO\pa{K\log_m\pa{t}}.\]
The time complexity of $\EFF$ scales with the number of statistics in arm $i_t$, \ie at most $\cO\pa{\log_m\pa{t}}$. The indexes computation of \EFFRAW  find the minimum of $K$ sets with cardinality $\cO\pa{\log_m\pa{t}}$, while finding the maximum among these indexes is a $\cO\pa{K}$ operation.  Thus, the worst-case time complexity is $\cO\pa{K\log_m\pa{t}}$. \EFFFEWA uses at most $\cO\pa{\log_m\pa{t}}$ times the procedure \FILTER  whose inner complexity scales with $|\arms_h| \leq K$. Therefore, in the worst case, the time complexity of \EFFFEWA at round $t$ is bounded by $\cO\pa{K\log_m\pa{t}}$.
\end{proof}

\subsection{Analysis}

The analysis of \RUCB and \FEWA only uses Proposition~\ref{prop:prb_favorable_event} and Lemma~\ref{lem:core-full}. We will derive analogous results for \EFFRAW and \EFFFEWA. The upper-bounds will directly follow with no additional effort.
\paragraph{A favorable event for efficiently updated adaptive windows}
\begin{proposition}
\label{prop:prb_favorable_event_eff}
For any round $t$ and confidence $\delta_{t} \triangleq 2t^{-\alpha}$, let 
\begin{equation*}
\!\HPeff\! \triangleq\! \Big\{ \forall i\!\in\!\arms,\ \forall n \!\leq\! t\!-\!1 ,\ \forall h_j \in \Him(n), \big| \hmueff(t, \pi) - \bmueff(t,\pi) \big| \!\leq\! c(h_j, \delta_{t}) \!\Big\}
\end{equation*}
 be the event under which the estimates at round $t$  are all accurate up to $c(h,\delta_{t}) \triangleq \sqrt{2 \subgaussian^2\log(2/\delta_t)/h}$. Then, for a policy $\pi$ which pulls each arms once at the beginning, and for all $t>K$,
\[
\PPempty\Big[\bar{\HPtwo}\Big] \leq 3Kt\delta_t= 6Kt^{1-\alpha}\,\cdot
\]
\end{proposition}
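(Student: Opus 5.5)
The plan mirrors the proof of Proposition~\ref{prop:prb_favorable_event}: a union bound over arms, over the possible values of the pull count $n$, and over the windows $h_j \in \Him(n)$. The gain comes from counting the \emph{distinct} statistics $\hmueff$ that can ever be used, which is far smaller than the $t^2/2$ pairs $(n,h)$ of the full algorithm.

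First, fix an arm $i$. By Doob's optional skipping, as in the proof of Proposition~\ref{prop:prb_favorable_event}, we may replace the noise of arm $i$ by an i.i.d.\ sequence $(\epsilon'_l)_l$ of $\sigma^2$-sub-Gaussian variables indexed by the pull number. By Property~\ref{list:effu-hmu} of Proposition~\ref{prop:effu}, each current statistic $\hmueff$ is the average of exactly $h_j$ \emph{consecutive} samples, indexed by pulls $\{s+1,\dots,s+h_j\}$ for some start $s$. Moreover, $\bmueff$ is by definition the average of the corresponding means. Hence $\hmueff - \bmueff$ is an average of $h_j$ independent sub-Gaussian noises, and Chernoff gives
\[
\PP{|\hmueff - \bmueff| > c(h_j,\delta_t)} \leq \delta_t .
\]
This holds for a fixed block (window $h_j$, end point), not merely for a fixed pair $(n,h_j)$.

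Second, we count the distinct blocks that can be the current $\hmueff$ for some $n \leq t-1$. The key is that $\hmueff$ only changes at refresh times. By Property~\ref{list:effu-delay}, consecutive updates of window $h_j$ are separated by at least $\ceil{\frac{m-1}{m}h_j}$ pulls. So over at most $t-1$ pulls, window $h_j$ takes at most $1+\frac{m\,t}{(m-1)h_j}$ distinct values. For $m=2$ we can use Property~\ref{list:effu-m2} instead: updates happen every $2^{j-1}$ pulls, giving at most $t/2^{j-1}$ values for $j\geq 1$ and at most $t$ values for $j=0$. Summing over the geometric grid $h_j = 2^j$ gives
\[
t + \sum_{j\geq 1} t\,2^{1-j} \leq 3t
\]
distinct statistics per arm. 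For general $m$, the same sum is $\cO(t)$, with a constant depending on $m$.

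Third, a union bound over the at most $3t$ blocks per arm and the $K$ arms yields
\[
\PPempty\big[\bar{\HPtwo}\big] \leq 3Kt\,\delta_t = 6Kt^{1-\alpha}.
\]

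The main obstacle is the counting step. The union bound must run over distinct (block) events rather than over pairs $(n,h_j)$; the latter would reintroduce a $t\log t$ factor. I would make this rigorous by indexing each event by its block's end pull, which is a refresh time, and noting that the event for a given $n$ is contained in the union over blocks current at $n$. A secondary point is that the optional-skipping reduction must hold simultaneously for all blocks, which works because each block is a deterministic set of pull indices once the pull count is fixed. One should also check the boundary case $N_i < h_j$, where the statistic is \texttt{None}; it contributes no event.
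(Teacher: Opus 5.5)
Your proposal is correct and matches the paper's proof: a union bound, after optional skipping as in Proposition~\ref{prop:prb_favorable_event}, over arms and over the distinct values each $\hmueff$ takes. You count these values via Property~\ref{list:effu-m2} of Proposition~\ref{prop:effu} (at most $t$ for $j=0$ and $t/2^{j-1}$ for $j\geq 1$, so $3t$ per arm), which yields $3Kt\delta_t$; your remark that the blocks are deterministic sets of pull indices makes explicit a step the paper leaves implicit.
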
 
\begin{remark}
The probability of the unfavorable event $\bar{\HPtwo}$ scales with $\cO\pa{t^{1-\alpha}}$ compared to $\cO\pa{t^{2-\alpha}}$ for $\bar{\HPevent}$ because the efficient algorithms construct less statistics. It means that our theory will hold for a wider range of $\alpha$. Yet, this benefits is only theoretical. The union bound in Proposition~\ref{prop:prb_favorable_event} is not tight because the different statistics share the same data: the confidence bounds are not independent at all. In practice, it leads to conservative tuning of the confidence bounds and one can decrease $\alpha$ to get better performance. 
\end{remark}

\begin{proof}
As in Proposition~\ref{prop:prb_favorable_event}, we have to count the number of statistics that are required to hold in the confidence region. Calling $u_j(t)$ the number of update of statistics $\hmueff$ after $t$ pulls, we have
\begin{align*}
    \PPempty\Big[\bar{\HPtwo}\Big] &\leq \sum_{i \in \arms} \sum_{j=0}^{\floor{\log_2\pa{t}}-1} u_j(t) \delta_t \\
    &\leq \sum_{i \in \arms} \pa{t - 1  + \sum_{j=1}^{\floor{\log_2\pa{t}}-1} \frac{t-1}{2^{j-1}}} \delta_t \\
    &\leq 3Kt\delta_t
\end{align*}
In the second inequality, we use Property~\ref{list:effu-m2} in Proposition~\ref{prop:effu}: statistics $\hmueff(n)$ is only updated every $2^{j-1}$ pulls for $j\geq 1$ (and every pull for $j=0$).
\end{proof}

\begin{lemma}
\label{lem:core-eff}
At round $t$ on favorable event $\HPtwo$, if arm~$i_{t}$ is selected by $\pi \in \left\{\piEF, \piER\right\}$ tuned with $m=2$, for any $h \leq \Nitmone$,  the average of its $h$ last pulls cannot deviate significantly from the best available arm at that round, i.e.,
\begin{equation*}
\bmu^{h}_{i_t}(t-1,\pi) \geq \max_{i \in \arms} \mu_{i}(t,\Nitmone)- \frac{C_\pi}{\sqrt{2\alpha}} c(h, \delta_t) \quad \text{with } 
\begin{cases}
C_{\piER} = \frac{4\sqrt{\alpha}}{\sqrt{2}-1}\\
C_{\piEF} = \frac{8\sqrt{\alpha}}{\sqrt{2}-1}
\end{cases}\cdot
\end{equation*}
\end{lemma}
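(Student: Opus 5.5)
The plan is to first prove an analogue of Lemma~\ref{lem:core-RAWUCB} for the \emph{delayed} dyadic statistics, and then transfer it to an arbitrary window $h$ by tiling the last pulls of $i_t$ with blocks that are exactly covered by current effective statistics.

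\emph{Step 1 (core inequality on the grid).} Fix $m=2$, so $\Him=\{2^j\}$. For every arm $i$ and every $h_j\in\Him$, Property~\ref{list:effu-hmu} of Proposition~\ref{prop:effu} says that $\hmueff$ averages $h_j$ consecutive past samples of arm $i$. The means are non-increasing, and these samples are all older than the current pull, so $\bmueff\ge \mu_i(t,\Nitmone)$. I then repeat the chain of the proof of Lemma~\ref{lem:core-RAWUCB} on $\HPtwo$:
\begin{itemize}
\item for $\ist$, its index is at least $\mu_{\ist}(t,N_{\ist,t-1})$;
\item the index of $i_t$ is the largest;
\item the index of $i_t$ is at most $\hmuiteff+c(h_j,\delta_t)$ for every $j$;
\item concentration gives $\hmuiteff\le \bmuiteff+c(h_j,\delta_t)$.
\end{itemize}
Together these yield $\bmuiteff\ge \max_i\mu_i(t,N_{i,t-1})-2c(h_j,\delta_t)$ for all $h_j\in\Hitwo$. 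For \EFFFEWA, the filtering argument of Lemma~\ref{lem:core-FEWA} runs verbatim with the effective statistics and gives $4c(h_j,\delta_t)$ instead.

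\emph{Step 2 (dyadic tiling of recent pulls).} By Property~\ref{list:effu-m2}, at pull count $N=\Nitmone$ the current level-$j$ statistic ($j\ge1$) covers the block of pulls $\{(k-1)2^{j-1}+1,\dots,(k+1)2^{j-1}\}$, where $k=\lfloor N/2^{j-1}\rfloor-1$. Its right end therefore lags $N$ by $N \bmod 2^{j-1}<2^{j-1}$. Starting from a top level and going down, I build a suffix of the pull sequence as a disjoint union of such blocks. I take the level-$j$ block, then cover the remaining $<2^{j-1}$ most recent pulls by a block of a lower level whose right end is again the current pull count modulo a smaller power of two, and so on down to the level-$0$ statistic, which is the last sample. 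The point to check carefully, and I expect this to be the main obstacle, is that these blocks are contiguous and disjoint. This requires the right end of level $j'$ to coincide with the left end minus one of the next-smaller chosen block, which is binary-expansion bookkeeping on $N$ using the update times of Property~\ref{list:effu-m2}. If exact contiguity fails, I would instead allow overlapping blocks, each counted with appropriate weights, and accept a worse constant.

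\emph{Step 3 (averaging and constants).} Given $h\le N$, I choose the top level so that the resulting suffix length $h'$ satisfies $h/2\lesssim h'\le h$, or alternatively $h\le h'\le 2h$ with the monotonicity used in the other direction. Since the means are non-increasing, $h\mapsto\bmu^h_{i_t}$ is non-decreasing (older samples are larger), so $\bmu^h_{i_t}\ge\bmu^{h'}_{i_t}$ whenever $h'\le h$. Now $\bmu^{h'}_{i_t}$ is the weighted average $\sum_j (2^{j}/h')\bmuiteff$ over the tiles. Applying Step 1 to each tile, the error term becomes
\[
\sum_j \frac{2^{j}}{h'}\,2c(2^j,\delta_t)=2c(h',\delta_t)\sum_j\sqrt{2^j/h'}.
\]
This is a geometric series with ratio $1/\sqrt2$, so it is at most $2c(h',\delta_t)\frac{\sqrt2}{\sqrt2-1}$. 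Converting $c(h')$ into $c(h)$ via $h'\ge h/2$ costs at most a factor $\sqrt2$. I would then track the constants to land on $\frac{C_{\piER}}{\sqrt{2\alpha}}=\frac{2\sqrt2}{\sqrt2-1}$; if needed, this is done by choosing the top level as the largest power with block inside the last $h$ pulls, so that the leftover is charged at the finer levels. Doubling everything gives the \EFFFEWA constant.
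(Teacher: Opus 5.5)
\textbf{Gap: the exact tiling in Steps 2--3 does not exist.} Your Step 1 is sound; it is essentially the per-level chain the paper uses ($2c$ for \EFFRAW{}, $4c$ for \EFFFEWA). The problem is that Steps 2--3 need a long suffix of pulls to be an exact disjoint union of current delayed windows, and in general there is none. With $m=2$ and $N=\Nitmone$, the level-$j$ window is $[u_j-2^j+1,u_j]$ with $u_j=2^{j-1}\lfloor N/2^{j-1}\rfloor$. Two chosen blocks are contiguous only for special bit patterns of $N$.

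Take $N=15$. The current windows are $\{15\}$, $[14,15]$, $[11,14]$ and $[5,12]$, and level $4$ is not yet available. None of them ends at $10$ or $13$, so the longest tileable suffix is $[11,15]$. The same holds for every $N\ge 7$ with $N\equiv 3 \pmod 4$: the longest tileable suffix has length $5$ however large $N$ is. For $h=N$ you then only get $\bmu^h_{i_t}\ge\bmu^5_{i_t}\ge\max_i\mu_i(t,N_{i,t-1})-\tfrac{6}{5}c(1,\delta_t)$, which does not decay like $c(h,\delta_t)$. So the requirement $h'\gtrsim h/2$ of Step 3 cannot be met; this is not a bookkeeping issue.

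Your alternative $h\le h'\le 2h$ does not help either. Monotonicity gives $\bmu^{h'}_{i_t}\ge\bmu^{h}_{i_t}$ when $h'\ge h$, so a lower bound on the longer window says nothing about the shorter one. The overlapping-blocks fallback, as you say yourself, gives up the stated constant.

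\textbf{Missing idea: compare each statistic to a fixed dyadic block, not to an exact tile.} The windows need not tile anything; it suffices that each one is \emph{at least as recent} as a fixed dyadic block.
\begin{itemize}
\item Let $j_h$ be the largest integer with $2^{j_h}-1\le h$, so $h<2^{j_h+1}$. Split the last $2^{j_h}-1$ pulls of $i_t$ into blocks made of the $r$-th last samples with $2^j\le r<2^{j+1}$, for $j=0,\dots,j_h-1$.
\item By Property~\ref{list:effu-hmu} of Proposition~\ref{prop:effu}, the level-$j$ statistic averages $2^j$ consecutive samples among the last $2^{j+1}-1$. The block above is exactly its oldest admissible position.
\item Since means are non-increasing along pulls, the block's mean average is at least $\bar{\mu}^{2^j}_{i_t,\mathrm{eff}}$.
\item Hence $\bmu^h_{i_t}\ge\bmu^{2^{j_h}-1}_{i_t}\ge\sum_{j<j_h}\frac{2^j}{2^{j_h}-1}\bar{\mu}^{2^j}_{i_t,\mathrm{eff}}$.
\item Applying your Step 1 level by level leaves the error $2\sum_{j<j_h}\frac{2^j}{2^{j_h}-1}c(2^j,\delta_t)=\frac{2\,c(1,\delta_t)}{(\sqrt2-1)(\sqrt{2^{j_h}}+1)}\le\frac{2\sqrt2}{\sqrt2-1}c(h,\delta_t)$.
\end{itemize}
This is exactly $C_{\piER}/\sqrt{2\alpha}$, and it doubles for \EFFFEWA.

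\textbf{A smaller point on \EFFFEWA{}.} The filtering argument of Lemma~\ref{lem:core-FEWA} does not carry over verbatim. You must check that $\max_{i\in\arms_{2^j}}\bar{\mu}^{2^j}_{i,\mathrm{eff}}$ is non-decreasing in $j$ for delayed windows. This holds for $m=2$ because the level-$(j+1)$ window is twice as long and ends no later than the level-$j$ one.
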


\begin{proof}
Like for Lemma~\ref{lem:core-FEWA} (see its proof), our proof is done in a more general rotting framework that can be used in the next chapter. We denote by $\bar{\mu}^{hh'}_i(t-1,\pi)$ and $\hat{\mu}^{hh'}_i(t-1,\pi)$ the true mean and empirical average associated to the $h'-h$ samples between the $h$-th last one (included) and the $h'$-th last one (excluded). Let $j_h \in \NN^\star$ such that :
$2^{j_h} -1 \leq  h < 2^{j_h+1}$.
\begin{equation}
\label{eq:eff-decompo}
\bar{\mu}^{h}_{i_t}(t-1,\pi) \geq \bar{\mu}^{2^{j_h}-1}_{i_t}(t-1,\pi) = \sum_{j=0}^{j_h-1} \frac{2^j}{2^{j_h}-1} \bar{\mu}^{2^{j}2^{j+1}}_{i_t}(t-1,\pi).
\end{equation}
The inequality follows because the reward is decreasing and $h\geq 2^{j_h}-1$. Then, we decompose the average in a weighted sum of averages of geometrically expanding windows. Since the reward is decreasing we have that,
\begin{equation*}
\forall k \leq 2^j, \quad \bar{\mu}^{2^{j}2^{j+1}}_{i_t}(t-1,\pi) \geq \bar{\mu}^{k : k+2^{j}}_{i_t}(t-1,\pi).
\end{equation*}

$\hmuiteff$ contains $2^j$ samples among the $2^{j+1}-1$ last ones (see Proposition~\ref{prop:effu}). Setting $k\leq 2^j$ to the current delay of the statistics $\hmuiteff$ (see Point~\ref{list:effu-delay} in Proposition~\ref{prop:effu}), we can write,
\begin{equation}
\label{eq:hmueff-link}
\bar{\mu}^{2^{j}2^{j+1}}_{i_t}(t-1,\pi) \geq \bar{\mu}^{k : k+2^{j}}_{i_t}(t-1,\pi) = \bmuiteff\geq  \hmuiteff - c(2^j, \delta_t),
\end{equation}
where we use that we are on $\HPtwo$ at the last line. Therefore, gathering Equations~\ref{eq:eff-decompo} and~\ref{eq:hmueff-link}, 
\begin{equation}
\label{eq:eff-general}
\bar{\mu}^{h}_{i_t}(t-1,\pi) \geq \sum_{j=0}^{j_h-1} \frac{2^j}{2^{j_h}-1} \pa{\hmuiteff - c(2^j, \delta_t)}.
\end{equation}
Now, we will use the mechanics of the two algorithms. On the first hand, for $\EFFRAW$, we make the index appear in the inequality,
\begin{align}
 \bar{\mu}^{h}_{i_t}(t-1,\piER) &\geq \sum_{j=0}^{j_h-1} \frac{2^j}{2^{j_h}-1} \pa{\hmuiteff - c(2^j, \delta_t)} \nonumber\\
 &=\sum_{j=0}^{j_h-1} \frac{2^j}{2^{j_h}-1} \pa{\hmuiteff + c(2^j, \delta_t) - 2c(2^j, \delta_t)}\nonumber\\
 &\geq \min_{j \in \Hitwo} \pa{\hmuiteff + c(2^j, \delta_t)} - 2 \sum_{j=0}^{j_h-1} \frac{2^{j}}{2^{j_h}-1} c(2^j, \delta_t).
 \label{eq:effraw-index-appear}
 \end{align}
 Then, we can relate the left part of the sum to the best current value $\mu_{\ist}(t,\Nisttmone)$,
 \begin{equation}
\min_{j \in \Hitwo} \pa{\hmuiteff + c(2^j, \delta_t)} \geq \min_{j \in H_{\ist,2}} \pa{\hmu_{\ist,\, \tteff}^{h_j} + c(2^j, \delta_t)}\geq  \bmu_{\ist,\, \tteff}^{h_{\min}} \geq \mu_{\ist}(t,\Nisttmone).
 \label{eq:effraw-index-use}
 \end{equation}
where $h_{\min} \in \argmin_{h_j \in \Hitwo} \pa{\hmueff + c(h_j, \delta_t)} $.The first inequality follows because \EFFRAW selects the arm with the largest index. In particular, the index of $i_t$ is larger or equal to the index of $i^\star_t \in \argmax_{i\in \arms} \mu_i(t, N_{\ist,\,t})$. The second inequality holds on $\HPtwo$. The third inequality uses the decreasing of the reward. Putting Equations~\ref{eq:effraw-index-appear} and~\ref{eq:effraw-index-use}, we get,
\begin{equation}
\label{eq:effraw-result}
\bmu^{h}_{i_t}(t-1,\piER) \geq \mu_{\ist}(t,\Nisttmone)- 2 \sum_{j=0}^{j_h-1} \frac{2^{j}}{2^{j_h}-1} c(2^j, \delta_t).
\end{equation}
On the other hand, for \EFFFEWA, we know that the selected arm passes any filter of window $2^j \in \Hitwo$. Therefore, with $i_{\max} \in \argmax_{i \in \arms_{h_j}} \bmueff$, we can write,
\begin{flalign}
\qquad\hmuiteff &\geq \max_{i\in \arms_{h_j}} \hmueff -2c\pa{h_j, \delta_t} \nonumber && \text{Filtering rule}\\
\qquad&\geq \hmu_{i_{\max}, \, \tteff}^{h_j}  -2c\pa{h_j, \delta_t} \nonumber  && i_{\max} \in \arms_{h_j} \\
\qquad&\geq \bmu_{i_{\max}, \, \tteff}^{h_j} - 3c(h_j, \delta_t)\nonumber && \text{ on }\HPtwo \\
\qquad& = \max_{i\in \arms_{h_j}}  \bmueff -3c\pa{h_j, \delta_t}.
\label{eq:efffewa-3c}
\end{flalign}
We relate $\bmueff$ to the largest available value at round $t$,
\begin{equation}
\label{eq:efffewa-ist-relation}
\max_{i\in \arms_{h_j}} \bmueff \geq \max_{i\in \arms_{1}}\bmu_{i,\tteff}^1 =  \max_{i\in \arms}\bmu_{i,\tteff}^1 \geq \bmu_{\ist,\tteff}^1 \geq  \mu_{\ist}(t, \Nisttmone).
\end{equation}
The last inequality follows from the decreasing of the reward and the before last from the definition of the maximum operator. The first one uses a similar argument than in Lemma~\ref{lem:core-FEWA} : $\max_{i\in \arms_{h_j}} \bmueff$ increases with $h_j$.  Indeed, on $\HPtwo$, 
\[  i_j \triangleq \argmax_{i\in \arms_{h_j}} \bmueff \in \arms_{h_{j+1}},\]
 because it cannot be at more than two confidence bounds from the best empirical value during the filter $h_j$. Thus, we get, 
\[
\max_{i\in \arms_{h_j}} \bmueff = \bmu_{i_j,\tteff}^{h_j} \leq \bmu_{i_j,\tteff}^{h_{j+1}} \leq \max_{i\in \arms_{h_{j+1}}}\bmu_{i,\tteff}^{h_{j+1}}. 
\]
The first inequality follows because $\bmu_{i_j,\tteff}^{h_{j+1}}$ contains reward sample which are either in $\bmu_{i_j,\tteff}^{h_j}$ or are older than the ones in $\bmu_{i_j,\tteff}^{h_j}$. Indeed, when $m=2$, $\hmu_{i, \, \tteff}^{h_{j+1}}$ is updated synchronously with $\hmueff$ (see Property~\ref{list:effu-m2} in Proposition~\ref{prop:effu}). Hence,  at each update of $\hmu_{i, \, \tteff}^{h_{j+1}}$, it contains all the samples of $\hmueff$ and the $2^j$ precedent ones. Thus, because the reward is decreasing, we have $\bmu_{i_j,\tteff}^{h_{j+1}} \geq \bmu_{i_j,\tteff}^{h_{j}} $.  The second inequality uses that $i_j \in \arms_{h_{j+1}}$. Gathering Equations~\ref{eq:eff-general}, \ref{eq:efffewa-3c} and~\ref{eq:efffewa-ist-relation}, we get 
\begin{equation}
\label{eq:efffewa-result}
\bmu^{h}_{i_t}(t-1,\piEF) \geq  \mu_{\ist}(t,\Nisttmone)- 4 \sum_{j=0}^{j_h-1} \frac{2^{j}}{2^{j_h}-1} c(2^j, \delta_t).
\end{equation}
With few lines of algebra, we reduce the sum,
\begin{flalign*}
\sum_{j=0}^{j_h-1} \frac{2^{j}}{2^{j_h}-1} c(2^j, \delta_t) &= \sum_{j=0}^{j_h-1} \frac{\sqrt{2}^{j}}{2^{j_h}\!-\!1} c(1, \delta_t) && c(2^j, \delta_t) = \frac{c(1,\delta_t)}{\sqrt{2^j}} \\
& = \frac{\sqrt{2}^{j_h} -1 }{\pa{\sqrt{2}-1}\pa{2^{j_h}-1}}c(1, \delta_t) && \sum_{n=0}^N q^n = \frac{q^{N+1}-1}{q-1}\\
& = \frac{1}{\pa{\sqrt{2}-1}\pa{\sqrt{2}^{j_h}+1}} c(1, \delta_t) && a^2\!-\!1 \!=\!  \pa{a\!-\!1}\!\pa{a\!+\!1}\\
& \leq  \frac{\sqrt{2}}{\pa{\sqrt{2}-1}\sqrt{2^{j_h+1}}} c(1, \delta_t) &&\sqrt{2^{j_h}} +1 \geq  \frac{\sqrt{2^{j_h+1}} }{\sqrt{2}} \\
& \leq \frac{\sqrt{2}}{\pa{\sqrt{2}-1}\sqrt{h}} c(1, \delta_t) &&  h \leq 2^{j_h+1}  \\
& = \frac{\sqrt{2}}{\sqrt{2}-1} c(h, \delta_t). &&  \frac{c(1,\delta_t)}{\sqrt{h}} = c(h, \delta_t)
\end{flalign*}
Plugging this last equation in Equations~\ref{eq:effraw-result} and~\ref{eq:efffewa-result} leads to the final result,
\[
\bmu^{h}_{i_t}(t,\pi) \geq \max_{i \in \arms} \mu_{i}(t,\Nitmone)- \frac{C_\pi}{\sqrt{2\alpha}} c(h, \delta_t) \quad \text{with } 
\begin{cases}
C_{\piER} = \frac{4\sqrt{\alpha}}{\sqrt{2}-1}\\
C_{\piEF} = \frac{8\sqrt{\alpha}}{\sqrt{2}-1}
\end{cases}\cdot
\]
\end{proof}

\begin{remark}
\textbf{Can we adapt our theory for {$m\neq2$}?} 
For \EFFFEWA, we used at Equation~\ref{eq:efffewa-ist-relation} that $\hmueff$ is synchronously updated with the precedent statistics which is a specific characteristic for $m=2$. 
For \EFFRAW, the proof could work using a grid $\left\{2h_j, \dots,  2h_{j+1}-1\right\}$ to decompose the means (at Eq.~\ref{eq:eff-decompo}). Yet, the computation is much messier, mainly because of the ceil operator in $h_{j+1} = \ceil{m\cdot h_j}$. The constant ratio compared to \RAWUCB 's guarantee one could get with this technique would be no better than $\sqrt{m}\frac{m-1}{\sqrt{m}-1} = \sqrt{m}\pa{\sqrt{m}+1}$. When $m\rightarrow 1$, this constant does not go to one: it is disappointing because we know that \EFFRAW is equivalent to \RAWUCB for $m \leq 1 + \frac{1}{T}$.  
\end{remark}

Finally, we give a synthetic claim of Lemmas~\ref{lem:core-RAWUCB}, \ref{lem:core-FEWA} and~\ref{lem:core-eff}.
\begin{lemma}
\label{lem:core-full}
At round $t$ on favorable event $\HPevent$ (respectively, $\HPtwo$), if arm~$i_{t}$ is selected by $\pi \in \left\{\piF, \piR\right\}$ (respectively, $\pi \in \left\{\piEF, \piER\right\} $ tuned with $m=2$), for any $h \leq \Nitmone$,  the average of its $h$ last pulls cannot deviate significantly from the best available arm at that round, i.e.,
\begin{equation*}
\bmu^{h}_{i_t}(t,\pi) \geq \max_{i \in \arms} \mu_{i}(t)- \frac{C_\pi}{\sqrt{2\alpha}} c(h, \delta_t) \quad \text{with } 
\begin{cases}
C_{\piR} = 2\sqrt{2\alpha} \text{ and } C_{\piER} = \frac{4\sqrt{\alpha}}{\sqrt{2}-1}\\
C_{\piF} = 4\sqrt{2\alpha} \text{ and }C_{\piEF} = \frac{8\sqrt{\alpha}}{\sqrt{2}-1}
\end{cases}\cdot
\end{equation*}
\end{lemma}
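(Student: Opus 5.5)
Lemma~\ref{lem:core-full} is a synthesis of three results already proved, so the plan is to reduce each case to the corresponding lemma and check that the constants coincide. The one substantive point is that $\max_{i} \mu_i(t,\Nitmone)$ in those lemmas becomes $\max_i \mu_i(t)$ here, which is only a notational change. In the restless setting (where this synthetic lemma is used) $\mu_i(t,n)=\mu_i(t)$; more generally, $\mu_i(t)$ should be read as the currently available value $\mu_i(t,N_{i,t-1})$.

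\emph{Case $\pi=\piR$.} On $\HPevent$, Lemma~\ref{lem:core-RAWUCB} gives $\bmu^h_{i_t}(t,\piR)\geq \max_i\mu_i(t,N_{i,t-1})-2c(h,\delta_t)$ for every $h\leq N_{i_t,t-1}$. With $C_{\piR}=2\sqrt{2\alpha}$ we get $C_{\piR}/\sqrt{2\alpha}=2$, which is exactly the claim.

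\emph{Case $\pi=\piF$.} On $\HPevent$, Lemma~\ref{lem:core-FEWA} gives the same inequality with $4c(h,\delta_t)$, and $C_{\piF}/\sqrt{2\alpha}=4$.

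\emph{Case $\pi\in\{\piER,\piEF\}$ with $m=2$.} On $\HPtwo$, Lemma~\ref{lem:core-eff} gives the statement directly with the constants $C_{\piER}=\frac{4\sqrt{\alpha}}{\sqrt2-1}$ and $C_{\piEF}=\frac{8\sqrt\alpha}{\sqrt2-1}$. Two indexing details need checking:
\begin{itemize}
\item Lemma~\ref{lem:core-eff} is written with $\bmu^h_{i_t}(t-1,\pi)$. Both quantities denote the average of the true means of the last $h$ pulls of $i_t$ made strictly before round $t$, since the definition of $\bmu$ sums over $s\leq t-1$. They are therefore the same quantity, and the shift is purely notational.
\item The favorable event for the efficient algorithms is $\HPtwo$ ($=\HPeff$ with $m=2$), not $\HPevent$. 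This is why the statement distinguishes the two events.
\end{itemize}

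I expect no real obstacle. The only care needed is this consistency check of notation: the time argument of $\bmu$, the meaning of $\max_i\mu_i(t)$, and matching each algorithm to its favorable event.
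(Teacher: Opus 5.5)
Your proposal is correct and matches the paper, which gives no separate argument: the lemma is stated as a direct synthesis of Lemmas~\ref{lem:core-RAWUCB}, \ref{lem:core-FEWA} and~\ref{lem:core-eff}. The ratio $C_\pi/\sqrt{2\alpha}$ is $2$ for $\piR$ and $4$ for $\piF$, and $\max_i\mu_i(t)$ is to be read as $\max_i\mu_i(t,N_{i,t-1})$.

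One small correction to your justification. Under the literal definition of $\bmu$, the quantity $\bmu^h_{i_t}(t-1,\pi)$ averages the last $h$ pulls before round $t-1$, so the two quantities are not identical by definition. The identification still holds because the proof of Lemma~\ref{lem:core-eff} works throughout with the last $h$ pulls before round $t$, and its final display is written with $\bmu^h_{i_t}(t,\pi)$. So the $t-1$ there is only an indexing convention.
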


\section{Analysis for the restless setting}
\label{app:restless}
\subsection*{Lower bounds}
The two lower bounds follow the same analysis. We build a set of rotting piece-wise stationary problems with an evenly spaced set of $\Upsilon -1$ breakpoints. The adversary can choose the distance between arms $\Delta=\frac{1}{4} \sqrt{\frac{\sigma^{2} K \Upsilon}{2 T}}$ at the maximum such that the best arm is barely identifiable between two breakpoints. Hence, at each break-point, each arm's value decreases by $\Delta$ or $2\Delta$. Even if the set of breakpoints would be known, the learner does not know which arm is the best on each stationary part. Hence, in the worst case, she suffers at least the sum of the minimax regret of $\Upsilon$ stationary bandits problems with horizon $\frac{T}{\Upsilon}$, \textit{i.e.}  $\cO \pa{\sqrt{K\Upsilon T}}$. In the piece-wise stationary setting, we can simply identify $\Upsilon = \Upsilon_T$. In the variation budget setting, the adversary has a constraint over $\Upsilon \Delta = \frac{1}{4} \sqrt{\frac{\sigma^{2} K \Upsilon^3}{2 T}}=  \cO\pa{V_T}$. Hence, when the budget is limited, the adversary can choose up to $\Upsilon = \cO\pa{ T^{1/3}}$ breakpoints such that the sub-optimal arms are "sufficiently" far from the best one (\textit{i.e} at $\Delta$). This dependence on $T$ leads to the increased regret rate of $\cO\pa{T^{2/3}}$.
\begin{lemma}\label{lem:lb}
Let $\Upsilon \in \left\{1,\dots, T\right\}$ and $\left\{\tau_k \triangleq \ceil{\frac{T}{ \Upsilon}} \text{ if } k \leq T \bmod{\Upsilon} \text{ else } \floor{\frac{T}{ \Upsilon}}\right\}_{k\leq \Upsilon}$. We call $t_k = \sum_{k'=1}^k \tau_{k'}$ and $t_0 = 0$.  Consider a family of piece-wise stationary bandits indexed by a vector $i^\star\in (\{0\}\cup \possibleArms)^{\Upsilon}$ as follows: arm $i$ is a Gaussian distribution $\mathcal{N}\pa{\mu_i(t), \sigma}$ such that 
\[
\forall k \in \left\{0 , \dots, \Upsilon -1 \right\}, \ \forall t \in \left\{t_{k-1}+1,\dots, t_{k}\right\}, \ 
\mu_i(t) = 
\begin{cases}
-k \Delta \text{ if } i = i^\star_k\\
-(k+1)\Delta  \text{ else.}
\end{cases}
\]
We denote by $\EEempty_{i^\star}$ the expectation under the problem indexed by $i^\star$. Then, if $\Delta = \frac{1}{4}\sqrt{\frac{\sigma^2K\Upsilon}{2T}}$, for any policy $\pi$ :
\[
 \exists i^\star\in (\{0\}\cup \possibleArms)^{\Upsilon}, \  \EEempty_{i^\star}\big[R_T(\pi)\big]  \geq  \frac{\sqrt{\sigma^2KT\Upsilon}}{32}\cdot
\]
\end{lemma}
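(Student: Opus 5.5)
The plan is to reduce the statement to $\Upsilon$ independent copies of the classical stationary minimax lower bound, one per stationary batch, and average over a random choice of $i^\star$. Within batch $k$, every problem in the family has all arms at $-(k+1)\Delta$ except arm $i^\star_k$ at $-k\Delta$; the index $0$ stands for the ``no good arm'' reference instance. Because the means only depend on $t$, the functions are non-increasing (values go from $\{-k\Delta,-(k+1)\Delta\}$ to $\{-(k+1)\Delta,-(k+2)\Delta\}$), so the family is admissible. The optimal policy is greedy, and in batch $k$ it collects $-k\Delta$ per round when $i^\star_k\neq 0$. Hence the regret decomposes as
\[
R_T(\pi)=\sum_{k}\Delta\,\pa{\tau_k-\EEempty_{i^\star}[N^{(k)}_{i^\star_k}]},
\]
where $N^{(k)}_i$ counts pulls of $i$ in batch $k$.

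Next, draw $i^\star_k$ uniformly in $\possibleArms$, independently across $k$, and lower bound the average regret. Fix $k$ and condition on $i^\star_{-k}$. Compare the instance with $i^\star_k=i$ to the instance with $i^\star_k=0$; they differ only on arm $i$ during batch $k$, by a mean shift of $\Delta$. Apply Pinsker and the divergence decomposition, or equivalently the standard bound $\EEempty_i[N]\le \EEempty_0[N]+\tau_k\sqrt{\tfrac12\mathrm{KL}}$ for a $[0,\tau_k]$-valued quantity. Then
\[
\EEempty_i[N^{(k)}_i]\le \EEempty_0[N^{(k)}_i]+\tau_k\sqrt{\EEempty_0[N^{(k)}_i]\,\Delta^2/(4\sigma^2)}.
\]
Averaging over $i$, using $\sum_i\EEempty_0[N^{(k)}_i]=\tau_k$ and Cauchy--Schwarz, gives
\[
\frac1K\sum_i\EEempty_i[N^{(k)}_i]\le \frac{\tau_k}{K}+\frac{\tau_k\Delta}{2\sigma}\sqrt{\tau_k/K}.
\]
So batch $k$ contributes at least $\Delta\tau_k\pa{1-\tfrac1K-\tfrac{\Delta}{2\sigma}\sqrt{\tau_k/K}}$. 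With $\tau_k\le 2T/\Upsilon$ and $\Delta=\tfrac14\sqrt{\sigma^2K\Upsilon/(2T)}$, the last term is at most $1/8$. For $K\ge2$ the bracket is then at least $3/8$.

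Summing over $k$ gives $\sum_k\tau_k=T$, so the regret is at least $\tfrac38\Delta T=\tfrac{3}{32\sqrt2}\sqrt{\sigma^2KT\Upsilon}\ge\sqrt{\sigma^2KT\Upsilon}/32$. An average lower bound implies existence of some $i^\star$, which proves the claim.

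The main obstacle is the bookkeeping of the change of measure. The observations before batch $k$ have the same law under both instances, since they share $i^\star_{-k}$, and only the pulls of arm $i$ inside batch $k$ contribute to the KL. After batch $k$ the laws differ, but the batch-$k$ pull counts are measurable with respect to the history up to $t_k$, so the KL can be restricted to that history. I also need to be careful with rounding: $\tau_k\le\ceil{T/\Upsilon}\le 2T/\Upsilon$. The constant is loose enough to absorb these factors.
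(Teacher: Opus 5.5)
Your proposal is correct and follows essentially the same route as the paper. You average the regret uniformly over $i^\star\in\mathcal{K}^\Upsilon$, compare each instance $(i^\star_{-k},i)$ with the reference $(i^\star_{-k},0)$ through Pinsker and the divergence decomposition (only the batch-$k$ pulls of arm $i$ contribute, each $\Delta^2/(2\sigma^2)$), and then average over $i$ using $\sum_i \mathbb{E}_{(i^\star_{-k},0)}[N_i^k]\le\tau_k$. The differences are cosmetic: you apply Pinsker per arm followed by Cauchy--Schwarz, and you bound $\tau_k^{3/2}\le \tau_k\sqrt{2T/\Upsilon}$ batch by batch rather than $\tau_k^{3/2}\le(2T/\Upsilon)^{3/2}$, which gives you the factor $3/8$ where the paper gets $1/4$.
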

\begin{proof}
Note that when $i^\star_k = 0$ then all the arms share the same means. We also define the vector $i^\star_{-k}$ equals to $i^\star$ with the coordinate $k$ empty and for $i\in\possibleArms$ the vector $(i^\star_{-k},i)$ as the vector where we fill the empty coordinate with $i$.  We fix a policy $\pi$ and we will lower bound its average regret on the bandits problem indexed by $i^\star \in \possibleArms^\Upsilon$ 
\begin{align*}
    \frac{1}{K^{\Upsilon}} \sum_{i^\star\in \possibleArms^{\Upsilon}}  \EEempty_{i^\star}\big[R_T(\pi)\big] &= \frac{1}{K^{\Upsilon}} \sum_{i^\star\in \possibleArms^{\Upsilon}} \sum_{k=1}^{\Upsilon}\Delta \EEempty_{i^\star}[\tau_k - N_{i^\star_k}^k] \\
    &=\Delta \left(T - \frac{1}{K^{\Upsilon}} \sum_{i^\star\in \possibleArms^{\Upsilon}} \sum_{k=1}^{\Upsilon} \EEempty_{i^\star}[N_{i^\star_k}^k]\right),
\end{align*}
where $N_i^k$ is the number of pulls of arm $i$ during epoch $k$. Thus we need to upper bound the following quantity
\[
\frac{1}{K^{\Upsilon}} \sum_{i^\star\in \possibleArms^{\Upsilon}} \sum_{k=1}^{\Upsilon} \EEempty_{i^\star}[N_{i^\star_k}^k] = \sum_{k=1}^{\Upsilon} \frac{1}{K^{\Upsilon-1}} \sum_{i^\star_{-k}\in \possibleArms^{\Upsilon-1}}\frac{1}{K} \sum_{i=1}^K\EEempty_{(i^\star_{-k},i)}[N_{i}^k]\,.
\]
Using the contraction of the entropy for the bounded random variable $N_{i}^k/\tau_k$ then the Pinsker inequality (see \citealp{garivier2018explore}) we get
\[
2\left(\frac{1}{\tau_k K} \sum_{i=1}^K\EEempty_{(i^\star_{-k},i)}[N_{i}^k] -\frac{1}{\tau_k K} \sum_{i=1}^K\EEempty_{(i^\star_{-k},0)}[N_{i}^k] \right)^2 \leq \frac{1}{K} \sum_{i=1}^K \EEempty_{(i^\star_{-k},0)}[N_{i}^k] \frac{\Delta^2}{2\sigma^2}\CommaBin
\]
since problems $(i^\star_{-k},i)$ and $(i^\star_{-k},0)$ differ only by a gap $\Delta$ on the arm $i$ during epoch $k$. Thanks to the fact that  $\sum_i N_i^k \leq \tau_k$ we get 
\[
\frac{1}{K} \sum_{i=1}^K\EEempty_{(i^\star_{-k},i)}[N_{i}^k] \leq \frac{\tau_k}{K} + \frac{\Delta}{2\sigma \sqrt{K}}\tau_k^{3/2}\,.
\]
Putting all together we have for $K\geq 2$
\begin{align*}
    \frac{1}{K^{\Upsilon}} \sum_{i^\star\in \possibleArms^{\Upsilon}}  \EEempty_{i^\star}\big[R_T(\pi)\big]  \geq \left(\frac{T}{2} -  \sum_{k=1}^{\Upsilon} \frac{\tau_k^{3/2} \Delta}{2\sigma \sqrt{K}}\right)\Delta\,.
\end{align*}
We have $\tau_k= \floor{\frac{T}{\Upsilon}}$ or $\tau_k= \ceil{\frac{T}{\Upsilon}}$  such that $\sum_{k=1}^{\Upsilon} \tau_k=T$. Hence, we have that $\tau_k \leq 2T/\Upsilon$ which leads to 
\[
 \frac{1}{K^{\Upsilon}} \sum_{i^\star\in \possibleArms^{\Upsilon}}  \EEempty_{i^\star}\big[R_T(\pi)\big]  \geq  \left(\frac{1}{2}T - \frac{\sqrt{2}T^{3/2}\Delta}{\sigma \sqrt{K\Upsilon}}\right)\Delta\,.
\]

Choosing $\Delta = \frac{1}{4}\sqrt{\frac{\sigma^2K\Upsilon}{2T}}$, we get 
\[
 \frac{1}{K^{\Upsilon}} \sum_{i^\star\in \possibleArms^{\Upsilon}}  \EEempty_{i^\star}\big[R_T(\pi)\big]  \geq  \frac{1}{4}\sqrt{\frac{\sigma^2K\Upsilon}{2T}}\left(\frac{1}{4}T\right) \geq \frac{\sqrt{\sigma^2KT\Upsilon}}{32}\cdot
\]
We can conclude by noticing that the average expected regret across the problem set  is lesser or equal to the maximum across the same problem set.
\end{proof}
\restapiecewiselb*
\begin{proof}
This result directly follows from Lemma~\ref{lem:lb} by choosing $\Upsilon = \Upsilon_T$. Indeed, the set of problems $\left\{i^\star \in \left(\left\{0\right\} \cup \possibleArms\right)^{\Upsilon_T} \right\}$ satisfy Assumption~\ref{assum:piecewise} as soon as $\Upsilon_T\Delta \leq V$, \textit{i.e.} $\Upsilon_T \leq \pa{\frac{32V^2T }{K\sigma^2}}^{1/3}$.
\end{proof}

\restavariationlb*
\begin{proof}
We want to use Lemma~\ref{lem:lb} but we need to make the set of problems $\left\{i^\star \in \left(\left\{0\right\} \cup \possibleArms\right)^{\Upsilon_T} \right\}$ comply with Assumption~\ref{assum:variation}. First, the function are bounded by $-V_T$. Hence, we need : 
\begin{equation}
\label{eq:bounded_condition}
  \Upsilon \Delta \leq V_T.  
\end{equation}

Second the total variation is bounded according to Equation~\ref{eq:defbudget}. When $t$ is not a break-point, the variation is null. At each break-point, the maximal variation across the arm is $2\Delta$. For $\Upsilon-1$ break-point, we have that

\begin{equation}
\label{eq:totalvar_condition}
  2\Delta \pa{\Upsilon-1}  \leq V_T.  
\end{equation}

Since $ 2\Delta \pa{\Upsilon-1} \leq \frac{\sigma}{2}\sqrt{\frac{K}{2T}}\Upsilon^{3/2} $, we choose 
\begin{equation}
\label{eq:set_upsilon}
\Upsilon = \min\pa{\max\pa{\floor{ 2\left(\frac{V_T^{2}T}{K\sigma^{2}}\right)^{1/3}},1},T}.
\end{equation}

By construction, \ref{eq:set_upsilon} satisfies \ref{eq:totalvar_condition}. Moreover, when $\Upsilon >1$, \ref{eq:totalvar_condition} is more restrictive than \ref{eq:bounded_condition}. For $\Upsilon = 1$, we simply assume $\Delta \leq V_T$, \textit{i.e.} $V_{T} \geq \sigma \sqrt{\frac{K}{8 T}}$.

Plugging \ref{eq:set_upsilon} in Lemma~\ref{lem:lb} allows us to conclude 
\[
    \mathbb{E}\left[R_T(\pi)\right] \geq \frac{1}{16\sqrt{2}} V_T^{1/3}\sigma^{2/3}K^{1/3}T^{2/3}.
\]
\end{proof}

\subsection*{Upper bounds}
\begin{lemma}[Bound on unfavorable events. Decomposition in unspecified batches. Bound on the first pull of each arm in each batch] 
\label{lem:FP}
Let an integer $\Upsilon \in\left\{ 1, \dots,T\right\}$.\\
Let $\mu_i : \NN^\star \rightarrow \left[0, -V\right]$, the $K$ decreasing reward functions.\\ 
Let $\left\{t_k\in\left\{ 1, \dots,T\right\} \right.\allowbreak\left. |\, t_k > t_{k-1}\right\}_{k\in \left\{ 1, \dots,\Upsilon-1\right\}}$ a set of $\Upsilon - 1$ distinct rounds delimiting $\Upsilon$ batches. We set $t_0=0$ and $t_\Upsilon = T$. \\
We call $h_{i}^{k} \triangleq \sum_{t=t_k +1}^{t_{k+1}} \mathds{1}\left(i_{t} = i\right)$ the number of pulls of arm $i$ in batch $k$ and $t_i^k(h)$ the time at which arm $i$ is pulled for the $h$-th time since $t_k + 1$. We also call $\arms_k \triangleq  \left\{ i \in \arms | h_i^k \geq 1\right\}$ the set of pulled arms in batch $k$. 

Then, $\pi \in \left\{\piR, \piF \right\}$ run with $\alpha \geq 4$, or $\pi \in \left\{\piER, \piEF \right\}$ run with $m=2$ and $\alpha \geq 3$, suffers an expected regret of
\begin{align*}
\EE{R_T(\pi)} \leq &  \, \EE{\sum_{k=0}^{\Upsilon-1} \sum_{i\in\arms_k}\sum_{t=t_k +1 }^{t_{k+1}}\sum_{h=2}^{h^k_{i}}\mathds{1}\pa{ t = t_i^k(h) \land \HPevent} \Big(\mu_{\star}(t) - \mu_{i}(t)\Big)} \\
&+   C_\pi \sigma \Upsilon K\sqrt{\log{T}} + 6KV.
\end{align*}
\end{lemma}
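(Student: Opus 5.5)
We decompose the regret round by round. In the restless setting the greedy oracle is optimal, so $R_T(\pi)=\sum_{t=1}^T \mu_\star(t)-\mu_{i_t}(t)$ with $\mu_\star(t)\triangleq\max_i\mu_i(t)$. Each term lies in $[0,V]$ because the means take values in $[-V,0]$. We split every round $t$ into three classes: (a) rounds where $\bar{\HPevent}$ holds (or $t\le K$, the initialization); (b) rounds on $\HPevent$ where $i_t$ is pulled for the \emph{first} time in its batch $k$, i.e.\ $t=t_{i_t}^k(1)$; and (c) the remaining rounds on $\HPevent$, i.e.\ $t=t_i^k(h)$ with $h\ge2$. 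Class (c) is exactly the first term of the claimed bound, so only classes (a) and (b) need work.

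\textbf{Class (a).} The initialization contributes at most $KV$. For $t>K$, the regret at $t$ is at most $V$, so Proposition~\ref{prop:prb_favorable_event} gives
\[
V\sum_{t>K}\PP{\bar{\HPevent}}\le V\sum_{t\ge 2}Kt^{2-\alpha}\le KV\sum_{t\ge2}t^{-2}\le KV
\]
for $\alpha\ge4$. For the efficient versions we use Proposition~\ref{prop:prb_favorable_event_eff} instead, which gives $6Kt^{1-\alpha}\le 6Kt^{-2}$ when $\alpha\ge3$, and hence at most $6KV(\pi^2/6-1)\le 4KV$. Together these stay below $6KV$.

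\textbf{Class (b).} This is the main obstacle. There are at most $K\Upsilon$ such rounds, but bounding each by $V$ would give $K\Upsilon V$, which is too big. Instead we use Lemma~\ref{lem:core-full} with the window $h$ chosen so that it reaches back into the previous pulls of $i_t$. For $t>K$, arm $i_t$ has at least one past pull, so we can take $h=1$. Lemma~\ref{lem:core-full} then gives
\[
\bar\mu^1_{i_t}(t,\pi)\ge\mu_\star(t)-\tfrac{C_\pi}{\sqrt{2\alpha}}c(1,\delta_t),
\]
where $\bar\mu^1_{i_t}$ is the mean at the last pull of $i_t$, at some time $s<t$, possibly in an earlier batch. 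We have $\frac{C_\pi}{\sqrt{2\alpha}}c(1,\delta_t)=\frac{C_\pi}{\sqrt{2\alpha}}\sqrt{2\sigma^2\alpha\log t}\le C_\pi\sigma\sqrt{\log T}$. Summed over the $\le K\Upsilon$ such rounds, this yields the $C_\pi\sigma\Upsilon K\sqrt{\log T}$ term.

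There remains the bias $\mu_{i_t}(s)-\mu_{i_t}(t)\ge0$ from decay between the last pull and now. For a fixed arm $i$, the intervals $[s,t]$ attached to its first-in-batch pulls are disjoint, because successive pulls of $i$ delimit them. The total is therefore at most $\mu_i(1)-\mu_i(T)\le V$ by monotonicity, and at most $KV$ summed over arms. If the budget of $6KV$ allows, we absorb this term there: the overall constant is $KV+KV+\text{(bad events)}$, and $6KV$ suffices after checking the bad-event series. Otherwise we keep it within the $6KV$ slack.

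The key step to get right is the telescoping of these bias terms per arm, since the intervals from the last pull to the first pull in the new batch never overlap for a given arm. The rest is bookkeeping of constants.
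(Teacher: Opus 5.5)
Your proof is correct and follows essentially the same route as the paper. You isolate the rounds on $\bar{\HPevent}$ via Proposition~\ref{prop:prb_favorable_event} (resp.\ Proposition~\ref{prop:prb_favorable_event_eff}), bound each first-in-batch pull on $\HPevent$ by Lemma~\ref{lem:core-full} with $h=1$, and telescope the decay $\mu_i(s)-\mu_i(t)$ between consecutive pulls of the same arm, paying at most $V$ per arm.

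The only difference is bookkeeping. The paper folds the very first pull of each arm into that telescoping chain, bounding its regret by $-\mu_i(\cdot)$ since $\mu_\star\le 0$. You instead charge the $K$ initialization rounds separately and start the bad-event series at $t\ge 2$. That is fine, and it makes your closing hedge unnecessary, since $KV+(\pi^2-6)KV+KV<6KV$.
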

\begin{proof}
We start by separating the favorable events from the unfavorable events:
\begin{equation}
\label{eq:event_sep}
    R_T(\pi) = \underbrace{\sum_{t=1}^T \mathds{1}\pa{\HPevent} \pa{\mu_{\star}(t) - \mu_{i_t}(t)}}_{R_T(\pi | \HPevent)} + \underbrace{\sum_{t=1}^T \mathds{1}\big(\bar{\HPevent}\big) \pa{\mu_{\star}(t) - \mu_{i_t}(t)}}_{{R_T(\pi | \bar{\HPevent})}} \,,
\end{equation}
with $\mu_\star(t) \triangleq \max_{i\in\arms}\mu_i(t)$. For $\alpha \geq 4$, we can bound the cost of the unfavorable events thanks to Proposition~\ref{prop:prb_favorable_event},
\begin{equation}
\label{eq:bad_event}
    \EE {R_T(\pi | \bar{\HPevent})} \leq \sum_{t=1}^T \PP{\bar{\HPevent}} V  \leq \sum_{t=1}^T \frac{KV}{t^2} = \frac{KV\pi^2}{6} \leq 2KV.
\end{equation}

On the favorable events, given any ordered set of $\Upsilon -1$ breakpoints $\left\{t_k\right\}$, we divide the horizon in $\Upsilon$ batches $\left\{t_k+1, \dots, t_{k+1} \right\}_{k \leq \Upsilon-1}$, 
\[
R_T(\pi | \HPevent) \leq \sum_{k=0}^{\Upsilon-1} \sum_{t=t_{k} +1 }^{t_{k+1}} \mathds{1}\pa{\HPevent} \big(\mu_{\star}(t) - \mu_{i_t}(t)\big).
\]
We define $h_{i}^{k}$ the number of pulls of arm $i$ in batch $k$, \textit{i.e.}  $h_{i}^{k} = \sum_{t=t_k +1}^{t_{k+1}} \mathds{1}\left(i_{t} = i\right)$. We use $t_i^k(h)$ to designate the time at which arm $i$ is pulled for the $h$-th time since $t_k$.
\[
R_T(\pi | \HPevent) \leq \sum_{k=0}^{\Upsilon-1} \sum_{t=t_k +1 }^{t_{k+1}} \sum_{i\in\arms_k} \sum_{h=1}^{h^k_{i}} \mathds{1}\pa{t_i^k(h) = t \land \HPevent} \Big(\mu_{\star}(t) - \mu_{i}(t)\Big).
\]
We split the regret on the first pulls of each batch
\begin{align}
\label{eq:fp_op}
\begin{split}
    R_T(\pi | \HPevent) = & \underbrace{\sum_{k=0}^{\Upsilon-1}\sum_{t=t_{k} +1 }^{t_{k+1}} \sum_{i\in\arms_k} \mathds{1}\pa{t = t_i^k(1) \land \HPevent}\Big(\mu_{\star}(t) - \mu_{i}(t))\Big)}_{FP} \\ & +  \underbrace{\sum_{k=0}^{\Upsilon-1}\sum_{t=t_{k} +1 }^{t_{k+1}} \sum_{i\in\arms_k} \sum_{h=2}^{h^k_{i}}\mathds{1}\left( t = t_i^k(h) \land \HPevent \right)\Big( \mu_{\star}(t) - \mu_{i}(t)\Big)}_{OP} .
\end{split}
\end{align}

\paragraph{Analysis of the first pulls.}

We call $k_i^1$, the index of the batch at which arm $i$ is pulled for the first time.  We call $\arms_k^2 \triangleq \left\{ i \in \arms_k | k > k_i^1\right\}$, the set of arms pulled at least once during batch $k$ and at least once in a batch before $k$. We split the regret due to the very first pull each arm from the other first pulls in each batch,
\begin{align*}
FP  = &\sum_{k=0}^{\Upsilon-1}\sum_{i\in\arms_k}\sum_{t=t_{k} +1 }^{t_{k+1}}  \mathds{1}\pa{ t = t_i^k(1) \land \HPevent}\Big(\mu_{\star}(t) - \mu_{i}(t)\Big)\\
\leq& \sum_{i \in \arms}  \Big(0- \mu_i(t_i^{k_i^1}(1))\Big) +  \sum_{k=1}^{\Upsilon -1} \sum_{i\in \arms_k^2}\sum_{t=t_k +1 }^{t_{k+1}}  \mathds{1}\pa{t = t_i^k(1) \land \HPevent}\Big(\mu_{\star}(t) - \mu_{i}(t)\Big)\\
 =& \sum_{i \in \arms} \Big(0- \mu_i(t_i^{k_i^1}(1))\Big) \\
& + \sum_{k=1}^{\Upsilon -1} \sum_{i\in \arms_k^2} \sum_{t=t_k +1 }^{t_{k+1}}  \mathds{1}\pa{ t = t_i^k(1) \land \HPevent}\Big(\mu_{\star}(t) - \bar{\mu}^1_i(t, \pi) + \bar{\mu}^1_i(t,\pi) - \mu_{i}(t)\Big).
\end{align*}

The inequality is justified because $\mu_i(t) \leq 0$ for all $t$. In the last equation, we simply introduce $\bar{\mu}^1_i(t,\pi)$, the last pulled sample of arm $i$, which is well defined after the first pull of each arm.
According to Lemma~\ref{lem:core-full}, the first difference is bounded on the high-probability event $\HPevent$,
\begin{equation}
    \label{eq:fp_lemma1}
    \sum_{t=t_k +1 }^{t_{k+1}} \mathds{1}\pa{t = t_i^k(1) \land \HPevent}\pa{\mu_{\star}(t) - \bar{\mu}^1_i(t, \pi)} \leq \frac{C_\pi}{\sqrt{2\alpha}} c(1,2T^{-\alpha}) = C_{\pi} \sigma \sqrt{\log{T}}.
\end{equation}

We will show that we can telescope the second sum. First, we notice that we can collapse the sum on $t$ using $ \mathds{1}\pa{t = t_i^k(1)}$. Moreover, $\HPevent$ will not be needed: hence we can drop $\mathds{1}\pa{\HPevent} \leq 1 $.
\begin{equation}
\label{eq:fp_collapse}
 \sum_{t=t_k +1 }^{t_{k+1}} \mathds{1}\pa{ t = t_i^k(1) \land \HPevent}\pa{\bar{\mu}^1_i(t,\pi) - \mu_{i}(t)} \leq \bar{\mu}^1_i(t_i^k(1), \pi) - \mu_{i}(t_i^k(1)).
\end{equation}

For a given batch $k$ on which arm $i$ is pulled, the precedent reward sample has a mean $\bar{\mu}_i^h\pa{t_i^k\pa{1}, \pi}$. This sample is the last pull of the last batch $k'$ before $k$ on which arm $i$ is pulled. Hence, its mean is smaller than the mean of the first pull on this same batch $k'$ because the reward is decreasing. Hence, the sum can telescope
\begin{align}
\label{eq:fp_telescoping} 
\sum_{i \in \arms} \Big(0 -& \mu_i(t_i^{k_i^1}(1))\Big) + \sum_{k=1}^{\Upsilon -1}  \sum_{i\in \arms_k^2} \sum_{t=t_k +1 }^{t_{k+1}}\mathds{1}\pa{ t = t_i^k(1) \land \HPevent}\pa{ \bar{\mu}^1_i(t,\pi) - \mu_{i}(t)} \nonumber\\
& \leq \sum_{i \in \arms}\left\{ 0- \mu_i(t_i^{k_i^1}(1)) + \sum_{k=k_i^1 + 1}^{\Upsilon-1 } \mathds{1}\pa{h^k_{i} \geq 1  }\pa{\bar{\mu}^1_i(t_i^k(1), \pi) - \mu_{i}(t_i^k(1))} \right\} \nonumber\\
& \leq \sum_{i \in \arms} \Big(0-\mu_i(T)\Big) \leq KV\,.  
\end{align}
The first inequality uses the definition of $\arms_k^2$ along with Equation~\ref{eq:fp_collapse}. The second inequality follows from the telescoping argument presented above. The third inequality uses that $\mu_i(T) \geq -V$. Gathering Equation~\ref{eq:fp_lemma1} and  ~\ref{eq:fp_telescoping}, we can bound the term $FP$ (defined in Equation~\ref{eq:fp_op}) 
\begin{equation}
\label{eq:fp_bound}
FP \leq  KV + \sum_{k=1}^{\Upsilon - 1} \sum_{i\in \arms_k^2} C_\pi \sigma \sqrt{\log{T}} \leq KV + C_\pi \sigma \Upsilon K\sqrt{\log{T}} .    
\end{equation}

\paragraph{Conclusion.} From Equation~\ref{eq:event_sep}, we can bound the expected regret on the unfavorable events thanks to Equation~\ref{eq:bad_event}. On the favorable events, we can split the rounds in batches on which we isolate the first pull of each arm on each batch thanks to Equation~\ref{eq:fp_op}. Finally, we bound the regret due to these first pulls thanks to Equation~\ref{eq:fp_bound}, and for $\alpha \geq 4$,
\begin{align*}
\EE{R_T(\pi)} \leq &  \, \EE{\sum_{k=0}^{\Upsilon-1} \sum_{i\in\arms_k}\sum_{t=t_k +1 }^{t_{k+1}}\sum_{h=2}^{h^k_{i}}\mathds{1}\pa{ t = t_i^k(h) \land \HPevent} \Big( \mu_{\star}(t) - \mu_{i}(t)\Big)} \\
&+  C_\pi \sigma \Upsilon K \sqrt{\log{T}} + 3KV.
\end{align*}

For the efficient algorithms, we can use the same proof with $\HPtwo$ and get for $\alpha \geq 3$, 
\begin{align*}
\EE{R_T(\pi)} \leq &  \, \EE{\sum_{k=0}^{\Upsilon-1} \sum_{i\in\arms_k}\sum_{t=t_k +1 }^{t_{k+1}}\sum_{h=2}^{h^k_{i}}\mathds{1}\pa{ t = t_i^k(h) \land \HPevent} \Big(  \mu_{\star}(t) - \mu_{i}(t)\Big)} \\
&+  C_\pi \sigma \Upsilon K \sqrt{\log{T}} + 6KV.
\end{align*}

\end{proof}

\begin{lemma}[Analysis of the second pulls in each batch under the favorable events.]\label{lem:OP}
Let $\Delta_i^k \triangleq \mu_i(t_k+1) - \mu_i(t_{k+1})$, the decrement of arm $i$ in batch $k$. For any arm $i$ and any consecutive rounds $\left\{t_k+1, \dots , t_{k+1}\right\}$ such that $i$ is pulled $h_i^{k} \geq 1$ times, the regret due to the pulls after the first one can be bounded under the favorable events, 
\begin{multline*}
\sum_{t=t_{k} +1 }^{t_{k+1}} \sum_{h=2}^{h^k_{i}}\mathds{1}\left(t = t_i^k(h) \land \HPevent \right)\Big(  \mu_{\star}(t) - \mu_{i}(t)\Big) \leq  \pa{h_i^k-1}\Delta_i^k + \sum_{h=2}^{h^k_{i}}\mathds{1}\left(\HPt{t_i^k(h)}\right)\pa{  \mu_{\star}(t_i^k(h)) - \bar{\mu}_i^{h-1}(t_i^k(h),\pi)} .
\end{multline*}
\end{lemma}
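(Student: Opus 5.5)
Since the indicator $\mathds{1}(t = t_i^k(h))$ picks out exactly one round for each $h \in \{2,\dots,h_i^k\}$, I would first collapse the double sum over $t$ and $h$. This rewrites the left-hand side as
\[
\sum_{h=2}^{h_i^k} \mathds{1}\pa{\HPt{t_i^k(h)}}\pa{\mu_\star(t_i^k(h)) - \mu_i(t_i^k(h))}.
\]
For each such $h$, I then add and subtract $\bar{\mu}_i^{h-1}(t_i^k(h),\pi)$. This is the average of the means of the last $h-1$ pulls of arm $i$ before round $t_i^k(h)$, and it is well defined because $N_{i,t_i^k(h)-1}\geq h-1$. The summand splits into a ``variance'' part $\mu_\star(t_i^k(h)) - \bar{\mu}_i^{h-1}(t_i^k(h),\pi)$, which is kept as is under the indicator of the favorable event, and a ``bias'' part $\bar{\mu}_i^{h-1}(t_i^k(h),\pi) - \mu_i(t_i^k(h))$.

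The main work is bounding the bias part by $\Delta_i^k$, uniformly in $h$. The key observation is that the last $h-1$ pulls of arm $i$ before $t_i^k(h)$ are exactly the pulls $t_i^k(1),\dots,t_i^k(h-1)$, all of which lie in the current batch $\{t_k+1,\dots,t_{k+1}\}$. This is where the choice of window $h-1$ matters. In the restless setting each of these samples has mean $\mu_i(t_i^k(l))$ with $t_i^k(l)\geq t_k+1$, so by Assumption~\ref{assum:general} each is at most $\mu_i(t_k+1)$, and hence $\bar{\mu}_i^{h-1}(t_i^k(h),\pi) \leq \mu_i(t_k+1)$. Similarly, $t_i^k(h)\leq t_{k+1}$ gives $\mu_i(t_i^k(h)) \geq \mu_i(t_{k+1})$. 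The bias is therefore at most $\mu_i(t_k+1)-\mu_i(t_{k+1}) = \Delta_i^k$.

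To finish, the bias part needs no favorable-event indicator: it is nonnegative, since the mean decreases, so I can drop $\mathds{1}(\HPt{t_i^k(h)})\leq 1$. Summing over the $h_i^k-1$ values of $h$ gives $(h_i^k-1)\Delta_i^k$ plus the stated sum of variance terms.

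I expect the only delicate point to be the index bookkeeping. One has to check that the window of size $h-1$ at time $t_i^k(h)$ contains only in-batch pulls, which is true because $t_i^k(1)$ is the first pull after $t_k$. One also has to make sure the convention that $\bar{\mu}$ uses the means $\mu_i(s)$ at pull times $s<t$ matches Equation~\eqref{eq:hmu}. Beyond that the argument is purely monotonicity and carries no probabilistic content; Lemma~\ref{lem:core-full} is reserved for bounding the remaining variance terms later.
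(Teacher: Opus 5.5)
Your proposal is correct and follows the same route as the paper. The paper collapses the sum over $t$ through the indicator and uses monotonicity to get $\mu_i(t_i^k(h)) \geq \mu_i(t_{k+1}) = \mu_i(t_k+1) - \Delta_i^k \geq \bar{\mu}_i^{h-1}(t_i^k(h),\pi) - \Delta_i^k$ (the window of $h-1$ pulls lies inside the batch), then drops the favorable-event indicator on the $\Delta_i^k$ term; your explicit add-and-subtract of $\bar{\mu}_i^{h-1}(t_i^k(h),\pi)$ is just a rephrasing of substituting this chain into the summand.
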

\begin{proof}
We call $\Delta_{i}(t,t')\triangleq \mu_i(t) - \mu_i(t')$ the variation of arm $i$ between times $t$ and $t'$.
As a short notation, we refer to $\Delta_i^k \triangleq \Delta_{i}(t_k+1,t_{k+1})$ for the variation of arm $i$ in batch $k$.

\begin{equation}
\label{eq:batch_delta_var}
    \forall h\leq h_i^k, \quad \mu_i( t_i^k(h)) \geq \mu_i(t_{k+1}) =  \mu_i(t_k +1 ) - \Delta_i^k \geq \bar{\mu}_i^{h-1}( t_i^k(h), \pi) - \Delta_i^k\,.
\end{equation} 
The two inequalities are justified by the rewards decay. Indeed, any pull in batch $k$ has a higher reward than the value of arm $i$ at the end of the batch $t_{k+1}$. Moreover, the value at the beginning of the batch is higher that any average of $h$ value in this batch. The middle equality follows from the definition of $\Delta_i^k$.

Then, we plug Equation~\ref{eq:batch_delta_var} in the left hand side of our claim,
\begin{align*}
\sum_{t=t_{k} +1 }^{t_{k+1}}
\sum_{h=2}^{h^k_{i}}\mathds{1}&\left(t = t_i^k(h) \land \HPevent\right)\pa{  \mu_{\star}(t) - \mu_{i}(t)} \\
& = \sum_{h=2}^{h^k_{i}}\mathds{1}\left(\HPt{t_i^k(h)}\right)\pa{  \mu_{\star}(t_i^k(h)) - \mu_{i}(t_i^k(h))} \\
&\leq \sum_{h=2}^{h^k_{i}}\mathds{1}\left(\HPt{t_i^k(h)}\right)\pa{  \mu_{\star}(t_i^k(h)) - \bar{\mu}_i^{h-1}( t_i^k(h),\pi) + \Delta_i^k}\\
&\leq  \pa{h_i^k-1}\Delta_i^k + \sum_{h=2}^{h^k_{i}}\mathds{1}\left(\HPt{t_i^k(h)}\right)\pa{  \mu_{\star}(t_i^k(h)) - \bar{\mu}_i^{h-1}(t_i^k(h),\pi)} .
\end{align*}
The last inequality is justified by $\mathds{1}\left(\HPt{t_i^k(h)}\right)\leq 1$.
\end{proof}

\subsubsection*{Variation budget rotting bandits.}
\restabudgettheorem*
\begin{proof}
Let $\Upsilon \in \left\{1, \dots, T\right\}$ a number of evenly spaced batches that we will specify later. We define the length of these batches $\left\{\tau_{k} \triangleq\left\lceil\frac{T}{\Upsilon}\right\rceil \text { if } k \leq T \bmod \Upsilon \text { else }\left\lfloor\frac{T}{\Upsilon}\right\rfloor\right\}_{k \leq \Upsilon}$. Note that $\sum_{k=1}^{\Upsilon} \tau_k = T$. Let $t_k = \sum_{k'=0}^k \tau_{k'}$ the last round of each batch and $t_0 = 0$. On each of these batches, we apply Lemma~\ref{lem:OP} for the set of arms which have been pulled in this batch,
\begin{multline}
\label{eq:op_decomposition}
    \sum_{k=0}^{\Upsilon_T-1} \sum_{i\in\arms_k}\sum_{t=t_{k}+1}^{t_{k+1}} \sum_{h=2}^{h_{t}^{k}} \mathds{1}\left( t=t_{i}^{k}(h) \land \HPevent\right)\Big(\mu_{\star}(t)-\mu_{i}(t)\Big)
\leq \sum_{k=0}^{\Upsilon-1} \sum_{i\in\arms_k} \pa{h_i^k-1}\Delta_i^k\\
+ \sum_{k=0}^{\Upsilon-1} \sum_{i\in\arms_k}\sum_{h=2}^{h^k_{i}}\mathds{1}\left(\HPt{t_i^k(h)}\right)\pa{  \mu_{\star}(t_i^k(h)) - \bar{\mu}_i^{h-1}(t_i^k(h), \pi)}.
\end{multline}

The first sums can be handled using Assumption~\ref{assum:variation} and the evenly spaced property of $\tau_k$,
\begin{equation}
\label{eq:use_evenly_spaced}
\sum_{k=0}^{\Upsilon-1} \sum_{i\in\arms} \pa{h_i^k-1}\Delta_i^k \leq \sum_{k=0}^{\Upsilon-1} \max_{j\in \arms} \Delta_j^k\sum_{i\in\arms} \pa{h_i^k-1} = \sum_{k=0}^{\Upsilon-1} \max_{j\in \arms} \Delta_j^k \pa{\tau_k-K} \leq \frac{T}{\Upsilon}\sum_{k=0}^{\Upsilon-1} \max_{j\in \arms} \Delta_j^k.
\end{equation}
The first inequality is justified by definition of the maximum. The second equality states that the total number of pulls in batch $k$ is $\tau_k$. The third inequality uses that $\tau_k - K \leq \ceil{\frac{T}{\Upsilon}} -K \leq \ceil{\frac{T}{\Upsilon}} -K \leq \frac{T}{\Upsilon}$. Now, we need to relate $\max_{j\in \arms} \Delta_j^k$ and $V_T$,
\begin{equation}
\label{eq:use_assum_variation}
   \sum_{k=0}^{\Upsilon\!-\!1}\max_{j\in \arms} \Delta_j^k \!=\! \sum_{k=0}^{\Upsilon\!-\!1}\max_{j\in \arms} \!\sum_{t = t_k\!+\!1}^{t_{k\!+\!1}\!-\!1}\! \Delta_j\! \pa{t,t\!+\!1} \!\leq\! \sum_{k=0}^{\Upsilon\!-\!1} \sum_{t = t_k\!+\!1}^{t_{k\!+\!1}\!-\!1} \! \max_{j\in \arms} \Delta_j\!\pa{t,t\!+\!1} \!\leq\!  \sum_{t = 1}^{T}\max_{j\in \arms} \Delta_j\!\pa{t,t\!+\!1}\!\leq\! V_T .
\end{equation}
The first inequality is justified because the maximum of a sum is smaller than the sum of the maximums. In the second inequality, we add positive terms which are the maximum of the decay among the arms at the boundary between batches. The last inequality is justified by Assumption~\ref{assum:variation}. Therefore, we can bound the first sums using Equation~\ref{eq:use_evenly_spaced} and ~\ref{eq:use_assum_variation},
\begin{equation}
\label{eq:bound_sum1}
\sum_{k=0}^{\Upsilon-1} \sum_{i\in\arms} \pa{h_i^k-1}\Delta_i^k \leq \frac{V_T T }{\Upsilon}\cdot    
\end{equation}

The second sums can be bounded using Lemma~\ref{lem:core-full} on the high probability event $\HPt{t_i^k(h)}$ and Jensen's inequality,
\begin{align}
    \sum_{k=0}^{\Upsilon-1} \!\sum_{i\in\arms_k}\!\sum_{h=2}^{h^k_{i}}\!\mathds{1}\!\left(\!\HPt{t_i^k(h)}\!\right)\pa{\! \mu_{\star}(t_i^k(h)) \!- \!\bar{\mu}_i^{h-1}(t_i^k(h),\pi)\!} \!&\leq \sum_{k=0}^{\Upsilon-1} \sum_{i\in\arms_k} \sum_{h=2}^{h^k_{i}} \frac{C_\pi c\pa{\!h\!-\!1, 2T^{-\alpha}\!}}{\sqrt{2\alpha}} \nonumber\\
&= \sum_{k=0}^{\Upsilon-1} \sum_{i\in\arms_k} \sum_{h=2}^{h^k_{i}}C_\pi\sigma \sqrt{\frac{\log{T}}{h -1}}\nonumber\\
&\leq \sum_{k=0}^{\Upsilon-1} \sum_{i\in\arms_k} 2 C_\pi \sigma \sqrt{h_i^k \log{T}}\nonumber\\
&\leq  2 C_\pi \sigma \sqrt{\Upsilon K T\log{T}} .
\label{eq:bound_sum2}
\end{align}

We remark that the bound in Eq.~\ref{eq:bound_sum1} is decreasing with $\Upsilon$ and the bound in Eq.~\ref{eq:bound_sum2} is increasing with $\Upsilon$. We will choose $\Upsilon$ in order to minimize the sum of these two bounds (which will be our leading term). Therefore, we set,
\begin{equation}
    \label{eq:set_upsilon_variation}
    \Upsilon \triangleq \ceil{\pa{\frac{V_T^2 T}{C_\pi^2 \sigma^2 K\log{T}}}^{\nicefrac{1}{3}}}.
\end{equation}

We have that $\Upsilon\leq T$ when $V_T \leq  C_\pi \sigma T\sqrt{ K\log{T}}$. Moreover, we will use that $ \Upsilon \leq 2 \pa{\frac{V_T^2 T}{C_\pi^2 \sigma^2 K\log{T}}}^{\nicefrac{1}{3}} $ which is true when $V_T \geq \sqrt{\frac{C_\pi^2 \sigma^2 K\log{T}}{8T}}$. 

Finally, we use Lemma~\ref{lem:FP} where we replace the inner sums thanks to Equations~\ref{eq:op_decomposition}, \ref{eq:bound_sum1} and~\ref{eq:bound_sum2}. Then, we plug $\Upsilon$ set in \ref{eq:set_upsilon_variation} and conclude,
\begin{align*}
\EE{R_T\pa{\pi}} & \leq \frac{V_T T}{\Upsilon} +  2C_\pi\sigma \sqrt{\Upsilon K T\log{T}} +  C_\pi \sigma \Upsilon  K\sqrt{\log{T}} + 6 V_T K\\
&\leq  4\pa{C_\pi^2 \sigma^2 V_T K T^2\log{T}}^{\nicefrac{1}{3}} \!+ 2\Big(C_\pi \sigma V_T^2  K^2  T \sqrt{\log{T}}\Big)^{\nicefrac{1}{3}} \!+ 6 V_T K.
\end{align*}

When $V_T\leq  \sqrt{\frac{C_\pi^2 \sigma^2 K \log{T}}{8T}}$, the regret of any policy can be bounded , 
\begin{align*}
\mathbb{E}\left[R_T(\pi)\right] &\leq T V_T = V_T^{\nicefrac{1}{3}} T^{\nicefrac{2}{3}} V_T^{\nicefrac{2}{3}} T^{\nicefrac{1}{3}}\\
&\leq V_T^{\nicefrac{1}{3}} T^{\nicefrac{2}{3}} \left(\frac{ C_\pi^2 \sigma^2 K \log{T}}{8T}\right)^{\nicefrac{1}{3}} T^{\nicefrac{1}{3}}\\ 
&= \frac{1}{2} \pa{C_\pi^2\sigma^2 V_T K T^2 \log{T}}^{\nicefrac{1}{3}}\\
&\leq 4 \pa{C_\pi^2 \sigma^2 V_T K T^2 \log{T}}^{\nicefrac{1}{3}}.
\end{align*}

For completion, we also consider $V_T \geq  C_\pi \sigma T\sqrt{ K\log{T}}$. Yet, notice that in that case the leading term is $\cO\pa{KV_T}$. We start back from Lemma~\ref{lem:FP},
\begin{align*}
\EE{R_T(\pi)} \leq &  \, \EE{\sum_{k=0}^{\Upsilon-1} \sum_{i\in\arms_k}\sum_{t=t_k +1 }^{t_{k+1}}\sum_{h=2}^{h^k_{i}}\mathds{1}\pa{ t = t_i^k(h) \land \HPevent} \Big(\mu_{\star}(t) - \mu_{i}(t)\Big)} \\
&+   C_\pi \sigma \Upsilon K\sqrt{\log{T}} + 6KV_T.
\end{align*}
In fact, this result can be slightly improved at no cost, 
\begin{align*}
\EE{R_T(\pi)} \leq &  \, \EE{\sum_{k=0}^{\Upsilon-1} \sum_{i\in\arms_k}\sum_{t=t_k +1 }^{t_{k+1}}\sum_{h=2}^{h^k_{i}}\mathds{1}\pa{ t = t_i^k(h) \land \HPevent} \Big(\mu_{\star}(t) - \mu_{i}(t)\Big)} \\
&+   C_\pi \sigma \min\pa{\Upsilon K, T}\sqrt{\log{T}} + 6KV_T,
\end{align*}
because there are at most $\min\pa{\Upsilon K, T}$ first pulls (see the proof of Lemma~\ref{lem:FP}). Now, we choose $\Upsilon = T$. Hence, there is no second pulls and we have,
\begin{equation*}
\EE{R_T(\pi)} \leq   C_\pi \sigma T\sqrt{\log{T}} + 6KV_T,
\end{equation*} 

Now we use that $C_\pi \sigma T\sqrt{\log{T}} \leq \frac{V_T}{\sqrt{K}} \leq KV_T$,
\begin{align*}
\EE{R_T(\pi)} &\leq   \pa{C_\pi \sigma T\sqrt{\log{T}}}^{\nicefrac{2}{3}}\! \pa{C_\pi \sigma T\sqrt{\log{T}}}^{\nicefrac{1}{3}}\! + 6KV_T \\
& \leq  \pa{C_\pi^2 \sigma^2 V_T K T^2\log{T}}^{\nicefrac{1}{3}} \!+  6 KV_T\\
&\leq  4\pa{C_\pi^2 \sigma^2 V_T K T^2\log{T}}^{\nicefrac{1}{3}} \!+ 2\Big(C_\pi \sigma V_T^2  K^2  T \sqrt{\log{T}}\Big)^{\nicefrac{1}{3}} \!+ 6 K V_T.
\end{align*} 
\end{proof}

\subsubsection*{Piece-wise stationary rotting bandits.}
\sloppy
Let $\left\{t_k\right\}_{\left\{k \leq \Upsilon_T\right\}}$ be the set of breakpoints with $t_0=0$ and $t_{\Upsilon_T} = T$. For all $t \in \left\{t_k\! +\!1 , \dots , t_{k\!+\!1}\right\}$, $\mu_i(t) = \mu_i^k$. We denote $i^\star_k \in \argmax_{i\in \arms} \mu_i^k$ (one of) the best arm(s) in batch $k$, and $\mu_{\star}^k \triangleq \max_{i\in \arms} \mu_i^k$, the corresponding best value. We also call $\Delta_{i,k} \triangleq \mu_{\star}^k - \mu_i^k$ the gap between arm $i$ and optimal arm in batch $k$.

\begin{lemma}
\label{lem:OP-piecewise}
For an arm $i$ and a stationary batch $k$, we call $h_{i,\xi}^k \triangleq \max\left(h \leq h_i^k | \HPt{t_i^k(h)} \right)$ the last pull of arm $i$ in batch $k$ under the favorable events (possibly 0). If $h_{i,\xi}^k \geq 1$, the regret due to the second pulls on the favorable events is bounded by,
\[
\sum_{t=t_{k}+1}^{t_{k+1}} \sum_{h=2}^{h_{i}^{k}} \!\mathds{1}\!\pa{\! t=t_{i}^{k}(h) \land \HPevent \!}\!\Big(\!\mu_{\star}(t)\!-\!\mu_{i}(t)\!\Big) \leq\pa{\!h_{i, \xi}^{k}\!-\!1 \!} \Delta_{i, k} \leq C_\pi\sigma \sqrt{\pa{\!h_{i,\xi}^k\!-\!1\!}\log{T}}.
\]
\end{lemma}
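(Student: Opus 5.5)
The argument combines stationarity within batch $k$ with Lemma~\ref{lem:core-full}, applied at the last favorable pull only. There are two inequalities to prove.

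\textbf{First inequality.} Inside batch $k$ we have $\mu_\star(t)-\mu_i(t)=\Delta_{i,k}$ for every $t\in\{t_k+1,\dots,t_{k+1}\}$, so the left-hand side equals
\[
\Delta_{i,k}\sum_{h=2}^{h_i^k}\mathds{1}\pa{\HPt{t_i^k(h)}}.
\]
Every $h$ contributing a nonzero term satisfies $h\le h_{i,\xi}^k$, by definition of $h_{i,\xi}^k$ as the last favorable pull. Hence the count is at most $h_{i,\xi}^k-1$, and using $\Delta_{i,k}\ge 0$ gives the first inequality. When $h_{i,\xi}^k=1$ both sides vanish and there is nothing more to prove, so assume $h_{i,\xi}^k\ge 2$ below.

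\textbf{Second inequality.} This is where the work is. I would apply Lemma~\ref{lem:core-full} at round $t'=t_i^k(h_{i,\xi}^k)$, where $\HPt{t'}$ holds and arm $i$ is selected. Take the window $h=h_{i,\xi}^k-1$. This is admissible because $h\le N_{i,t'-1}$: there are $h_{i,\xi}^k-1$ earlier pulls of $i$ in batch $k$. All $h$ samples in this window come from batch $k$, and the rewards there are stationary, so
\[
\bmu_i^{h}(t',\pi)=\mu_i^k .
\]
Lemma~\ref{lem:core-full} then gives
\[
\mu_i^k\ge \mu_\star^k-\frac{C_\pi}{\sqrt{2\alpha}}\,c\pa{h_{i,\xi}^k-1,\delta_{t'}} .
\]
Since $\delta_{t'}=2t'^{-\alpha}\ge 2T^{-\alpha}$, the width satisfies $c(h,\delta_{t'})\le \sigma\sqrt{2\alpha\log T/h}$. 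Substituting,
\[
\Delta_{i,k}\le C_\pi\sigma\sqrt{\frac{\log T}{h_{i,\xi}^k-1}} .
\]
Multiplying by $h_{i,\xi}^k-1$ yields $C_\pi\sigma\sqrt{(h_{i,\xi}^k-1)\log T}$, which is the second inequality.

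\textbf{Main obstacle.} The delicate step is the choice of the window and round. Using the last favorable pull is what makes the confidence width scale with $h_{i,\xi}^k-1$. The window must also stay inside batch $k$ so that the averaged mean equals $\mu_i^k$ exactly. For the efficient variants, Lemma~\ref{lem:core-full} is still stated for an arbitrary $h\le N_{i,t-1}$, so the same argument applies verbatim.
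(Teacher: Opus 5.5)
Your proof is correct and follows the paper's argument: the second inequality is obtained exactly as in the paper, by applying Lemma~\ref{lem:core-full} at round $t_i^k(h_{i,\xi}^k)$ with window $h_{i,\xi}^k-1$, which lies entirely inside batch $k$. For the first inequality you differ only slightly: the paper goes through Lemma~\ref{lem:OP} with $\Delta_i^k=0$ before invoking stationarity, whereas you use stationarity of $\mu_i(t)$ directly, which is a harmless shortcut; your explicit handling of the case $h_{i,\xi}^k=1$ and of $\delta_{t'}\ge 2T^{-\alpha}$ fills in details the paper leaves implicit.
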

\begin{proof}
We apply Lemma~\ref{lem:OP} on each stationary batch. Hence, $\Delta_i^k =0$ and we can write,
\begin{equation*}
\!\sum_{t=t_{k} +1 }^{t_{k+1}}\! \sum_{h=2}^{h^k_{i}} \! \mathds{1}\left(\!t = t_i^k(h) \land \HPevent \!\right)\Big(\! \mu_{\star}(t) - \mu_{i}(t)\!\Big) \leq   \sum_{h=2}^{h^k_{i}} \!\mathds{1}\left(\!\HPt{t_i^k(h)}\!\right)\pa{\!  \mu_{\star}(t_i^k(h)) - \bar{\mu}_i^{h-1}(t_i^k(h),\pi)\!}\! .
\end{equation*}

We notice that $\mu_{\star}(t_i^k(h)) = \mu_{\star}^{(k)}$. We call $h_{i,\xi}^k \triangleq \max\left(h \leq h_i^k\, \,| \HPt{t_i^k(h)} \right)$. Hence,
\begin{align*}
\sum_{h=2}^{h^k_{i}}\!\mathds{1}\left(\HPt{t_i^k(h)}\right)\!\pa{ \! \mu_{\star}(t_i^k(h)) \!-\! \bar{\mu}_i^{h\!-\!1}(t_i^k(h),\pi)\!}  & \!=\! \sum_{h=2}^{h^k_{i,\xi}}\!\mathds{1}\left(\HPt{t_i^k(h)}\right)\!\pa{  \mu^k_{\star} \!-\! \bar{\mu}_i^{h\!-\!1}(t_i^k(h), \pi)} \\
 &\leq \sum_{h=2}^{h^k_{i, \xi}} \mu_{\star}^{k} - \bar{\mu}_i^{h-1}(t_i^k(h), \pi)\\
 &= \sum_{h=2}^{h^k_{i, \xi}} \mu_{\star}^{k} - \mu_i^k\\
 & = \pa{h_{i,\xi}^k - 1 }\Delta_{i,k}\, .  
\end{align*}

The first equality follows from $\forall h > h_{i,\xi}^k, \, \mathds{1}\left(\HPt{t_i^k(h)}\right) =0$ by definition of $h_{i,\xi}^k$. The first inequality follows by dropping $\mathds{1}\left(\HPt{t_i^k(h)}\right) \leq 1$. The second equality uses that the function is stationary in batch $k$ : $\forall h \leq h_{i,\xi}^k, \bar{\mu}_i^{h-1}(t_i^k(h), \pi) = \mu_{i}^k.$ The last equality follows by definition of $\Delta_{i,k}$ (which does not depend on the summand index $h$).

Then, we apply Lemma~\ref{lem:core-full} at time $t_i^k\pa{h_{i,\xi}^k}$. By definition of $h_{i,\xi}^k$, $\mathds{1}\!\left(\!\HPt{t_i^k(h_{i,\xi}^k)\!}\right) = 1$.
\begin{align*}
 \pa{h_{i,\xi}^k - 1 }\Delta_{i,k}  \leq \frac{C_\pi}{\sqrt{2\alpha}}\pa{h_{i,\xi}^k - 1 }c(h_{i,\xi}^k\!-\!1, 2T^{-\alpha}) = C_\pi \sigma \sqrt{\pa{h_{i,\xi}^k-1}\log{T}}.
\end{align*} 
\end{proof}

\restapiecewisetheorem*
\begin{proof}

We apply Lemma~\ref{lem:OP-piecewise},
\[\sum_{k=0}^{\Upsilon_T-1} \! \sum_{i \in \mathcal{K}_{k}} \!\sum_{t=t_{k}+1}^{t_{k+1}} \! \sum_{h=2}^{h_{i}^{k}} \!\mathds{1}\!\left( \!t\!=\!t_{i}^{k}(h)\!\land\!\HPevent\!\right)\Big(\mu_{\star}(t)\!-\!\mu_{i}(t)\Big) \leq  \sum_{k=0}^{\Upsilon_T-1} \!\sum_{i\in\arms_k}\!  C_\pi \sigma \sqrt{h_{i,\xi}^k\log{T}} .\]

We notice that $ \sum_{k=0}^{\Upsilon_T -1}\sum_{i\in\arms_k} h_{i,\xi}^k\leq T$. Hence, thanks to Jensen's inequality, 
\[
\sum_{k=0}^{\Upsilon_T-1} \sum_{i\in\arms_k} C_\pi\sigma \sqrt{h_{i,\xi}^k\log{T}} \leq  C_\pi \sigma \sqrt{ \Upsilon_T K T\log{T}}.
\]

We use Lemma~\ref{lem:FP} with the last equation and conclude,
\[
\EE{R_T(\pi)} \leq C_\pi \sigma \sqrt{\log{T}} \pa{ \sqrt{\Upsilon_T KT} + \Upsilon_T K} + 6KV.
\]
\end{proof}

\restapiecewisetheorempd*
\begin{proof}
Let $\arms_k \triangleq \left\{ i \in \arms | \Delta_{i,k} > 0\right\}$, the set of sub-optimal arms in batch $k$.
We apply Lemma~\ref{lem:OP-piecewise} to bound the number of wrong pull (under the favorable events) of arm $i\in \arms_k$ during batch $k$,
\begin{align*}
     \Delta_{i,k} \pa{h^k_{i, \xi} -1} & \leq C_\pi \sigma \sqrt{\pa{h_{i,\xi}^k-1}\log{T}} \implies h_{i,\xi}^k \leq 1 + \frac{C_\pi^2 \sigma^2\log{T}}{\Delta_{i,k}^2}\, \cdot
\end{align*}

Then, we apply Lemma~\ref{lem:OP-piecewise} again to bound the regret due to second pulls of any sub-optimal arm $i\notin \argmax_{i \in \arms} \mu_i^k$ in any batch $k$,
\begin{align*}
OP\pa{i,k} &\triangleq\! \sum_{t=t_{k}+1}^{t_{k+1}} \sum_{h=2}^{h_{i}^{k}} \mathds{1}\!\left(\! t\!=\!t_{i}^{k}(h) \land \HPevent \!\right)\left(\mu_{\star}(t)\!-\!\mu_{i}(t)\right) \\
&\leq C_\pi \sigma \sqrt{\pa{h_{i,\xi}^k \!-\! 1}\log{T}}\\
 &\leq \frac{C_\pi^2\sigma^2\log{T}}{\Delta_{i,k}}\cdot
 \end{align*}

We apply Lemma~\ref{lem:FP} on the set of $\Upsilon_T -1$ breakpoints and we conclude thanks to the precedent equation,
\begin{align*}
\EE{R_T(\pi)} & \leq \EE{\sum_{k=0}^{\Upsilon_T-1} \sum_{i\in\arms_k}OP\pa{i,k} }+ C_\pi \sigma \Upsilon_T K\sqrt{\log{T}} + 6KV  \\
&\leq \sum_{k=0}^{\Upsilon_T-1} \sum_{i\in\arms} \frac{C_\pi^2 \sigma^2\log{T}}{\Delta_{i,k}} +  C_\pi \sigma \Upsilon_T K \sqrt{ \log{T}} + 6KV\,.
\end{align*}
\end{proof}

\section{Rested rotting setting}
\label{app:rested-theory}
\subsubsection*{Sketch of the proof}
In Lemma~\ref{lem:regret-decompo}, we split the regret decomposition according to whether the overpulls has been done on the favorable event $\HPevent$ or not. 

In Lemma~\ref{lem:rested-B}, we show that the part of the expected regret due to pulls under $\bar{\HPevent}$ is bounded by a constant with respect to $T$ for $\alpha > 4$. Indeed, while we have only trivial bounds on the quality of the pulls on these events, we can control their probabilities thanks to Proposition~\ref{prop:prb_favorable_event}.

In Lemma~\ref{lem:rested-A}, we show that for $\hiT$ overpulls of arm $i$, we suffer no more than $\tcO\pa{\sqrt{\hiT}}$ on the favorable event. Indeed, thanks to Lemma~\ref{lem:core-full}, we know that the cost of the $h$ before last pulls is bounded by $h \cdot c(h, \delta_t) = \tcO\pa{\sqrt{h}}$.

The proof of Proposition~\ref{prop:rested-PI} follows by noticing that $\sum_{i \in \arms} \hiT \leq T$ which leads to the $\tcO\pa{\sqrt{KT}}$ rate. Indeed, thanks to the concavity of the $\sqrt{\cdot}$ and to Jensen's inequality, we find that the worst allocation is $\hiT = \frac{T}{K}$.

In Lemma~\ref{lem:UB-OP-PD}, we construct a problem-dependent bound of $\hiT$ which extends the notion of gap for rotting bandits using Lemma~\ref{lem:core-full}.

The proof of Proposition~\ref{prop:rested-PD} follows by plugging this bound in the result of Lemma~\ref{lem:rested-A}.
\subsubsection*{Full proof}
\label{ss:rested-proof}
Let $t_i^\pi(n)$ the function such that $t_i^\pi(n) = t$ when policy $\pi$ selects arm $i$ at time $t$ for the $n$-th time. We call $\mu_T^+(\pi) \triangleq \max_{i \in \arms} \mu_i\left(\NiT\right)$, \textit{i.e.} the largest available reward for $\pi$ at round T+1.  
\begin{lemma}
\label{lem:regret-decompo}
 Let $\hiT \triangleq | \NiT - \NiT^{\star}|$. For any policy $\pi$, the regret at round T is no bigger than
\begin{equation*}
R_T(\pi) \leq \sum_{i \in \overpullSet} \sum_{h=0}^{\hiT-1}\left[\xi^\alpha_{t_i^\pi(\NiT^\star + h)} \right]\left(\mu^+_T(\pi) - \mu_i(\NiT^{\star} + h ) \right) + \sum_{t=1}^T \Big[\bar{\HPevent}\Big]Lt.
\end{equation*}
We refer to the the first sum above as to $A_\pi$ and to the second sum as to $B$.
\end{lemma}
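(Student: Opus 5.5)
The approach is the standard rested-rotting regret decomposition of \citet{heidari2016tight} and \citet{seznec2019rotting}: compare the pull counts of $\pi$ with those of the greedy oracle, which is optimal in the rested setting, and then split according to the favorable event.

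\textbf{Step 1: pull-count form of the regret.} Since $\pi^\star_T$ can be taken to be $\pi_O$, its value is $J_T(\pi^\star_T)=\sum_i\sum_{n=0}^{N^\star_{i,T}-1}\mu_i(n)$, and likewise $J_T(\pi)=\sum_i\sum_{n=0}^{N_{i,T}-1}\mu_i(n)$. Define the overpulled set $\overpullSet=\{i: N_{i,T}>N^\star_{i,T}\}$ and the underpulled set $\underpullSet=\{i: N_{i,T}<N^\star_{i,T}\}$. Because $\sum_i N_{i,T}=\sum_i N^\star_{i,T}=T$, we have $\sum_{i\in\underpullSet}\hiT=\sum_{i\in\overpullSet}\hiT$. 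This gives
\[
R_T(\pi)=\sum_{i\in\underpullSet}\sum_{h=0}^{\hiT-1}\mu_i(N_{i,T}+h)-\sum_{i\in\overpullSet}\sum_{h=0}^{\hiT-1}\mu_i(N^\star_{i,T}+h).
\]

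\textbf{Step 2: bound the underpulled terms.} Each underpulled term satisfies $\mu_i(N_{i,T}+h)\le\mu_i(N_{i,T})\le\mu^+_T(\pi)$ by monotonicity. There are exactly $\sum_{i\in\overpullSet}\hiT$ of them, so they can be paired one-to-one with the overpulled terms. This yields
\[
R_T(\pi)\le\sum_{i\in\overpullSet}\sum_{h=0}^{\hiT-1}\big(\mu^+_T(\pi)-\mu_i(N^\star_{i,T}+h)\big).
\]
Each summand is nonnegative. Indeed, the greedy oracle stopped pulling $i$ at $N^\star_{i,T}$, so I expect $\mu_i(N^\star_{i,T})\le\mu^+_T(\pi^\star)$, and the terms with $h\ge0$ are smaller still.

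\textbf{Step 3: split on the favorable event.} Attach to the overpull $(i,h)$ the round $t=t_i^\pi(N^\star_{i,T}+h)$ at which it occurred, and multiply its summand by $\mathds{1}(\xi^\alpha_t)+\mathds{1}(\bar{\xi}^\alpha_t)$. The first part is exactly $A_\pi$.

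\textbf{Step 4: bound the bad-event part by $B$.} The distinct overpulls occur at distinct rounds $t$, so it suffices to show each summand is at most $Lt$.

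\emph{Upper side.} By the definition of $L$ (including $\mu_i(-1)$), every value satisfies $\mu_j(n)\le\max_k\mu_k(0)$, so $\mu^+_T(\pi)\le\max_k\mu_k(0)$.

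\emph{Lower side.} Telescoping increments of size at most $L$ down from $\mu_i(-1)=\max_k\mu_k(0)$ gives $\mu_i(n)\ge\max_k\mu_k(0)-L(n+1)$. At round $t$ the pull index is $n=N^\star_{i,T}+h\le t-1$, so $n+1\le t$.

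Combining the two sides, each summand is at most $Lt$. Summing over the rounds where $\bar{\xi}^\alpha_t$ holds gives at most $\sum_{t}[\bar{\xi}^\alpha_t]Lt=B$.

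\textbf{Main obstacle.} The delicate point is Step 2, which depends on whether $\mu^+_T(\pi)$ is indexed by $N_{i,T}$ or $N_{i,T-1}$. Mishandling it could break both the pairing and the sign claim. Under the indexing convention $\mu_i(N_{i,T})$ as the next available value, the inequalities above hold.
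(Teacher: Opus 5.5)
Your proposal is correct and follows the paper's proof essentially step for step. Both write the regret in underpull/overpull form, bound each underpull by $\mu^+_T(\pi)$ and pair it with an overpull, split each overpull on $\xi^\alpha_t$ versus its complement, and charge at most $Lt$ per round to $B$ because at most one overpull happens per round. Your telescoping argument from $\mu_i(-1)=\max_k\mu_k(0)$ makes explicit the $Lt$ bound that the paper only asserts. The nonnegativity remark at the end of Step 2 is not needed for this lemma, and as phrased it compares to $\mu^+_T(\pi^\star)$ rather than $\mu^+_T(\pi)$, so you can simply drop it.
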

\begin{proof}
In the rested rotting setting, we can conveniently write the regret as
\begin{align}
\!\regret(\pi) &= \sum_{i\in\possibleArms}\left( \sum_{n=0}^{N_{i,T}^\star-1}  \reward_{i}(n)  - \sum_{n=0}^{N_{i,T}^\pi-1}  \mu_{i}(n) \right) \nonumber\\ 
& = \sum_{i \in \underpullSet}\sum_{n=N_{i, T}^{\pi}}^{N_{i, T}^{\star}-1} \mu_i(n) - \sum_{i \in \overpullSet} \sum_{n=N_{i, T}^{\star}}^{N_{i, T}^{\pi}-1} \mu_i(n),\label{eq:regret2}
\end{align}
where we define $\underpullSet \triangleq \left\{ \arm \in \possibleArms | N_{i, T}^{\star} > N_{i, T}^{\pi} \right\}$ and likewise $\overpullSet \triangleq \left\{ i\in \possibleArms | N_{i, T}^{\star} < N_{i, T}^{\pi}\right\}$ as the sets of arms that are respectively under-pulled and over-pulled by~$\pi$ with respect to the optimal policy.
We consider the regret at round $T$. 

We upper-bound all the rewards in the first double sum - the underpulls - by their maximum $\reward^+_T(\pi) \triangleq \max_{i\in\possibleArms} \reward_i(N_{i,T}^\pi)$. Indeed, for any overpulls $\mu_i(n_i) $ (with  $n_i \geq N_{i,T}^\pi$), we have that
\[
\mu_i(n_i) \leq \mu_i(N_{i,T}^\pi) \leq \mu^+_T(\pi)  \triangleq \max_{i\in\possibleArms} \reward_i(N_{i,T}^\pi),
\]
where the first inequality follows by the non-increasing property of $\mu_i$s; and the second by the definition of the maximum operator. Second, we notice that there are as many underpulls than overpulls (terms of the second double sum) because there both policies $ \pi$ and $\pi^\star$ pull $T$ arms. Notice that this does \emph{not} mean that for each arm $i$, the number of overpulls equals to the number of underpulls, which cannot happen anyway since an arm cannot be simultaneously underpulled and overpulled. Therefore, we keep only the second double sum,
\begin{equation}
\label{eq:regret-first-bound}
\regret(\pi) \leq \sum_{i\in \overpullSet}   \sum_{n=N_{i,T}^\star}^{N_{i,T}^\pi-1} \pa{\mu^+_T(\pi) - \mu_i(n)}.
\end{equation}

Then, we need to separate overpulls that are done under $\HPevent$ and under $\bar{\HPevent}$. We introduce $t_i^{\pi}(n)$, the round at which $\pi$ pulls arm $i$ for the $n$-th time. We now make the round at which each overpull occurs  explicit,
\begin{align*}
\regret(\pi) & \leq \sum_{i\in \overpullSet}   \sum_{h=0}^{\hiT-1} \sum_{t=1}^T \left[ t_i^{\pi}\pa{\NiT^{\star} + h} = t \right]  \pa{\mu^+_T(\pi) - \mu_i(\NiT^{\star} + h)}\\
& \leq \underbrace{\sum_{i\in \overpullSet}   \sum_{h=0}^{\hiT-1} \sum_{t=1}^T \left[ t_i^{\pi}\pa{\NiT^{\star} + h} = t \land \HPevent \right] \pa{\mu^+_T(\pi) - \mu_i(\NiT^{\star} + h)}}_{A_\pi}\\
&+ \underbrace{\sum_{i\in \overpullSet}\sum_{h=0}^{\hiT-1} \sum_{t=1}^T \left[ t_i^{\pi}\pa{\NiT^{\star} + h} = t \land \bar{\HPevent} \right]\pa{\mu^+_T(\pi) - \mu_i(\NiT^{\star} + h)}}_B.
\end{align*}
For the analysis of the pulls done under $\HPevent$ we do not need to know at which round it was done. Therefore, 
\[
A_\pi \leq \sum_{i\in \overpullSet}   \sum_{h=0}^{\hiT-1}  \left[ \xi^\alpha_{t(\Nit^\star + h)} \right] \pa{\mu^+_T(\pi) - \mu_i(\NiT^{\star} + h)}.
\]
For \FEWA or \RUCB, it is not easy to directly guarantee the low probability of overpulls (the second sum). Thus, we upper-bound the regret of each overpull at round $t$ under $\bar{\HPevent}$ by its maximum value $Lt$. While this is done to ease \myAlgorithm analysis, this is valid for any policy $\pi$. Then, noticing that we can have at most 1 overpull per round $t$, i.e., $\sum_{i\in \overpullSet}\sum_{h=0}^{\hiT-1}\left[ t_i^{\pi}\pa{\NiT^{\star} + h} = t  \right] \leq 1$, we get
\[
B \leq  \sum_{t=1}^T \Big[\bar{\HPevent}\Big] Lt\pa{\sum_{i\in \overpullSet}\sum_{h=0}^{\hiT-1}\left[ t_i^{\pi}\pa{\NiT^{\star} + h} = t  \right]} \leq  \sum_{t=1}^T \Big[\bar{\HPevent}\Big] Lt.
\]
Therefore, we conclude that
\[
\regret(\pi) \leq \underbrace{\sum_{i\in \overpullSet} \sum_{h=0}^{\hiT-1} \left[ \xi^\alpha_{t_i^\pi(\Nit^\star + h)} \right]\pa{\mu^+_T(\pi) - \mu_i(\NiT^{\star} + h)}}_{A_{\pi}} + \underbrace{\sum_{t=1}^T \Big[\bar{\HPevent}\Big] Lt}_B. 
\]
\end{proof}
\begin{lemma}
\label{lem:rested-B}
Let $\zeta(x) = \sum_n n^{-x}$. Thus, with $\delta_t = 2t^{-\alpha}$ and $\alpha > 4$, we can use Proposition~\ref{prop:prb_favorable_event} and get
\[
\EE B \triangleq \sum_{t=1}^T p\pa{\bar{\HPevent}}Lt \leq \sum_{t=1}^T KLt^{3-\alpha}\leq KL \zeta(\alpha -3)\, .
\]
In particular, for $\alpha \geq 5$, we have :
\[
\EE B \leq KL\zeta(2) \leq 2KL < 5KL\, .
\]
Thanks to Proposition~\ref{prop:prb_favorable_event_eff}, we can prove a similar bound on  $B_2 \triangleq \sum_{t=1}^T \Big[\bar{\HPtwo}\Big] Lt$, 
\[\EE{B_2} \leq 6KL \zeta(\alpha -2),\;\; \text{\ie, for $\alpha \geq 4$,} \;\; \EE{B_2} \leq 5KL.\]

\end{lemma}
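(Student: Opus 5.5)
The plan is to take expectations inside the sum and then apply the tail bounds already established for the unfavorable events. Since $B = \sum_{t=1}^T [\bar{\HPevent}]\, Lt$ is a finite sum of indicators multiplied by deterministic weights $Lt$, linearity gives $\EE{B} = \sum_{t=1}^T \PP{\bar{\HPevent}}\, Lt$.

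For $t > K$, Proposition~\ref{prop:prb_favorable_event} gives $\PP{\bar{\HPevent}} \leq K t^{2-\alpha}$, so each summand is at most $KLt^{3-\alpha}$. The first $K$ rounds need a separate remark. There each arm is pulled once in a fixed order, so the statistics involve at most one sample per arm. We can either observe that the favorable event trivially holds on those rounds or argue that they produce no overpulls. Alternatively, the bound $Kt^{3-\alpha} \geq 1$ for small $t$ already covers the trivial probability bound $1$ whenever $K t^{2-\alpha} \geq 1$, which is true at least at $t = 1$. This treatment of the initial rounds is the only point I expect to need care. I would handle it by noting that the probability is at most $\min(1, Kt^{2-\alpha})$ and that $\min(1, Kt^{2-\alpha}) \leq Kt^{2-\alpha}$ when $K \geq 1$ and $t \leq K^{1/(\alpha-2)}$.

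Summing over $t$ gives
\[
\EE{B} \leq KL \sum_{t \geq 1} t^{3-\alpha} = KL\,\zeta(\alpha-3).
\]
This is finite because $\alpha - 3 > 1$ when $\alpha > 4$. For $\alpha \geq 5$ we have $\zeta(\alpha-3) \leq \zeta(2) = \pi^2/6 < 2$, which gives $\EE{B} \leq 2KL < 5KL$.

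The same steps apply to $B_2$, with Proposition~\ref{prop:prb_favorable_event_eff} in place of Proposition~\ref{prop:prb_favorable_event}. Using $\PP{\bar{\HPtwo}} \leq 6Kt^{1-\alpha}$, we get
\[
\EE{B_2} \leq \sum_{t \geq 1} 6KLt^{2-\alpha} = 6KL\,\zeta(\alpha-2).
\]
For $\alpha \geq 4$ this is at most $6KL\,\zeta(2) = KL\pi^2 \approx 9.87\,KL$, which does not give the claimed constant $5KL$. If the constant $5$ is to be exact, I would try to refine the count of statistics in Proposition~\ref{prop:prb_favorable_event_eff}, for example by using $3Kt\delta_t$ with the actual number of updates rather than the crude bound, or by taking the sum over $t > K$ only. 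I expect matching this constant to be the main obstacle. The order of the bound, $\cO(KL)$, follows directly from the argument above.
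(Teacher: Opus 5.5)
For $\EE{B}$ your argument is the paper's. The paper gives no separate proof: its justification is the displayed chain (linearity of expectation, Proposition~\ref{prop:prb_favorable_event}, then $\zeta(\alpha-3)\le\zeta(2)<2$).

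Your objection to the $B_2$ claim is correct, and it points to a slip in the paper. Since $6\zeta(2)=\pi^2\approx 9.87$, the step ``$\EE{B_2}\le 6KL\zeta(\alpha-2)$, i.e., $\EE{B_2}\le 5KL$ for $\alpha\ge 4$'' does not follow as written. The constant can still be recovered, but not by discarding all rounds $t\le K$, which would be unjustified because bad events can occur for $2\le t\le K$. It suffices to discard $t=1$. No arm has been sampled before round $1$, so $\bar{\HPtwo}$ is empty there. Equivalently, the union bound in the proof of Proposition~\ref{prop:prb_favorable_event_eff} actually gives $3K(t-1)\delta_t$, which vanishes at $t=1$. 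Hence, for $\alpha\ge 4$,
\[ \EE{B_2}\le 6KL\sum_{t\ge 2}t^{2-\alpha}\le 6KL\pa{\zeta(2)-1}=(\pi^2-6)KL<4KL, \]
and keeping the factor $t-1$ gives the sharper $6KL\pa{\zeta(2)-\zeta(3)}<3KL$.

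Your treatment of the rounds $t\le K$ is the weak spot, and none of the three options you list works in general:
\begin{itemize}
\item The favorable event does not hold trivially for $2\le t\le K$, since the single sample of an already-pulled arm can deviate by more than $c(1,\delta_t)$.
\item Those rounds can contain overpulls: any arm with $N_{i,T}^\star=0$ is overpulled at its initial pull.
\item The bound $1\le Kt^{2-\alpha}$ only covers $t\le K^{1/(\alpha-2)}$, which is below $\sqrt{K}$ for $\alpha>4$.
\end{itemize}
None of this case analysis is needed. The restriction $t>K$ in Proposition~\ref{prop:prb_favorable_event} is never used in its proof: the union bound runs over the $t(t-1)/2$ pairs $(n,h)$ with $h\le n\le t-1$ for any $t$. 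So $\PP{\bar{\HPevent}}\le Kt(t-1)\delta_t/2\le Kt^{2-\alpha}$ holds for every $t\ge 1$, and the analogous statement holds for Proposition~\ref{prop:prb_favorable_event_eff}. This is what the paper implicitly uses when it sums from $t=1$.
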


\begin{lemma}
\label{lem:rested-A}
We define $\hiT^\xi \triangleq \max\left\{ h \leq \hiT | \ \xi^\alpha_{t_i^{\pi}(\Nit^\star + h)}\right\}$, the largest number of overpulls of arm $i$ pulled under $\HPevent$ at round $t = t_i^{\pi}(\Nit^\star + \hiT^\xi) \leq T$. We also define $\overpullSet_\xi \triangleq \left\{ i \in \overpullSet | \  \hiT^\xi \geq 1 \right\}.$ For policy $\pi \in \left\{ \piR, \piF\right\}$ with parameter $\alpha$, $A_{\pi}$ defined in Lemma~\ref{lem:regret-decompo} is upper-bounded by
\begin{align*}
A_{\pi} &\triangleq  \sum_{i\in \overpullSet}   \sum_{h=0}^{\hiT-1}  \left[ \xi^\alpha_{t_i^{\pi}(\NiT^\star + h) }\right] \pa{\mu^+_T(\pi) - \mu_i(\NiT^{\star} + h)} \\
& \leq \sum_{i\in \overpullSet_\xi} \pa{C_\pi \sigma \sqrt{\left(\hiT^\xi -1\right)\log\pa{T}} +C_\pi \sigma\sqrt{\log{\pa{T}}}+  L}.
\end{align*}
\end{lemma}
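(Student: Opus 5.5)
Fix an arm $i\in\overpullSet$. By definition of $\hiT^\xi$, all terms with $h\geq \hiT^\xi$ (i.e. overpulls of index beyond the last favorable one) carry indicator zero, so the inner sum over $h$ truncates at $\hiT^\xi-1$. If $\hiT^\xi=0$ the contribution vanishes, which is why only $\overpullSet_\xi$ remains. For $i\in\overpullSet_\xi$, I drop the remaining indicators (bounded by one; each summand is nonnegative because $\mu_i(\NiT^\star+h)\leq\mu_i(\NiT^\star)$ and, as argued below, the relevant upper bound dominates). I then split the sum into the last overpull $h=\hiT^\xi-1$ and the first $\hiT^\xi-1$ ones, $h\in\{0,\dots,\hiT^\xi-2\}$.

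For the first $\hiT^\xi-1$ overpulls, I use the last favorable pull: set $t\triangleq t_i^\pi(\NiT^\star+\hiT^\xi)$, the round at which the $(\NiT^\star+\hiT^\xi)$-th pull of $i$ occurs, which is on $\HPevent$ by definition. At that round $N_{i,t-1}=\NiT^\star+\hiT^\xi-1$, so the window $h'=\hiT^\xi-1$ consists exactly of the samples $\mu_i(\NiT^\star),\dots,\mu_i(\NiT^\star+\hiT^\xi-2)$. Hence $\sum_{h=0}^{\hiT^\xi-2}\mu_i(\NiT^\star+h)=(\hiT^\xi-1)\bmu_i^{\hiT^\xi-1}(N_{i,t-1})$. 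Lemma~\ref{lem:core-full} applied at round $t$ gives $\bmu_i^{\hiT^\xi-1}\geq \max_j\mu_j(N_{j,t-1})-\frac{C_\pi}{\sqrt{2\alpha}}c(\hiT^\xi-1,\delta_t)$. Since pull counts only increase and rewards are non-increasing, $\max_j\mu_j(N_{j,t-1})\geq\max_j\mu_j(N_{j,T})=\mu_T^+(\pi)$. Moreover, $\delta_t\geq\delta_T$ gives $c(h,\delta_t)\leq c(h,2T^{-\alpha})=\sigma\sqrt{2\alpha\log T/h}$. Multiplying by $\hiT^\xi-1$ bounds this block by $C_\pi\sigma\sqrt{(\hiT^\xi-1)\log T}$. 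The degenerate case $\hiT^\xi=1$ makes the block empty.

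For the last overpull $h=\hiT^\xi-1$, I need $\mu_T^+(\pi)-\mu_i(\NiT^\star+\hiT^\xi-1)\leq C_\pi\sigma\sqrt{\log T}+L$. I apply Lemma~\ref{lem:core-full} with window $1$ at the same round $t$, where arm $i$ is selected on $\HPevent$. This gives $\mu_i(N_{i,t-1}-1)\geq\mu_T^+(\pi)-C_\pi\sigma\sqrt{\log T}$, where the window-1 mean is the previous sample. By the definition of $L$, one step of decay costs at most $L$, so $\mu_i(N_{i,t-1})\geq\mu_i(N_{i,t-1}-1)-L$, and $N_{i,t-1}=\NiT^\star+\hiT^\xi-1$ is exactly the index of the last overpull. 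This needs $N_{i,t-1}\geq1$, which holds because each arm is pulled once at the start. If instead $N_{i,t-1}=0$ (first pull), the convention $\mu_i(-1)=\max_j\mu_j(0)$ in the definition of $L$ handles the gap directly.

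I expect the main obstacle to be the index bookkeeping: matching $t_i^\pi(\NiT^\star+h)$ with $N_{i,t-1}$, so that the window of Lemma~\ref{lem:core-full} covers exactly the overpulls, and handling the boundary cases. Summing the three contributions over $i\in\overpullSet_\xi$ yields the claim.
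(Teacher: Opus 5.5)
Your proposal is correct and follows the paper's proof essentially step for step. You truncate at the last favorable overpull and bound the first $h_{i,T}^\xi-1$ overpulls with Lemma~\ref{lem:core-full} at window $h_{i,T}^\xi-1$, at round $t_i^\pi(N_{i,T}^\star+h_{i,T}^\xi)$. You bound the last overpull with the window-$1$ guarantee plus one step of decay $L$, and the $\mu_i(-1)$ convention covers the first-pull case. Your indexing is cleaner than the paper's.

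One point is only loosely justified, and the paper skips it entirely: the summands must be nonnegative for you to drop the indicators. This follows from optimality of $\pi^\star$, not from the reason you give. By an exchange argument, $\mu_i(N^\star_{i,T}+h)\le\mu_i(N^\star_{i,T})\le\mu_k(N^\star_{k,T}-1)\le\mu_k(N_{k,T})\le\mu^+_T(\pi)$ for any underpulled arm $k$, and such an arm exists whenever $i$ is overpulled.
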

\begin{proof}

First, we define $\hiT^\xi \triangleq \max\left\{ h \leq \hiT | \ \xi^\alpha_{t_i^{\pi}(\Nit^\star + h)}\right\}$, the largest number of overpulls of arm $i$ pulled at round $t_i \triangleq t_i^{\pi}(\Nit^\star + \hiT^\xi) \leq T$ under $\HPevent$. Now, we upper-bound $A_{\pi}$ by including all the overpulls of arm $i$ until the $\hiT^\xi$-th overpull, even the ones under $\bar{\HPevent}$,
\begin{align*}
A_{\pi} &\triangleq  \sum_{i\in \overpullSet}   \sum_{h=0}^{\hiT-1}  \left[ \xi^\alpha_{t_i^{\pi}(\Nit^\star + h) }\right] \pa{\mu^+_T(\pi) - \mu_i(\NiT^{\star} + h)} \\
&
\leq \sum_{i\in \overpullSet_\xi}   \sum_{h=0}^{\hiT^\xi}  \pa{\mu^+_T(\pi) - \mu_i(\NiT^{\star} + h)},
\end{align*}
where $\overpullSet_\xi \triangleq \left\{ i \in \overpullSet | \  \hiT^\xi \geq 1 \right\}.$ We can therefore split the second sum of~$\hiT^\xi$ term above  into two parts. The first part corresponds to the first $\hiT^\xi-1$ (possibly zero) terms (overpulling differences) and the second part to the last  $(\hiT^\xi-1)$-th one. Recalling that at round $t_i$, arm $i$ was selected under $\xi^\alpha_{t_i}$, we apply
Lemma~\ref{lem:core-full} to bound the regret caused by previous overpulls of $i$ (possibly none),
\begin{align}
A_{\pi} &\leq  \sum_{i \in \overpullSet_\xi}   \mu^+_T(\pi) - \mu_i\pa{N_{i, T}^\star + \hiT^\xi  -1} + \frac{C_\pi}{\sqrt{2\alpha}}\pa{\hiT^\xi - 1}c\pa{\hiT^\xi-1, \delta_{t_i }} \label{eq:cor1-use1}\\
&\leq \sum_{i \in \overpullSet_\xi}   \mu^+_T(\pi) - \mu_i\pa{N_{i, T}^\star + \hiT^\xi  -1} + \frac{C_\pi}{\sqrt{2\alpha}}\pa{\hiT^\xi - 1}c\pa{\hiT^\xi-1, \delta_{T}}\\
&\leq \sum_{i \in \overpullSet_\xi}   \mu^+_T(\pi) - \mu_i\pa{N_{i, T}^\star + \hiT^\xi  -1} + C_\pi \sigma\sqrt{\pa{\hiT^\xi - 1}\log{\pa{T}}}\CommaBin
\label{eq:lasttimepdq}
\end{align}
 The second inequality is obtained because $\delta_t$ is decreasing and $c(.,\delta)$ is decreasing as well. The last inequality is the definition of confidence interval in Proposition~\ref{prop:prb_favorable_event}. 
 If  $\NiT^{\star} = 0$ and $\hiT^\xi = 1$ then
\[ \mu^+_T(\pi) - \mu_i(\NiT^{\star} + \hiT^\xi - 1) =  \mu^+_T(\pi) - \mu_i(0) \leq L,\] 
since $\mu^+_T (\pi) \leq \max_{j\in\arms}\mu_j(0)$ and  $\max_{j\in\arms}\mu_j(0) - \mu_i(0) \leq L$ by the definition of $L$ (Eq.~\ref{eq:LT}).
Otherwise, we can decompose 
\begin{align*}
\mu^+_T(\pi) - \mu_i(\NiT^{\star} + \hiT^\xi - 1) 
= &\underbrace{\mu^+_T(\pi) - \mu_i(\NiT^{\star} + \hiT^\xi-2)}_{A_1} \\&+ 
\underbrace{\mu_i(\NiT^{\star} + \hiT^\xi-2) -  \mu_i(\NiT^{\star} + \hiT^\xi - 1)}_{A_2}.
\end{align*}
For term $A_1$, since this $\hiT^\xi$-th overpull is done under $\xi^\alpha_{t_i}$, by Lemma~\ref{lem:core-full} we have that
\[
A_1 = \mu^+_T(\pi) - \bmu_i^1(\NiT^{\star} + \hiT^\xi-1) \leq 1c(1, \delta_{t_i}) \leq 2c(1,\delta_{T}) \leq C_\pi \sigma \sqrt{\log\left(T\right)} .
\] 
The second difference, 
$A_2 = \mu_i(\NiT^{\star} + \hiT^\xi-2) -  \mu_i(\NiT^{\star} + \hiT^\xi - 1 )$  
cannot exceed $L$ by definition (Eq.~\ref{eq:LT}), the maximum decay in one round is bounded.
Therefore, we further upper-bound Equation~\ref{eq:lasttimepdq} as
\begin{align}
A_{\pi} \leq \sum_{i\in \overpullSet_\xi} \pa{C_\pi \sigma \sqrt{\left(\hiT^\xi -1\right)\log\pa{T}} + C_\pi \sigma\sqrt{\log{\pa{T}}}+  L}.
\label{HPevent0}
\end{align}
\end{proof}

\label{proof1}
\restaalgoindepub*
\label{proof2} 

\begin{proof}
In Lemma~\ref{lem:regret-decompo}, we split the regret in two parts. The first one $B$ corresponds to the regret due to unfavorable events $\bar{\HPevent}$. We do not derive any guarantee of our algorithms on these events but their probabilities can be controlled thanks to parameter $\alpha$. Hence, for $\alpha > 4$, we show in Lemma~\ref{lem:rested-B} that the part of the expected regret due to unfavorable events can be bounded by a constant w.r.t. $T$. Yet, we choose $\alpha \geq 5$ to have a small constant.

The second one $A_\pi$ corresponds to the regret due to favorable events $\HPevent$ which can be bounded for our two algorithms (\FEWA and \RUCB) thanks to Lemma~\ref{lem:rested-A}. In order to get a problem-independent upper bound, we need to replace $\hiT^\xi$ by a problem-independent quantity. Starting from Lemma~\ref{lem:rested-A},

\begin{equation*}
A_{\pi} \leq \sum_{i\in \overpullSet_\xi} \pa{C_\pi \sigma \sqrt{\left(\hiT^\xi -1\right)\log\pa{T}} +C_\pi \sigma\sqrt{\log{\pa{T}}}+  L}.
\end{equation*}

Since $\overpullSet_\xi \subseteq \overpullSet$, we can upper-bound the number of terms in the above sum by  $K$.
Next, the total number of overpulls $\sum_{i\in\overpullSet} \hiT$ cannot exceed $T$. 
As square-root function is concave we can use Jensen's inequality. 
Moreover, we can deduce that the worst allocation of overpulls is the uniform one, i.e., $\hiT = T/K,$
\begin{align}
A_{\pi} &\leq K(C_\pi \sigma\sqrt{\log(T)} + L) + C_\pi \sigma\sqrt{\log(T)} \sum_{i\in \overpullSet} \sqrt{(\hiT - 1)}\nonumber\\ 
&\leq K (C_\pi \sigma\sqrt{\log(T)} + L) + C_\pi \sigma\sqrt{KT\log(T)}.
\label{eq:Abound-PI}
\end{align}

Therefore, using Lemma~\ref{lem:regret-decompo} together with Equations~\ref{eq:Abound-PI} and Lemma~\ref{lem:rested-B}, we bound the total expected regret as
\begin{equation}
\mathbb{E}[\regret(\pi)] \leq C_\pi \sigma\sqrt{\log\pa{T}}\pa{\sqrt{KT} +K} + 6KL\cdot
\end{equation}
\end{proof}

\begin{lemma}\label{lem:UB-OP-PD}
We define the smallest reward gathered by the optimal policy $\mu^-_T$ and the gap of the h first overpulls of arm $i$ with respect to that value $\Delta_{i,h}$.
\begin{align*}
&\mu^-_T\triangleq \min_{i \in \arms^\star} \mu_i \pa{\NiT^\star -1} \text{ with }\arms^\star \triangleq\left\{ i \in \arms | \NiT^\star \geq 1 \right\}, \\
&\Delta_{i,h} \triangleq \mu^-_T- \bar{\mu}_i^h\left( \Nit^\star+h \right).
\end{align*}
$\hiT^\xi$ defined in Lemma~\ref{lem:regret-decompo} is upper-bounded by a problem-dependent quantity,
\begin{equation*}
\hiT^\xi \leq   \hiT^+  \triangleq \max \left\{ h \leq T \big| \ h \leq  1 + \frac{C_\pi^2 \sigma^2 \log \pa{T}}{\Delta_{i,h-1}^2} \right\}  \leq  1 + \frac{C_\pi^2 \sigma^2 \log \pa{T}}{\Delta_{i,\hiT^+-1}^2}\cdot
\end{equation*}
\end{lemma}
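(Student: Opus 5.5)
The plan is to apply Lemma~\ref{lem:core-full} at the round $t_i \triangleq t_i^{\pi}(\NiT^\star + \hiT^\xi)$, where arm $i$ is selected for its $\hiT^\xi$-th overpull under the favorable event $\xi^\alpha_{t_i}$. If $\hiT^\xi \le 1$ there is nothing to prove, since $\hiT^+ \ge 1$ always: $h=1$ satisfies $h \le 1 + (\cdot)$ trivially. So I assume $\hiT^\xi \ge 2$ and take the window $h = \hiT^\xi - 1 \le N_{i,t_i-1}$. The lemma then gives
\[
\bar{\mu}_i^{\hiT^\xi-1}\pa{\NiT^\star + \hiT^\xi - 1} \ge \max_{j\in\arms}\mu_j\pa{N_{j,t_i-1}} - \frac{C_\pi}{\sqrt{2\alpha}}\, c\pa{\hiT^\xi-1,\delta_{t_i}} .
\]
The left-hand side is the average of the rewards of pulls $\NiT^\star,\dots,\NiT^\star+\hiT^\xi-2$, which is exactly $\bar{\mu}_i^{h}(\NiT^\star+h)$ with $h=\hiT^\xi-1$, in the notation of the statement.

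The key step is to lower bound $\max_{j}\mu_j(N_{j,t_i-1})$ by $\mu^-_T$. Since $i$ is overpulled at $t_i$, we have $N_{i,t_i-1} \ge \NiT^\star$. The total number of pulls before $t_i$ is at most $T-1 < \sum_j N_{j,T}^\star$. Hence some arm $j \in \arms^\star$ satisfies $N_{j,t_i-1} \le N_{j,T}^\star - 1$; this needs care with counting, but at least one arm must be strictly below its optimal count. For that arm, monotonicity gives $\mu_j(N_{j,t_i-1}) \ge \mu_j(N_{j,T}^\star-1) \ge \mu^-_T$.

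Combining the two steps and using $c(h,\delta_{t_i}) \le c(h,\delta_T)$ together with $\frac{C_\pi}{\sqrt{2\alpha}} c(h,2T^{-\alpha}) = C_\pi\sigma\sqrt{\log T / h}$, I get
\[
\Delta_{i,\hiT^\xi-1} \le C_\pi\sigma\sqrt{\log T/(\hiT^\xi-1)} .
\]
Squaring and rearranging yields $\hiT^\xi \le 1 + C_\pi^2\sigma^2\log T/\Delta_{i,\hiT^\xi-1}^2$; if the gap is nonpositive the inequality holds trivially. So $h=\hiT^\xi$ belongs to the set defining $\hiT^+$, which gives $\hiT^\xi \le \hiT^+$. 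The final inequality is just the membership of $\hiT^+$ itself in its defining set.

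The main obstacle is the pigeonhole step producing an underpulled arm $j$ at round $t_i$. I would argue by summing counts: $\sum_j N_{j,t_i-1} = t_i - 1 \le T-1$ while $\sum_j N_{j,T}^\star = T$, so some $j$ has $N_{j,t_i-1} < N_{j,T}^\star$. In particular $N_{j,T}^\star \ge 1$, so $j\in\arms^\star$.
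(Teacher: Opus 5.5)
Your outline is sound, and at the key comparison it takes a different route from the paper.

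The paper weakens the right-hand side of Lemma~\ref{lem:core-full} to $\mu^+_T(\pi)=\max_j\mu_j(N_{j,T})$, the best value left to $\pi$ at the end of the game. It must then compare $\mu^+_T(\pi)$ with $\mu^-_T$. Since $\mu^+_T(\pi)\ge\mu^-_T$ can fail, it splits into cases and dismisses $\mu^-_T>\mu^+_T(\pi)$ by noting that the regret is then zero. To literally bound $\hiT^\xi$ in that case, one still has to observe that $\pi$ then has the same pull counts as $\pi^\star$, hence no overpulls.

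Your pigeonhole at round $t_i$ itself avoids this. Only $t_i-1<T=\sum_j N^\star_{j,T}$ pulls have been made, so some $j\in\arms^\star$ has $N_{j,t_i-1}\le N^\star_{j,T}-1$. This gives $\max_j\mu_j(N_{j,t_i-1})\ge\mu^-_T$ directly and removes the case analysis altogether, which is cleaner than the paper's argument.

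\textbf{Gap: the squaring step.} With $h=\hiT^\xi-1$, from $\Delta_{i,h}\le C_\pi\sigma\sqrt{\log T/h}$ you can deduce $h\le C_\pi^2\sigma^2\log T/\Delta_{i,h}^2$ only when $\Delta_{i,h}\ge0$. Your remark that a nonpositive gap makes the conclusion trivial is false for $\Delta_{i,h}<0$: a very negative gap makes $C_\pi^2\sigma^2\log T/\Delta_{i,h}^2$ arbitrarily small.

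Negative gaps never occur, but you need to prove it:
\begin{itemize}
\item In the rested setting $J_T$ depends only on the pull counts.
\item So optimality of $\pi^\star$ forces $\mu_i(N^\star_{i,T})\le\mu_j(N^\star_{j,T}-1)$ for every $j\in\arms^\star$; otherwise moving one pull from $j$ to $i$ strictly increases $J_T$.
\item By monotonicity, every overpull value $\mu_i(n)$ with $n\ge N^\star_{i,T}$ is at most $\mu^-_T$.
\item Hence $\bar\mu_i^h(N^\star_{i,T}+h)\le\mu^-_T$ and $\Delta_{i,h}\ge0$.
\end{itemize}
The paper uses exactly this fact: the average of overpull values, ``which are all smaller or equal to $\mu^-_T$''. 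With it added, only $\Delta_{i,h}=0$ is degenerate, and that case is indeed trivial.
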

\begin{proof}

We want to bound $\hiT^\xi $ with a problem dependent quantity $\hiT^+$. We remind the reader that for arm $i$ at round $T$, the $\hiT^\xi$-th overpull is pulled under $\xi^\alpha_{t_i}$ at round $t_i$. Therefore, Lemma~\ref{lem:core-full} applies and we have
\begin{align*}
\bmu_i^{\hiT^\xi  - 1} \left( \NiT^\star + \hiT^\xi   - 1 \right) &\geq \mu_T^+(\pi) - \frac{C_\pi}{\sqrt{2\alpha}} c\pa{\hiT^\xi   - 1, \delta_{t_i}}
\\& \geq \mu_T^+(\pi) - \frac{C_\pi}{\sqrt{2\alpha}} c\pa{\hiT^\xi   - 1, \delta_T}
\\& \geq \mu_T^+(\pi) - C_\pi \sigma \sqrt{\frac{\log\pa{T}}{\hiT^\xi-1}}\CommaBin
\end{align*}
Hence, we have that 
\begin{equation}
\label{eq:hiTxi-bound}
\hiT^\xi \leq 1 + \frac{C_\pi^2 \sigma^2\log\pa{T}}{\pa{\mu_T^+(\pi)- \bmu_i^{\hiT^\xi  - 1} \left( \NiT^\star + \hiT^\xi   - 1 \right) }^2 }\cdot
\end{equation}
Yet, this upper-bound still depends on random quantities such as $\mu_T^+(\pi)$ or $\hiT^\xi$ on the denominator. 
Consider the smallest value collected by the optimal policy, 
\[
\mu^-_T\triangleq \min_{i \in \arms^\star} \mu_i \pa{\NiT^\star -1}\text{ with } \arms^\star \triangleq\left\{ i \in \arms | \NiT^\star \geq 1 \right\}.
\]
It is the $T$-th largest value among the $KT$ possible ones. Since $\bmu_i^{\hiT^\xi  - 1} \left( \NiT^\star + \hiT^\xi  - 1 \right)$ is an average of overpulls value, which are all smaller or equal to $\mu^-_T$, we have
\[\mu^-_T\geq \bmu_i^{\hiT^\xi  - 1} \left( \NiT^\star + \hiT^\xi   - 1 \right).\] 
Moreover, $\mu_T^- > \mu_T^+(\pi)$ implies that the regret is 0. Indeed, in that case $\mu_T^+(\pi)$ - the pull with the largest value among the remaining values at the end of the game for $\pi$ - is \emph{strictly smaller} than $\mu_T^-$ - the $T$-th largest reward sample.  Therefore, $\pi$ has collected the $T$ largest value and has zero regret. Hence, we focus on the case $\mu^-_T\leq \mu^+_T(\pi)$, for which the regret may not be zero.  In that case, we can upper-bound the RHS term Equation~\ref{eq:hiTxi-bound} by replacing the random quantity $\mu_T^+(\pi)$ by the smaller quantity $\mu^-_T$. Hence, 
\[
\hiT^\xi \leq 1 + \frac{C_\pi^2 \sigma^2\log\pa{T}}{\pa{\mu_T^+(\pi)- \bmu_i^{\hiT^\xi  - 1} \left( \NiT^\star + \hiT^\xi   - 1 \right) }^2 }\ \leq 1 + \frac{C_\pi^2 \sigma^2\log\pa{T}}{\Delta_{i,\hiT^\xi  - 1}^2}\CommaBin
\] 
with $\Delta_{i,h} \triangleq \mu^-_T- \bar{\mu}_i^h\left( \Nit^\star+h \right)$, the difference between the lowest mean value of the arm pulled by $\pi^\star$ and the average of the $h$ first overpulls of arm~$i$. Yet, this self-bounding property of $\hiT^\xi $ is not a proper problem-dependent upper bound. We will consider the largest $h$ which satisfies this self-bounding property, 
\begin{equation*}
 \hiT^+  \triangleq \max \left\{ h \leq T \big| \ h \leq  1 + \frac{C_\pi^2 \sigma^2 \log \pa{T}}{\Delta_{i,h-1}^2} \right\}\cdot
\end{equation*}
We have that,
\begin{equation*}
\hiT^\xi \leq  \hiT^+  \leq  1 + \frac{C_\pi^2 \sigma^2 \log \pa{T}}{\Delta_{i,\hiT^+-1}^2}\cdot
\end{equation*}
\end{proof}
\restaalgoub*
\begin{proof}
We use Lemmas~\ref{lem:rested-A} and Lemma~\ref{lem:UB-OP-PD} to bound $A_\pi$ (see Lemma~\ref{lem:regret-decompo}). Indeed, since the square-root function is increasing, we can upper-bound the result in Lemma~\ref{lem:rested-A} by replacing $\hiT^\xi$ by its upper bound in Lemma~\ref{lem:UB-OP-PD}
\begin{align*}
A_{\pi} &\leq \sum_{i\in \overpullSet_\xi} \pa{C_\pi \sigma\sqrt{\log(T)} \left( 1 + \sqrt{\hiT^+ - 1}\right) + L}\\
& \leq \sum_{i\in \overpullSet_\xi} \pa{C_\pi \sigma\sqrt{\log(T)} \left( 1 + \frac{C_\pi \sigma\sqrt{\log(T)}}{\Delta_{i,\hiT^+-1}}\right) + L}. 
\end{align*}
Notice that the quantity $\overpullSet_\xi \subset \arms$. Therefore, we have 
\begin{equation}
\label{eq:Abound-PD}
A_{\pi} \leq \sum_{i\in \arms} \pa{\frac{C_\pi^2\sigma^2\log\pa{T}}{\Delta_{i,\hiT^+-1}} + C_\pi \sigma \sqrt{\log\pa{T}} +L }. 
\end{equation}
Using Lemmas~\ref{lem:regret-decompo}, \ref{lem:rested-B}, and Equation~\ref{eq:Abound-PD} we get
\begin{align*}
\EE{\regret(\pi)} &=\EE{A_{\pi}} + \EE B 
\\&
\leq \sum_{i\in \arms} \pa{\frac{C_\pi^2\sigma^2\log\pa{T}}{\Delta_{i,\hiT^+-1}} + C_\pi \sigma \sqrt{\log\pa{T}} +L } + 5KL \\
&\leq \sum_{i\in \arms} \pa{\frac{C_\pi^2\sigma^2\log\pa{T}}{\Delta_{i,\hiT^+-1}} + C_\pi \sigma \sqrt{\log\pa{T}} +6L } \cdot
\end{align*}
\end{proof}
\section{Full experiments}
\label{app:experiments}
The code of all our experiments can be found on SMPyBandits \citep{SMPyBandits}, an open-source bandits package in Python. The goal of these experiments is to perform an exhaustive benchmark of non-stationary algorithms which might be able to perform well in both rested and restless rotting setups in an agnostic way (\textit{i.e.} with the same tuning). 

\paragraph{Algorithms and parameters.} We include \RAWUCB and \FEWA \citep{seznec2019rotting}, the only algorithms which got known regret bounds in both setups. We include two versions of each algorithm: with the theoretical tuning $\alpha =4$; and with the empirical tuning $\alpha_{\mathrm{R}} = 1.4$ and $\alpha_{\mathrm{F}} = 0.06$. These two values are selected by grid-search on the rested benchmark. This benchmark has 30 different problems (for different $L$) but this is not a problem as the best tuning of $\alpha$ is the same for all the considered problem. In the restless setting,  we replace \RAWUCB and \FEWA by their efficient versions because of the longer horizon. 

We also include \EXPS\citep{auer2002nonstochastic}, an algorithm which was designed for the very general adversarial bandits problem against switching experts. As explained in the introduction, tuned \EXPS reaches the minimax optimal rate in all the presented restless setup. Yet, it is unclear if it is able to learn in the rested rotting bandits problem. We use the theoretical tuning which requires the knowledge of $T$ and $V_T$.

We also include \GLRUCB \citep{besson2019generalized}. This algorithm has two parameters : a confidence level $\delta$ for its change-point detector and an active exploration rate $\alpha$. We set $\alpha$ to zero. Indeed, the active exploration of change-detection algorithms is only useful in the increasing case (as argued by \citet{cao2019nearly}). We tune $\delta$ by its theoretical value, which requires the knowledge of $T$. Last, we only restart the history of the changed arm as our setup do not assume that all the rewards change simultaneously. For fair comparison, we only use the subgaussian version of the algorithm. Indeed, KL-UCB indexes are expensive to compute. Instead, for all the confidence bound algorithms, we rather tune $\sigma^2 = 1$ in the rested benchmark and $\sigma^2 = 0.29$ in the restless benchmark (the variance of a binomial $\mathcal{B}\pa{10,0.03)}$.  

We do not include \SWA \citep{levine2017rotting} which was shown to be less consistent than \FEWA \citep{seznec2019rotting} on rested rotting bandits. We do not include \SWUCB and \DUCB as they were shown to be unable to learn in the rested setting  \citep{levine2017rotting, seznec2019rotting}. We also do not include \CUSUMUCB \citep{liu2018change-detection} and \MUCB \citep{cao2019nearly}, as 1) they were shown to under-perform against \GLRUCB \citep{besson2019generalized}; and 2) their change-detector is harder to tune.

\subsection{Simulated benchmark for rested bandits}
\label{app:rested-sim}
\paragraph{Setup.} We use the two-arm benchmark of \cite{seznec2019rotting}. Arms are gaussians with fix variance $\sigma = 1$ and rested rotting mean. The first arm has a constant mean $0$ while the second arm abruptly switches from $+\tfrac{L}{2}$ to $-\tfrac{L}{2}$ at $t = \frac{T}{4} = 2500$. Several values of $L$ are investigated between $10^{-3}$ and $10$.
\begin{figure*}[t]
\centering
\includegraphics[clip, width= 0.325\textwidth]{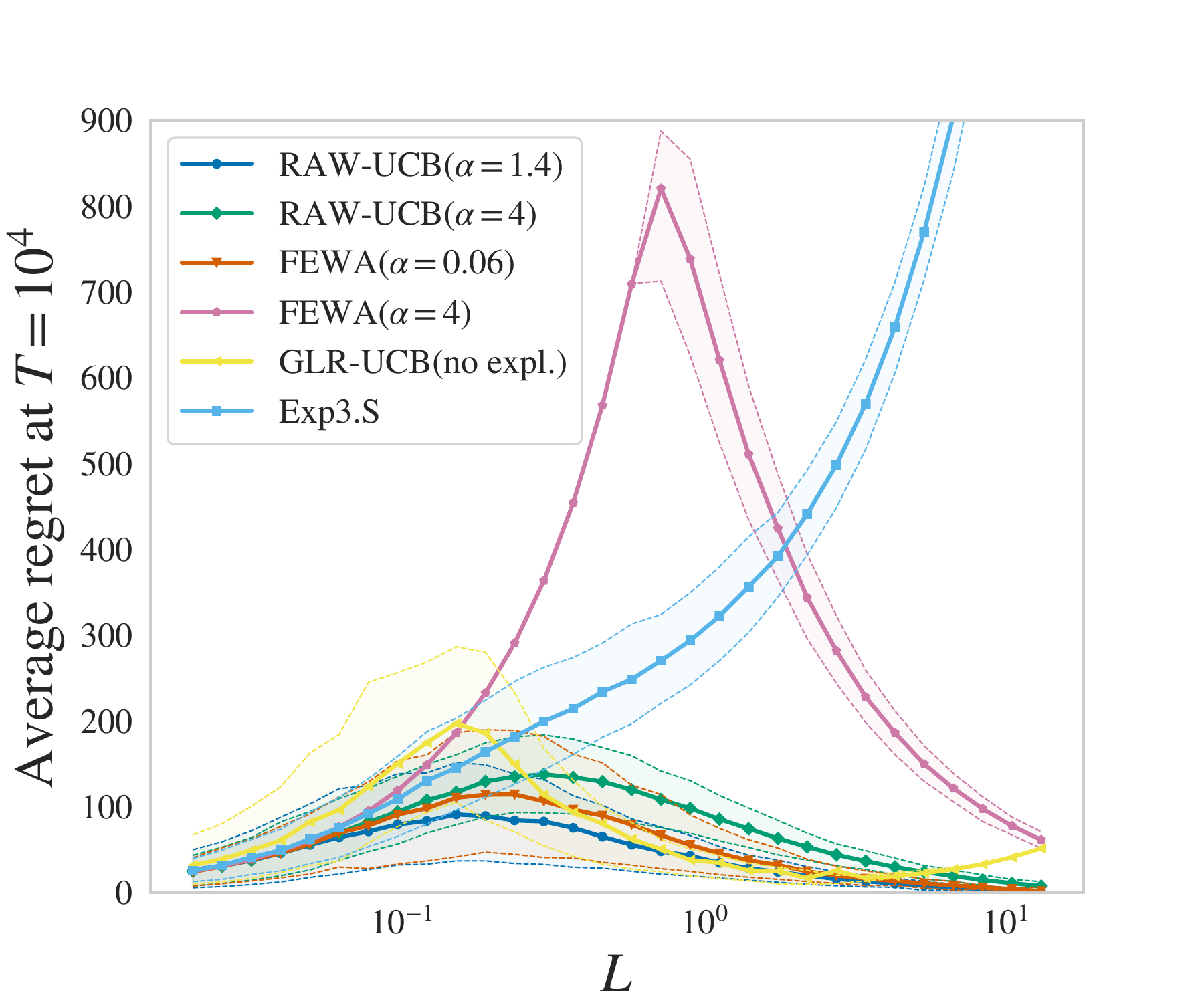}
\includegraphics[clip, width= 0.325\textwidth]{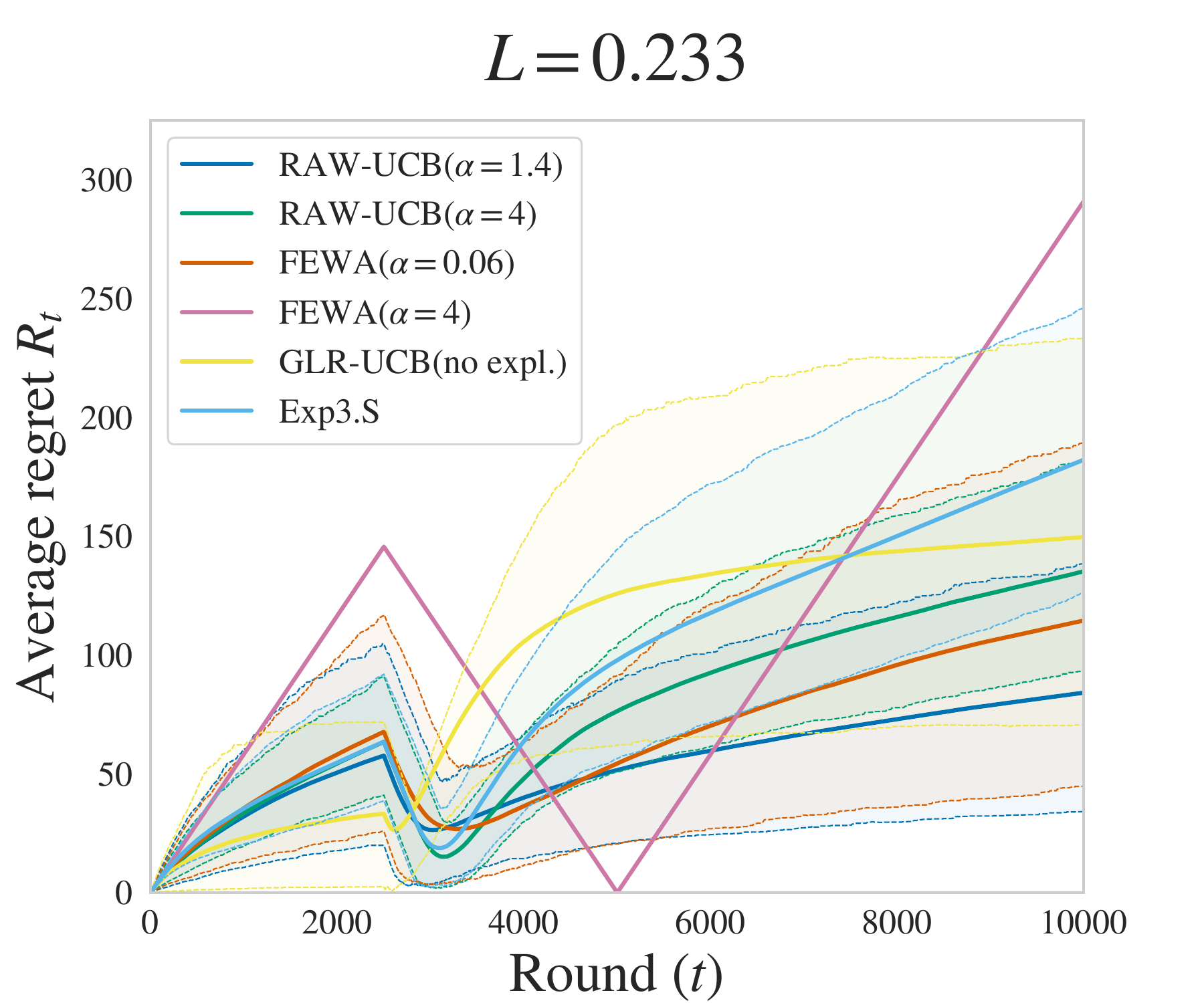}
\includegraphics[clip, width= 0.325\textwidth]{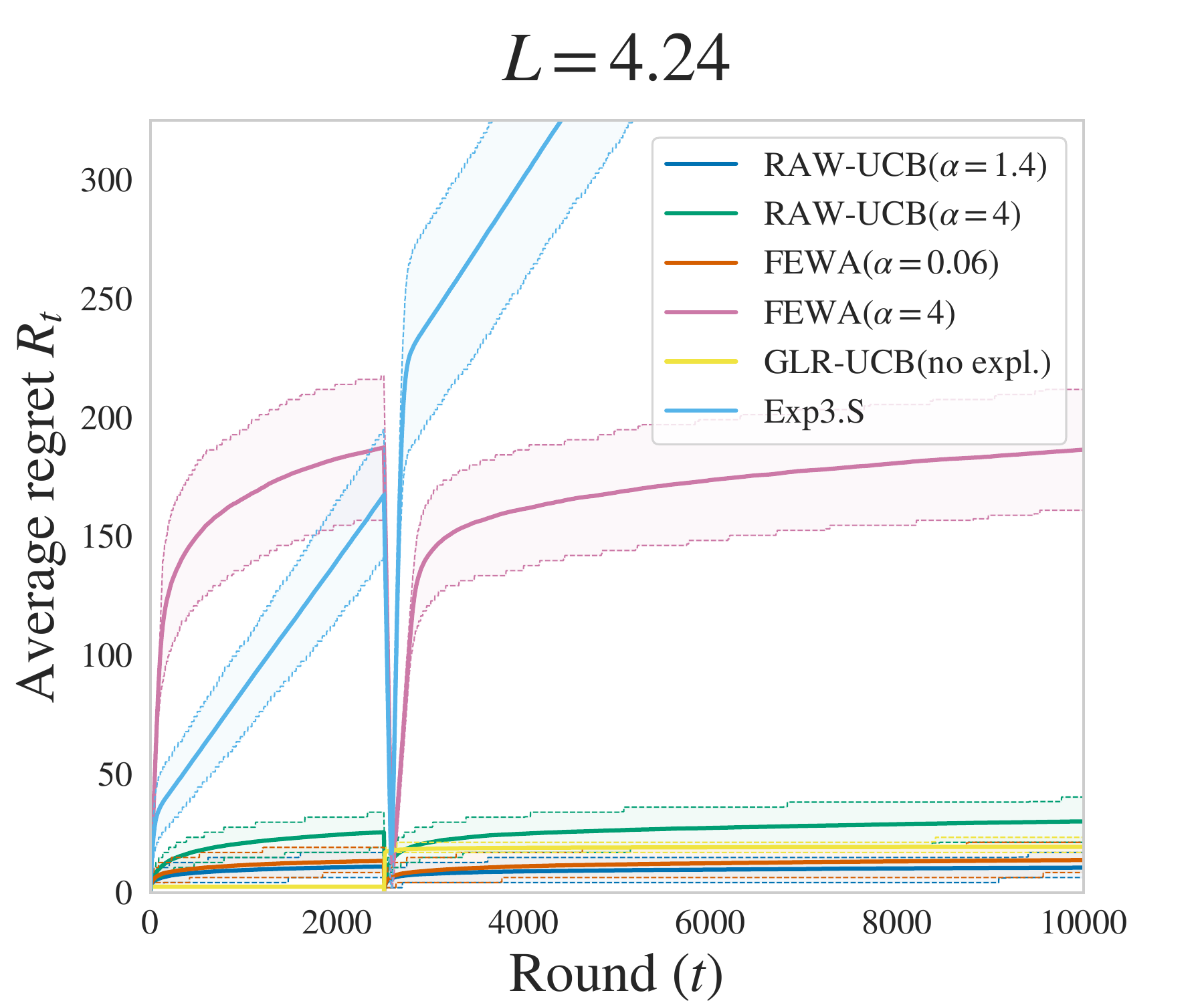}
\caption{\textbf{Left:} Regret at the end of the game for different values of $L$. \textbf{Middle, Right:} Regret across time for two values of $L$. Average over 2000 runs. We highlight the $\left[10\%, 90\%\right]$ confidence region.}
\label{Fig-rotting-bench}
\end{figure*}
\paragraph{Result : \RAWUCB vs \FEWA. } We compare \RAWUCB and \FEWA both for theoretical value $\alpha = 4$ and tuned values $\alpha_{\mathrm{R}} = 1.5$ and $\alpha_F = 0.06$ (selected by grid-search). For theoretical tuning, we see in Figure~\ref{Fig-rotting-bench} (left), that \RAWUCB outperforms \FEWA on all sizes of decays by a factor $\sim 4$ which is predicted by our theory. Indeed, there is also a factor 4 between the two problem-dependent upper-bounds. Surprisingly, for empirical tuning, the average performances of the two algorithms are much closer. We also notice that there is a larger variance in \FEWA's result compared to \RAWUCB. This is not surprising because we had to drastically reduce the confidence bounds to make \FEWA practical. It means that empirical \FEWA filters arms based only on a handful of samples. This bet leads to either very good runs or very bad runs. Last, Figure~\ref{Fig-rotting-bench} (middle, right) shows that \RAWUCB outperforms \FEWA at any time $T$, both on easy and difficult problems. 

Overall, our experiment suggests that \RAWUCB has better expected and high-probability performance than \FEWA on rested problems. Moreover, our analysis reduces the gap between theory and practice. Indeed, \RAWUCB practical confidence bounds are reduced compared to theoretical value by a factor $\sqrt{4/1.4} = 1.7$ while \FEWA's are reduced by a factor $\sqrt{4/0.06} = 8.2$. Note that the empirical tuning $\delta_T = T^{-1.4}$ is very close to asymptotic optimal tuning of \UCB :  $\delta_T = T^{-1} \log{\pa{T}}^{-2} \sim T^{-1.48}$ for $T= 10^4$. It suggests that \RAWUCB might not need to use larger confidence bands than \UCB for stationary bandits. 

\paragraph{Result : Restless algorithms.} \EXPS shows reasonable performance for small $L$ and very bad performance for large $L$. Indeed, \EXPS suffers from the fact that it pulls any arm with a probability at least $\sqrt{T}^{-1}$. When the cost of a single mistake is big (large $L$), it increases the regret. When the distance between arm is small (small $L$), all the consistent policies do $\sim T$ number of mistakes. Hence, the $\sqrt{T}^{-1}$ exploration rate is not a problem here. Combined with the observation of \citet{levine2017rotting} and \citet{seznec2019rotting}, we can conclude that passive forgetting and active random exploration leads to linear regret rate in rested rotting bandits.

This is why we cancel the random exploration for \GLRUCB. \GLRUCB use an active forgetting mechanism based on change-point detection. While it has been designed for restless bandits, it gives surprisingly good result on this rested benchmark. For $L < 10^{-1}$, \GLRUCB shows worse regret than \FEWA ($\alpha = 4$) which is equivalent to round-robin for small $L$. This is because the change is too small to be detected. Hence, the algorithm uses biased sample to compute the suboptimal arm's UCB after the break point. In this region, there is also large regret deviation. For $L\in \left[ 0.1, 4\right]$, \GLRUCB performs very well. Indeed, it can detect the change-point and then run the optimal \klucb subroutine for the remaining rounds (on which reward is stationary). For $L > 4$, \GLRUCB have worse performance than $\RAWUCB$. Indeed,  when an arm gets abruptly worse, 1) we detect the change-point; 2) we restart the arm's history which triggers additional exploratory pulls. This two steps mechanism require more pulls at the break-point than \RAWUCB. However, we can see on Figure~\ref{Fig-rotting-bench} (right) that after the first few pulls \GLRUCB pulls the sub-optimal arm at an optimal $\log{T}$ rate. This benchmark reveals that \GLRUCB with local restarts and no random exploration may be able to learn in the rested rotting setting, in particular when the decay is limited compared to the noise.

\subsection{Real world Yahoo! experiment}
\label{app:restless-exp}
\label{app:experiments-appendix}

\begin{figure*}[ht!]
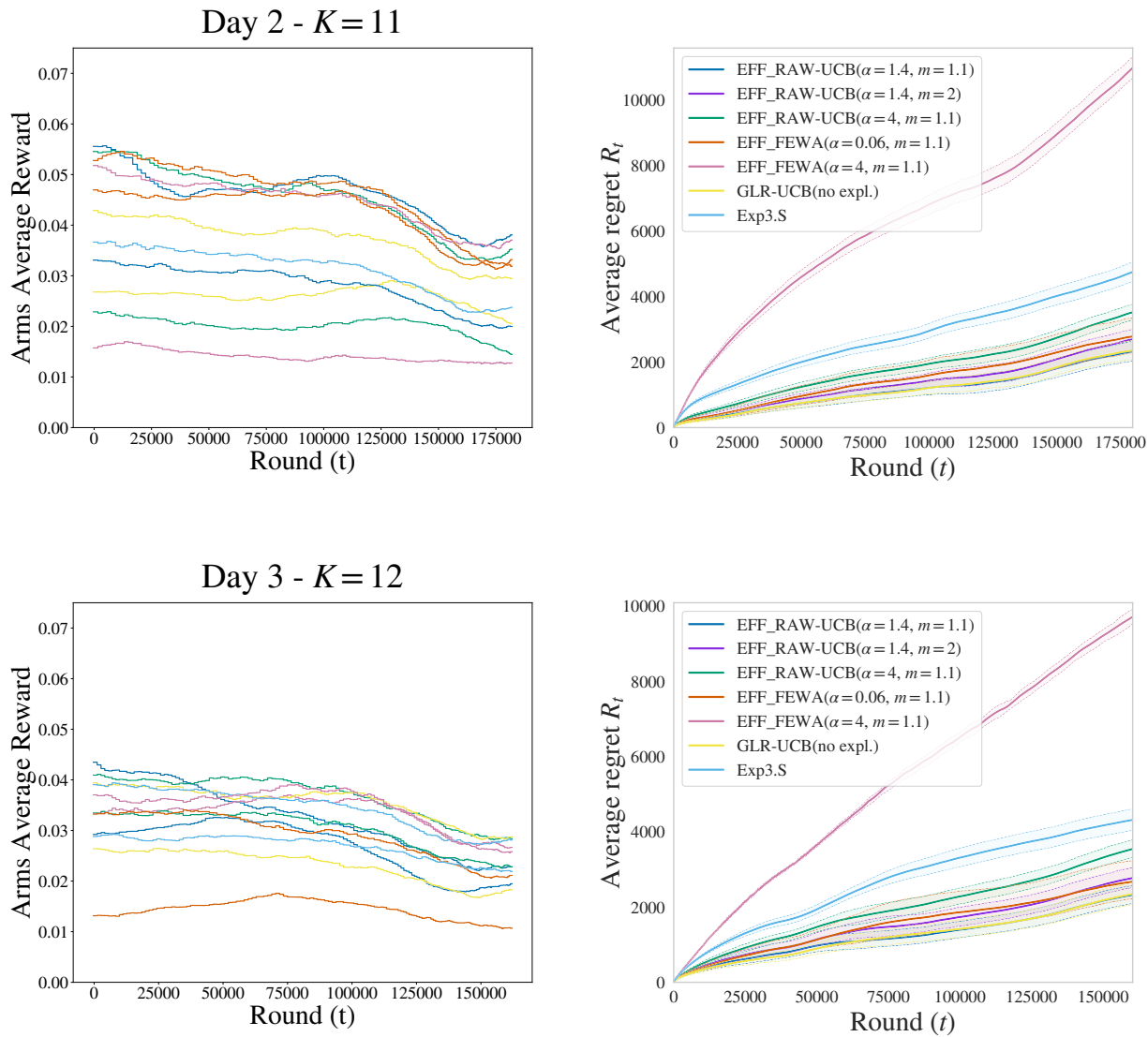

\caption{\textbf{Left:} reward functions on from the Yahoo! dataset \\ \textbf{Right:} average regret of policies over 500 runs}
\includegraphics[clip, width= 0.49\textwidth]{reward_plot/reward_plot_day2.pdf}
\includegraphics[clip, width= 0.49\textwidth]{result/DAY2.pdf}
\end{figure*}
\begin{figure*}
\includegraphics[clip, width= 0.49\textwidth]{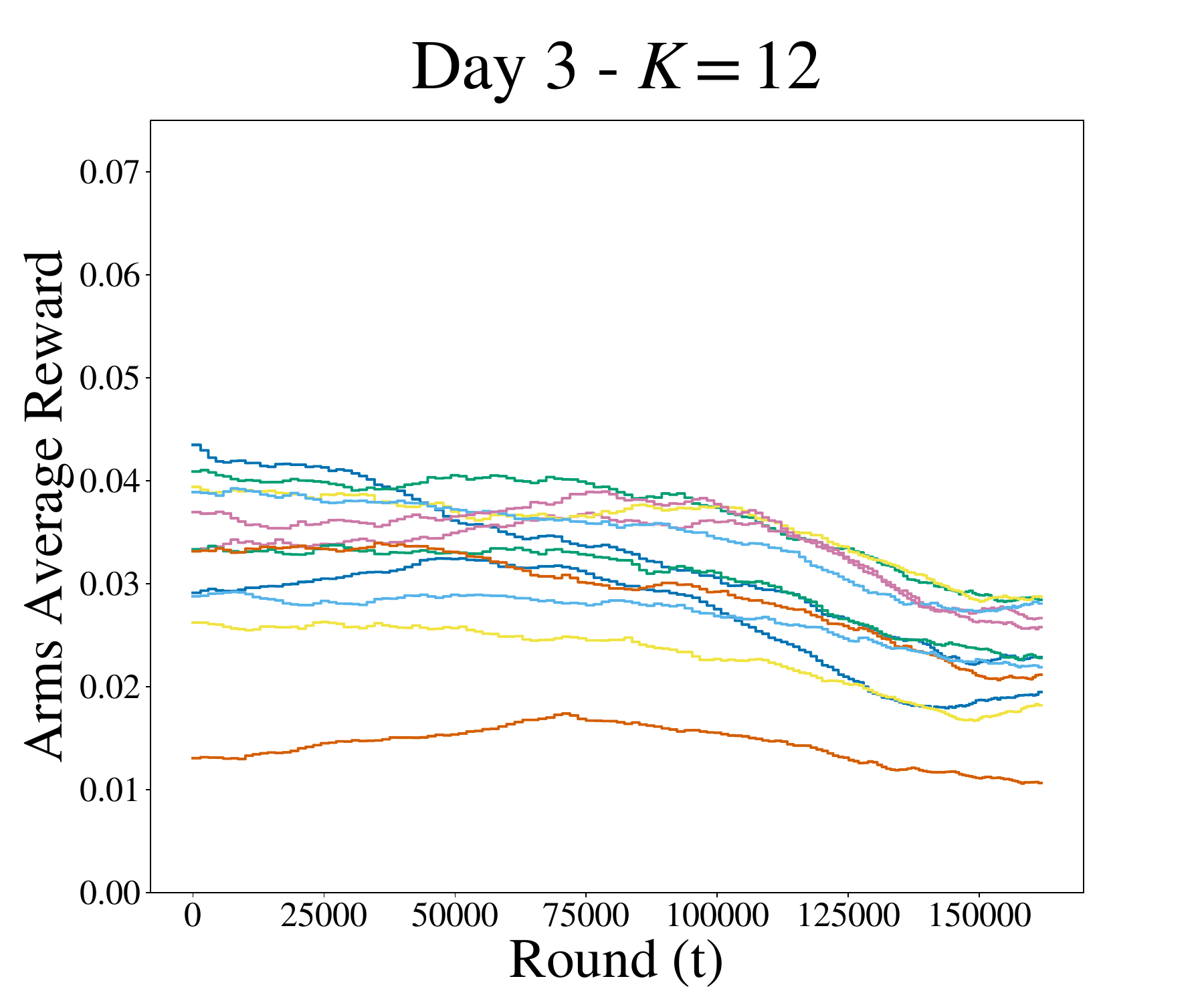}
\includegraphics[clip, width= 0.49\textwidth]{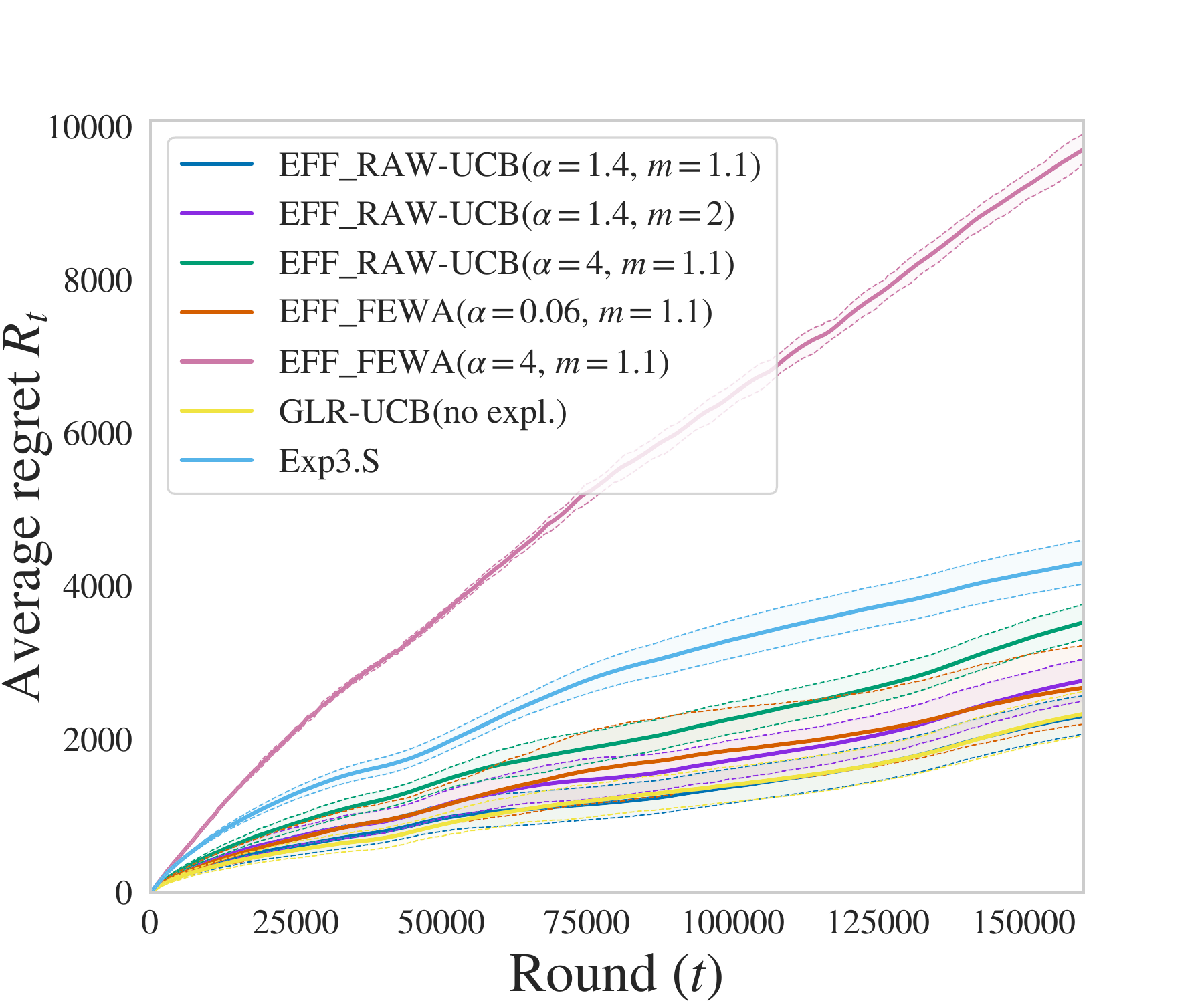}
\end{figure*}

\begin{figure*}
\includegraphics[clip, width= 0.49\textwidth]{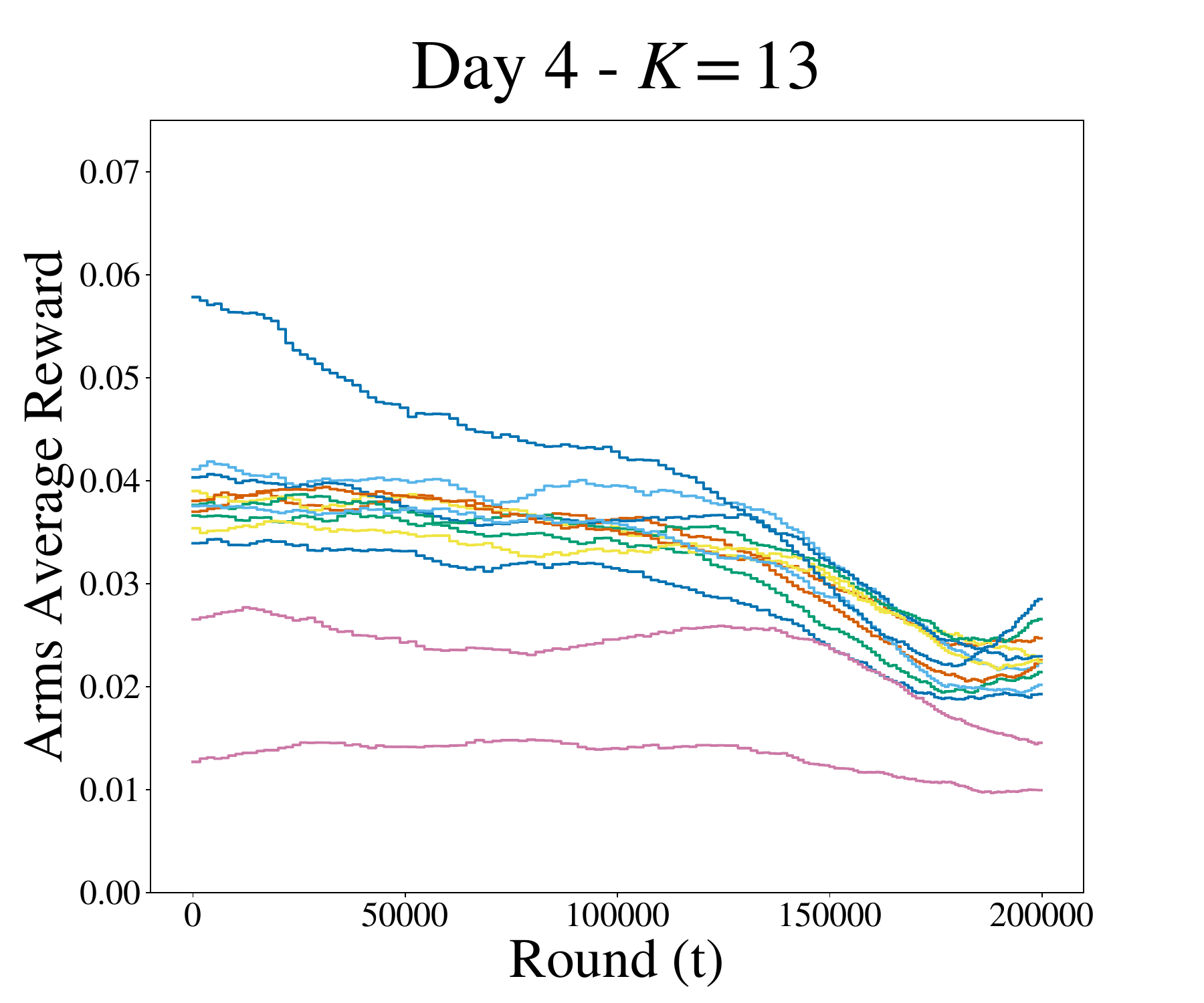}
\includegraphics[clip, width= 0.49\textwidth]{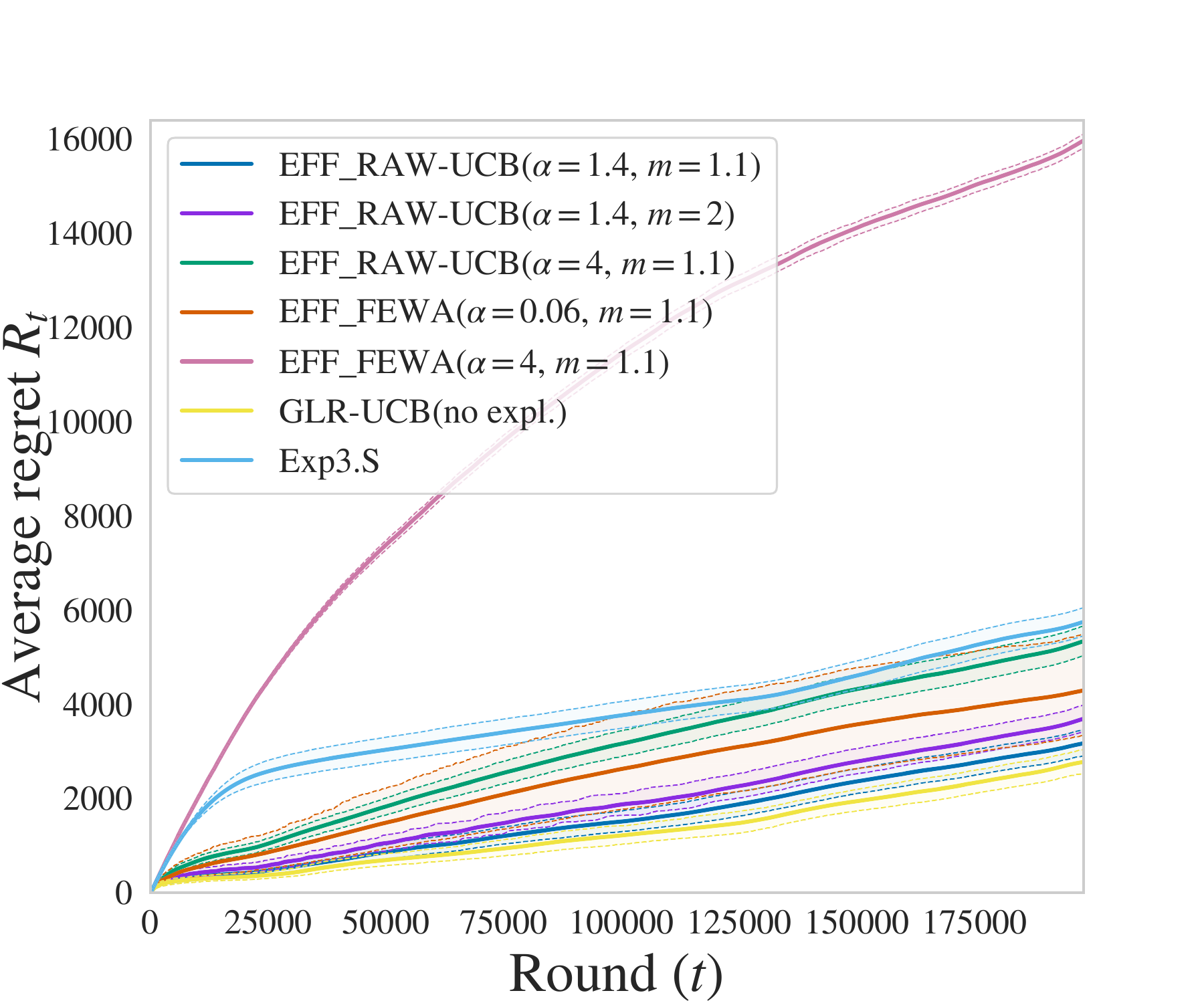}
\end{figure*}

\begin{figure*}
\includegraphics[clip, width= 0.49\textwidth]{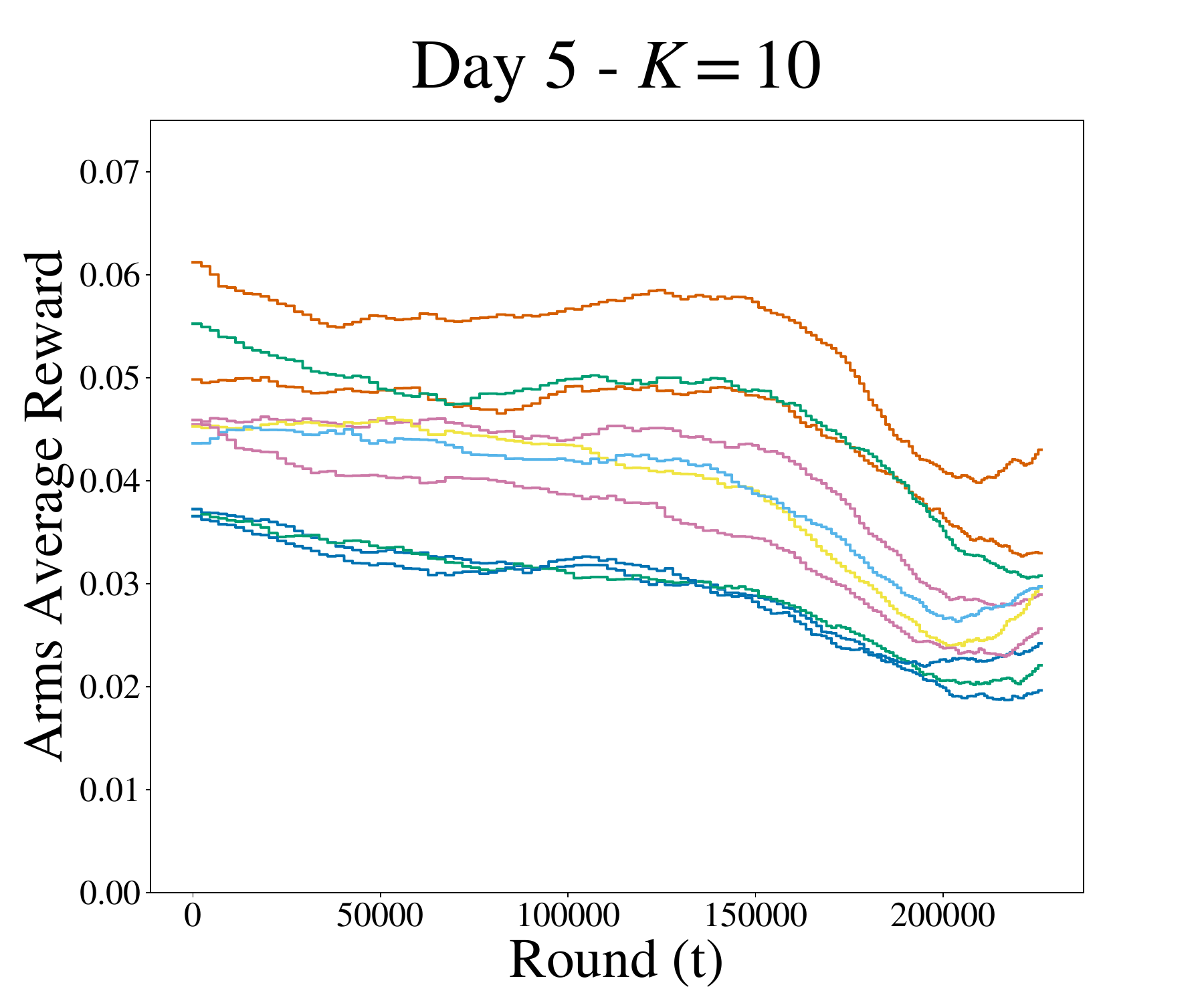}
\includegraphics[clip, width= 0.49\textwidth]{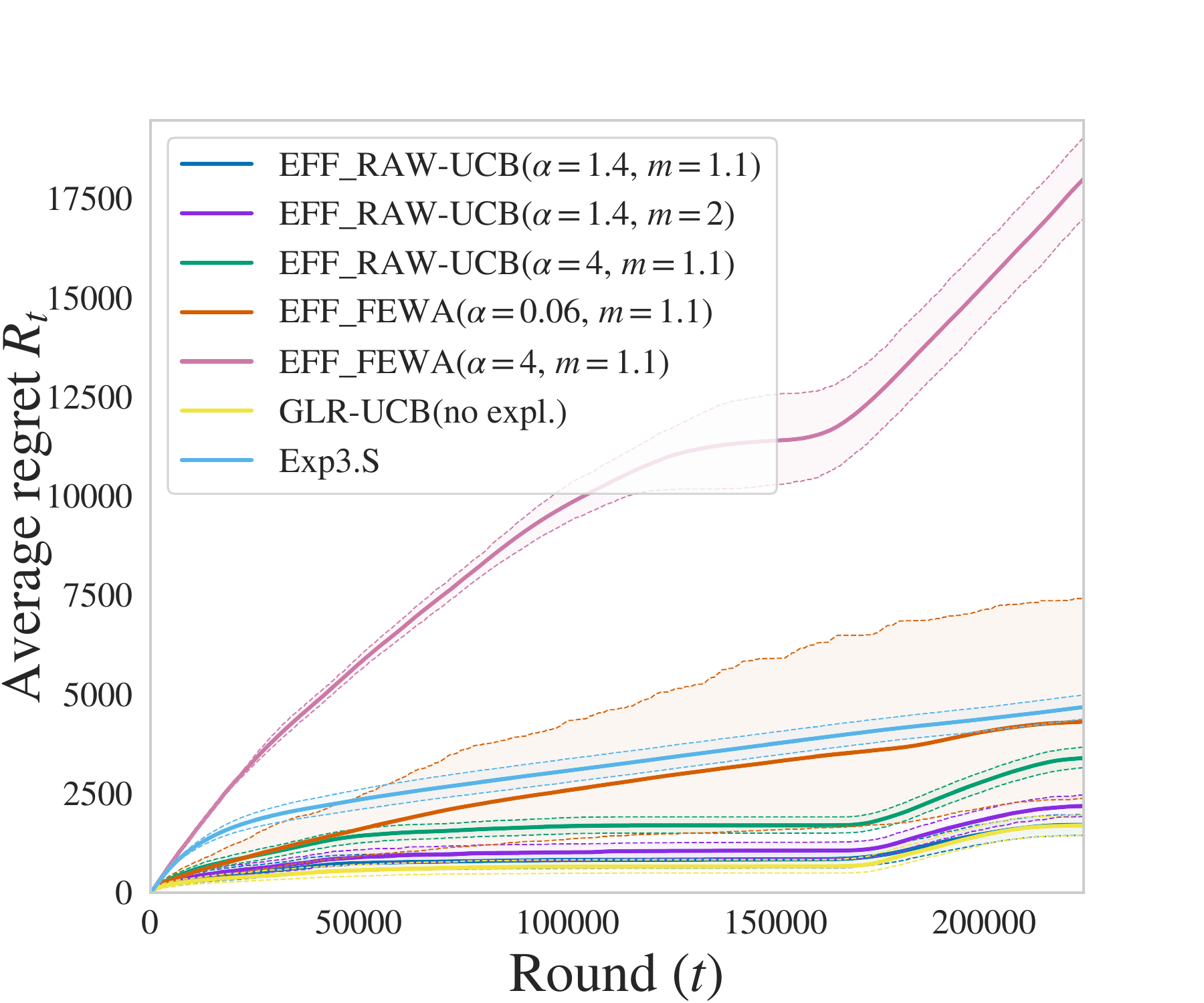}
\end{figure*}

\begin{figure*}
\includegraphics[clip, width= 0.49\textwidth]{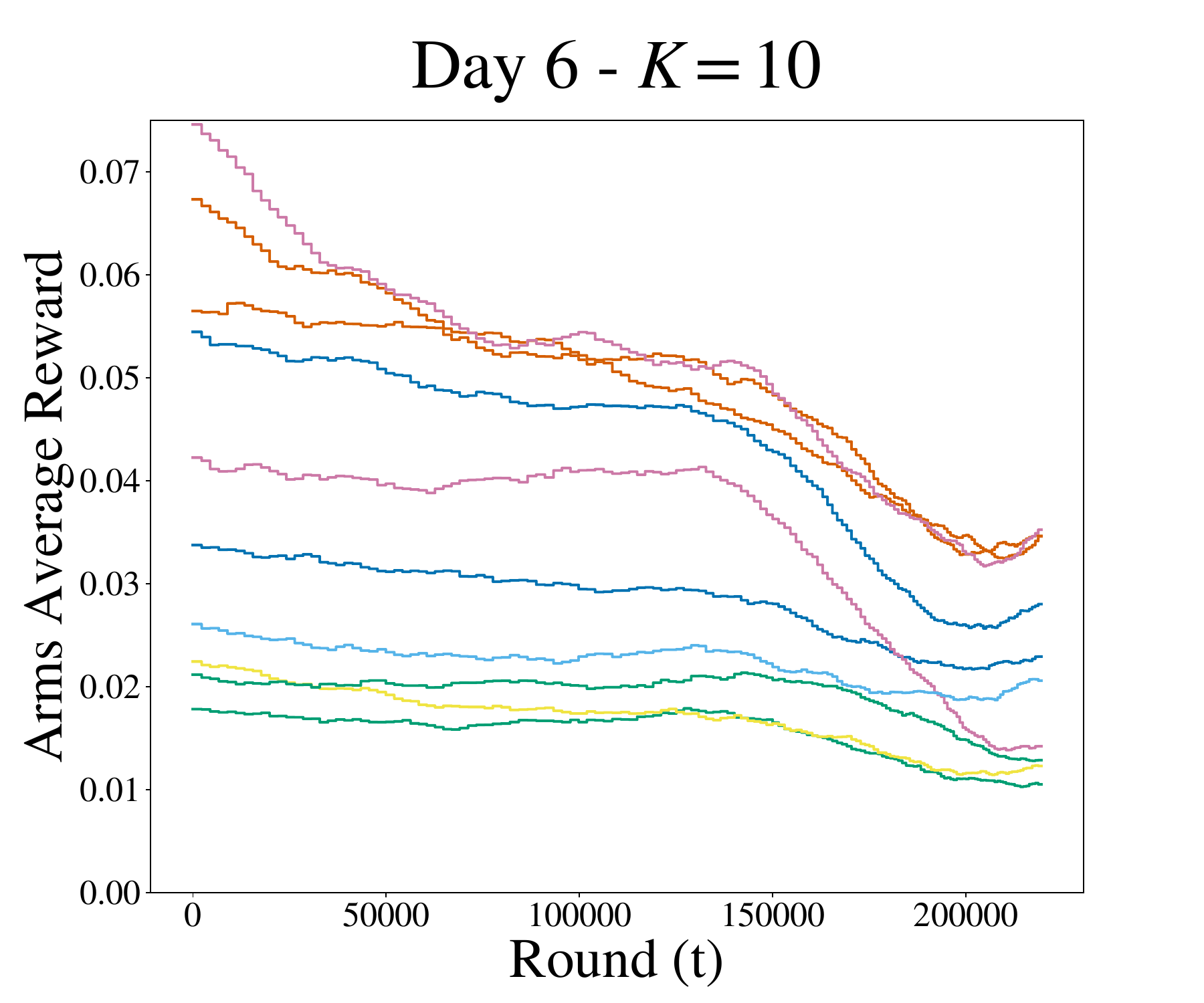}
\includegraphics[clip, width= 0.49\textwidth]{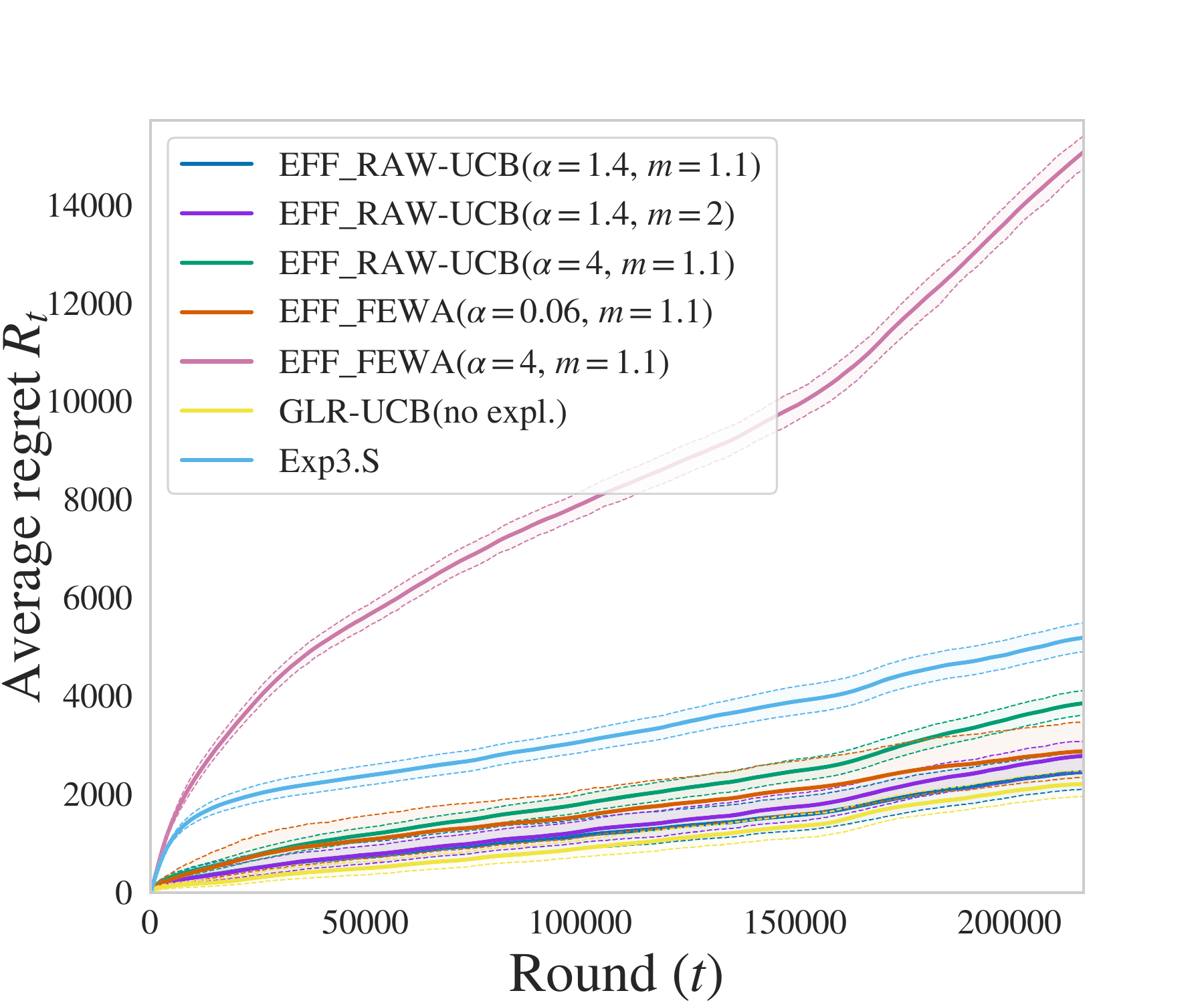}
\end{figure*}

\begin{figure*}
\includegraphics[clip, width= 0.49\textwidth]{reward_plot/reward_plot_day7.pdf}
\includegraphics[clip, width= 0.49\textwidth]{result/DAY7.pdf}
\end{figure*}

\begin{figure*}
\includegraphics[clip, width= 0.49\textwidth]{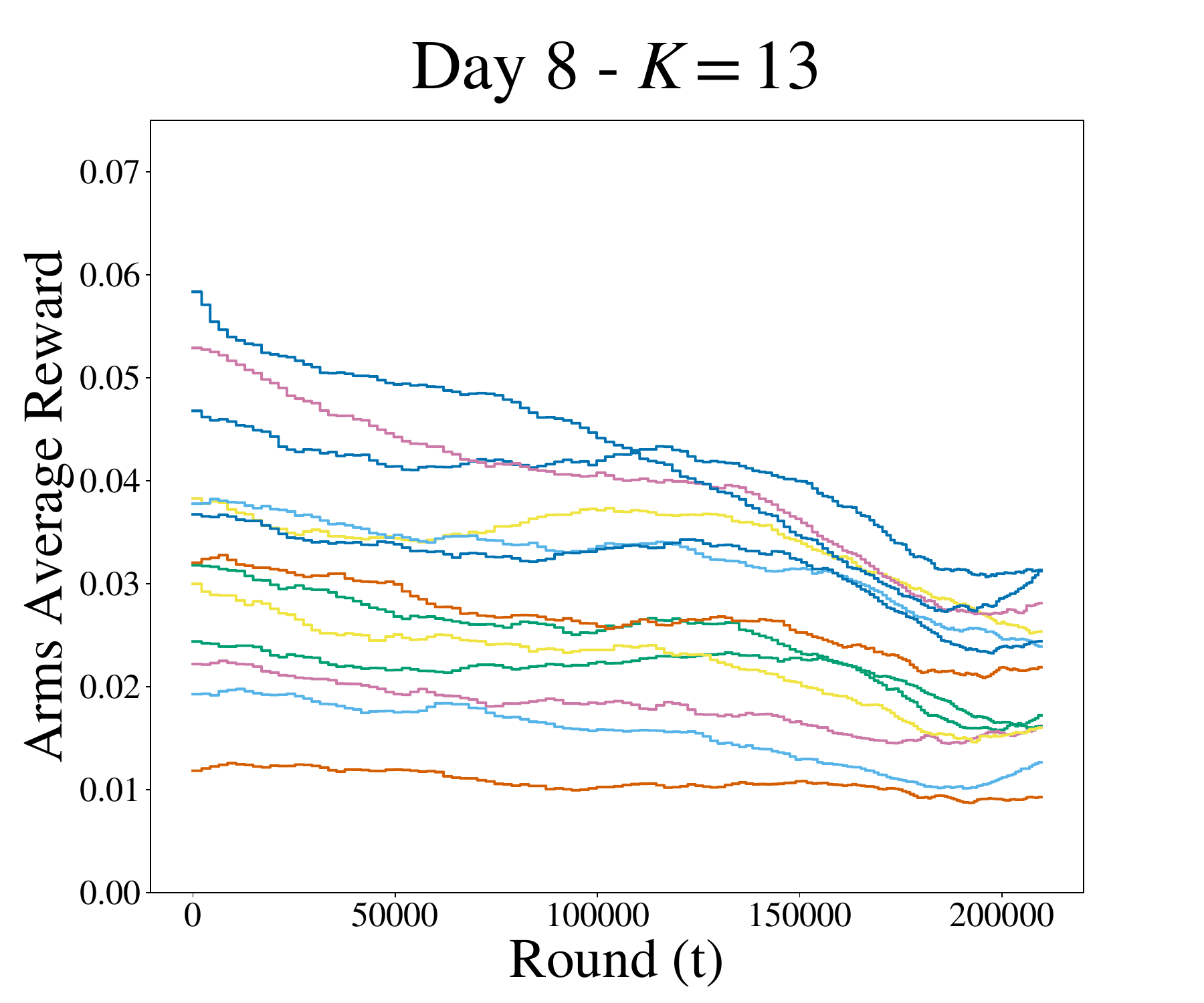}
\includegraphics[clip, width= 0.49\textwidth]{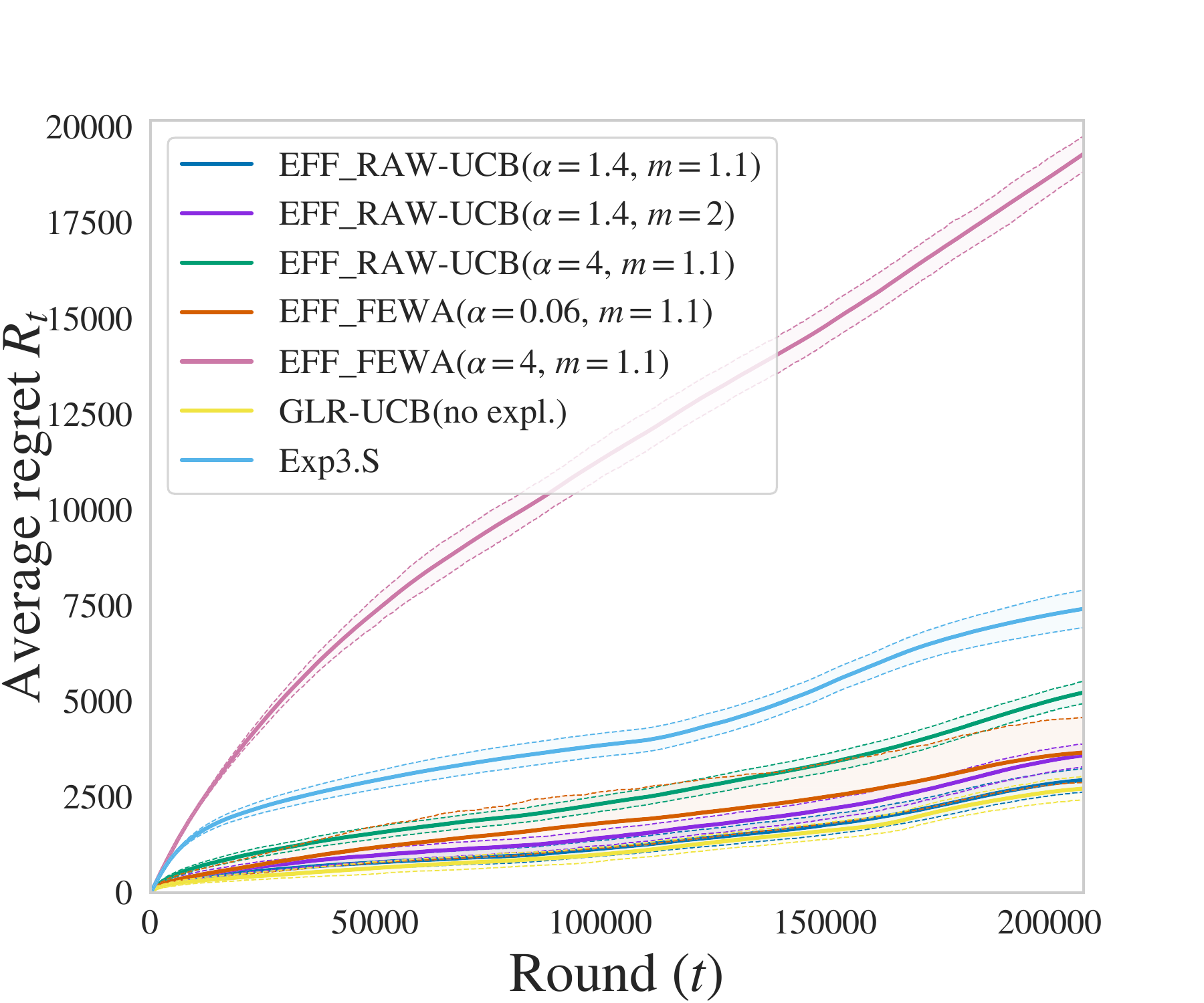}
\end{figure*}

\begin{figure*}
\includegraphics[clip, width= 0.49\textwidth]{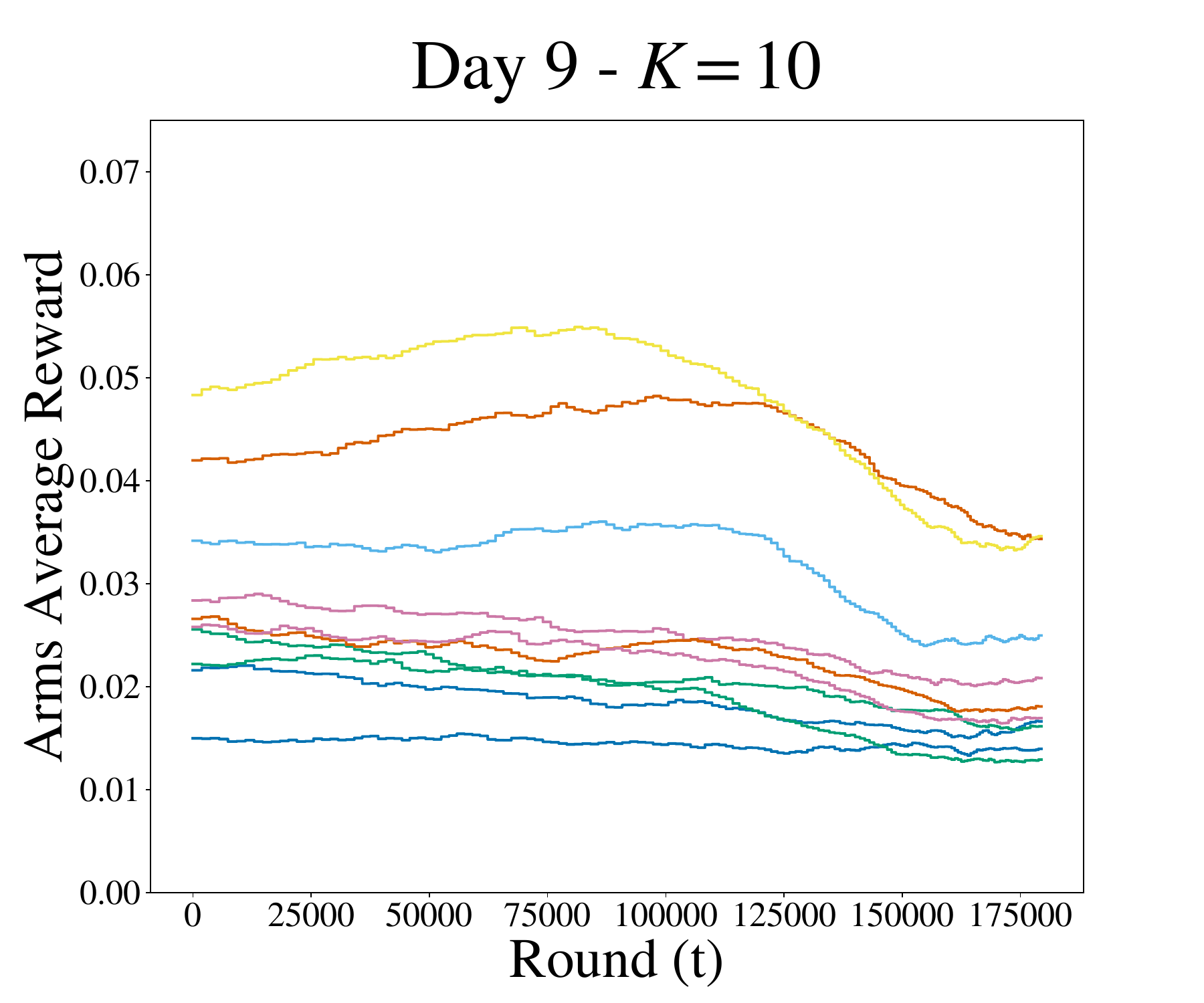}
\includegraphics[clip, width= 0.49\textwidth]{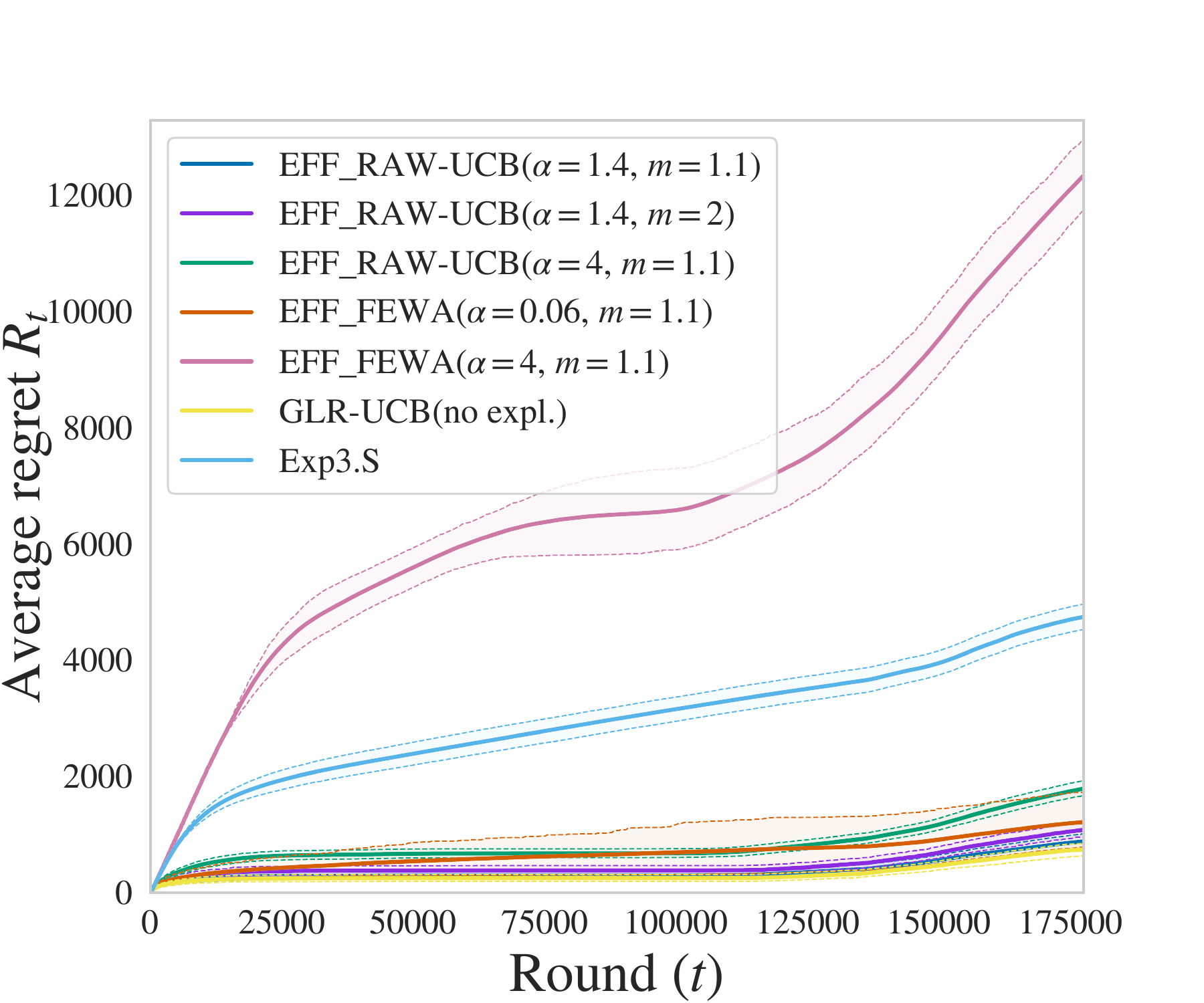}
\end{figure*}


\begin{table*}[ht!]
\def\arraystretch{1.2}
\begin{tabular}{|c|c|c|c|c|c|c|c|}
  \hline
  \textbf{Day}&\EFFRAW & \EFFRAW & \EFFFEWA  & \EFFFEWA & \EFFFEWA & \EXPS & \GLRUCB \\
   (T)&$(\alpha\!=\!1.4, m\!=\!1.1)$& $(\alpha\!=\!1.4, m\!=\!2)$ & $(\alpha\!=\!4, m\!=\!1.1)$ & $(\alpha\!=\!0.06)$ & $(\alpha\!=\!4)$ &  &\\
  \hline
\textbf{2}&67&35&65&143&337&56&560\\
\textbf{3}&66&33&65&175&308&53&613\\
\textbf{4}&90&43&90&223&391&67&683\\
\textbf{5}&86&47&88&159&473&77&2421\\
\textbf{6}&91&46&91&183&487&75&707\\
\textbf{7}&74&41&74&115&380&69&1529\\
\textbf{8}&88&44&89&193&428&71&957\\
\textbf{9}&64&34&63&116&341&55&971\\
  \hline
\end{tabular}
\vspace{0.2em}
  \caption{Average computational time in seconds for each algorithm in each experiment.}
\end{table*}

\end{document}